\documentclass[a4paper,10pt,reqno]{article}



\usepackage[utf8]{inputenc}
\usepackage[english]{babel}

\usepackage{amsmath}
\usepackage{amsthm}
\usepackage{multicol} 			   
\usepackage{multirow}
\usepackage{xcolor}	
\usepackage{cancel}
\usepackage{enumerate}
\usepackage[toc,page]{appendix}
\usepackage{todonotes}
\usepackage{aligned-overset} 
\usepackage{mathtools}
\usepackage{subcaption}
\usepackage{authblk}
\usepackage{minitoc}

\usepackage{hyperref}
\hypersetup{
	bookmarks=true,
	bookmarksnumbered,
	plainpages=false,
	colorlinks=true, 
	linktoc=all,     
	linktocpage,
	linkcolor=red!70!black,  
	citecolor=green!80!black,
	filecolor=magenta,
	hidelinks,
	urlcolor=magenta,
	breaklinks,
	unicode=true,
	hypertexnames=false,
	}

\usepackage[nameinlink,capitalize,noabbrev]{cleveref}

\usepackage{mathrsfs}
\usepackage{bm}					   
\usepackage{dsfont}

\usepackage{amssymb}
\usepackage{pifont}                
\usepackage{esint}				   
\usepackage{soul}

\usepackage{graphicx}
\usepackage{tikz}
\usetikzlibrary{calc,intersections,through,backgrounds}
\usetikzlibrary{arrows.meta}
\usepackage{booktabs}			   
\usepackage{subcaption}
\usepackage{pgfplots}
\usepackage{float}		

\theoremstyle{plain}
\newtheorem{theorem}{Theorem}[section]       

\newtheorem{lemma}[theorem]{Lemma}

\newtheorem{proposition}[theorem]{Proposition}

\theoremstyle{definition}
\newtheorem{definition}[theorem]{Definition}
\newtheorem{remark}[theorem]{Remark}
\newtheorem*{remark*}{Remark}

\newtheorem{assumption}[theorem]{Assumption}


\newcommand*{\tikzmres}[1][]{\tikz[x=1ex, y=1ex, line width = .1ex] \draw[line cap = round ] (0,2) -- (0,0) -- (.8,0); } 

\newcommand*{\fres}[2]{ {\left.\kern-\nulldelimiterspace #1 \vphantom{\big|} \right|_{\kern-1pt #2} }}

\newcommand{\supp}{\operatorname{supp}}
\newcommand{\eps}{\varepsilon}
\newcommand{\dif}{\mathrm{d}}

\newcommand{\abs}[1]{\left\lvert#1\right\rvert}

\newcommand{\norm}[1]{\left\lVert#1\right\rVert}
\newcommand{\vertiii}[1]{{\left\vert\kern-0.45ex\left\vert\kern-0.45ex\left\vert #1 \right\vert\kern-0.45ex\right\vert\kern-0.45ex\right\vert}}   

\DeclareMathOperator{\sign}{sign}
\newcommand{\spanV}{\operatorname{span}}

\DeclareMathOperator*{\argmin}{arg\,min}
\DeclareMathOperator*{\argmax}{arg\,max}

\makeatletter
\newcommand*{\transpose}{{\mathpalette\@transpose{}}}
\newcommand*{\@transpose}[2]{\raisebox{\depth}{$\m@th#1\intercal$}}
\makeatother

\newcommand{\hada}{\odot}

\newcommand{\xa}{x^{\ast}}

\DeclareMathOperator{\arsinh}{arsinh}
\newcommand{\xinf}{x^{\infty}}

\newcommand{\para}{\vert\vert}
\newcommand{\npara}{n^{\para}}
\newcommand{\nperp}{n^{\perp}}

\newcommand{\Lmin}{\Lscr_{\mathrm{min}}}

\newcommand{\Ha}{H_{\alpha}}
\newcommand{\Hb}{H_{\alpha}}

\newcommand{\na}{n^{\ast}}
\newcommand{\QQ}{Q^D_{\alpha}}
\newcommand{\qq}{q_D}
\newcommand{\hh}{h_D}
\newcommand{\ma}{m^{\ast}}
\newcommand{\ga}{g^{\ast}}
\newcommand{\gD}{g_D}
\newcommand{\GD}{G_{\alpha}^D}

\newcommand{\s}{\sigma}
\newcommand{\SP}{\mathcal{S}}
\newcommand{\SC}{\SP^c}
\newcommand{\Tan}{\Tcal}
\newcommand{\Nor}{\Ncal}
\newcommand{\mg}{\min_{i\in \SP} \abs{\ga_i}}
\newcommand{\overleq}[1]{\overset{#1}{\le}}
\newcommand{\overgeq}[1]{\overset{#1}{\ge}}
\newcommand{\overeq}[1]{\overset{#1}{=}}
\newcommand{\xinfalpha}{\xinf(\alpha)}


    \newcommand{\Tcal}{\mathcal{T}}                    \newcommand{\Ncal}{\mathcal{N}}

                  \newcommand{\Lscr}{\mathscr{L}}       

\newcommand{\R}{\mathbb{R}} \newcommand{\N}{\mathbb{N}}

\usepackage[style=alphabetic-verb, 
			sorting=nyt,
			url = false, 
			giveninits=true, 
			eprint = true, 
			isbn = false,
			backend = biber,
			maxbibnames=99,
			useprefix=true]{biblatex}
\addbibresource{parts/bibliography.bib}

\usepackage[a4paper,hmargin=3cm,bottom=4cm,top=3.5cm,footskip=3\baselineskip]{geometry}
\usepackage{setspace}
\allowdisplaybreaks		

\begin{document}
\doparttoc 
\faketableofcontents 


\title{Linear regression with overparameterized linear neural networks:
Tight upper and lower bounds for implicit 
$\ell^1$-regularization}

\author[1]{Hannes Matt}
\author[1]{Dominik St\"{o}ger}
\affil[1]{Mathematical Institute for Machine Learning and Data Science
\newline 
KU Eichstätt--Ingolstadt

}





\date{\today}

\maketitle

\begin{abstract}
    Modern machine learning models are often trained in a setting
    where the number of parameters exceeds the number of training samples.
    To understand the implicit bias of gradient descent in such overparameterized models,
    prior work has studied diagonal linear neural networks
    in the regression setting.
    These studies have demonstrated that gradient descent, 
    when initialized with small weights, 
    tends to favor solutions with minimal $\ell^1$-norm
    —a phenomenon referred to as implicit regularization.
    In this paper, we investigate implicit regularization in diagonal linear neural networks
    of depth $D\ge 2$ 
    for overparameterized linear regression problems. 
    We focus on analyzing the approximation error between the limit point of 
    gradient flow trajectories and the solution to the $\ell^1$-minimization problem.
    Our analysis precisely characterizes how the approximation error
    depends
    on the scale of initialization $\alpha$
    by establishing tight upper and lower bounds on the approximation error.
    Our results highlight a qualitative difference between
    networks of different depth $D$:
    for $D \ge 3$, 
    the error decreases linearly with $\alpha$,
    whereas for $D=2$,
    it decreases at rate $\alpha^{1-\varrho}$.
    Here, 
    the parameter $\varrho \in [0,1)$ can be explicitly characterized
    and is closely related
    to null space property constants
    studied in the sparse recovery literature. 
    We demonstrate the asymptotic tightness of our bounds through explicit examples. 
    Numerical experiments corroborate our theoretical findings
    and suggest that deeper networks, i.e., $D \ge 3$, 
    may lead to better generalization,
    particularly for realistic initialization scales
    and in noisy regimes.
\end{abstract}

\section{Introduction}

Modern neural networks are often trained in an overparameterized setting,
where the number of parameters significantly exceeds the number of data points.
Despite their complexity, these models exhibit strong generalization properties, 
even when the training data is perfectly interpolated 
and no regularization is applied \cite{zhang2021understanding}.
At first glance, this may seem to contradict conventional statistical wisdom, 
which suggests that overparameterized models are prone to overfitting.
Indeed, due to the high capacity of these overparameterized models
there are infinitely many minimizers of the risk function 
that perfectly interpolate the training data,
many of which may generalize poorly.
As a consequence, the performance of the trained model is not determined solely by the training risk 
but also depends on the choice of the training algorithm.
Implicit regularization refers to the hypothesis
that the training algorithm itself induces
a bias towards solutions that minimize a certain complexity parameter. 
Indeed, practitioners are well aware that the generalization error
of the trained model depends on the choice of the hyperparameters during training,
such as step size, batch size, choice of the optimizer, network architecture, 
or initialization. 

While in the context of neural networks 
the precise nature of the implicit regularization phenomenon remains to be 
fully understood,
significant progress has been made in recent years
toward understanding 
the effects of implicit regularization through gradient descent and related algorithms
in simplified models
such as diagonal linear neural networks or low-rank matrix recovery with factorized gradient descent.
For instance, in diagonal neural networks, 
gradient flow and gradient descent
with sufficiently small initialization
have been shown to bias the optimization process toward sparse solutions
\cite{vaskevicius2019implicit,woodworth2020kernel,amid2020reparameterizing,amid2020winnowing,yun2021a,azulay2021implicit,li2022implicit,chou2023less}.
In the context of low-rank matrix recovery,
factorized gradient descent with small random initialization
has been demonstrated to favor low-rank solutions
in overparameterized matrix recovery problems
\cite{gunasekar2017implicit,li2018algorithmic,arora2019implicit,litowards,razin2020implicit,stoger2021small,soltanolkotabi2023implicit,jin2023understanding,wind2023asymmetric,chou2024deep,ma2024convergence}.

In this paper, we focus on diagonal linear neural networks with Hadamard reparameterization.
Specifically, we consider the linear regression problem
\begin{equation}\label{equ:lossfunction}
	\mathcal{L} \left( x \right)
	=
	\big\Vert y - A x  \big\Vert_{\ell^2}^2,
\end{equation} 
where $y \in \mathbb{R}^N$ 
and $A \in \mathbb{R}^{N \times d}  $.
We assume the model is overparameterized, i.e., $ d \gg N$.
The vector $x \in \R^d$ is reparameterized as 
\begin{equation*}
x (u,v) := u^{\odot D} - v^{\odot D},
\end{equation*}
where $u,v \in \mathbb{R}^d$ and $D \ge 1$ is a natural number.
Here, $x^{\odot D}$ denotes the Hadamard (element-wise) product of $x$
with itself $D$-times,
i.e., $ \left(x^{\odot D}\right)_i:= (x_i)^D $ for each index $i \in [d]$.
The function $x(u,v)$ is referred to as a diagonal neural network with $D$ layers,
as discussed in more detail in \cite{chou2023less}.
By substituting $x(u,v)$ into the original objective function,
we obtain the reparameterized objective function
\begin{equation}\label{equ:reparameterized}
	\widetilde{\mathcal{L}} \left( u , v \right)
	:=
	\Big\Vert y - A \left( u^{\odot D}-v^{\odot D} \right) \Big\Vert_{\ell^2}^2.
\end{equation}
It has been shown for $D=1$
that gradient descent on the reparameterized objective function \eqref{equ:reparameterized}
converges towards the solution
which is closest to the initialization with respect to the $\ell^2$-norm.
In contrast, for $D \ge 2$ 
it has been demonstrated that
gradient descent has an implicit bias towards the solution with smallest $\ell^1$-norm.

What makes the diagonal linear network model appealing for theoretical studies
is that the implicit regularization effect can be rigorously expressed in terms of a Bregman
divergence,
where we recall that for a strictly convex function $F: \R^d \rightarrow \R$
the Bregman divergence with potential function $F$ is defined as
\begin{equation*}
	D_F (p,q)
	:=
	F(p) - F(q) - \langle \nabla F(q), p-q \rangle.
\end{equation*}
To formalize the connection between diagonal networks
and the Bregman divergence,
we consider the idealized scenario of gradient flow,
i.e., the continuous-time limit of gradient descent when the step size approaches zero. 
In this setting,
the gradient flow trajectories $u,v : [0,\infty) \rightarrow \R^d $ are 
defined as the solutions of the following ordinary differential equations:
\begin{equation*}
	\frac{d}{dt} u (t) = - \left( \nabla_{u} \widetilde{\mathcal{L}} \right) \left( u (t), v (t) \right), \quad
	\frac{d}{dt} v (t) = - \left( \nabla_{v} \widetilde{\mathcal{L}} \right) \left( u (t), v (t) \right),
    \quad u(0)=u_0, \quad v(0)=v_0
\end{equation*}
with the initial conditions $u(0)=u_0$ and $ v(0)= v_0$
for a given initialization $u_0,v_0 \in \R^d$.

This setup allows us to define $x: [0, \infty) \rightarrow \R^d$ as 
\begin{equation}\label{equ:trajectory}
x(t) 
:= 
x\left(u \left(t \right),v \left(t \right)\right)
=
u(t)^{\odot D} - v(t)^{\odot D}.
\end{equation}
To keep the presentation concise, 
we now assume that $u(0)=v(0)= \alpha^{1/D} \cdot \mathbf{1}$.
Here $\alpha >0$ is referred to as the \textit{scale of initialization}
and $\mathbf{1} \in \R^{d}$ denotes the vector 
in which each entry is equal to one.
This assumption implies that $x(0)=0$.
The following result then characterizes the limit point of the gradient flow trajectory
as a minimizer of a constrained optimization problem involving the Bregman divergence.
\begin{proposition}[see, e.g., Theorem 3.8 and Theorem 4.8 in \cite{li2022implicit}]\label[proposition]{thm:introduction}
	Let $D\ge 2$ be an integer 
	and let $\alpha >0$ represent the scale of initialization.
	Assume that $u_0=v_0= \alpha^{1/D} \cdot \mathbf{1}$.
	Furthermore, assume that the gradient flow trajectory $x: [0,\infty) \rightarrow \R$,
	as defined in \eqref{equ:trajectory},
	converges to a limit point $\xinfalpha = \lim_{t \rightarrow \infty} x(t) $ 
	with $A \xinfalpha =y$.
	The limit point $\xinfalpha$ can then be uniquely characterized as
	\begin{equation}\label{bregmanminimizers}
		\xinfalpha 
		= \underset{x \in \R^d: A x = y}{\text{arg min}} \ D_{F_{\alpha,D}} \left(x, 0 \right),
	\end{equation}
	where the potential function $F_{\alpha,D}$ of the Bregman divergence $D_{F_{\alpha,D}}$
	depends only on the depth $D$ and on the scale of initialization $\alpha$.
We note that
the potential function $F_{\alpha,D}$ can be expressed analytically, see \cref{section:MainResults}.
\end{proposition}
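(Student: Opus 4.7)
The plan is to show that the gradient flow on the non-convex reparameterized loss $\widetilde{\mathcal{L}}(u,v)$ induces a mirror flow on the predictor $x(t) = u(t)^{\odot D} - v(t)^{\odot D}$ of the form
\begin{equation*}
\tfrac{d}{dt} \nabla F_{\alpha,D}(x(t)) \;=\; -A^\transpose \bigl(Ax(t) - y\bigr)
\end{equation*}
for a suitable strictly convex potential $F_{\alpha,D}$, and then to invoke the standard characterization of the limit point of a mirror flow as a constrained Bregman projection.

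First I would compute the gradient flow equations via the chain rule, obtaining $\dot u = -2D\,u^{\odot(D-1)} \odot A^\transpose(Ax - y)$ and $\dot v = 2D\,v^{\odot(D-1)} \odot A^\transpose(Ax - y)$, which immediately yields
\begin{equation*}
\dot x(t) \;=\; -2D^2\, \bigl(u(t)^{\odot(2D-2)} + v(t)^{\odot(2D-2)}\bigr) \odot A^\transpose\bigl(Ax(t) - y\bigr).
\end{equation*}
Thus the predictor obeys a preconditioned gradient flow with diagonal preconditioner $\Phi(u,v) = 2D^2(u^{\odot(2D-2)} + v^{\odot(2D-2)})$. To express this preconditioner in terms of $x$ alone, I would derive the conservation laws of the flow: for $D=2$ a direct computation gives $\tfrac{d}{dt}(u \odot v) = 0$, hence $u(t)\odot v(t) \equiv \alpha \mathbf{1}$, while for $D \neq 2$ dividing the $\dot u$- and $\dot v$-equations by $u^{\odot(D-1)}$ and $v^{\odot(D-1)}$ and adding shows that $u^{\odot(2-D)} + v^{\odot(2-D)}$ is conserved, so equal to $2\alpha^{(2-D)/D} \mathbf{1}$ along the flow. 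Combined with $x = u^{\odot D} - v^{\odot D}$ and the fact that the strict positivity of the initialization is preserved coordinate-wise by the multiplicative form of the dynamics, these relations uniquely determine $u(t)$ and $v(t)$ from $x(t)$ and $\alpha$, yielding a strictly positive $\Phi_{\alpha,D}(x)$ such that $\dot x = -\Phi_{\alpha,D}(x)\,A^\transpose(Ax-y)$.

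Next I would define $F_{\alpha,D}$ coordinate-wise by $F_{\alpha,D}''(\xi) = 1/\phi_{\alpha,D}(\xi)$ and fix the integration constants so that $F_{\alpha,D}(0)=0$ and $\nabla F_{\alpha,D}(0)=0$; strict convexity is automatic since $\Phi_{\alpha,D}(x) \succ 0$. With this choice the mirror-flow identity displayed above holds, so integrating and using $x(0)=0$ gives $\nabla F_{\alpha,D}(x(t)) \in \operatorname{range}(A^\transpose)$ for every $t$, and in particular for the limit $x^\infty(\alpha)$. Combined with the feasibility assumption $A x^\infty(\alpha) = y$, this is exactly the KKT system for the strictly convex problem \eqref{bregmanminimizers}; uniqueness of the minimizer and its identification with $x^\infty(\alpha)$ follow from the strict convexity of $F_{\alpha,D}$.

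The main obstacle is making the reduction to a mirror flow fully rigorous: one must verify that the map $(u,v) \mapsto (x,\text{conserved quantity})$ is a diffeomorphism on the positive orthant so that $\Phi_{\alpha,D}(x)$ is a well-defined smooth function of $x$, and one must check that the coordinates $u_i(t), v_i(t)$ never vanish along the trajectory, which in turn rests on the strict positivity of the initialization combined with the multiplicative structure of the $u$- and $v$-dynamics that precludes sign changes. A secondary subtlety is that the preconditioner $\Phi_{\alpha,D}$ degenerates as $\alpha \to 0$, a feature which produces the non-trivial $\alpha$-dependence of $F_{\alpha,D}$ that is precisely the object of study in the main results of the paper.
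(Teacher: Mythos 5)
The paper does not give its own proof of \cref{thm:introduction}; it is cited directly from Theorems 3.8 and 4.8 of \cite{li2022implicit}. Your blind derivation is correct and is precisely the argument underlying those cited results, specialized to the Hadamard-parameterized diagonal network. The chain-rule computation $\dot x = -2D^2\bigl(u^{\odot(2D-2)}+v^{\odot(2D-2)}\bigr)\odot A^\transpose(Ax-y)$ is correct, the two conservation laws ($u\odot v \equiv \alpha\mathbf{1}$ for $D=2$; $u^{\odot(2-D)}+v^{\odot(2-D)} \equiv 2\alpha^{(2-D)/D}\mathbf{1}$ for $D\ne2$) are correct, and the reduction to a mirror flow $\frac{d}{dt}\nabla F_{\alpha,D}(x(t)) = -c\,A^\transpose\bigl(Ax(t)-y\bigr)$ followed by the KKT characterization of the constrained Bregman projection is the standard way to close the argument; the harmless multiplicative constant $c$ from the scaling of the preconditioner does not change the arg min.

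Two remarks on the gaps you flag. First, the claim that the multiplicative structure of the $u$- and $v$-dynamics precludes sign changes only works directly for $D=2$, where $\dot u_i = -4u_i g_i$ is linear in $u_i$ and hence $u_i(t)$ is a positive exponential of an integral. For $D\ge 3$ the ODE $\dot u_i = -2D\,u_i^{D-1} g_i$ is superlinear in $u_i$, and positivity is instead a consequence of the conservation law itself: as long as $u_i,v_i>0$ on an interval, the identity $u_i^{2-D}+v_i^{2-D}=2\alpha^{(2-D)/D}$ forces both summands into $\bigl(0,\, 2\alpha^{(2-D)/D}\bigr)$, hence $u_i, v_i \ge \bigl(2\alpha^{(2-D)/D}\bigr)^{-1/(D-2)}>0$; an open-closed continuity argument then extends positivity to all of $[0,\infty)$. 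Second, the invertibility of $(u,v)\mapsto x$ under the conservation constraint, i.e.\ the well-definedness of $\Phi_{\alpha,D}$ as a function of $x$, is exactly what the explicit mirror maps in \cref{section:MainResults} encode: for $D=2$ via the closed-form identity $u^2+v^2 = \sqrt{x^2+4\alpha^2}$, giving $H_\alpha''(\xi)=(\xi^2+4\alpha^2)^{-1/2}$, and for $D\ge3$ via the function $h_D$ and its inverse. So the obstacle you identify is genuine but is discharged concretely by the formulas already present in the paper.
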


Although the Bregman divergence $D_{F_{\alpha,D}}$ is in general not a metric,
it can be interpreted as a measure of the distance between two points.
Thus, \cref{thm:introduction} shows 
that the limit point of the gradient flow trajectory
can be characterized as the solution $x$ to the equation $Ax=y$ 
that is closest to the 
initialization $x(0)$
with respect to the Bregman divergence.
It is important to note
that this relationship can be extended to general initializations
$u_0 , v_0 \in \R^d$ as well.

Moreover, while this paper does not focus on convergence properties of gradient flow 
we note that the convergence of the gradient flow trajectory to a minimizer of
$\mathcal{L}$, or equivalently $\widetilde{\mathcal{L}}$,
was proven in \cite{chou2023less} 
under the assumption that a solution of the equation
$Ax=y$ exists.

Equation \eqref{bregmanminimizers}
can be used as a foundation
for analyzing the implicit regularization effect of gradient flow
towards sparse solutions.
Indeed, using this equation 
previous work 
\cite{chou2023less,wind2023implicit}
established 
that for $D \ge 2$ it holds that
\begin{equation*}
	\lim_{\alpha \rightarrow 0} \norm{ \xinf (\alpha)}_{\ell^1}
	= 
	\underset{x: Ax=y}{\text{min}} \ \Vert x \Vert_{\ell^1}.
\end{equation*}
This result justifies the implicit bias of gradient flow towards sparse solutions.
While the assumption of gradient flow simplifies the problem
and is unrealistic in practice,
recent results have extended these findings to gradient descent \cite{wang2023linear}
showing that the limit point can also be connected to the 
Bregman divergence. 
Moreover, several algorithmic modifications
inspired by deep learning practice
--
such as weight normalization \cite{chou2023robust},
stochastic label noise \cite{pesme2021implicit},
and large step size combined with stochastic gradient descent (SGD) \cite{even2023s}--
have been proposed and studied for the diagonal linear neural network model \cref{equ:reparameterized}.
In particular, 
it has been shown that the implicit regularization effect
of these algorithmic modifications can also be characterized using the Bregman divergence,
and that they lead to a \textit{smaller effective initialization}.
For instance,
in the case of weight normalization \cite{chou2023less}
the parameter $\alpha$ in the Bregman divergence in \cref{bregmanminimizers}
is replaced by a new parameter $\widetilde{\alpha}$
with $ \widetilde{\alpha} \ll \alpha $.
As a result, these modifications further
strengthen the implicit bias towards the $\ell^1$-minimizer.

In this paper, we aim to understand
how the approximation error $ \norm{\xinf(\alpha) - \ga}_{\ell^p} $,
where $\ga$ denotes a solution of the $\ell^1$-optimization problem 
$\underset{x: Ax=y}{\text{min}} \ \Vert x \Vert_{\ell^1}$ 
for $p \in \left\{ 1; \infty \right\}$,
depends on the scale of initialization $\alpha$.
Although previous works have established upper bounds for the approximation error 
for $D=2$ \cite{woodworth2020kernel,chou2023less,wind2023implicit} and $D \ge 3$ \cite{chou2023less,wind2023implicit}
these bounds are often either pessimistic when compared to numerical evidence 
or involve unspecified constants.
Furthermore, to the best of our knowledge,
no lower bounds have been established in previous work.
As a result,
it was unclear before this paper
how different depths $D$ of the diagonal linear network
precisely influence the implicit regularization.
Moreover, 
the absence of lower bounds makes it hard to compare the impact
of various algorithmic modifications,
such as weight normalization or stochastic label noise,
on the implicit regularization 
towards the $\ell^1$-minimizer.

\paragraph{Our contribution:}
In this paper, 
we prove tight upper and lower bounds on the approximation error $\norm{\xinf (\alpha) - \ga}_{\ell^1}$,
assuming that the $\ell^1$-minimization problem $\underset{x: Ax=y}{\min} \norm{x}_{\ell^1}$ admits 
a unique solution $\ga$.
While our upper bounds improve upon previous work,
no lower bounds have been established in the literature thus far.
Using these bounds,
we precisely characterize the convergence rate of $\norm{\xinf (\alpha) - \ga}_{\ell^1}$ 
as the scale of initialization $\alpha$ approaches zero.
In particular, we show that for $D \ge 3$ the convergence rate is proportional to $\alpha$,
whereas for $D=2$, the convergence rate is proportional to $\alpha^{1-\varrho}$.
We explicitly characterize the constant $\varrho \in (0,1)$, which depends on $A$ and $y$,
and show that it is closely related to \textit{null space property constants} 
studied in the sparse recovery literature \cite{CohenNullSpace2009}, see also \cite{foucart_MathematicalIntroductionCompressive_2013}.
Furthermore,
by constructing explicit examples,
 we demonstrate 
that our upper and lower bounds 
are optimal in an asymptotic sense
and thus cannot be improved.

Inspired by our theoretical findings,
we conduct numerical experiments in a sparse recovery setting,
both with and without noise.
In the noiseless scenario,
we observe that
the approximation error decreases
with rate $\alpha^{1-\varrho}$ in the case $D=2$
and with rate $\alpha$ in the case $D\ge 3$
as the scale of initialization $\alpha$ approaches zero
as predicted by our theory.
In the noisy scenario,
we observe that the null space constant $\varrho$
is very close to $1$.
For this reason,
in the case $D=2$ 
the approximation error converges only slowly towards the 
$\ell^1$-minimizer,
whereas for $D\ge 3$ we observe better behavior.
This might indicate an advantage of deeper nets 
over shallow nets.

\paragraph*{Outline of this paper:}
The remainder of this paper is structured as follows.
In \cref{section:MainResults}, we present the main theoretical findings of our work.
In \cref{section:related_work}, we discuss further some related work.
In \cref{section:Simulations}, we conduct numerical experiments to validate our theoretical results.
These experiments also demonstrate that,
particularly in the presence of noise, 
deeper nets with $D \ge 3$ 
have a significant advantage in terms of implicit regularization over shallow nets with $D=2$.
In \cref{section:Proofs}, we provide the proofs of the upper and lower bounds on the approximation error.
In \cref{section:Sharpness}, we construct explicit examples
to show that our upper and lower bounds are tight in an asymptotic sense.
Finally, in \cref{section:outlook},
we discuss interesting directions for future research.

\paragraph*{Notation:}
For an integer $d\in \N$, we define $[d]:=\{1,\dots,d\}$.
For $x,y\in \R^d$, 
we denote the standard inner product by $\langle x,y \rangle := \sum_{i=1}^d x_iy_i$.
Given a subset $S\subset [d]$, 
we write $x_S:= (x_i)_{i\in S}$ and $\langle x_S,y_S\rangle := \sum_{i\in S} x_i y_i$.
Moreover, we denote by $x\hada y$ the Hadamard product of $x$ and $y$ given by $(x\hada y)_i := x_iy_i$ for $i\in [d]$. We define $\abs{x}\in\R^d$ to be the vector with entries $\abs{x}_i :=\abs{x_i}$ for $i\in [d]$.
For a vector $x \in \R^d$ and a subset $L \subset \R^d$, we define:
\begin{equation*}
	S(x):=\supp(x):= \{ i\in [d] : x_i \ne 0\},
	\qquad \text{and} \qquad
	\supp(L) := \bigcup_{x\in L} \supp(x).
\end{equation*}
Furthermore, we set $S^c(x) := [d] \setminus \supp(x)$.

\section{Main results}\label{section:MainResults}

\subsection{Our setting}

Before we state the main results of this paper
we introduce our main assumptions
in Section \ref{section:MainResults}.

\begin{assumption} \label[assumption]{assumption_on_A_y}
Let $A\in \R^{N\times d}$ and $y\in \R^N$. 
We assume that:
\begin{itemize}
    \item[(a)] there exists $x\in \R^d$ such that $Ax=y$,
    \item[(b)] $y\ne 0$,
    \item[(c)] $\ker(A)\ne \{0\}$,
	\item[(d)] and there is a unique minimizer $\ga$
    of the minimization problem $\min_{x:Ax=y} \norm{x}_{\ell^1}$.
\end{itemize}
\end{assumption}
\begin{remark}
The most important assumption we have made is Assumption $(d)$.
While this assumption is satisfied in most scenarios of interest,
our theory can be extended to the non-unique scenario.
For the sake of completeness, we have 
included the non-unique case in the appendix,
see Appendix \ref{section:main_results_non-unique}.
The reason why we have chosen to 
focus on the unique case in the main part of the paper
is that it is easiest to present prove our results in this case.

Assumptions (a), (b), and (c) are standard in the literature
and are not restrictive.
If Assumption $(a)$ does not hold 
gradient flow will converge to a limit point $\xinf (\alpha)$ 
which can be characterized as
$ \xinf (\alpha) = \underset{x \in \mathcal{T}}{\arg \min} \ D_{F_{\alpha,D}} (x, 0) $.
Here, 
$\mathcal{T} \subset \R^d$ denotes the affine subspace 
$\mathcal{T}:= \underset{x \in \R^d }{\arg \min} \norm{Ax-y}_{\ell^2}  $,
see, e.g., \cite[Theorem 3.8]{jin2023understanding}.
Our theory can be extended verbatim to this scenario.
However, to keep the presentation simple,
we will consider the case when Assumption $(a)$ holds.
If Assumption $(b)$ does not hold, i.e., we have that $y=0$,
then the gradient flow initialization $x(0)$ is already a global minimizer of the
loss function $\mathcal{L}$, and we will have $\xinf (\alpha)=0$ as well.
Note that if Assumption $(c)$ does not hold, i.e., we have that $ \ker (A)=\{0\}$,
then the equation $Ax=y$ has a unique solution $x$
and the function $\mathcal{L}$ in \cref{equ:lossfunction}
has a unique global minimizer.
In this scenario, the question of implicit regularization becomes meaningless.
\end{remark}

With these assumptions in place, we can define the following constants.
These are reminiscent of the null space constants 
studied in Compressed Sensing \cite{CohenNullSpace2009}.
It has been shown that these constants characterize 
the success of $\ell^1$-minimization 
and other methods such as Iteratively Reweighted Least Squares for sparse recovery problems
see \cite{foucart_MathematicalIntroductionCompressive_2013}.
For this reason, we will refer to them as null space property constants as well in this paper.
\begin{definition}[Null space property constants]
Assume that $A$ and $y$ fulfill \cref{assumption_on_A_y}
with a unique minimizer $\ga$.
Denote by $\SP:=\supp \left( \ga \right)$ the support of $\ga$. 
The null space property constants 
$\varrho$, $\varrho^{-}$, and $\tilde{\varrho}$ are defined as
\begin{equation}\label{nullspaceconstants}
	\begin{aligned}
		\varrho&:=\sup_{0\ne n\in \ker (A)} 
		\frac{-\sum_{i\in \SP} \text{sign}\left(\ga_i \right) n_i}{\norm{n_{\SC}}_{\ell^1}},\\
		\varrho^{-}&:= \sup_{0\ne n\in \ker (A)}
		\frac{\sum_{i \in \SP: \text{sign} (\ga_i) n_i <0} \abs{n_i}}{\norm{n_{\SC}}_{\ell^1}},\\
		\tilde{\varrho}&:=\sup_{0\ne n\in \ker (A)}
		\frac{\norm{n_{\SP}}_{\ell^1}}{\norm{n_{\SC}}_{\ell^1}}.
	\end{aligned}
\end{equation}
\end{definition}

The following proposition 
ensures that the constants $\varrho$, $\varrho^{-}$, and $\tilde{\varrho}$
are well-defined and states their main properties.
For the sake of completeness, 
we provide the straightforward proof of this result in 
Appendix \ref{section:proof_of_normal_conelemma}.
\begin{proposition} \label[proposition]{lemma:normal_coneSimple}
	Assume that $A$ and $y$ fulfill \cref{assumption_on_A_y}.
	Then the following statements hold
	for the null space property constants
	$\varrho, \varrho^{-},$ and $\tilde{\varrho}$
	introduced above.
	\begin{enumerate}
	\item 
	It holds that $\SC \ne	\emptyset$ and $\SP\ne \emptyset$.
	Moreover,
	for every $n\in \ker (A) \setminus \{0\}$ with $n\ne 0$
	it holds that $n_{\SC}\ne 0$,
	where $\SC = \{ 1;2; \ldots d \} \setminus \mathcal{S}$.
	In particular, the null space constants in \eqref{nullspaceconstants} are well-defined.
	\item 	
	It holds that $0 \le \varrho<1$ and 
	$0 \le \varrho^{-} <\infty$,
	$0 \le  \tilde{\varrho} <\infty$.
	Moreover, the suprema in \eqref{nullspaceconstants} are attained.
	\end{enumerate}
\end{proposition}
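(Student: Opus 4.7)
My plan is to organize the entire argument around a single first-order expansion of the $\ell^1$-norm around $\ga$. For any $n\in \ker(A)\setminus\{0\}$ and any real $t$ with $\abs{t}$ small enough that $\ga_i + tn_i$ has the same sign as $\ga_i$ for every $i\in\SP$, a short calculation yields
\begin{equation*}
\norm{\ga + tn}_{\ell^1} \;=\; \norm{\ga}_{\ell^1} \,+\, t\, s(n) \,+\, \abs{t}\, \sigma(n),
\end{equation*}
where I write $s(n) := \sum_{i\in\SP} \sign(\ga_i)\, n_i$ and $\sigma(n) := \norm{n_{\SC}}_{\ell^1}$. Since $\ga + tn$ still satisfies $A(\ga + tn) = y$, this single identity, combined with the uniqueness assumption $(d)$, will drive every inequality needed in the proof.

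First I would dispose of part~(1). The claim $\SP \ne \emptyset$ is immediate: otherwise $\ga = 0$ and so $y = A\ga = 0$, contradicting Assumption $(b)$. To show $\SC \ne \emptyset$ and, more generally, that $n_{\SC} \ne 0$ for every $n \in \ker(A)\setminus\{0\}$ (such an $n$ exists by Assumption $(c)$), I would argue by contradiction: if $\sigma(n) = 0$, the expansion collapses to $\norm{\ga + tn}_{\ell^1} = \norm{\ga}_{\ell^1} + t\, s(n)$. When $s(n) \ne 0$, choosing the sign of $t$ strictly decreases $\norm{\ga + tn}_{\ell^1}$ below $\norm{\ga}_{\ell^1}$, contradicting minimality; when $s(n) = 0$, any small $t \ne 0$ produces a second $\ell^1$-minimizer $\ga + tn \ne \ga$, contradicting uniqueness. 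Consequently $\sigma(n) > 0$ for every $n \in \ker(A)\setminus\{0\}$ and all three ratios in \eqref{nullspaceconstants} are well-defined.

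For part~(2), each of the three objectives is positively homogeneous of degree zero in $n$, so the suprema may be taken over the compact set $K := \ker(A) \cap \{ n \in \R^d : \norm{n}_{\ell^2} = 1 \}$. The denominator $\sigma$ is continuous on $\R^d$ and, by the previous step, strictly positive on $K$. The numerator of $\tilde{\varrho}$ is $\norm{n_{\SP}}_{\ell^1}$ and the numerator of $\varrho$ is linear in $n$, both obviously continuous; the numerator of $\varrho^{-}$ admits the rewrite $\sum_{i\in\SP} \bigl( -\sign(\ga_i)\, n_i \bigr)_{+}$, which is continuous as well. Hence each objective is a continuous function on the compact set $K$, so its supremum is finite and attained. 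Non-negativity of $\tilde{\varrho}$ and $\varrho^{-}$ is immediate from their definitions, and non-negativity of $\varrho$ follows by noting that $-n \in \ker(A)$ whenever $n \in \ker(A)$, which flips the sign of the numerator.

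The main technical point, and the step I expect to be the most delicate, is the strict inequality $\varrho < 1$. Uniqueness of $\ga$ forces $\norm{\ga + tn}_{\ell^1} > \norm{\ga}_{\ell^1}$ for all sufficiently small $t \ne 0$; treating $t > 0$ and $t < 0$ separately in the expansion yields the two-sided bound $-\sigma(n) < s(n) < \sigma(n)$ for every $n \in \ker(A)\setminus\{0\}$, and in particular $-s(n)/\sigma(n) < 1$, whence $\varrho \le 1$. The subtlety is excluding $\varrho = 1$: since the supremum defining $\varrho$ is attained at some $n^{\ast} \in K$ by the previous step, the equality $\varrho = 1$ would give $s(n^{\ast}) = -\sigma(n^{\ast})$, and substituting this into the expansion produces $\norm{\ga + tn^{\ast}}_{\ell^1} = \norm{\ga}_{\ell^1}$ for all small $t > 0$—a continuum of $\ell^1$-minimizers distinct from $\ga$, contradicting uniqueness. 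This yields $\varrho < 1$ and completes the proof.
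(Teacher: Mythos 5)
Your proof is correct, and the underlying mechanism is the same one the paper uses: a first-order (directional-derivative) expansion of the $\ell^1$-norm at $\ga$, compactness together with degree-zero homogeneity to get attainment, and a contradiction against uniqueness to exclude $\varrho=1$. The paper, however, does not prove \cref{lemma:normal_coneSimple} directly: it treats it as the special case $\Nor=\ker(A)$ of the non-unique result \cref{lemma:normal_cone} in the appendix, whose proof routes the same ideas through \cref{lemma:tangent_cone} and the tangent/normal-cone decomposition of $\ker(A)$. Your version unwinds that abstraction into a single self-contained argument organized around the identity
\[
\norm{\ga+tn}_{\ell^1}=\norm{\ga}_{\ell^1}+t\,s(n)+\abs{t}\,\sigma(n),
\]
which is a clean, elementary route precisely because uniqueness makes $\Tan=\{0\}$, so nothing is lost relative to the paper's cone machinery. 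One small efficiency you could exploit: the two-sided bound $-\sigma(n)<s(n)<\sigma(n)$ that you derive for $\varrho<1$ already forces $\sigma(n)>0$ whenever $n\ne0$ (otherwise $0<s(n)<0$), so the separate case analysis on $s(n)$ in part~(1) is not strictly needed; but as written everything is correct.
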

Finally, to formulate our main results,
we will also need to introduce the condition number of the unique minimizer $\ga$.
\begin{definition}[Condition number]
Assume that $A$ and $y$ fulfill \cref{assumption_on_A_y}
with a unique minimizer $\ga$.
Then the condition number $\kappa_{\ast}$ of $\ga$ is defined as
\begin{equation}\label{constants_general_case_unique}
	\kappa_{\ast}
    := 
    \frac{ \max\limits_{i\in \SP} \abs{\ga_i}}{\min\limits_{i\in \SP}\abs{\ga_i}}.
\end{equation}
\end{definition}

\subsection{Shallow case ($D=2$)}

In the shallow case, i.e., when $D=2$,
the potential function $F_{\alpha,D}$ of the Bregman divergence is given by 
$F_{\alpha,2}=\Hb$, where
\begin{equation*}
	\Hb (z)
	:=
	\sum_{i=1}^d \left(z_i \arsinh \left(\frac{z_i}{2\alpha}\right) -\sqrt{z_i^2+4\alpha^2}\right)
\end{equation*}
for all $z\in \R^d$, see, e.g., \cite[Theorem 1]{woodworth2020kernel}.
Our main result in the shallow case reads as follows.
\begin{theorem}\label[theorem]{theorem:upper_bound_+-}
	Let $A \in \R^{N \times d}$ and $y \in \R^d$.
    Assume that $A$ and $y$ fulfill \cref{assumption_on_A_y}
    with a unique minimizer $\ga$
	and corresponding support $\mathcal{S}$.
	The null space constants $\varrho$, $\varrho^{-}$, and $\tilde{\varrho}$ are as
	defined in \eqref{nullspaceconstants}.
	Let $\alpha>0$ be the scale of initialization
	and consider
	\begin{equation*}
		\xinf \in \argmin_{x:Ax=y} D_{\Hb}\left( x,0 \right).
	\end{equation*}
	Then the following two statements hold.
    \begin{enumerate}
	\item 
    \textbf{Upper bound:}
    It holds that 
	\begin{equation}\label{eqn:error_bound_sparse_hyperbolic}
		\frac{\norm{\xinf - \ga}_{\ell^1}}
		{\alpha^{1-\varrho}}
		\le 
		\abs{\SC}
		(1+\tilde{\varrho})
		\cdot
		\left( \min_{i\in \SP}\abs{\ga_i} \right)^{\varrho}
		\kappa_{\ast}^{\varrho^{-}}
		\cdot \left(
            1 +
            \frac{\alpha^2}{ \left( \min_{i\in \SP}\abs{\ga_i} \right)^2} 
            \right)^\varrho.
	\end{equation}
    \item
    \textbf{Lower bound: }
    Assume in addition that 
	\begin{equation*}
		\frac{\alpha}{\min_{i\in \SP}\abs{\ga_i}} 
		\le \left(\frac{1}{4\tilde{\varrho}
			\kappa_{\ast}^{\varrho^{-}}
			\abs{\SC}}
            \right)^{\frac{1}{1-\varrho}}.
	\end{equation*}
	Then it holds that
    \begin{equation*}
	\begin{split}
	\frac{\norm{\xinf_{\SC}- \ga_{\SC} }_{\ell^{\infty}}}
	{\alpha^{1-\varrho}}
	\ge 
	\frac
	{ \norm{\ga}_{\ell^\infty}^{\varrho}}
	{\kappa_{\ast}^{\varrho^-}}
	\left(
		1-
		8\tilde{\varrho}^2
        \abs{\SC}
	    \kappa_{\ast}^{\varrho^{-}} 
		\left(
		\frac{\alpha}
		{  \underset{i \in \mathcal{S}}{\min} \ \vert \ga_{i} \vert }
		\right)^{1-\varrho}
		-
		\kappa_{\ast}^{2\varrho^{-} -2\varrho}
		\left(
		\frac{\alpha}
		{\min_{i \in \mathcal{S}} \abs{\ga_i}}
		\right)^{2\varrho}
	\right).
	\end{split}
    \end{equation*}
\end{enumerate}
\end{theorem}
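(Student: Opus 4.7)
The starting point is the KKT condition for the constrained Bregman minimization. Since $\xinf$ minimizes $D_{\Hb}(\,\cdot\,,0)$ over the affine set $\{x\in \R^d:Ax=y\}$ and $\Hb$ is smooth and strictly convex, a Lagrange multiplier $\mu\in \R^N$ exists satisfying
\[
\arsinh\!\left(\frac{\xinf_i}{2\alpha}\right)=(A^\transpose\mu)_i \qquad \text{for every }i\in[d].
\]
Writing $n:=\xinf-\ga\in \ker(A)$ and testing $\langle A^\transpose\mu,n\rangle=0$ against this kernel element, and separating $\SP$ from $\SC$ using $\ga_{\SC}=0$, yields the cornerstone identity
\[
\sum_{i\in \SC}\arsinh\!\left(\frac{\xinf_i}{2\alpha}\right)\xinf_i=\sum_{i\in \SP}\arsinh\!\left(\frac{\xinf_i}{2\alpha}\right)(\ga_i-\xinf_i),
\]
whose left-hand side is a sum of non-negative terms because $\arsinh$ is odd and increasing.

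\textbf{Upper bound.}
The strategy is a logarithmic-scale analysis of the quantities $w_i:=\arsinh(\xinf_i/(2\alpha))$. For $i\in \SP$, if $\xinf_i$ stays close to $\ga_i$, then $w_i\approx \sign(\ga_i)\arsinh(|\ga_i|/(2\alpha))$, which is of order $\log(1/\alpha)$; the goal on $\SC$ is to show $|w_i|\le \varrho\arsinh(\ma/(2\alpha))+O(1)$. The null space property enters via the dual certificate furnished by LP duality, namely an $h=A^\transpose\nu$ with $h_{\SP}=\sign(\ga_{\SP})$ and $\|h_{\SC}\|_{\ell^\infty}\le\varrho$: testing $\langle h,n\rangle=0$ gives $\bigl|\sum_{i\in \SP}\sign(\ga_i)n_i\bigr|\le\varrho\|n_{\SC}\|_{\ell^1}$. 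Inserting the approximation $w_i\approx \sign(\ga_i)\arsinh(|\ga_i|/(2\alpha))$ into the cornerstone identity and using this NSP bound gives an inequality of the form $\|w_{\SC}\|_{\ell^\infty}\le \varrho\arsinh(\ma/(2\alpha))+\text{correction}$, where the correction reflects both the deviation of $\xinf_{\SP}$ from $\ga_{\SP}$ (controlled through $\varrho^-$ via the definition in \eqref{nullspaceconstants}) and the mild spread of $\arsinh(|\ga_i|/(2\alpha))$ over $i\in \SP$; the worst case of the $\varrho^-$-controlled cancellation concentrates on the smallest coordinate $|\ga_i|=\ma$, producing the multiplicative factor $\kappa_\ast^{\varrho^-}$. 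Inverting via $|\xinf_i|=2\alpha|\sinh(w_i)|$ and applying the elementary estimate $\sinh(\varrho L)\le 2^{\varrho-1}\sinh(L)^{\varrho}(1+\sinh(L)^{-2})^{\varrho/2}$ (valid for $L\ge 0$ and $\varrho\in[0,1]$) with $L=\arsinh(\ma/(2\alpha))$ gives, for every $i\in \SC$,
\[
|\xinf_i|\lesssim \alpha^{1-\varrho}\ma^{\varrho}\kappa_\ast^{\varrho^-}\left(1+\frac{\alpha^2}{\ma^2}\right)^{\varrho}.
\]
Summing over the $|\SC|$ coordinates and applying $\|n_{\SP}\|_{\ell^1}\le\tilde{\varrho}\|n_{\SC}\|_{\ell^1}$ produces the prefactor $|\SC|(1+\tilde{\varrho})$ and the claimed upper bound.

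\textbf{Lower bound and main obstacle.}
For the lower bound I would reverse the duality direction. The LP-dual characterization states that every $h\in \mathrm{range}(A^\transpose)$ with $h_{\SP}=\sign(\ga_{\SP})$ must satisfy $\|h_{\SC}\|_{\ell^\infty}\ge\varrho$. Applied to an appropriate rescaling of $w=A^\transpose\mu$, whose $\SP$-coordinates are close to $\sign(\ga_i)\arsinh(\|\ga\|_{\ell^\infty}/(2\alpha))$ up to $\kappa_\ast^{\varrho^-}$-correction (which requires the assumed smallness of $\alpha/\ma$), this yields a lower bound on $\|w_{\SC}\|_{\ell^\infty}$ of order $\varrho\arsinh(\|\ga\|_{\ell^\infty}/(2\alpha))$ minus a correction. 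Inverting via $\sinh$ and isolating the leading behaviour produces the claimed pointwise lower bound on $\|\xinf_{\SC}-\ga_{\SC}\|_{\ell^\infty}=\|\xinf_{\SC}\|_{\ell^\infty}$, with the two explicit subtracted error terms in the statement corresponding to the two approximation steps (one from the $\varrho^-$-controlled deviation of $\xinf_{\SP}$ from $\ga_{\SP}$, and one from inversion of $\sinh$ near moderate arguments). The central technical obstacle throughout is the self-referential nature of the log-scale analysis: the quality of the approximation $\xinf_{\SP}\approx \ga_{\SP}$ depends on $\|n\|_{\ell^1}$, which is the very quantity we want to bound. I would resolve this through a bootstrap, initiating from the coarse $\ell^1$-consistency $\xinf(\alpha)\to\ga$ as $\alpha\to 0$ (established in \cite{chou2023less,wind2023implicit}) and iterating to obtain the sharp constants, while carefully tracking the dependence on all three null space constants $\varrho$, $\varrho^-$, $\tilde{\varrho}$ and on $\kappa_\ast$.
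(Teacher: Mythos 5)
Your proposal identifies the correct starting point (the first-order optimality condition $\langle \nabla \Hb(\xinf), n\rangle = 0$ for $n\in\ker(A)$, and the resulting ``cornerstone identity''), and the lower-bound sketch is sound in spirit. However, the upper-bound sketch has a genuine gap that the paper resolves with a different mechanism.

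\textbf{The gap in the upper bound.} You assert that inserting $w_i\approx\sign(\ga_i)\arsinh(|\ga_i|/(2\alpha))$ into the cornerstone identity and applying the null space inequality yields a pointwise bound ``$\|w_{\SC}\|_{\ell^\infty}\le\varrho\arsinh(\ma/(2\alpha))+\text{correction}$''. This does not follow. What the cornerstone identity plus the null space bound actually give is a control on the weighted sum
\[
\sum_{i\in\SC}|n_i|\arsinh\!\Big(\frac{|n_i|}{2\alpha}\Big)\le \|n_{\SC}\|_{\ell^1}\cdot M,
\]
where $M$ is the corrected bound; this is a statement about an $\ell^1$-weighted average of the dual variables $w_i$, not about their $\ell^\infty$ norm, and a priori the quantity on the left could concentrate on a single coordinate with $|n_i|$ much larger than the average. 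The LP-dual certificate cannot rescue this: since $w/L\in\mathrm{range}(A^\transpose)$ and $\varrho$ is the \emph{minimum} of $\|h_{\SC}\|_\infty$ over such feasible $h$, duality only gives a \emph{lower} bound $\|(w/L)_{\SC}\|_\infty\gtrsim\varrho$, which is the direction needed for the lower bound and the wrong direction for the upper bound. The paper closes this gap by proving and invoking the generalized log-sum inequality (\cref{lemma:generalized_log_sum}): because $t\mapsto t\arsinh(t)$ is convex (\cref{lemma:basic_properties_arsinh}(iii)), Jensen's inequality gives
\[
\sum_{i\in\SC}|n_i|\arsinh\!\Big(\frac{|n_i|}{2\alpha}\Big)\ge \|n_{\SC}\|_{\ell^1}\arsinh\!\Big(\frac{\|n_{\SC}\|_{\ell^1}}{2|\SC|\alpha}\Big),
\]
after which the aggregate inequality becomes a closed inequality in the single scalar $\|n_{\SC}\|_{\ell^1}$ that can be inverted via $\sinh$. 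This convexity lemma is the essential ingredient your sketch is missing; your elementary estimate on $\sinh(\varrho L)$ is fine as far as it goes, but it only enters after one has a handle on the $\ell^1$-norm.

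\textbf{Two smaller points.} For the lower bound, your LP-duality route is logically equivalent to what the paper does (testing against the specific $m\in\ker(A)$ that saturates the definition of $\varrho$, which is exactly the extremal direction of the dual pairing), so this is a presentational rather than a substantive difference; but you would still need to make precise the ``inexact dual feasibility'' caused by the $\kappa_\ast$-spread of $\arsinh(|\ga_i|/(2\alpha))$ over $i\in\SP$. Finally, the paper needs no bootstrap: the upper bound is obtained in one pass, and the lower bound uses that upper bound only once, to guarantee $\|n_{\SP}\|_\infty\le\ma/2$ so that $\sign(\ga_i)=\sign(\ga_i+n_i)$; there is no iteration from a coarse consistency statement.
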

The proof of \cref{theorem:upper_bound_+-}
is deferred to \cref{section:Proofs}.

We observe that since the $\ell^1$-norm and the $\ell^{\infty}$-norm are equivalent,
\cref{theorem:upper_bound_+-} implies that
for fixed $A$ and $y$, we have
\begin{equation*}
	\frac
	{ \norm{\ga}_{\ell^\infty}^{\varrho}}
	{\kappa_{\ast}^{\varrho^-}}
	+
	o(1)
	\le
	\frac{\norm{\xinf (\alpha) - \ga }_{\ell^1}}{\alpha^{1-\varrho}}
	\le
	\abs{\SC}
	(1+\tilde{\varrho})
	\cdot
	\left( \min_{i\in \SP}\abs{\ga_i} \right)^{\varrho}
	\kappa_{\ast}^{\varrho^{-}}
	+
	o(1)
	\quad
	\text{ as }
	\alpha \downarrow 0.
\end{equation*}
In particular, the convergence rate
is proportional to $\alpha^{1-\varrho}$
and
is completely determined by 
the null space parameter $\varrho$.

The upper bound in \cref{theorem:upper_bound_+-}
improves over previous work in the literature.
In \cite{woodworth2020kernel,chou2023less} it was shown
that the approximation error decays with $O \left( \log(1/\alpha) \right)$
as the scale of initialization $\alpha$ approaches $0$.
These results were improved in \cite{wind2023implicit} to $O \left( \alpha^c \right)$.
However, the constant $c \in (0,1)$ was not further determined.
In contrast, our result determines the constant $c \in (0,1)$ precisely.
Moreover, \cref{theorem:upper_bound_+-} is the first result in the literature
which complements the upper bound with a lower bound
which consequently shows that $\alpha^{1-\varrho}$
is the correct rate of convergence.

One may ask whether the upper and lower bounds in \cref{theorem:upper_bound_+-}
can be further improved. 
The following result states
that our bounds are tight in an asymptotic sense
as $\alpha \downarrow 0$.
Thus, at least in an asymptotic sense,
there is no room for further refinement.
\begin{proposition}\label[proposition]{proposition:SharpnessDtwo}
	For given $A \in \R^{N \times d}$ and $y \in \R^N$
	denote 
	for any $\alpha>0$
	by $\xinf (\alpha)$
	the unique minimizer of
	\begin{equation}\label{equ:internDominik3}
		\min_{x:Ax=y} D_{\Hb}\left(x,0 \right).
	\end{equation}
	Now let $d \in \N$ with $d \ge 3$. 
	Choose any null space constants
	$\varrho \in [0,1)$,
	$\tilde{\varrho}>0$,
	and
	$\varrho^{-}>0$
	which satisfy the relations
	$\varrho^{-}\ge \varrho$ and 
	$ 2 \varrho^{-} - \varrho = \tilde{\varrho} $.
	Then there exists a matrix $A \in \R^{N \times d}$ 
	such that the following two statements hold
	for any $\kappa_{\ast} \ge 1$.
	\begin{enumerate}
		\item 
		There exists $y \in \R^N$ 
		such that 
		there is a unique minimizer $\ga$ 
		of 
		$\underset{x:Ax=y}{\min} \norm{x}_{\ell^1} $
		with condition number $\kappa_{\ast}$
		and such that the corresponding null space constant
		are $\varrho$, $\tilde{\varrho}$, and $\varrho^{-}$
		as chosen above.
		Moreover, the minimizers $\left(\xinf (\alpha)\right)_{\alpha >0}$ 
		of \cref{equ:internDominik3}
		satisfy
		\begin{equation}\label{equ:internDominik6}
			\lim_{\alpha \downarrow 0}
			\frac{\norm{ \xinf \left(\alpha \right)  - \ga}_{\ell^1}}
			{\alpha^{1-\varrho}  }
			=
			\abs{\SC} 
			(1+\tilde{\varrho})
			\cdot 
			\left( \min_{i\in \SP}\abs{\ga_i} \right)^{\varrho}
			\kappa_{\ast}^{\varrho^{-}}.
		\end{equation}
		\item
		There exists $y \in \R^N$ 
		such that 
		there is a unique minimizer $\ga$ 
		with condition number $\kappa_{\ast}$
		of the optimization problem
		$\underset{x:Ax=y}{\min} \norm{x}_{\ell_1} $
		and such that
		the minimizers $\left(\xinf (\alpha)\right)_{\alpha >0}$ 
		of \cref{equ:internDominik3}
		satisfy
		\begin{equation}\label{equ:internDominik5}
			\lim_{\alpha \downarrow 0}
			\frac{\norm{ \left(\xinf \left(\alpha \right) \right)_{\SC} 
			- \ga_{\SC} }_{\ell^{\infty}}}
			{\alpha^{1-\varrho} }
			=
			\frac{\norm{ \ga }_{\ell^\infty}^{\varrho}}
			{\kappa_{\ast}^{\varrho^{-}}}.
		\end{equation}
	\end{enumerate}
\end{proposition}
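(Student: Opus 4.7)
My strategy is to construct a single explicit $A$ for which the extremal null-space vector and the Bregman minimizer admit a closed-form scalar parameterization, then read off the limit directly. Set $a := \varrho^{-}$ and $b := \varrho^{-} - \varrho$, so that $a \ge b \ge 0$, $a - b = \varrho$, and $a + b = \tilde{\varrho}$, and take
\begin{equation*}
A := \begin{pmatrix} 1 & 0 & a & a & \cdots & a \\ 0 & 1 & b & b & \cdots & b \end{pmatrix} \in \R^{2\times d}.
\end{equation*}
For Part~1 I pick $y = (\kappa_{\ast} c, -c, 0, \ldots, 0)^{\transpose}$ for some $c > 0$, so that $\ga = (\kappa_{\ast} c, -c, 0, \ldots, 0)^{\transpose}$ has $\SP = \{1, 2\}$, $\min_{i \in \SP}|\ga_i| = c$, and condition number $\kappa_{\ast}$. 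For Part~2 I use the same $A$ but swap magnitudes, $y = (c, -\kappa_{\ast} c, 0, \ldots, 0)^{\transpose}$, so that $\|\ga\|_{\ell^\infty} = \kappa_{\ast} c$. Parameterizing $\ker(A)$ by $T := \sum_{i=3}^{d} n_i$ gives $n_1 = -aT$, $n_2 = -bT$, and $\|n_{\SC}\|_{\ell^1} \ge |T|$ with equality iff $n_{\SC}$ is cosigned. Inserting the signs $s_1 = +1$, $s_2 = -1$ of $\ga$ into \eqref{nullspaceconstants} recovers $\varrho = a - b$, $\varrho^{-} = a$, $\tilde{\varrho} = a + b$, and the dual certificate $\mu = (+1, -1)^{\transpose}$ confirms uniqueness since $|(A^{\transpose}\mu)_i| = |a - b| = \varrho < 1$ for $i \in \SC$.

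The gradient-flow limit $\xinf(\alpha)$ is pinned down by the Lagrange condition for \eqref{equ:internDominik3}: there exists $\lambda \in \R^2$ with $\xinf_i(\alpha) = 2\alpha \sinh((A^{\transpose}\lambda)_i)$. Since the $\SC$-columns of $A$ are identical, strict convexity of $\Hb$ forces $\xinf_3(\alpha) = \cdots = \xinf_d(\alpha) =: \epsilon(\alpha)$, and feasibility $A\xinf(\alpha) = y$ produces $\xinf_1 = y_1 - a|\SC|\epsilon$ and $\xinf_2 = y_2 - b|\SC|\epsilon$. The problem thus reduces to the scalar fixed-point equation
\begin{equation*}
\epsilon = 2\alpha \sinh\!\left( a\,\arsinh\tfrac{y_1 - a|\SC|\epsilon}{2\alpha} + b\,\arsinh\tfrac{y_2 - b|\SC|\epsilon}{2\alpha} \right).
\end{equation*}
Invoking the a priori bound $\epsilon = O(\alpha^{1-\varrho})$ from \cref{theorem:upper_bound_+-} and the large-argument asymptotic $\arsinh(x) = \sign(x)\log|2x| + o(1)$ yields
\begin{equation*}
a\,\arsinh\tfrac{y_1 - a|\SC|\epsilon}{2\alpha} + b\,\arsinh\tfrac{y_2 - b|\SC|\epsilon}{2\alpha} = \varrho \log(1/\alpha) + \log\!\left(y_1^{a} |y_2|^{-b}\right) + o(1),
\end{equation*}
which exponentiates via $\sinh(x) \sim \tfrac{1}{2}e^{x}$ to $\epsilon(\alpha) = \alpha^{1-\varrho} y_1^{a} |y_2|^{-b}(1 + o(1))$. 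Part~1 then follows from the telescoping identity $\|\xinf(\alpha) - \ga\|_{\ell^1} = |\SC|(1+\tilde{\varrho})\,\epsilon(\alpha)$ together with the arithmetic $y_1^{a}|y_2|^{-b} = \kappa_{\ast}^{\varrho^{-}}(\min_{i\in\SP}|\ga_i|)^{\varrho}$, valid in Part~1's ordering $|y_2| \le y_1$; Part~2 follows from $\|(\xinf(\alpha))_{\SC} - \ga_{\SC}\|_{\ell^\infty} = \epsilon(\alpha)$ and the complementary identification $y_1^{a}|y_2|^{-b} = \|\ga\|_{\ell^\infty}^{\varrho}/\kappa_{\ast}^{\varrho^{-}}$ in the reversed ordering $y_1 \le |y_2|$.

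The main technical hurdle is the self-reference in the fixed-point equation: the $\arsinh$-arguments depend on $\epsilon(\alpha)$ itself, so a naive substitution would be circular. Feeding in the upper bound of \cref{theorem:upper_bound_+-} breaks the circularity by ensuring $a|\SC|\epsilon(\alpha) = O(\alpha^{1-\varrho}) = o(1)$, whence $\log(y_1 - a|\SC|\epsilon) = \log y_1 + o(1)$ remains valid inside the $\arsinh$ expansion. Beyond this bootstrapping, the remaining work is a routine Landau-symbol manipulation that collapses to a few short lines.
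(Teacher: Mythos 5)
Your construction is correct and delivers the same asymptotics, but the route is structurally different from the paper's, so a comparison is worthwhile.

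The paper takes $A \in \R^{(d-1)\times d}$ with a \emph{one-dimensional} kernel spanned by a fixed vector $n$; then $\xinf(\alpha)=\ga + t_\alpha n$ automatically, and the first-order optimality condition directly produces a scalar equation for $t_\alpha$. Uniqueness of the $\ell^1$-minimizer comes for free from the one-dimensionality of the kernel together with $\varrho<1$ (\cref{lemma:normal_coneSimple}). You instead take $A\in\R^{2\times d}$, so the kernel is $(d-2)$-dimensional, and you perform the reduction to a scalar via a permutation-symmetry argument on coordinates $3,\dots,d$ combined with the Lagrange characterization $\xinf_i = 2\alpha\sinh((A^\transpose\lambda)_i)$. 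This is a clean alternative reduction, but it obliges you to supply a separate dual-certificate argument for uniqueness of $\ga$ (which you do correctly: $\mu=(1,-1)^\transpose$ gives $|(A^\transpose\mu)_i|=\varrho<1$ on $\SC$). After a sign flip in coordinate 2 and the identification $a=\gamma_2$, $b=\gamma_1$, your scalar fixed-point equation coincides with the paper's \eqref{eqn230}, and the bootstrap via \cref{theorem:upper_bound_+-}, the $\arsinh$ expansion, and the exponentiation via $\sinh(x)\sim\tfrac12 e^x$ mirror the paper's Lemma~\ref{lemma:talphaDtwo}. One point worth flagging, which your proof shares with the paper's as written: the step $\sinh(x)\sim\tfrac12 e^x$ requires the argument $x=\varrho\log(1/\alpha)+O(1)$ to diverge, i.e.\ $\varrho>0$. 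When $\varrho=0$ the argument is bounded, $\sinh(\log C +o(1))\to\tfrac12(C-C^{-1})\ne C/2$, and the stated limit picks up a $(1-C^{-2})$ factor (the paper's $B(\alpha)\to 1-\ga_1^{2\gamma_1}/\ga_2^{2\gamma_2}$, not $1$). So the claimed sharpness constant is off by that factor in the boundary case $\varrho=0$; this is not a defect of your approach relative to the paper's, but both arguments need $\varrho>0$ for the exponentiation step as written.
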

The proof of \cref{proposition:SharpnessDtwo} is deferred to \cref{section:Sharpness}.
We note that $\varrho^{-} \ge \varrho$ is a direct consequence
of the definition of these two constants.
It remains an open problem whether the 
condition $2\varrho^{-} - \varrho = \tilde{\varrho}$
can be relaxed.

\subsection{Deep case ($D\ge 3$)}

In the deep case, i.e., when $D \ge 3$,
the potential function $F_{\alpha,D}$ of the Bregman divergence is given by 
$F_{\alpha,D}=\QQ$, see \cite[Theorem 3]{woodworth2020kernel}.
To define this function $\QQ$, we 
first introduce
the function
$h_D: (-1,1) \rightarrow \R$
defined by
\begin{equation*}
h_D (z)
:=
\frac{1}{(1-z)^{D/(D-2)}} -\frac{1}{(1+z)^{D/(D-2)}}
\end{equation*}
for all $z\in \R$.
Next, 
denote by $h_D^{-1}$ the inverse of $h_D$
and define
$q_D(u):=\int_{0}^{u} h^{-1}_D(v) dv$.
With these definitions in place,
we can finally define the function $\QQ$ as
\begin{equation*}
	\QQ (z)
	:=
	\sum_{i=1}^{d} \alpha q_D \left( z_i/\alpha \right)
\end{equation*}
for all $z \in \R^d$.

Our main result in the deep case reads then as follows.

\begin{theorem}\label[theorem]{theorem:upper_bound_deep}
	Let $A \in \R^{N \times d}$ and $y \in \R^d$.
    Assume that $A$ and $y$ fulfill \cref{assumption_on_A_y}
    with a unique minimizer $\ga$ and corresponding support $\mathcal{S}$.
	The null space constants $\varrho$, $\varrho^{-}$, and $\tilde{\varrho}$ are as
	defined in \cref{nullspaceconstants}.
    Assume that $D \in \N$ with $D\ge 3$, let $\gamma :=\frac{D-2}{D}$, and let $\alpha>0$.
	Let
	\begin{equation*}
		\xinf \in \argmin_{x:Ax=y} D_{\QQ}(x,0).
	\end{equation*}
    Then the following statements hold.
    \begin{enumerate}
    \item
    \textbf{Upper bound:}
    Assume that
	\begin{equation} \label{eqn1001}
			\frac{\alpha}{\mg} 
			<  \left( 
            \frac{(1-\varrho)\gamma}{4\varrho^{-}} \right)^{\frac{1}{\gamma}}.
	\end{equation}
	Then
	\begin{equation} \label{eqn1002}
		\frac{\norm{\xinf -\ga}_{\ell^1}}{\alpha}
			\le \abs{\SC} (1+\tilde{\varrho})
			\bigg[ 
			\hh(\varrho) 
			+\frac{4\varrho^{-}}{\gamma (1-\varrho)^{\frac{1}{\gamma}+1}} 
			\cdot \Big(\frac{\alpha}{\mg}\Big)^{\gamma}
			\bigg].
	\end{equation}
    \item
    \textbf{Lower bound: }
	Assume in addition that
	\begin{equation} \label{eqn1006}
		\frac{\alpha}{\min_{i\in \SP}\abs{\ga_i}} 
		\le 
		\min\bigg\{ 
		\left( \frac{(1-\varrho) \gamma}{4\varrho^{-}} \right)^{\frac{1}{\gamma}},
		\quad
		\frac{(1-\varrho)^{\frac{1}{\gamma}}}{4(1+\tilde{\varrho})\abs{\SC}},
		\quad
		\left(
			\frac{\varrho}{\varrho  + 2^{2+\gamma} \tilde{\varrho}  \gamma  \kappa_{\star}}
		\right)^{\frac{1}{\gamma}} 
		\bigg\}.
	\end{equation}
	Then
	\begin{equation} \label{lower_bound_deepd_unique}
		\frac{\norm{\xinf_{\SC} -\ga_{\SC}}_{\ell^{\infty}}}{\alpha}
		\ge  
			\hh(\varrho) 
			- 
			\frac{2(\varrho  + 2^{2+\gamma}\tilde{\varrho} \gamma \kappa_{\star} )}
			{\gamma  (1-\varrho)^{\frac{1}{\gamma}+1}} 
			\cdot \Big(\frac{\alpha}{\min_{i\in \SP}\abs{\ga_i}}\Big)^{\gamma}.
	\end{equation}
    \end{enumerate}
\end{theorem}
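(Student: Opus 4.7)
The plan combines the first-order optimality condition for the Bregman minimization with a quantitative null space property argument, in the style of $\ell^1$-recovery proofs.

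\textbf{Setup.} Since $\QQ(0)=0$ and $\nabla\QQ(0)=0$, the Bregman problem reduces to $\xinf=\argmin_{Ax=y}\QQ(x)$, and KKT yields $w\in\R^N$ with $(A^{\transpose}w)_i=h_D^{-1}(\xinf_i/\alpha)$ for all $i\in[d]$; equivalently $\sum_{i=1}^d h_D^{-1}(\xinf_i/\alpha)\,m_i=0$ for every $m\in\ker(A)$. Set $n:=\xinf-\ga\in\ker(A)$. Under the smallness assumption on $\alpha$, a short bootstrap (from a crude upper bound forcing $\norm{n}_{\ell^1}\to 0$) gives $\sign(\xinf_i)=\sign(\ga_i)$ and $\abs{\xinf_i}\ge\abs{\ga_i}/2$ on $\SP$. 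Using the asymptotic $h_D^{-1}(v)=\sign(v)(1-\abs{v}^{-\gamma})+O(\abs{v}^{-2\gamma})$ as $\abs{v}\to\infty$, for $i\in\SP$ one then has $h_D^{-1}(\xinf_i/\alpha)=\sign(\ga_i)(1-\eta_i)$ with $0\le\eta_i\le C_\gamma(2\alpha/\abs{\ga_i})^{\gamma}$, so $\max_{\SP}\eta_i\le\mu_\alpha$ for some $\mu_\alpha\lesssim\kappa_{\ast}^{\gamma}(\alpha/\mg)^\gamma$.

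\textbf{Upper bound.} Apply the orthogonality identity with $m=n$, split over $\SP\cup\SC$, and use $\xinf_\SC=n_\SC$ (since $\ga_\SC=0$):
\begin{equation*}
\sum_{i\in\SC}h_D^{-1}(\abs{n_i}/\alpha)\abs{n_i}=-\sum_{i\in\SP}\sign(\ga_i)n_i+\sum_{i\in\SP}\sign(\ga_i)\eta_i n_i.
\end{equation*}
The NSP gives $-\sum_\SP\sign(\ga_i)n_i\le\varrho\norm{n_\SC}_{\ell^1}$. For the error, retain only the indices with $\sign(\ga_i)n_i>0$ (the negative terms only help) and bound $\sum_{\sign(\ga_i)n_i>0}\abs{n_i}=\sum_\SP\sign(\ga_i)n_i+\sum_{\sign(\ga_i)n_i<0}\abs{n_i}\le(\varrho+\varrho^{-})\norm{n_\SC}_{\ell^1}\le 2\varrho^{-}\norm{n_\SC}_{\ell^1}$ (using $\varrho\le\varrho^{-}$), producing
\begin{equation*}
\sum_{i\in\SC}h_D^{-1}(\abs{n_i}/\alpha)\abs{n_i}\le\bigl(\varrho+2\mu_\alpha\varrho^{-}\bigr)\norm{n_\SC}_{\ell^1}.
\end{equation*}
The decisive step is to convert this weighted estimate into an $\ell^1$ bound on $n_\SC$. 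Since $h_D^{-1}$ is concave on $[0,\infty)$ with $1-h_D^{-1}(t)=\Theta(t^{-\gamma})$ as $t\to\infty$, indices $i\in\SC$ with $\abs{n_i}/\alpha>h_D(\varrho)$ contribute to the left-hand side an excess $(h_D^{-1}(\abs{n_i}/\alpha)-\varrho)\abs{n_i}$ lower-bounded via concavity by $h_D'(\varrho)^{-1}(\abs{n_i}/\alpha-h_D(\varrho))$ (tangent to $h_D^{-1}$ at $h_D(\varrho)$). Splitting $\SC$ into $L=\{i:\abs{n_i}/\alpha>h_D(\varrho)\}$ and its complement and applying this quantitative gap yields $\norm{n_\SC}_{\ell^1}\le\abs{\SC}\alpha h_D(\varrho)+\tfrac{4\varrho^{-}}{\gamma(1-\varrho)^{1/\gamma+1}}(\alpha/\mg)^\gamma\,\abs{\SC}\alpha$; combining with $\norm{n_\SP}_{\ell^1}\le\tilde{\varrho}\norm{n_\SC}_{\ell^1}$ (definition of $\tilde{\varrho}$) produces the claimed $\abs{\SC}(1+\tilde{\varrho})$-prefactor.

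\textbf{Lower bound.} By \cref{lemma:normal_coneSimple}, the supremum defining $\varrho$ is attained at some $n^{\ast}\in\ker(A)$ with $\norm{n^{\ast}_\SC}_{\ell^1}=1$ and $-\sum_\SP\sign(\ga_i)n^{\ast}_i=\varrho$. Applying the orthogonality identity with $m=n^{\ast}$:
\begin{equation*}
\sum_{i\in\SC}h_D^{-1}(n_i/\alpha)\,n^{\ast}_i=\varrho+\sum_{i\in\SP}\sign(\ga_i)\eta_i n^{\ast}_i,
\end{equation*}
and $\absb{\sum_\SP\sign(\ga_i)\eta_i n^{\ast}_i}\le\mu_\alpha\norm{n^{\ast}_\SP}_{\ell^1}\le\mu_\alpha\tilde{\varrho}$. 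Since $\absb{\langle h_D^{-1}(n_\SC/\alpha),n^{\ast}_\SC\rangle}\le\norm{h_D^{-1}(n_\SC/\alpha)}_{\ell^\infty}\norm{n^{\ast}_\SC}_{\ell^1}=\norm{h_D^{-1}(n_\SC/\alpha)}_{\ell^\infty}$, we conclude $\norm{h_D^{-1}(n_\SC/\alpha)}_{\ell^\infty}\ge\varrho-\mu_\alpha\tilde{\varrho}$. Monotonicity and convexity of $h_D$ then give $\norm{n_\SC}_{\ell^\infty}/\alpha\ge h_D(\varrho-\mu_\alpha\tilde{\varrho})\ge h_D(\varrho)-h_D'(\varrho)\mu_\alpha\tilde{\varrho}$, and the explicit $h_D'(\varrho)=\gamma^{-1}[(1-\varrho)^{-1/\gamma-1}+(1+\varrho)^{-1/\gamma-1}]$ combined with careful tracking of the $\kappa_{\ast}$ factor in $\mu_\alpha$ produces the constant stated in \eqref{lower_bound_deepd_unique}.

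\textbf{Main obstacle.} The sharpest technical point is the conversion of the weighted estimate in the upper bound into a pointwise-style $\ell^1$ bound on $n_\SC$: a priori, such a weighted-average estimate only controls the $\abs{n_i}$-weighted average of $h_D^{-1}(\abs{n_i}/\alpha)$, not each $\abs{n_i}$ individually. Exploiting the concavity of $h_D^{-1}$ and its explicit saturation rate $\gamma$ to show that any single "large" $\abs{n_i}$ contributes disproportionately is essential; tracking the many constants precisely to recover the factor $\tfrac{4\varrho^{-}}{\gamma(1-\varrho)^{1/\gamma+1}}$ in the error is the main bookkeeping burden.
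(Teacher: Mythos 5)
Your setup (KKT orthogonality, $\sum_i h_D^{-1}(\xinf_i/\alpha)m_i=0$ for $m\in\ker A$) and your lower bound argument are essentially the paper's: test against the extremizer $n^\ast$ attaining $\varrho$, bound the off-support part by $\norm{h_D^{-1}(n_\SC/\alpha)}_{\ell^\infty}\norm{n^\ast_\SC}_{\ell^1}$, and apply $h_D$ plus convexity. (One caution: your $\mu_\alpha\lesssim \kappa_\ast^\gamma(\alpha/\mg)^\gamma$ does not quite reproduce the paper's $\delta_2\lesssim\gamma\kappa_\ast\varepsilon^\gamma$ because the $\kappa_\ast$-dependence in the paper comes from comparing $h_D^{-1}$ at $\abs{\ga_i}/\alpha$ against $h_D^{-1}$ at $\mg/\alpha$, not from saturation alone, but this is a constant-tracking issue, not a structural one.)

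The upper bound is where the proposal has a genuine gap. Your "decisive step" claims that for $i\in\SC$ with $\abs{n_i}/\alpha>h_D(\varrho)$ the excess $(h_D^{-1}(\abs{n_i}/\alpha)-\varrho)\abs{n_i}$ is \emph{lower}-bounded by the tangent-line quantity $h_D'(\varrho)^{-1}(\abs{n_i}/\alpha-h_D(\varrho))$. Concavity of $h_D^{-1}$ gives precisely the opposite: the tangent at $h_D(\varrho)$ lies \emph{above} the graph, so $h_D^{-1}(t)-\varrho\le h_D'(\varrho)^{-1}(t-h_D(\varrho))$ for all $t$. Your inequality is therefore reversed, and the $L$-vs-complement split does not close. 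The paper does not use a local tangent-line argument at all; it proves that $t\mapsto t\,h_D^{-1}(t)$ is \emph{convex} (\cref{lemma:almost_convex}) and applies the generalized log-sum/Jensen inequality (\cref{lemma:generalized_log_sum}) to obtain the global bound
\begin{equation*}
	\sum_{i\in\SC} h_D^{-1}\Big(\frac{\abs{n_i}}{\alpha}\Big)\abs{n_i}
	\ge \norm{n_\SC}_{\ell^1}\, h_D^{-1}\Big(\frac{\norm{n_\SC}_{\ell^1}}{\alpha\abs{\SC}}\Big),
\end{equation*}
which, after dividing by $\norm{n_\SC}_{\ell^1}$ and applying $h_D$, gives $\norm{n_\SC}_{\ell^1}\le\alpha\abs{\SC}h_D(\varrho+\delta)$ cleanly. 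Without a Jensen-type step the weighted-average hypothesis does not by itself control the $\ell^1$-mass of $n_\SC$: a single large coordinate produces a term $\approx(1-\varrho)\abs{n_i}$, and the quadratic comparison your tangent picture suggests is simply not available in the needed direction. Separately, your sign-bootstrap on $\SP$ for the upper bound is unnecessary and risks circularity; the paper avoids it by using monotonicity of $\nabla\QQ$ to replace $\nabla\QQ(\ga_\SP+n_\SP)$ by $\nabla\QQ(\ga_\SP)$ at the outset, so the upper bound needs no a priori control on $n$.
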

The proof of \cref{theorem:upper_bound_deep}
is deferred to \cref{section:Proofs}.

Since the $\ell^{\infty}$-norm
is smaller than the $\ell^{1}$-norm, 
\cref{theorem:upper_bound_+-} implies
that for fixed $A$ and $y$
we have that
\begin{equation*}
\hh(\varrho) + o(1) 	
\le
\frac{\norm{\xinf (\alpha) - \ga }_{\ell^1}}{\alpha}
\le
\abs{\SC}
\left(1+\tilde{\varrho} \right)
\hh (\varrho)
+ o(1)	
\quad
\text{ as }
\alpha \downarrow 0.
\end{equation*}
In particular, the convergence rate of the approximation error is proportional to $\alpha$.
Thus, the convergence rate is faster than in the shallow case,
where the rate is given by $\alpha^{1-\varrho}$.
As we will see in our numerical experiments in \cref{section:Simulations},
the constant $\varrho \in [0,1)$ is typically smaller in noisy settings.
Thus, this result indicates that the advantage of deeper networks 
is especially pronounced in noisy settings.

We note that the upper bound in \cref{theorem:upper_bound_+-}
improves over previous work in the literature.
In \cite{chou2023less} it was shown
that the approximation error is bounded from above by $O \left( \alpha^{1-2/D} \right)$.
These results were improved in \cite{wind2023implicit}
to a bound of the form
\begin{equation*}
\norm{\xinf (\alpha) - \ga}_{\ell^2}
\le
C_A \alpha,
\end{equation*}
where $C_A$ denotes an absolute constant which depends only on $A$.
However, the absolute constant $A$ was not determined.
In contrast, our constant is determined precisely.
Moreover, \cref{theorem:upper_bound_deep} is the first result in the literature
which shows a lower bound for the case $D\ge 3$.

The following result shows that our upper and lower bounds are sharp in an asymptotic sense.
\begin{proposition}\label[proposition]{proposition:SharpnessDlargertwo}
	For given $A \in \R^{N \times d}$ and $y \in \R^N$
	denote 
	for any $\alpha>0$
	by $\xinf (\alpha)$
	the unique minimizer of
	\begin{equation}\label{equ:internDominik10}
		\min_{x:Ax=y} D_{\QQ}\left(x,0 \right).
	\end{equation}
	Let $d \in \N$ with $d \ge 3$
	and let $\varrho \in [0,1)$
	be arbitrary.
	Then there exists a matrix $A \in \R^{N \times d}$ 
	and $y \in \R^d$ 
	such that the following holds.
	\begin{enumerate}
		\item There exists a unique minimizer $\ga \in \R^N$ of the optimization problem
        $ \underset{x: Ax=y}{\min} \norm{x}_{\ell_1} $
		such that the null space constant corresponding to $A$ and $\ga$
		are equal to $\varrho$ and $\tilde{\varrho}$
		as chosen above.
		\item Moreover, it holds that
		\begin{equation}\label{equ:internDominik13}
		    \lim_{\alpha \downarrow 0}
			\frac{\norm{\xinf \left(\alpha \right) - \ga}_{\ell^1}}
			{\alpha
			\abs{\SC}
			\left(1 + \tilde{\varrho}\right)}
			=
			\hh (\varrho)
		\end{equation}
		as well as
		\begin{equation}\label{equ:internDominik14}
		    \lim_{\alpha \downarrow 0}
			\frac{\norm{ \left( \xinf \left(\alpha \right) \right)_{\SC} - \ga_{\SC}}_{\ell^{\infty}}}
			{\alpha}
			=
			\hh (\varrho).
		\end{equation}
	\end{enumerate}
\end{proposition}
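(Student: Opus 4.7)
My plan is to exhibit a single rank-one example that simultaneously saturates both limits \eqref{equ:internDominik13} and \eqref{equ:internDominik14}. Given $\varrho \in [0,1)$ and $d \ge 3$, I set $N := 1$ and take
\begin{equation*}
A := (1,\ -\varrho,\ -\varrho,\ \ldots,\ -\varrho) \in \R^{1 \times d}, \qquad y := 1.
\end{equation*}
Since $\varrho < 1$, the triangle inequality applied to any $x \in \R^d$ with $Ax = 1$ yields $\|x\|_{\ell^1} \ge 1$ with equality only when $x = e_1$, so $\ga := e_1$ is the unique $\ell^1$-minimizer, with $\SP = \{1\}$ and $\SC = \{2, \ldots, d\}$. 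On the null space $\ker A = \{n \in \R^d : n_1 = \varrho \sum_{i \ge 2} n_i\}$, a direct maximization shows that the two null space constants both equal $\varrho$ (so $\tilde\varrho = \varrho$), matching the prescribed value.

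I would then characterize $\xinf(\alpha)$ via first-order optimality for the strictly convex problem \eqref{equ:internDominik10}. Since $\partial_{z_i} \QQ(z) = h_D^{-1}(z_i/\alpha)$ and $\nabla \QQ(0) = 0$, optimality yields a scalar multiplier $\lambda = \lambda(\alpha) \in \R$ with $h_D^{-1}(\xinf_i(\alpha)/\alpha) = \lambda A_i$ for every index $i$. Inverting and using that $\hh$ is odd gives the symmetric form
\begin{equation*}
\xinf_1(\alpha) = \alpha\, \hh(\lambda), \qquad \xinf_i(\alpha) = -\alpha\, \hh(\varrho \lambda) \quad (i \ge 2),
\end{equation*}
and substituting into $A\xinf(\alpha) = 1$ reduces the whole system to the scalar equation
\begin{equation*}
1 = \alpha\, \hh(\lambda) + \varrho (d-1)\, \alpha\, \hh(\varrho \lambda).
\end{equation*}

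The final step is to show $\lambda(\alpha) \uparrow 1$ as $\alpha \downarrow 0$. Because $\hh$ is bounded on $[-\varrho, \varrho]$ but has a vertical asymptote at $1$, the scalar equation forces $\hh(\lambda(\alpha))$ to diverge like $1/\alpha$, so $\lambda(\alpha) \to 1^-$ and, by continuity of $\hh$ on $(-1,1)$, $\hh(\varrho \lambda(\alpha)) \to \hh(\varrho)$. Hence
\begin{equation*}
\lim_{\alpha \downarrow 0} \frac{\abs{\xinf_i(\alpha)}}{\alpha} = \hh(\varrho) \quad (i \ge 2), \qquad
\lim_{\alpha \downarrow 0} \frac{\abs{\xinf_1(\alpha) - 1}}{\alpha} = \varrho (d-1)\, \hh(\varrho),
\end{equation*}
from which \eqref{equ:internDominik14} is immediate and \eqref{equ:internDominik13} follows by summing, using $(d-1)(1+\varrho) \hh(\varrho) = \abs{\SC} (1 + \tilde{\varrho}) \hh(\varrho)$. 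The main technical obstacle is the quantitative asymptotic $\lambda(\alpha) \to 1^-$; once this is in place, everything else reduces to elementary manipulations with $\hh$ and the KKT conditions, and the same construction handles both assertions of the proposition at once.
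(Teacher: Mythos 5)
Your construction is correct and genuinely different from the paper's. The paper builds $A \in \R^{(d-1)\times d}$ with a one-dimensional null space $\ker A = \spanV\{n\}$ with $n = (\gamma_1, -\gamma_2, \tfrac{1}{d-2},\ldots,\tfrac{1}{d-2})$, parameterizes $\xinf(\alpha) = \ga + t_\alpha n$, and extracts a scalar self-consistency equation for $t_\alpha$ from the first-order condition (Lemma~\ref{lemma:sharpnessDequal3}), then takes the limit $\alpha \downarrow 0$ there. You instead take $N=1$, so the constraint $Ax = y$ is a single linear equation and the Lagrange multiplier in the KKT system for $\min_{Ax=y}\QQ(x)$ is a scalar $\lambda$. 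This decouples the coordinates: $\xinf_i(\alpha) = \alpha\,\hh(\lambda A_i)$, which is an explicit closed form. The resulting scalar equation $1 = \alpha\hh(\lambda) + \varrho(d-1)\alpha\hh(\varrho\lambda)$ then drives $\lambda(\alpha)\to 1^-$ because $\hh$ is bounded away from its pole $1$ on $[-\varrho,\varrho]$ but blows up at $1$; continuity of $\hh$ at $\varrho$ then finishes the computation. What the two approaches buy: yours is slightly more elementary, avoiding the decomposition $\xinf = \ga + t_\alpha n$ and giving a completely explicit solution; the paper's, with its two free parameters $\gamma_1,\gamma_2$ satisfying $|\gamma_2-\gamma_1|=\varrho$, also controls $\tilde\varrho = \gamma_1+\gamma_2$ independently (useful for Part~a of Proposition~\ref{proposition:SharpnessDtwo}), whereas your construction pins $\tilde\varrho = \varrho$ and $\varrho^- = \varrho$. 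Since Proposition~\ref{proposition:SharpnessDlargertwo} only quantifies $\varrho$ as arbitrary, that restriction is harmless here. One small point worth making explicit in a write-up: the KKT condition $h_D^{-1}(\xinf_i/\alpha) = \lambda A_i$ has a solution only when $\lambda A_i$ lies in the range $(-1,1)$ of $h_D^{-1}$, so $\lambda \in (-1,1)$, and positivity of $y=1$ with the odd/increasing structure of $\hh$ forces $\lambda(\alpha) \in (0,1)$; this is what makes the subsequence argument for $\lambda(\alpha)\to 1^-$ airtight.
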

The proof of \cref{proposition:SharpnessDlargertwo} is deferred to \cref{section:Sharpness}.
\section{Related work}\label{section:related_work}

As mentioned in the introduction,
diagonal linear neural networks
in a regression context
have been studied extensively
\cite{vaskevicius2019implicit,woodworth2020kernel,amid2020reparameterizing,amid2020winnowing,yun2021a,azulay2021implicit,li2022implicit,chou2023less},
showing that this architecture can implicitly regularize towards sparsity.
Also, the training dynamics were rigorously studied in \cite{pesme2023saddle},
where a \textit{saddle-to-saddle} dynamics was established.
Additionally, 
the implicit bias of momentum-based optimization algorithms
in the context of diagonal linear networks was analyzed in \cite{papazov2024leveraging}.
The authors of the paper at hand also have published a 
short note where they study a simplified version of the problem considered in this paper.
Namely they consider positively quadratically reparameterizations linear regression,
i.e., $v=0$ and $D=2$, see \cite{mattimplicit}.
In this note, they establish 
analogous similar upper and lower bounds 
as in the present paper.

A key insight in this line of research is that gradient flow with Hadamard
reparameterization is equivalent to the mirror descent/flow algorithm 
on the original parameter space with a suitable potential function $F_{\alpha,D}$.
In \cite{li2022implicit}, 
conditions were examined when gradient flow 
on a reparameterized loss function can be equivalently 
understood as a mirror flow.
This connection between gradient flow and mirror flow 
has been further explored to determine whether implicit regularization 
towards other minimizers can be induced by different reparameterizations.
For example, in \cite{chou2023induce},
implicit regularization towards the $\ell^p$-norm with $p \in (1,2)$
has been studied for certain reparameterizations,
whereas \cite{kolb2023smoothing}
studied reparameterizations that 
induce an implicit bias towards solutions with minimal $\ell_{p,q}$-norm.
In the context of sparse phase retrieval, 
mirror flow with the hypentropy mirror map and 
the closely related quadratically reparameterized Wirtinger flow 
were studied in \cite{wu2020continuous,wu2023nearly}.

Implicit regularization has also been examined in classification tasks
with linear classifiers,
see e.g., 
\cite{soudry2018,nacson2019convergence,moroshko2020implicit,ji2019implicit,ji2021characterizing}.
It has been observed that,
in certain cases,
gradient descent converges to certain max-margin classifiers.
These observations have been extended to more general reparameterizations
in \cite{SunJMIR2023,pesme2024implicit}.

Beyond models related to diagonal reparameterizations, 
implicit regularization has been studied in the context of
linear convolution neural networks \cite{gunasekar2017implicit},
low-tubal tensor recovery 
\cite{karnik2024implicitregularizationtubaltensor},
low-rank tensor completion \cite{razin2021implicitTensor},
and low-rank matrix recovery via factorized gradient descent
\cite{gunasekar2017implicit,li2018algorithmic,arora2019implicit,
litowards,razin2020implicit,stoger2021small,soltanolkotabi2023implicit,
jin2023understanding,wind2023asymmetric,chou2024deep,ma2024convergence,
RauhutWestdicke2022,nguegnang2024convergence}.
In the latter, a bias towards low-rank matrices has been established.
Also in the context of low-rank matrix recovery, 
in \cite{wu2021implicit},
mirror descent with a matrix version of the hypentropy mirror map was studied
and implicit regularization towards the nuclear norm minimizer for small initialization was 
established.
However, 
as \cite{li2022implicit} points out,
the 
connection between mirror descent and factorized gradient descent
in the case 
of low-rank matrix recovery
remains unclear,
since the equivalence between gradient flow on the factorized objective function 
and mirror flow does not hold in general.

In \cite{yaras2023lawparsimonygradientdescent,SooKwon2024,laus2025solving},
deep linear neural networks of the form 
$ x \mapsto W_1\cdot W_2 \ldots \cdot W_L \cdot x$
where studied.
In particular, it was shown that these models
exhibit an implicit bias towards low-rank weight matrices
which allows them to learn the underlying low-dimensional structure of the data.
Another line of research also studied implicit bias in neural networks 
by adding additional linear layers to a ReLU network.
Namely, in \cite{parkinson2025reluneuralnetworkslinear},
it was shown that adding linear layers to a ReLU network
induces a bias towards functions with low mixed variation,
i.e., functions that vary only in a few directions.
Experimentally, it was observed that this bias can lead to improved generalization performance.


\section{Simulations}\label{section:Simulations}

In this section,
we conduct numerical experiments to support our theoretical findings.

\paragraph*{Experimental setup:}
We pick a random matrix $A \in \mathbb{R}^{N \times d}$
with $d=300$ and $N=60$.
The entries of the matrix $A$ are chosen to be i.i.d.~with standard Gaussian distribution $\mathcal{N} (0,1)$.
We choose a ground truth vector $x_{0}$ with sparsity $s=5$.
Then we define $y_0 := A x_{0} $.
Next, we pick a noise vector $n \in \mathbb{R}^N$ from the unit sphere with uniform distribution.
Then we set
\begin{equation*}
	y
	:=
	y_0
	+
	\eta \cdot \norm{y_0}_{\ell_2}\cdot  n,
\end{equation*}
where we refer to  $\eta>0$ as the noise level.
In our experiments, we compute the $\ell^1$-minimizer
\begin{equation*}
	\ga
	:=
	\underset{x: Ax=y}{\text{arg min}} \ \Vert x \Vert_{\ell^1}
\end{equation*}
using solvers from the \textit{splitting conic solver} package \cite{scs}.
Moreover, we compute minimizers
\begin{equation*}
	\xinf (\alpha)
	:=
	\underset{x: Ax=y}{\text{arg min}} \ D_F (x, 0)
\end{equation*}
for different values of $\alpha$,
where $D_F$ is the Bregman divergence with potential function $F=\Hb$
in the case $D=2$ and $F=\QQ$ in the case $D\ge3$.
In our experiments, we use mirror descent \cite{nemivorvskij1979problem} to solve this constrained optimization problem.
More precisely, we minimize the objective function $\mathcal{L}$,
see \cref{equ:lossfunction},
using the mirror descent algorithm
with potential function $F$ and initialization at zero.
It has been established that in this case mirror descent converges to $\xinf (\alpha)$,
see, e.g., \cite{gunasekar2018characterizing}.
We run the mirror descent algorithm
until the value of the loss function $\mathcal{L}$ is less than $10^{-5}$.

\paragraph*{Experiment 1: The scenario $D=2$  with different levels of noise}
In our first experiment, we set $D=2$
and we consider different noise levels $\eta=0,0.03,0.1,0.4$.
We compute minimizers $\xinf(\alpha)$ for
different scales of initialization $\alpha = 10^{-i}$ and $i=0,1,\ldots,11$.
The experimental results are depicted in Figure \ref{fig:experiment1}.
\begin{figure}[!h]
	\centering
	\includegraphics[width=0.5\textwidth]{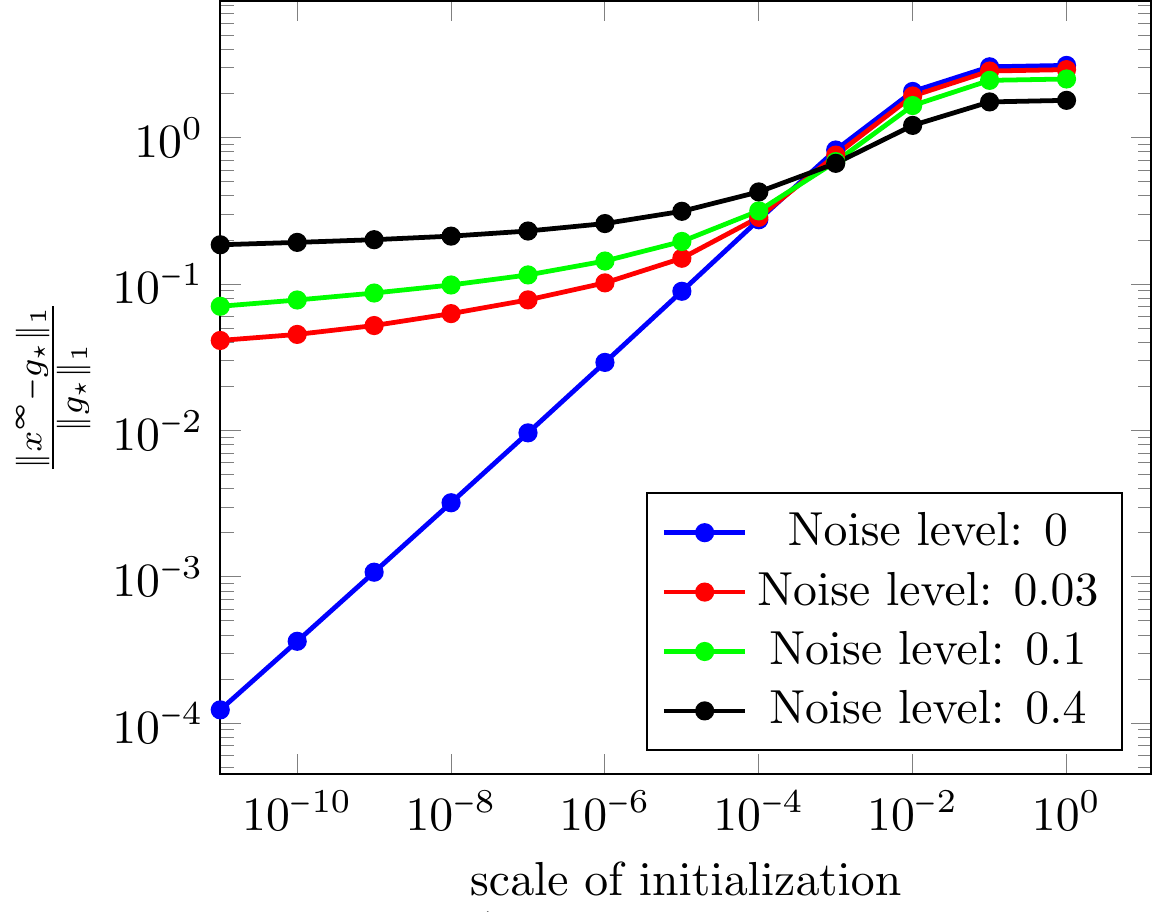}
	\caption{\textbf{Impact of different noise levels on the approximation error
			as the scale of initialization goes to zero (Experiment 1)}
		We consider the case $D=2$ and different noise levels $\eta=0,0.03,0.1,0.4$.
		We observe that in the noisy scenario the approximation error converges much slower to zero
		than in the noiseless scenario.}
	\label{fig:experiment1}
\end{figure}

In all four cases
we observe that
for sufficiently small $\alpha$,
the approximation error converges
to zero with a \textit{polynomial rate} of $ \alpha^c $ for some $c \in (0,1)$
as the scale of initialization $\alpha$ approaches zero.
This is in line with the predictions by \cref{theorem:upper_bound_+-}.
We observe that the slope of the curve in the noiseless scenario is larger than
the slopes
of the three curves corresponding to the noisy scenarios.
This indicates that the implicit regularization effect
is stronger in the noiseless scenario compared to the noisy ones.
According to Theorem \ref{theorem:upper_bound_+-} the slopes of the four curves are characterized by $c=1-\varrho$,
where $\varrho$ is the null space property constant corresponding to the $\ell^1$-minimizer $\ga$,
see \cref{nullspaceconstants}.
Thus, our experiments show that
this null space property constant $\varrho$ is smaller in the noiseless scenario
than in the noisy ones.
In particular, we observe
that in the noisy scenario the null space property constant $\varrho$ is quite close to $1$
and the convergence to the $\ell^1$-minimizer $\ga$ is slow.

\begin{remark}
	Our experimental findings can be explained as follows.
	Note that the null space property constant $\varrho$ depends
	on the alignment between the null space of the matrix $A$ and the
	descent cone of the $\ell^1$-norm at the point $\ga$.
	Here, descent cone refers to the set of all directions in which the $\ell^1$-norm decreases.
	In particular, a sparser signal $\ga$
	leads to a smaller descent cone, see, e.g., \cite{Chandrasekaran2012,Amelunxen2014}.
	Since the null space of $A$ is randomly chosen,
	one expects for a sparser signal
	that that the null space is less aligned with this smaller descent cone.
	Consequently, a sparser signal $\ga$ should lead to a smaller null space property constant $\varrho$.

	Now note that in the noiseless case, the $\ell^1$-minimizer is $s$-sparse
	and we have $ x^0 = \ga $.
    (We have verified this numerically in our experiments
		but this can also be explained
		using standard Compressed Sensing theory,
		see, e.g, \cite{foucart_MathematicalIntroductionCompressive_2013}.)
	However, as soon as we add noise to the signal
	the $\ell^1$-minimizer $\ga$
	recovers the ground truth $x_0$ no longer exactly.
	In particular,
	$\ga$ has much larger support in the noisy case.
	We have verified also this numerically in our experiments.
	Thus, with the reasoning above,
	we expect that in the noisy case the null space property constant $\varrho$
	is larger than in the sparse case.
\end{remark}

\paragraph*{Experiment 2: Different choices of $D$ in the noiseless and noisy scenarios}
In our second experiment, we vary the number of layers $D $.
We consider two cases.
In the first case, we set $\eta=0$, i.e., we consider the noiseless scenario.
In the second case, we set $\eta=0.1$, i.e., we consider a noisy scenario.
Again, we vary the scale of initialization $\alpha$.
(In the noisy scenario with $D=6$, we did not compute $\xinf(\alpha)$ for $\alpha < 10^{-9}$
as the optimization problem became too computationally expensive to solve.)
The results of this experiment are depicted in Figure \ref{fig:DifferentD}.
\begin{figure}[!h]
	\begin{subfigure}[b]{0.5\textwidth}
		\includegraphics[width=0.95\textwidth]{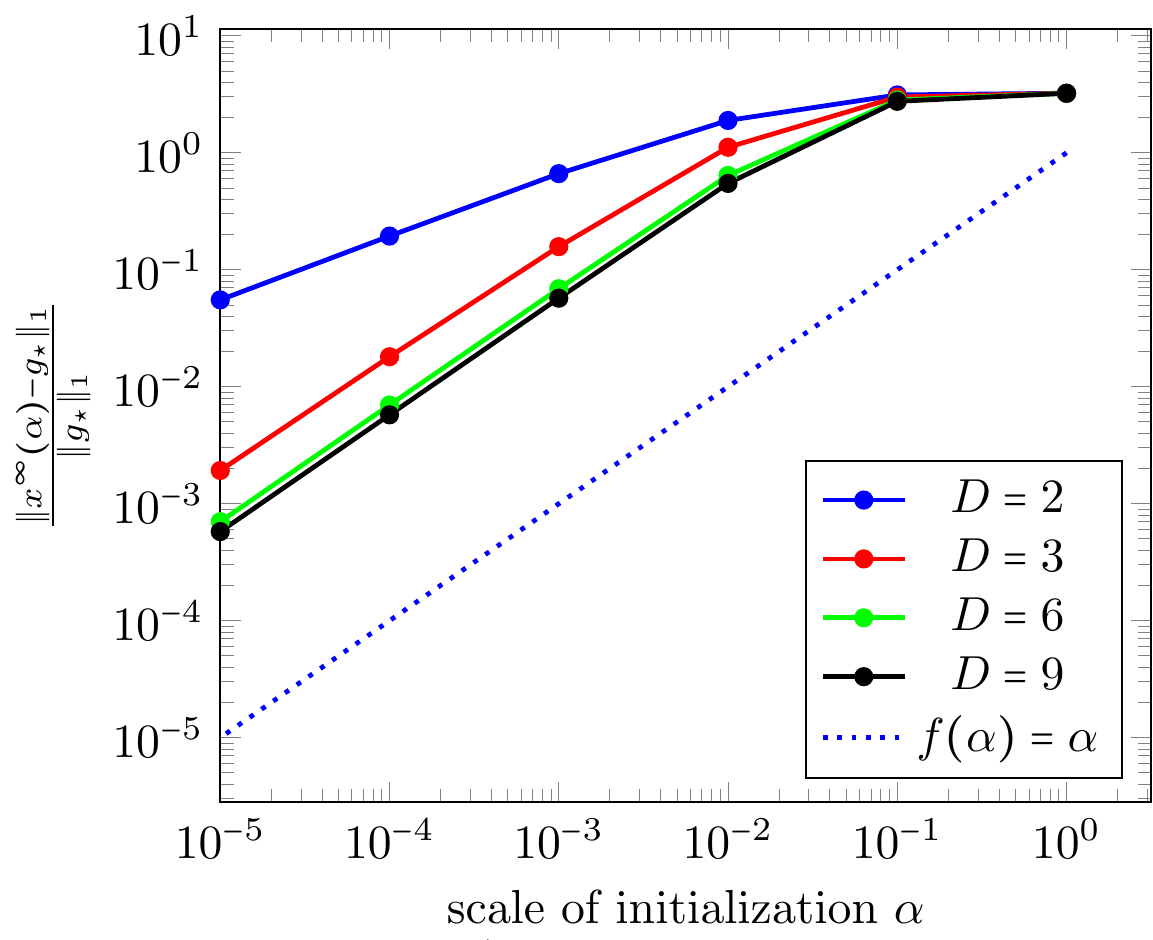}
		\caption{Noiseless scenario}
		\label{fig:DifferentDnoisefree}
	\end{subfigure}
	\begin{subfigure}[b]{0.5\textwidth}
		\includegraphics[width=0.95\textwidth]{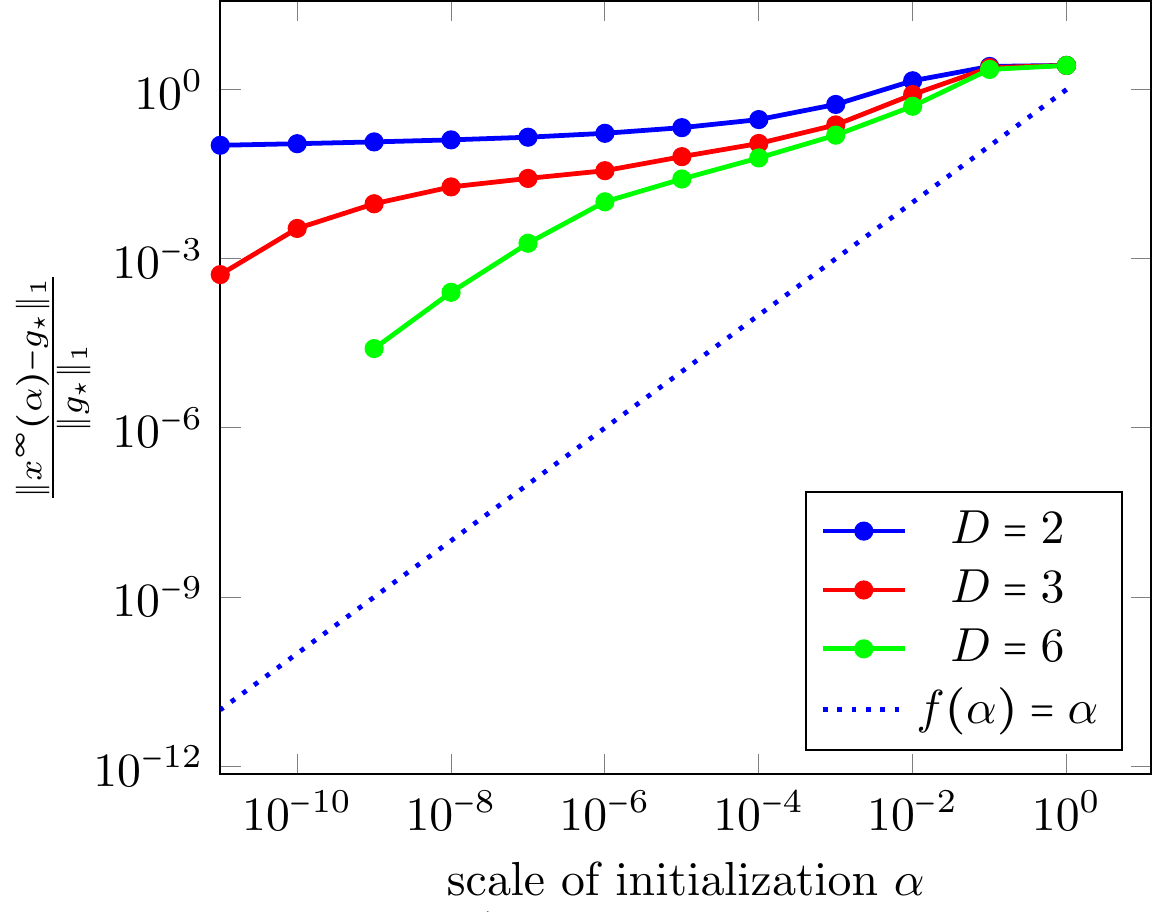}
		\caption{Noisy scenario}
		\label{fig:DifferentDnoisy}
	\end{subfigure}
	\caption{\textbf{Impact of different choices of $D$ on the approximation error
			as the scale of initialization goes to zero (Experiment 2)}
		We consider a noiseless scenario with $\eta=0$ and a noisy scenario with $\eta=0.1$,
		for different values of the number of layers $D$.
		We observe that in the noiseless scenario the $\ell^1$-approximation error converges
		to zero faster than in the noisy scenario.}
	\label{fig:DifferentD}
\end{figure}

In the noiseless scenario,
we observe that for $D\ge 3$ the approximation error converges to zero
with a linear rate proportional to $ \alpha $.
This is in line with the predictions from \cref{theorem:upper_bound_deep}.
In the noisy scenario, we observe a slower convergence compared to the noiseless scenario
for all choices of $D$.
Moreover, for a fixed scale of initialization $\alpha$, we observe that adding more layers, i.e.~increasing the number $D$, significantly improves the approximation error.

The experiment in the noisy case shows that the linear decay of the approximation error only manifests for sufficiently small $\alpha$.
Hence the assumptions \eqref{eqn1001} and \eqref{eqn1006} in \cref{theorem:upper_bound_deep} are necessary.
For, $D=6$, we observe for $\alpha \le 10^{-7}$ 
a linear convergence rate proportional to $\alpha$.
For $D=3$, we observe that this linear convergence rate occurs for $\alpha \le 10^{-9}$.
This indicates that with larger depth $D$, 
the linear convergence regime holds for larger values of $\alpha$.
This is in line with the above mentioned assumptions on $\alpha$, as the exponent $\frac{1}{\gamma} = \frac{D}{D-2}$ decreases as $D$ increases.

\paragraph{Experiment 3: Impact of depth and scale of initialization on the estimation/generalization error.}
    In this paper, we focused in our theoretical analysis and in the experiments so far on the
    approximation error $\norm{\xinf (\alpha) - \ga}_{\ell^p}$
    for $ p \in \{1; \infty \} $.
    However, in applications
    the goal is typically to estimate a sparse signal $\xa$ 
    from noisy measurements $y = A \xa + z$.
    While in this setting
    the $\ell^1$-minimizer $\ga$ is often a good estimator for $\xa$,
    in applications we are interested
    in the estimation error $\norm{\xinf (\alpha) - \xa}_{\ell^2}$
	directly
    instead of the approximation error.
    \Cref{fig:experiment4} shows an experiment 
    on how this estimation error depends on different depths $D$
    and on the scale of initialization $\alpha$.
    We observe that with depth $D \ge 3$ a comparable estimation error 
    can be achieved
    as with $\ell^1$-minimization
    while using a larger initialization.
    In contrast,
    for $D=2$,
    even with $\alpha =10^{-7}$
    we do not achieve comparable performance.
    This
    indicates 
    that in noisy scenarios
    only with depth $D \ge 3$
    we can achieve comparable performances
    to $\ell^1$-minimization while using a
    practical scale of initialization.
    \begin{figure}[!h]
        \centering
        \includegraphics[width=0.55\textwidth]{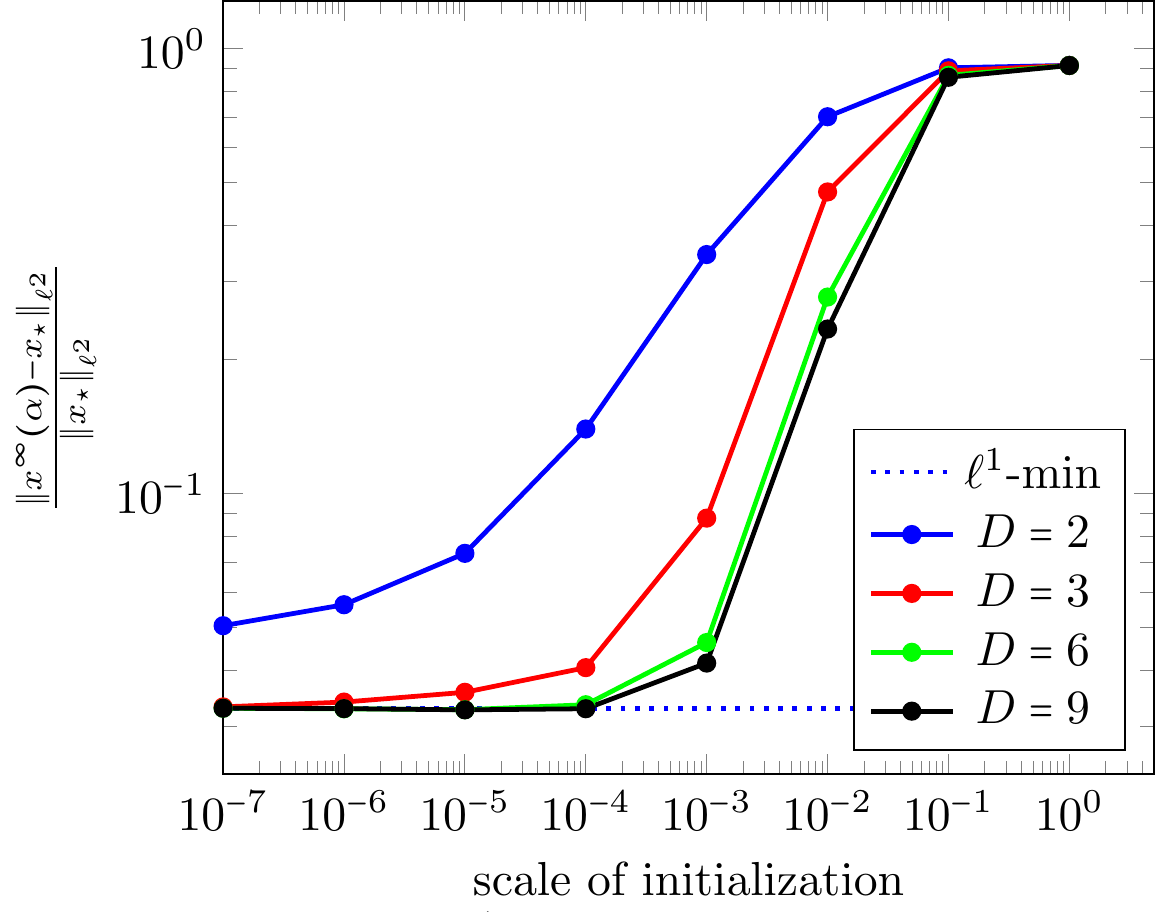}
        \caption{\textbf{Estimation error for different network depths.}
        We consider a noisy scenario with noise level equal to $\eta=0.03$.
        The dotted blue line denotes the estimation error of the $\ell^1$-minimizer $\ga$,
        i.e.,
        $\frac{\norm{ \ga - \xa  }_{\ell^2}}{\norm{ \xa }_{\ell^2}}$.
        We observe that with larger depth $D$, the same estimation error as the $\ell^1$-minimizer
        can be achieved while using a larger initialization.}
        \label{fig:experiment4}
    \end{figure}
\paragraph*{Summary}
Our numerical experiments show that in terms of implicit regularization,
there is a significant difference between the noiseless and noisy scenarios.
In the noiseless scenario, the approximation error $\norm{\xinfalpha - \ga}_{\ell^1}$
converges to zero fast for both shallow and deep nets.
This can be attributed to the fact that the null space property constant $\varrho$ is small,
which is a consequence of the sparsity of the $\ell^1$-minimizer.
In the noisy case, however, we observe that deeper nets achieve a
significantly better approximation error.

\section{Proofs}\label{section:Proofs}
The goal of this section is to prove the upper and lower bounds 
in \cref{theorem:upper_bound_+-} and \cref{theorem:upper_bound_deep}.
In  \cref{section:proofs_Dtwo}, 
we will prove the upper and lower bound in the case $D=2$.
In  \cref{section:Proofs_Dgreatertwo}, 
we will prove the corresponding bounds in the case $D\ge 3$.
Before that, we 
outline our proof strategy
and explain the main technical novelties of our proof approach.

\subsection{Proof ideas}

The main conceptual ideas in our proofs
are similar both in the shallow case, $D=2$, 
and in the deep case, $D \ge 3$.
Recall that we consider the potential function 
$F_{\alpha,D}$,
which
is given by $F_{\alpha,D}= \Hb $ in the case $D=2$
and by $F_{\alpha,D}= \QQ$ in the case $D \ge 3$.
First, we compute that 
\begin{equation*}
	\nabla_x D_{F_{\alpha,D}} (x,0)
	=
	\nabla F_{\alpha,D} (x) - \nabla F_{\alpha,D} (0).
\end{equation*}
Then, since $\xinf (\alpha)$ is a minimizer of the optimization problem
\begin{equation*}
	\underset{x \in \R^d: A x = y}{\min} \ D_{F_{\alpha,D}} \left(x, 0 \right),
\end{equation*}
it follows from the first-order optimality conditions
that for all $\tilde{n} \in \ker A$ it holds that 
\begin{equation*}
	\langle \nabla F_{\alpha,D} (\xinf (\alpha))
	- \nabla F_{\alpha,D} (0) , \tilde{n} \rangle
	= 0.
\end{equation*}
In both cases $D=2$ and $D \ge 3$ one can see
via a straightforward calculation that $\nabla F_{\alpha,D} (0) = 0$.
Moreover, we have that $\nabla F_{\alpha,D} (z) = \left( f_{\alpha,D} (z_i) \right)_{i=1}^d$
for some function $f_{\alpha,D} : \R \rightarrow \mathcal{I}$,
where $\mathcal{I} \subset \R$ is a symmetric interval, i.e., $-\mathcal{I}=\mathcal{I}$.
Thus, we obtain for $ n:= \xinf (\alpha) -\ga $ that
\begin{equation*}
	\langle \nabla F_{\alpha,D} ( \ga +n ), \tilde{n} \rangle
	=
	\sum_{i=1}^d f_{\alpha,D} ( \ga_i + n_i) \tilde{n}_i
	= 0
	\quad
	\text{ for all }
	\tilde{n} \in \ker A.
\end{equation*}
The first key observation in our proof is
that $\ga$ must have sparse support 
$\mathcal{S} \subsetneq  [d] $,
see \cref{lemma:normal_coneSimple}.
Thus, we can split the sum above into two parts,
one corresponding to the support of $\ga$,
denoted by $\SP$,
and one corresponding to the complement of the support of $\ga$,
which is $\SC := [d] \setminus \SP $.
We obtain that
\begin{align}\label{equ:proofsketch1}
	\sum_{i \in \SC} f_{\alpha,D} ( n_i) \tilde{n}_i
	=
    -
	\sum_{i \in \mathcal{S}} f_{\alpha,D} ( \ga_i + n_i) \tilde{n}_i.
\end{align}
In order to proceed further, 
we will now make a different choice for the vector $\tilde{n}$ 
depending on whether we aim to prove the upper bound or the lower bound.

\paragraph{Upper bound}
In the case of the upper bound, we will choose $\tilde{n}:=n=\xinf (\alpha)-\ga$.

\begin{remark}
We note that our proof approach for the upper bound is different
from the proof strategies in \cite{chou2023less} and \cite{wind2023implicit}.
The essential idea in these works was to compare 
the value of the potential function $F_{\alpha,D}$
(or some surrogate thereof)
at the minimizer $\xinf (\alpha)$
with the value of $F_{\alpha,D}$ at the $\ell^1$-minimizing solution $\ga$.
Using this comparison, 
it was possible to derive upper bounds on $ \norm{ \xinf (\alpha)} - \norm{\ga}_1 $.
In contrast to this, 
the crucial observation in our proof 
is that as in \cref{equ:proofsketch1}
we can split the sum into two parts,
one corresponding to the support of $\ga$, which is $\SP$ and one corresponding to its complement $\SC$.
In this way, we can treat the two parts of the sum differently.
Indeed, for the part corresponding to $\SP$,
we expect that $f_{\alpha,D}$ behaves like a linear function for small sufficiently $\alpha$, 
whereas for the part corresponding to $\SC$ 
we needa different approach.
\end{remark}

Then, from \cref{equ:proofsketch1}
and since $f_{\alpha,D}$ is monotonically increasing 
(as we will show later)
it follows that
\begin{equation}\label{equ:proofsketch4}
	\sum_{i \in \SC} f_{\alpha,D} ( n_i) n_i
	=
	-
	\sum_{i \in \mathcal{S}} 
	f_{\alpha,D} ( \ga_i+n_i ) n_i.
	\le 
	-
	\sum_{i \in \mathcal{S}} f_{\alpha,D} ( \ga_i ) n_i.
\end{equation}
The crucial observation to bound the left-hand side
is that the function $ z \mapsto z f_{\alpha,D} (z) $ is convex,
as we will verify for both $D=2$ and $D \ge 3$. 
This allows us to invoke the following well-known lemma
which is a straightforward generalization of the log sum inequality, 
see, e.g., \cite[Theorem 2.7.1]{coverElementsInformationTheory2006}.
\begin{lemma}\label[lemma]{lemma:generalized_log_sum}
	Let $I$ be a finite index set, $a = (a_i)_{i\in I} \subset \R_{\ge 0}$,
	and $b=(b_i)_{i\in I} \subset \R_+$. Let $A=\sum_{i\in I} a_i$ and $B = \sum_{i\in I} b_i$.
	Let $f\colon [0,\infty) \to \R$ be a function such that $[0,\infty)\ni t\mapsto tf(t)$ is convex. 
	Then it holds that
	\begin{align*}
		\sum_{i\in I}
		a_i f\Big( \frac{a_i}{b_i}\Big)
		\ge A \cdot f\Big(\frac{A}{B} \Big).
	\end{align*}
\end{lemma}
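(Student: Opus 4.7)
The plan is to reduce the claimed inequality to a direct application of Jensen's inequality for the convex function $g(t) := t f(t)$, which is convex on $[0,\infty)$ by hypothesis.

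The first step is an algebraic rewriting that exposes this structure. Since each $b_i$ is strictly positive, one has the identity $a_i f(a_i/b_i) = b_i \cdot g(a_i/b_i)$ for each $i \in I$, and likewise $A f(A/B) = B \cdot g(A/B)$ (using $B>0$). The inequality in the lemma is therefore equivalent to
\begin{equation*}
    \sum_{i \in I} b_i \, g\Big( \frac{a_i}{b_i} \Big) \;\ge\; B \cdot g\Big( \frac{A}{B} \Big).
\end{equation*}

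The second step is to divide through by $B>0$ and interpret the weights $\lambda_i := b_i/B$, which are nonnegative and satisfy $\sum_{i\in I} \lambda_i = 1$, as a probability distribution on $I$. Observing that $\sum_{i\in I} \lambda_i \cdot (a_i/b_i) = A/B$, the rewritten inequality is precisely Jensen's inequality for $g$ applied to the points $a_i/b_i \in [0,\infty)$ with weights $\lambda_i$; convexity of $g$ closes the argument.

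No serious obstacle arises; the main conceptual step is recognizing that the hypothesis "$tf(t)$ is convex" is tailored exactly to make this Jensen reduction work. A minor bookkeeping point is the boundary case $a_i = 0$, but since $0$ lies in the domain $[0,\infty)$ of $f$, the values $f(0)$ and $g(0) = 0 \cdot f(0) = 0$ are well-defined and the identities above hold literally, so no special convention is needed.
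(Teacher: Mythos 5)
Your proof is correct and is essentially the same as the paper's: both define the convex function $g(t) = t f(t)$, rewrite the sum using weights $b_i/B$ and points $a_i/b_i$, and apply Jensen's inequality. Your version just makes the substitution and the Jensen step slightly more explicit and handles the $a_i = 0$ edge case, but the argument is identical in substance.
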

We recall the proof of this lemma,
which proceeds analogously as the proof of the log sum inequality,
see, e.g., \cite[Theorem 2.7.1]{coverElementsInformationTheory2006}.
\begin{proof}
	Jensen's inequality with $\alpha_i = \frac{b_i}{B}$ and $t_i = \frac{a_i}{b_i}$ yields
	\begin{equation*}
		\sum_{i\in I}
		a_i f\left( \frac{a_i}{b_i}	\right) 
		= B\sum_{i\in I} \alpha_i t_i f(t_i)
		\ge B \cdot \Big( \sum_{i\in I} \alpha_i t_i\Big) \cdot f\Big( \sum_{i\in I} \alpha_i t_i \Big)
		= A\cdot  f\Big( \frac{A}{B} \Big).
	\end{equation*}
\end{proof}
By applying this lemma to the sum  corresponding to $\SC$ in \cref{equ:proofsketch4}
with $a_i= \abs{n_i}$ and $b_i=1$,
we obtain that
\begin{equation*}
	\sum_{i \in \SC} f_{\alpha,D} ( n_i) n_i
	=
	\sum_{i \in \SC} f_{\alpha,D} ( \abs{n_i}) \abs{n_i}
	\ge
	f_{\alpha,D} \left( \frac{ \norm{n_{\SC}}_{\ell^1} }{\abs{\SC}}  \right)
	\norm{n_{\SC}}_{\ell^1},
\end{equation*}
where in the first equation we have used that $f_{\alpha,D}$ is an even function.
Inserting this inequality into \cref{equ:proofsketch4} above
and using that $f_{\alpha,D}$ is monotonically increasing,
we obtain that
\begin{align}\label{equ:proofsketch9}
	\norm{n_{\SC}}_{\ell^1}
	&\le
	\abs{\SC}
	\left( f_{\alpha, D} \right)^{-1} 
	\left( 
	\frac{-1}{\norm{n_{\SC}}_{\ell^1}}
    \sum_{i \in \mathcal{S}} f_{\alpha,D} ( \ga_i ) n_i
    \right).
\end{align}
Here, we have made the assumption
that the sum inside of $f_{\alpha, D}^{-1}$ is indeed in the domain of 
$f_{\alpha, D}^{-1}$. 
In our proofs below, we will show that this is indeed the case.
Next,
by using the definition of $\tilde{\varrho}$,
we obtain that
\begin{align*}
	\norm{n}_{\ell^1}
	=
	\norm{n_{\SP}}_{\ell^1} + \norm{n_{\SC}}_{\ell^1}
	\le 
	(1+\tilde{\varrho})
	 \norm{n_{\SC}}_{\ell^1}
	\le 
	(1+\tilde{\varrho})
	\abs{\SC}
	\left( f_{\alpha, D} \right)^{-1} 
	\left( 
	\frac{-1}{\norm{n_{\SC}}_{\ell^1}}
    \sum_{i \in \mathcal{S}} f_{\alpha,D} ( \ga_i ) n_i
    \right).
\end{align*}
It remains to estimate the sum inside of $f_{\alpha, D}^{-1}$,
see \cref{equ:proofsketch9}.
We will sketch the main idea.
Set $ \lambda := \min_{i \in \SP} \abs{\ga_i} $.
(The precise definition of $\lambda$ will be different for $D=2$ and $D \ge 3$. 
However, this choice of $\lambda$ suffices to illustrate the main idea.)
Then we note that
\begin{align*}
	&\frac{-1}{\norm{n_{\SC}}_{\ell^1}}
    \sum_{i \in \mathcal{S}} f_{\alpha,D} ( \ga_i ) n_i\\
	=
	&\frac{-1}{\norm{n_{\SC}}_{\ell^1}}
    \sum_{i \in \mathcal{S}} f_{\alpha,D} ( \abs{\ga_i} )
	\sign (\ga_i) n_i \\
	=
	&\frac{-f_{\alpha,D} ( \lambda )}{\norm{n_{\SC}}_{\ell^1}}
    \sum_{i \in \mathcal{S}} 
	\sign (\ga_i) n_i 
	+
	\frac{-1}{\norm{n_{\SC}}_{\ell^1}}
    \sum_{i \in \mathcal{S}}
	\left( 
		f_{\alpha,D} ( \abs{\ga_i} )
		-
		f_{\alpha,D} ( \lambda )
	\right)
	\sign (\ga_i) n_i\\
	\overleq{(i)}
	& \varrho \cdot f_{\alpha,D} ( \lambda )
	+
	\frac{-1}{\norm{n_{\SC}}_{\ell^1}}
    \sum_{\substack{i\in \SP \\ \sign(n^{\ast}_i)<0}}
	\left( 
		f_{\alpha,D} ( \abs{\ga_i} )
		-
		f_{\alpha,D} ( \lambda )
	\right)
	\sign (\ga_i) n_i\\
	\le& 
	\varrho \cdot f_{\alpha,D} ( \lambda ) 
	+ \varrho^{-}
	\sup_{i \in \SP}  
	\left( 
		f_{\alpha,D} ( \abs{\ga_i} )
		-
		f_{\alpha,D} ( \lambda )
	\right),
\end{align*}
where in inequality $(i)$ 
we used the definition of $\varrho$
and that $f_{\alpha,D}$ is monotonically increasing.
By inserting this into the above inequality
and using the monotonicity of $f_{\alpha,D}^{-1}$,
 we obtain that 
\begin{equation*}
	\norm{n}_{\ell^1}
	\le 
	(1+\tilde{\varrho})
	\abs{\SC}
	\left( f_{\alpha, D} \right)^{-1} 
	\left( 
	\varrho \cdot f_{\alpha,D} ( \lambda ) 
	+ \varrho^{-}
	\sup_{i \in \SP}  
	\left( 
		f_{\alpha,D} ( \abs{\ga_i} )
		-
		f_{\alpha,D} ( \lambda )
	\right)
	\right).
\end{equation*}
To obtain the final upper bound,
we use the asymptotic behavior of $f_{\alpha,D}$
as $\alpha \downarrow 0$.
For further details we refer to the proofs in \cref{section:proofs_Dtwo} 
and \cref{section:Proofs_Dgreatertwo}.
\paragraph{Lower bound}
By 
the definition of $\varrho$
and 
\cref{lemma:normal_coneSimple},
there exists a vector $m \in \ker A \setminus \{ 0 \}$ such that
\begin{equation}
	-\sum_{i\in \SP} \sign \left(\ga_i \right) m_i = \varrho \norm{m_{\SC}}_{\ell^1}.
	\label{equ:proofsketch3}
\end{equation}
The key idea in the proof of the lower bound is to set $\tilde{n}:=m$.
Then, it follows from \cref{equ:proofsketch1} that 
\begin{align}\label{equ:proofsketch2}
	\sum_{i \in \SC} f_{\alpha,D} ( n_i) m_i
	=
    -
	\sum_{i \in \mathcal{S}} f_{\alpha,D} ( \ga_i + n_i) m_i.
\end{align}
Then, since $f_{\alpha,D}$ is an even, monotonically increasing function
(as we will show later in our proofs),
we obtain for the summand on the left-hand side that
\begin{equation*}
	\sum_{i \in \SC} f_{\alpha,D} ( n_i) m_i
	=
	\sum_{i \in \SC} f_{\alpha,D} ( \abs{n_i}) \abs{m_i}
	\le 
	\norm{m_{\SC}}_{\ell^1} f_{\alpha,D} \left( \norm{n_{\SC}}_{\ell^\infty}  \right).
\end{equation*}
Combining this inequality with \cref{equ:proofsketch2} and by rearranging terms 
we obtain that
\begin{equation*}
	\norm{n_{\SC}}_{\ell^\infty}
	\ge
	f_{\alpha,D}^{-1}
	\left(
	\frac{-1}{\norm{m_{\SC}}_{\ell^1}}
	\sum_{i \in \mathcal{S}} f_{\alpha,D} \left( \ga_i + n_i \right) m_i
	\right).
\end{equation*}
As in the case of the upper bound,
for this step to be rigorous
we need to verify the sum inside of $f_{\alpha, D}^{-1}$ is 
indeed in the domain of 
$f_{\alpha, D}^{-1}$. 
This will be done in our proofs below.
In order to proceed further,
we would need to derive a lower bound for the sum inside of $f_{\alpha, D}^{-1}$.
In the following, we sketch our approach.
We again use the notation $\lambda= \min_{i \in \SP} \abs{\ga_i}$.
(Again, we use a different definition of $\lambda$ for $D=2$ and $D \ge 3$
but the ideas outlined below stay the same.)
Then, we can split the sum inside of $f_{\alpha,D}^{-1}$ into two parts,
\begin{align*}
	&\frac{-1}{\norm{m_{\SC}}_{\ell^1}}
	\sum_{i \in \mathcal{S}} f_{\alpha,D} ( \ga_i + n_i ) m_i\\
	=
	&\frac{-f_{\alpha,D} ( \lambda )}{\norm{m_{\SC}}_{\ell^1}}
	\sum_{i \in \mathcal{S}} \text{sign} (\ga_i)  m_i 
	+
	\frac{-1}{\norm{m_{\SC}}_{\ell^1}}
	\sum_{i \in \mathcal{S}}
	\left( 
	f_{\alpha,D} ( \ga_i + n_i )
	-
	f_{\alpha,D} ( \lambda )
	\sign (\ga_i)
	\right)
	m_i\\
	= & 
	\varrho 
	\cdot
	f_{\alpha,D} ( \lambda ) 
	+
	\underset{=:\Delta}{
	\underbrace{
	\frac{-1}{\norm{m_{\SC}}_{\ell^1}}
	\sum_{i \in \mathcal{S}}
	\left( 
	f_{\alpha,D} ( \ga_i + n_i )
	-
	f_{\alpha,D} ( \lambda )
	\sign (\ga_i)
	\right)
	m_i}},
\end{align*}
where in the second equation we have used Equation \eqref{equ:proofsketch3}.
It follows that
\begin{align*}
	\norm{n_{\SC}}_{\ell^\infty}
	\ge
	f_{\alpha,D}^{-1}
	\left(
	\varrho 
	\cdot
	f_{\alpha,D} ( \lambda ) 
	+
	\Delta
	\right).	
\end{align*}
In order to complete the proof,
we will show that $\vert \Delta \vert$ is small as $\alpha \downarrow 0$,
and we will use the asymptotic behavior properties of $f_{\alpha,D}$ as $\alpha \downarrow 0$.
For further details we refer to the proofs in \cref{section:proofs_Dtwo} 
and \cref{section:Proofs_Dgreatertwo}.

\subsection{Case $D=2$}\label{section:proofs_Dtwo}

\subsubsection{Some preliminaries}

Before proving our main results in the case $D=2$,
we recall some elementary properties of the function $\arsinh$.
First of all, recall that $\arsinh$ can be expressed as
\begin{equation*}
	\arsinh(t) = \log\big( t + \sqrt{t^2+1}\big)\quad \text{ for } t\in \R.
\end{equation*}
This formula indicates that for $t \gg 1$
the function $t \mapsto \arsinh(t)$ behaves approximately like $ t \mapsto \log(2t)$.
This will be used several times in our proof 
via the following technical inequalities.
\begin{lemma} \label[lemma]{lemma:basic_properties_arsinh}
	The following statements hold.
	\begin{enumerate}[(i)]
		\item For all $t\ge 0$ we have
		\begin{equation} \label{eqn183}
			\arsinh
			\left(\frac{t}{2}\right) 
			= \log(t) + \Delta(t),
		\end{equation}
		where $\Delta$ is a non-negative decreasing function that satisfies 
		\begin{equation} \label{eqn186}
			\Delta(t) \le \frac{1}{t^2}, 
			\qquad\text{and}\qquad
			\exp(\Delta(t)) \le 1 + \frac{1}{t^2}.
		\end{equation}
	
		\item For $s,t \in \R$ with $\sign(t) = \sign(s)$ we have
		\begin{equation} \label{eqn150}
			\abs{\arsinh(t) -\arsinh(s)} \le \abs{ \log\Big(\frac{t}{s}\Big)}
		\end{equation}
	
		\item The map $\R\ni t \mapsto t\cdot \arsinh(t)$ is convex.
	\end{enumerate}
\end{lemma}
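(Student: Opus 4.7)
The plan is to prove each of the three items separately, all via direct manipulation of the identity $\arsinh(t) = \log\bigl(t + \sqrt{t^2+1}\bigr)$.

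For part (i), I would start by writing
\[
	\arsinh(t/2) = \log\bigl(t/2 + \sqrt{t^2/4 + 1}\bigr) = \log\Bigl( \tfrac{t+\sqrt{t^2+4}}{2} \Bigr) = \log(t) + \log\Bigl( \tfrac{1+\sqrt{1+4/t^2}}{2} \Bigr),
\]
so the natural definition is $\Delta(t) := \log\bigl((1+\sqrt{1+4/t^2})/2\bigr)$. Non-negativity and the fact that $\Delta$ is decreasing are immediate since $t \mapsto 4/t^2$ is decreasing on $(0,\infty)$ and both $u \mapsto \sqrt{1+u}$ and $\log$ are increasing. For the quantitative bound, I would first prove the second inequality in \eqref{eqn186} and then deduce the first. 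Indeed, $\exp(\Delta(t)) \le 1 + 1/t^2$ is equivalent (after isolating the square root and squaring) to $1 + 4/t^2 \le (1 + 2/t^2)^2 = 1 + 4/t^2 + 4/t^4$, which is trivially true. Then $\Delta(t) \le \log(1+1/t^2) \le 1/t^2$ by the standard inequality $\log(1+x) \le x$. The only mild subtlety is to state the $t=0$ case separately (where $\arsinh(0)=0$ and the identity \eqref{eqn183} is read as a limit), or to restrict \eqref{eqn186} to $t>0$ as the bound is clearly vacuous otherwise.

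For part (ii), since $\arsinh$ is odd and the claim is symmetric in $s,t$, I would reduce by the substitution $(s,t)\mapsto(-s,-t)$ to the case $0 < s \le t$. There the inequality becomes $\arsinh(t) - \arsinh(s) \le \log(t) - \log(s)$, i.e.
\[
	\log\Bigl( \tfrac{t + \sqrt{t^2+1}}{t} \Bigr) \le \log\Bigl( \tfrac{s + \sqrt{s^2+1}}{s} \Bigr),
\]
which in turn is equivalent to $\sqrt{1 + 1/t^2} \le \sqrt{1 + 1/s^2}$, and this is immediate from $s \le t$. Alternatively (and this is the form I would actually write), one can observe that $g(t) := \arsinh(t) - \log(t)$ has derivative $1/\sqrt{t^2+1} - 1/t < 0$ for $t>0$, so $g$ is decreasing on $(0,\infty)$; combined with monotonicity of $\arsinh$ and $\log$, this yields both directions at once.

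For part (iii), I would simply compute two derivatives. With $f(t) := t \arsinh(t)$,
\[
	f'(t) = \arsinh(t) + \frac{t}{\sqrt{t^2+1}},
	\qquad
	f''(t) = \frac{1}{\sqrt{t^2+1}} + \frac{1}{(t^2+1)^{3/2}} > 0,
\]
which gives convexity on all of $\R$. The differentiation uses only the chain rule and $(\arsinh)'(t) = 1/\sqrt{t^2+1}$; the second-derivative simplification comes from combining $\sqrt{t^2+1} - t^2/\sqrt{t^2+1} = 1/\sqrt{t^2+1}$. None of the three parts presents a real obstacle; the only point that needs genuine care is the algebraic verification in (i), where one must choose the right intermediate inequality (the bound $\exp(\Delta(t)) \le 1 + 1/t^2$) so that the square-root manipulation collapses cleanly rather than producing a harder transcendental inequality.
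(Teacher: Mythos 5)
Your proposal is correct and follows essentially the same route as the paper's proof: the same explicit formula for $\Delta$, the same reduction of part (ii) to $\sqrt{1+1/t^2}\le\sqrt{1+1/s^2}$, and the identical second-derivative computation in part (iii). The only minor difference is in how you justify the key step of (i): you verify $\sqrt{1+4/t^2}\le 1+2/t^2$ by squaring, whereas the paper invokes concavity of the square root; these are the same inequality and the distinction is cosmetic.
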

We believe that these properties are well-known in the literature.
For the sake of completeness, we provide a proof of this lemma in \cref{section:basic_properties_arsinh}.
\subsubsection{Proof of the upper bound}
\begin{proof}
	If $\xinf=\ga$ there is nothing to show. 
	Assume from now on that $n:= \xinf-\ga\ne 0$. 
	Then it follows from the optimality of $\xinf$ that
	\begin{equation*}
		0 = \left.\frac{\dif}{\dif t}\right|_{t=0} D_{\Ha}(\xinf+tn, 0)
		= \langle \nabla \Ha(\xinf)-\nabla \Ha(0), n\rangle,
	\end{equation*}
	and,
	since $  \nabla \Hb(0)=0$, 
	we have that
	\begin{equation*} 
		\langle \nabla \Hb(n_{\SC}),n_{\SC}\rangle
		=-\langle \nabla \Ha (\ga_{\SP} +n_{\SP}), n_{\SP}\rangle.
	\end{equation*}
	Since $\Ha$ is convex, its gradient $\nabla \Ha$ is monotone. Therefore,
	\begin{equation}\label{eqn240}
		\langle \nabla \Hb(n_{\SC}),n_{\SC}\rangle
		\le -\langle \nabla \Ha (\ga_{\SP}), n_{\SP}\rangle.
	\end{equation}

	In the following, we will estimate the terms in \eqref{eqn240} individually. 
	For the term on the left-hand side of \eqref{eqn240}, we observe first that
	\begin{equation*}
		\begin{split}
		\langle \nabla \Hb(n_{\SC}),n_{\SC}\rangle 
		&= \sum_{i\in \SC} n_i \arsinh\Big(\frac{n_i}{2 \alpha }\Big)
		\overeq{(a)} \sum_{i\in \SC} \abs{n_i} \arsinh\Big(\frac{\abs{n_i}}{2\alpha}\Big)\\
		&\overgeq{(b)} \norm{n_{\SC}}_{\ell^1} \arsinh\Big( \frac{\norm{n_{\SC}}_{\ell^1}}{2 \vert \SC \vert \alpha }\Big).
		\end{split}
	\end{equation*}
	In equation $(a)$ we use that $\arsinh$ is an odd function. 
	Inequality $(b)$ 
	follows from the generalized log-sum inequality,
	see \cref{lemma:generalized_log_sum},
	which is applicable
	since $ t \mapsto t \arsinh (t)$ is convex, see \cref{lemma:basic_properties_arsinh}.
	For the term on the right-hand side of \eqref{eqn240}
	we observe that
	\begin{equation*}
			- \langle \nabla \Ha (\ga_{\SP}), n_{\SP}\rangle 
			= -\sum_{i \in \SP} n_i \arsinh\Big( \frac{\ga_i}{2\alpha}\Big)
			\overeq{(a)} -\sum_{i \in \SP} n_i \sign(\ga_i) \arsinh\Big(\frac{\abs{\ga_i}}{2\alpha}\Big).
	\end{equation*}
    In equality $(a)$ we used that $\arsinh$ is an odd function.
    By combining the last two estimates with \eqref{eqn240},
    we obtain that
    \begin{equation*}
        \norm{n_{\SC}}_{\ell^1} \arsinh\Big( \frac{\norm{n_{\SC}}_{\ell^1}}{2 \vert \SC \vert \alpha }\Big)
		\le -\sum_{i \in \SP} n_i \sign(\ga_i) \arsinh\Big(\frac{\abs{\ga_i}}{2\alpha}\Big).
    \end{equation*}
	Note that we can divide by $ \norm{n_{\SC}}_{\ell^1}$ 
	since $n_{\SC}\ne 0$ due to \cref{lemma:normal_coneSimple}
	and since we assumed that $n \ne 0$.
    Hence, it follows that
    \begin{align}
        \norm{n_{\SC}}_{\ell^1}
        \le
        2 \alpha \vert \SC \vert
        \sinh 
        \left(
		\frac{-1}{  \norm{n_{\SC}}_{\ell^1}}
        \sum_{i \in \SP} n_i \sign(\ga_i) \arsinh\left(\frac{\abs{\ga_i}}{2\alpha}\right)
        \right).
		\label{ineq:internDominik17}
    \end{align}
	Now let $\lambda:= \frac{\min_{i\in \SP} \abs{\ga_i}}{2\alpha}$
    and write $n^{\ast}_i := n_i \sign(\ga_i)$ for $i\in \SP$. 
	It follows that
	\begin{equation} \label{eqn241}
		-\sum_{i \in \SP} n^{\ast}_i \arsinh\left(\frac{\abs{\ga_i}}{2\alpha}\right)
		= \Big(-\sum_{i \in \SP} n^{\ast}_i \Big) \arsinh(\lambda)
			- \sum_{i\in \SP} n^{\ast}_i 
				\Big[
				\arsinh\left(\frac{\abs{\ga_i}}{2 \alpha }\right)
				- \arsinh(\lambda)
				\Big].
	\end{equation}
	For the first summand on the right-hand side of \eqref{eqn241},
    we use the definition of $ \varrho $, see \cref{nullspaceconstants},
	to obtain
	\begin{align}
		\left(-\sum_{i \in \SP} n^{\ast}_i \right) \arsinh(\lambda) 
		 &\le \varrho \norm{n_{\SC}}_{\ell^1}  \arsinh(\lambda) 
		 = \varrho \norm{n_{\SC}}_{\ell^1} \big( \log(2\lambda) + \Delta(2\lambda)\big).
		 \label{ineq:internDominik15}
	\end{align}
	Here, the function $\Delta$ is the function defined in \cref{lemma:basic_properties_arsinh}.
	For the second term on the right-hand side of \eqref{eqn241},
    we use first that 
	$\lambda\le \frac{\abs{\ga_i}}{2 \alpha}$ 
	for all $i\in \SP$  
    combined with the monotonicity of $\arsinh$
    which yields that
	\begin{align}
		- \sum_{i\in \SP} n^{\ast}_i 
			\Big[
			\arsinh\Big(\frac{\abs{\ga_i}}{2\alpha}\Big)
			- \arsinh(\lambda)
			\Big]
		&\le 
		- \sum_{\substack{i\in \SP \\ \sign(n^{\ast}_i)<0}} n^{\ast}_i 
			\Big[
			\arsinh\left(\frac{\abs{\ga_i}}{2 \alpha }\right)
			- \arsinh(\lambda)
			\Big] \nonumber\\
		&\overleq{(a)}
		- \sum_{\substack{i\in \SP \\ \sign(n^{\ast}_i)<0}} n^{\ast}_i 
			\Big[
				\log\left(  
                \frac{\abs{\ga_i}}{ 2\lambda \alpha}
				\right)
			\Big] \nonumber \\
		&\overleq{(b)} \varrho^{-} \norm{n_{\SC}}_{\ell^1} \sup_{i\in \SP} 
			\left[ \log\left(  
                \frac{\abs{\ga_i}}{ 2\lambda \alpha}
				\right) \right] \nonumber \\
		&= \varrho^{-} \norm{n_{\SC}}_{\ell^1}
			\log \left( \kappa_{\ast} \right). \label{ineq:internDominik16}
	\end{align}
    Inequality $(a)$ follows from \cref{lemma:basic_properties_arsinh}, see \cref{eqn150}.
	Inequality $(b)$ is due to the definition of $\varrho^{-}$.
    It follows from \eqref{ineq:internDominik15} and \eqref{ineq:internDominik16} that
    \begin{align*}
    \frac{-1}{\norm{n_{\SC}}_{\ell^1}}
    \sum_{i \in \SP} n^{\ast}_i \arsinh\left(\frac{\abs{\ga_i}}{2\alpha}\right)
    \le
    \varrho  \left( \log \left(2\lambda\right) + \Delta \left(2\lambda \right) \right)
    +
    \varrho^{-} \log \left( \kappa_{\ast} \right).
    \end{align*}
	In combination with \eqref{ineq:internDominik17}
	and since $\sinh$ is increasing,
    this in turn implies that
    \begin{align*}
        \norm{n_{\SC}}_{\ell^1}
        \le
        2 \alpha \abs{\SC}
        \sinh 
        \left(
            \varrho  \left( \log(2\lambda) + \Delta(2\lambda)\right)
            +
            \varrho^{-} \log \left( \kappa_{\ast} \right)
        \right).
    \end{align*}
    Using that $\sinh \le \frac{1}{2}\exp$, we deduce that
	\begin{equation*}
		\norm{n_{\SC}}_{\ell^1}
		\le 
        \alpha \abs{\SC}
        \left(2 \lambda \right)^\varrho
        \kappa_{\ast}^{\varrho^{-}}
        \exp \left(  \varrho \Delta (2\lambda) \right).
    \end{equation*}
	Next, we note that by 
	\cref{eqn186}
	in Lemma \ref{lemma:basic_properties_arsinh}
	we obtain that
	\begin{equation*}
		\exp \left( \varrho \Delta (2\lambda) \right) 
        \le 
        \left(
            1+ \frac{1}{4 \lambda^2}
        \right)^\varrho.
	\end{equation*}
    By combining the last two inequalities
    and using that $\lambda = \frac{\min_{i\in \SP} \abs{\ga_i}}{2\alpha}$
    we obtain that
    \begin{align}
		\norm{n_{\SC}}_{\ell^1}
		\le 
        \alpha^{1-\varrho} \abs{\SC}
        \left(\min_{i\in \SP} \abs{\ga_i}\right)^\varrho
        \kappa_{\ast}^{\varrho^{-}}
        \left( 
            1+ \left( \frac{\alpha}{ \min_{i \in \mathcal{S}} \vert \ga_i \vert } \right)^2
        \right)^{\varrho}.
		\label{equ:internDominik8}
    \end{align}
	Then the claimed upper bound \eqref{eqn:error_bound_sparse_hyperbolic}
	follows from
    the observation that
	\begin{equation*}
			\norm{n}_{\ell^1} 
			= \norm{n_{\SP}}_{\ell^1} + \norm{n_{\SC}}_{\ell^1}
			\le (1+\tilde{\varrho}) \norm{n_{\SC}}_{\ell^1},
	\end{equation*}
	which is a direct consequence of the definition 
	of $\tilde{\varrho}$, see \cref{nullspaceconstants}.
    This completes the proof of the upper bound.
\end{proof}

\subsubsection{Proof of the lower bound}

\begin{proof}
	Define $n := \xinf -\ga \in \ker (A)$. 
	We start with the following observation
	which we will use several times throughout the proof.
	Namely, by using 
	\cref{equ:internDominik8},
	which we have established in the proof of the upper bound,
	we obtain that
	\begin{align}
		\Vert
		n_{\mathcal{S}}
		\Vert_{\ell^{\infty}}
		\le
		&
		\Vert
		n_{\mathcal{S}}
		\Vert_{\ell^{1}}
		\le
		\tilde{\varrho}
		\Vert
		n_{\mathcal{S}^c}
		\Vert_{\ell^{1}} \nonumber \\
		\le
		&\tilde{\varrho}
        \alpha^{1-\varrho} \vert \SC \vert 
		\left( \min_{i\in \SP}\abs{\ga_i} \right)^{\varrho}
		\kappa_{\ast}^{\varrho^{-}}
		 \left(
            1 +
            \frac{\alpha^2}{ \left( \min_{i\in \SP}\abs{\ga_i} \right)^2} 
        \right)^{\varrho}\\
		\le
		& 2\tilde{\varrho}
        \alpha^{1-\varrho} \vert \SC \vert 
		\left( \min_{i\in \SP}\abs{\ga_i} \right)^{\varrho}
		\kappa_{\ast}^{\varrho^{-}} \label{ineq:internDominik18} \\
		\le
		& \frac{\underset{i \in \mathcal{S}}{\min} \ \abs{ \ga_i} }{2}.
		\label{ineq:internDominik1}
	\end{align} 
	Next, note that \cref{lemma:normal_coneSimple} 
	implies the existence of $m \in \ker(A) \setminus \{0\}$ with $m_{\SC}\ne 0$ and 
	\begin{equation}\label{eqn222}
		-\sum_{i \in \SP} \sign(\ga_i) m_i = \varrho \norm{m_{\SC}}_{\ell^1}.
	\end{equation}
	From the optimality of $\xinf$ it follows that
	\begin{equation*}
		0 = \left.\frac{\dif}{\dif t}\right|_{t=0} D_{\Ha}(\xinf+tm,0)
		= \langle \nabla \Ha(\xinf)-\nabla \Ha(0), m\rangle
		= \langle \nabla \Ha(\xinf), m\rangle.
	\end{equation*}
    It follows that
	\begin{equation} \label{eqn221}
		-\langle \nabla \Ha (\ga_{\SP} +n_{\SP}), m_{\SP}\rangle 
		= \langle \nabla \Hb(n_{\SC}),m_{\SC}\rangle.
	\end{equation}
	In the following, we will process the terms in \eqref{eqn221} individually. 
	For the term on the left-hand side,
	we obtain that
	\begin{equation}\label{eqn223}
		\langle \nabla \Hb (\ga_{\SP} +n_{\SP}), m_{\SP}\rangle 
		\le  \langle \nabla \Hb (\ga_{\SP} ), m_{\SP}\rangle 
		+ \norm{ \nabla \Hb(\ga_{\SP}) - \nabla \Hb(\ga_{\SP} + n_{\SP})}_{\ell^{\infty}} \norm{m_{\SP}}_{\ell^1}
	\end{equation}
	Next, we observe
	that we have $\sign(\ga_i) = \sign(\ga_i +n_i)$
	for all $i\in \SP$
	due to \eqref{ineq:internDominik1}.	
	Inserting the definition of $ \nabla \Hb$ 
	and using the definition of $\tilde{\varrho}$
	we obtain that
	\begin{align}
			 \norm{ \nabla \Hb(\ga_{\SP}) - \nabla \Hb(\ga_{\SP} + n_{\SP})}_{\ell^{\infty}} \norm{m_{\SP}}_{\ell^1}
			\le 
			&\sup_{i\in \SP} \abs{ 
				\arsinh\Big(\frac{\ga_i}{2\alpha}\Big) - \arsinh\Big(\frac{\ga_i+n_i}{2\alpha}\Big)
			}
			\tilde{\varrho} \norm{m_{\SC}}_{\ell^1} \nonumber \\
			= 
			&\delta_1 \norm{m_{\SC}}_{\ell^1},\label{eqn224}
	\end{align}
	where
	\begin{equation*}
		\delta_1
		:=
		\tilde{\varrho}
		\cdot
		\max_{i\in \SP} \abs{ 
			\arsinh\Big(\frac{\ga_i}{2\alpha}\Big) - \arsinh\Big(\frac{\ga_i+n_i}{2\alpha}\Big)
		}.
	\end{equation*}
	Now let $\lambda:= \frac{ \norm{\ga}_{\ell^\infty}}{\alpha}$ 
    and $m^{\ast}_i:= \sign(\ga_i)m_i$ 
	for $i\in \SP$.
	It follows that
	\begin{align}
			\langle \nabla \Ha (\ga_{\SP} ), m_{\SP}\rangle
			&= \sum_{i \in \SP} m_i \arsinh\left( \frac{\ga_i}{2 \alpha} \right)
			= \sum_{i \in \SP} m^{\ast}_i \arsinh \left( \frac{\abs{\ga_i}}{2 \alpha}\right) \nonumber \\
			&= \left(\sum_{i \in \SP} m^{\ast}_i\right) 
            \arsinh \left( \frac{\lambda}{2}\right)
			+ \sum_{i \in \SP} m^{\ast}_i
			\Big[ 
			\arsinh \left(\frac{\abs{\ga_i}}{2 \alpha}\right)
			-\arsinh \left( \frac{\lambda}{2}\right) 
			\Big] \nonumber \\
			&\overleq{(a)}
			-\varrho \norm{m_{\SC}}_{\ell^1} \arsinh\left( \frac{\lambda}{2}\right)
			+ \sum_{\substack{i \in \SP\\ m^{\ast}_i<0}} m^{\ast}_i
			\Big[ 
			\arsinh\left(\frac{\abs{\ga_i}}{2 \alpha}\right)
			-\arsinh\left( \frac{\lambda}{2}\right)
			\Big] \nonumber \\
			&\overleq{(b)}
			-\varrho \norm{m_{\SC}}_{\ell^1}\arsinh \left( \frac{\lambda}{2}\right)
			-
			\Big(
			\sum_{\substack{i \in \SP\\ m^{\ast}_i<0}} 
			m^{\ast}_i
			\Big)
			\cdot
			\sup_{i\in \SP} 
			\left[
			\log
			\left( 
			\frac{ \norm{\ga}_{\ell^\infty}}{2 \alpha}
			\cdot \frac{2 \alpha}{\abs{\ga_i}}
			\right)
			\right]
			\nonumber\\
			&\overleq{(c)}
			-\varrho \norm{m_{\SC}}_{\ell^1} \arsinh \left( \frac{ \norm{\ga}_{\ell^\infty}}{2 \alpha}\right)
			+\varrho^{-}\norm{m_{\SC}}_{\ell^1} \log\left(\kappa_{\ast} \right).\label{eqn225}
	\end{align}
    Inequality $(a)$ follows from 
	the definition of $\varrho$,
	$\lambda \ge \frac{\abs{\ga_i}}{\alpha}$ 
    for all $i\in \SP$,
	and the monotonicity of $\arsinh$.
	For inequality $(b)$ we used
	\cref{lemma:basic_properties_arsinh}, see \cref{eqn150}.
	For inequality $(c)$ we used the definition of $\varrho^{-}$.
	Combining \eqref{eqn224} and \eqref{eqn225} with \eqref{eqn223}, we infer that
	\begin{equation}\label{eqn226}
		-\langle \nabla \Ha (\ga_{\SP} +n_{\SP}), m_{\SP}\rangle
		\ge \norm{m_{\SC}}_{\ell^1} 
		\Big[  -\delta_1
		+ \varrho \arsinh
        \left( \frac{ \norm{\ga}_{\ell^\infty}}{2\alpha }
        \right)
		- \varrho^{-} \log
        \left(
            \kappa_{\ast}
        \right)
		\Big].
	\end{equation}
	For the term on the right-hand side of \eqref{eqn221},
    we use $\abs{\arsinh(t)}=\arsinh(\abs{t})$ to obtain
	\begin{align} 
			\langle \nabla \Hb(n_{\SC}),m_{\SC}\rangle 
			&= \sum_{i\in \SC} m_i \arsinh
            \left( \frac{n_i}{2\alpha}\right)
			\le \norm{m_{\SC}}_{\ell^1} \sup_{i \in \SC} 
            \abs{ \arsinh\left( \frac{n_i}{2\alpha}\right)} \nonumber \\
			&\le \norm{m_{\SC}}_{\ell^1} 
			\arsinh
            \left( 
                \frac{ \norm{n_{\SC}}_{\ell^{\infty}}}{ 2\alpha } 
            \right).\label{eqn228}
	\end{align}
	Inserting \eqref{eqn226} and \eqref{eqn228} into \eqref{eqn221},
	we obtain that
	\begin{align}
			\arsinh
            \left( \frac{ \norm{n_{\SC}}_{\ell^{\infty}}}{ 2\alpha} \right)
			\ge \varrho \arsinh
            \left( \frac{ \norm{\ga}_{\ell^\infty}}{2 \alpha}\right)
			-\varrho^{-} \log \left( \kappa_{\ast} \right)
			-\delta_1.
			\label{eqn235}
	\end{align}
    It follows that 
	\begin{align}
		 \norm{n_{\SC}}_{\ell^{\infty}}
		 \ge
		 &2\alpha
		 \sinh 
		 \left(
            \varrho \arsinh \left( \frac{ \norm{\ga}_{\ell^\infty}}{2 \alpha}\right)
			-\varrho^{-} \log \left( \kappa_{\ast} \right)
			-\delta_1 
		 \right) \nonumber \\
		 \overgeq{(a)}
		 &2\alpha
		 \sinh 
		 \left(
            \varrho \log \left( \frac{ \norm{\ga}_{\ell^\infty}}{ \alpha}\right)
			-\varrho^{-} \log \left( \kappa_{\ast} \right)
			-\delta_1 
		 \right) \nonumber \\
		 =
		 &\alpha
		 \exp
		 \left(
            \varrho \log \left( \frac{ \norm{\ga}_{\ell^\infty}}{ \alpha}\right)
			-\varrho^{-} \log \left( \kappa_{\ast} \right)
			-\delta_1 
		 \right) \nonumber \\
		 &- \alpha
		 \exp
		 \left(
            -\varrho \log \left( \frac{ \norm{\ga}_{\ell^\infty}}{ \alpha}\right)
			+\varrho^{-} \log \left( \kappa_{\ast} \right)
			+\delta_1 
		 \right) \nonumber
		 \\
		 =
		 &\alpha^{1-\varrho} \norm{\ga}_{\ell^\infty}^{\varrho} 
		 \kappa_{\ast}^{-\varrho^-} \exp \left( - \delta_1 \right)
		 -
		 \alpha^{1+\varrho}\norm{\ga}_{\ell^\infty}^{-\varrho} \kappa_{\ast}^{\varrho^-}
		 \exp \left( \delta_1 \right) \nonumber \\
		 =
		 &\alpha^{1-\varrho} \norm{\ga}_{\ell^\infty}^{\varrho} \kappa_{\ast}^{-\varrho^-}
		 \left(
			\exp \left( - \delta_1 \right)
			- \frac{\alpha^{2\varrho}}{\norm{\ga}_{\ell^\infty}^{2\varrho}}
			\kappa_{\ast}^{2\varrho^-}
			\exp \left(\delta_1 \right)
		 \right) \nonumber \\
		 =
		 &\alpha^{1-\varrho} \norm{\ga}_{\ell^\infty}^{\varrho} \kappa_{\ast}^{-\varrho^-}
		 \exp \left( \delta_1 \right)
		 \left(
			\exp \left(-2\delta_1\right)
			- \frac{\alpha^{2\varrho}}{\norm{\ga}_{\ell^\infty}^{2\varrho}}
			\kappa_{\ast}^{2\varrho^-}
		 \right) \nonumber \\
		 \ge 
		 &\alpha^{1-\varrho} \norm{\ga}_{\ell^\infty}^{\varrho} \kappa_{\ast}^{-\varrho^-}
		 \left(
			\exp \left(-2\delta_1\right)
			- \frac{\alpha^{2\varrho}}{\norm{\ga}_{\ell^\infty}^{2\varrho}}
			\kappa_{\ast}^{2\varrho^-}
		 \right). \label{equ:internDominik7}
	\end{align}
	Inequality $(a)$ follows from \cref{lemma:basic_properties_arsinh},
	see \cref{eqn183}.
    To obtain the final bound
	it remains to bound the term $\exp(\delta_1)$ from below.
	Due to \cref{ineq:internDominik1} 
	we have $\sign (\ga_i) = \sign (\ga_i+ n_i) $
	for all $i \in \mathcal{S}$.
	Then we obtain
	using the definition of $\delta_1$ and \cref{lemma:basic_properties_arsinh} that
	\begin{equation*}
		\delta_1
		\le
		\tilde{\varrho}
		\cdot
		\underset{i \in \mathcal{S}}{\max}
		\Big\vert
		\log 
		\left(
			1+ \frac{n_i}{\ga_i}
		\right)
		\Big\vert.
	\end{equation*}
	Next, we choose an index $\tilde{i} \in \SP$
	which maximizes the right-hand side in the last line. 
	If $\frac{n_{\tilde{i}}}{\ga_{\tilde{i}}} \le 0$,
	we obtain
	that
	\begin{equation*}
		\exp
		\left( - 2 \delta_1  \right)
		\ge 
		\exp
		\left(2\tilde{\varrho} \log\left( 1+ \frac{n_{\tilde{i}}}{\ga_{\tilde{i}}}\right)
		\right)
		\overgeq{(a)}
		\exp
		\left(
			\frac{4 \tilde{\varrho} n_{\tilde{i}}}{\ga_{\tilde{i}}}
		\right)
		\overgeq{(b)}
		1+\frac{4 \tilde{\varrho} n_{\tilde{i}}}{\ga_{\tilde{i}}} 
		\ge
		1-\frac{4 \tilde{\varrho} \norm{ n_{\mathcal{S}}}_{\ell_{\infty}} }{ \underset{i \in \mathcal{S}}{\min} \ \abs{\ga_{i}}},
	\end{equation*}
	where in inequality (a) we have used 
	the elementary inequality $\log (1+x) \ge \frac{x}{1-x}$
	and that $\frac{n_{\tilde{i}}}{ \ga_{\tilde{i}}} \in (-1/2,1/2)$ due to \eqref{ineq:internDominik1}.
	In inequality $(b)$ we used that $\exp(x) \ge 1+x$.
	If $\frac{n_{\tilde{i}}}{\ga_{\tilde{i}}} > 0$, 
	we obtain in a similar way that
	\begin{align*}
		\exp
		\left( -2  \delta_1 \right)
		\ge 
		&\exp
		\left(
		-2 \tilde{\varrho} 
		\log \left( 1+ \frac{n_{\tilde{i}}}{\ga_{\tilde{i}}}\right)
		\right)
		\ge 
		\exp
		\left(
		-2 \tilde{\varrho} 
		\log 
		\left( 
	    1+ \frac{\Vert n_{\mathcal{S} \Vert_{\ell^{\infty}} }}{ \underset{i \in \mathcal{S}}{\min} \ \abs{\ga_{i}} }
		\right)
		\right)
		\ge
		1 - 
		\frac{2 \tilde{\varrho} \Vert n_{\mathcal{S} \Vert_{\ell^{\infty}} }}{ \underset{i \in \mathcal{S}}{\min} \ \abs{\ga_{i}}},
	\end{align*}
	where in the last inequality we used that $\log(1+x) \le x$ for $x>-1$
	and that $ \exp(x) \ge 1+x $ for all $x \in \R$.
	By combining the last two inequalities we obtain that
	\begin{align*}
		\exp\left( -2 \delta_1 \right) 
		\ge
		1 - 
		\frac{4 \tilde{\varrho} \Vert n_{\mathcal{S}} \Vert_{\ell^{\infty}}}
		{ \underset{i \in \mathcal{S}}{\min} \ \abs{\ga_{i}}}
		\overgeq{\eqref{ineq:internDominik18}}
		1-8\tilde{\varrho}^2
        \abs{\SC} 
		\kappa_{\ast}^{\varrho^{-}}
		\frac{\alpha^{1-\varrho}}{ \left( \underset{i \in \mathcal{S}}{\min} \ \abs{\ga_{i}} \right)^{1-\varrho} }.
	\end{align*}
    By inserting this inequality into \cref{equ:internDominik7},
	we obtain that
    \begin{align*}
    \norm{n_{\SC}}_{\ell^{\infty}}
    \ge 
	&\alpha^{1-\varrho} \norm{\ga}_{\ell^\infty}^{\varrho} \kappa_{\ast}^{-\varrho^-}
	\left(
		1-8\tilde{\varrho}^2
        \abs{\SC}
		\kappa_{\ast}^{\varrho^{-}}
		\frac{\alpha^{1-\varrho}}{ \left( \underset{i \in \mathcal{S}}{\min} \ \abs{\ga_{i}} \right)^{1-\varrho} }
		-
		\frac{\alpha^{2\varrho}}{\norm{\ga}_{\ell^\infty}^{2\varrho}}
			\kappa_{\ast}^{2\varrho^-}
	\right).
	\end{align*}
	This implies the claimed inequality in Part b) of \cref{theorem:upper_bound_+-}.
\end{proof}

\subsection{Case $D\ge 3$}\label{section:Proofs_Dgreatertwo}

\subsubsection{Some preliminaries}
Let $D\in \N$ with $D\ge 3$, and let $\gamma:=\frac{D-2}{D}$. 
Recall that $Q^D_{\alpha} \colon \R^d \to \R$ is given by
\begin{equation*}
	Q^D_{\alpha}(x) = \sum_{i=1}^d \alpha \cdot q_D\Big(\frac{x_i}{\alpha}\Big).
\end{equation*}
Here, we have
\begin{equation*}
	q_D(u) = \int_0^u h_D^{-1}(z)\dif z,
\end{equation*}
where
\begin{equation} \label{eqn_def_h_D}
	h_D(z) \colon (-1,1)\to \R,\quad z \mapsto (1-z)^{-\frac{D}{D-2}} - (1+z)^{-\frac{D}{D-2}}.
\end{equation}
Our first technical lemma allows us to simplify 
the expression $ D_{\QQ} (x,0) $.
\begin{lemma} \label[lemma]{lemma:basic_properties_Qalpha_1}
	Let $D\ge 3$, $\alpha>0$, and $x\in \R^d$.
	Then it holds that
	\begin{equation*}
		D_{\QQ}(x,0) = \QQ(x).
	\end{equation*}
\end{lemma}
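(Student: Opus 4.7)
The plan is to compute $D_{\QQ}(x,0) = \QQ(x) - \QQ(0) - \langle \nabla \QQ(0), x\rangle$ directly from the definition of the Bregman divergence, and then verify that the last two terms vanish. So the whole lemma reduces to showing that $\QQ(0)=0$ and $\nabla \QQ(0)=0$.

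First I would establish $\QQ(0)=0$. By definition $\QQ(z) = \sum_{i=1}^d \alpha \,q_D(z_i/\alpha)$, so it suffices to note that $q_D(0) = \int_0^0 h_D^{-1}(v)\,dv = 0$. Since each summand vanishes at $z=0$, the claim follows immediately.

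Next I would show $\nabla \QQ(0)=0$. A direct differentiation using the chain rule and the fundamental theorem of calculus gives
\begin{equation*}
    \bigl(\nabla \QQ(z)\bigr)_i = q_D'(z_i/\alpha) = h_D^{-1}(z_i/\alpha),
\end{equation*}
so one only needs to verify $h_D^{-1}(0)=0$. But from the formula \eqref{eqn_def_h_D} one reads off $h_D(0) = (1-0)^{-D/(D-2)} - (1+0)^{-D/(D-2)} = 1 - 1 = 0$, and since $h_D$ is invertible on its domain this forces $h_D^{-1}(0)=0$. Evaluating at $z=0$ then gives $\nabla \QQ(0) = 0$ componentwise.

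Combining both observations with the definition of the Bregman divergence finishes the proof: $D_{\QQ}(x,0) = \QQ(x) - 0 - \langle 0, x\rangle = \QQ(x)$. There is essentially no obstacle here — the statement is a consequence of the normalization $h_D(0)=0$ built into the definition of the potential, and the only thing to be careful about is reading off $h_D^{-1}(0)=0$ from the formula for $h_D$ rather than from any more delicate asymptotic property.
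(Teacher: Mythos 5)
Your proof is correct and follows essentially the same route as the paper's: expand the Bregman divergence, then verify $\QQ(0)=0$ from $q_D(0)=0$ and $\nabla \QQ(0)=0$ from $q_D'=h_D^{-1}$ together with $h_D(0)=0$. (As a small aside, the paper's displayed expansion has a sign typo in the inner-product term — you have the sign right, though it is moot since that term vanishes.)
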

\begin{proof}
	By definition, we have
	\begin{equation*}
		D_{\QQ}(x,0) = \QQ(x) - \QQ(0) + \langle \nabla \QQ(0),x-0\rangle.
	\end{equation*}
	Furthermore, we have
	\begin{equation*}
		\frac{\partial}{\partial x_i} Q^D_{\alpha}(x) 
		= q_D'\Big(\frac{x_i}{\alpha}\Big)
		= h_D^{-1}\Big(\frac{x_i}{\alpha}\Big).
	\end{equation*}
	Since $\hh(0)= 0$ it follows that $\hh^{-1}(0)=0$ and so $\nabla\QQ(0)=0$. Furthermore, 
	we have that $\QQ(0)=0$.
	Thus, the proof is complete.
\end{proof}
For the proofs of the following technical lemmas, we refer to \cref{sec:basic_properties_Dge3}.
The next lemma gathers some basic properties of the functions $\hh$ and $q_D$.
\begin{lemma} \label[lemma]{lemma:basic_properties_q_h_g_2}
	Let $D\in \N$ with $D\ge3$. 
	\begin{enumerate}[(i)]
		\item $\hh$ is smooth, odd, and increasing. Furthermore, it is convex on $[0,1)$.
		\item $\hh^{-1}$ is smooth, odd, and increasing. Furthermore, it is concave on $[0,\infty)$.
		\item $\qq$ is smooth, even, and convex. Furthermore, it is increasing on $[0,\infty)$.
	\end{enumerate}
\end{lemma}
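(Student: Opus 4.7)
The plan is to establish each part in turn by direct calculation with derivatives, applying standard results about monotonicity, convexity, and inverses.

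\textbf{Part (i).} I would first note that $h_D$ is a difference of two smooth functions on $(-1,1)$, hence smooth. Oddness follows by inspection, since swapping $z\to -z$ interchanges the two terms and flips the sign. For monotonicity and convexity on $[0,1)$, let $\beta := \frac{D}{D-2} > 0$ and compute
\begin{equation*}
h_D'(z) = \beta\bigl[(1-z)^{-\beta-1} + (1+z)^{-\beta-1}\bigr] > 0,
\end{equation*}
which is strictly positive on $(-1,1)$, so $h_D$ is increasing everywhere. Differentiating once more,
\begin{equation*}
h_D''(z) = \beta(\beta+1)\bigl[(1-z)^{-\beta-2} - (1+z)^{-\beta-2}\bigr],
\end{equation*}
which is nonnegative for $z \in [0,1)$ because $1-z \le 1+z$ and the exponent is negative. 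This gives convexity on $[0,1)$.

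\textbf{Part (ii).} Since $h_D$ is continuous and strictly increasing on $(-1,1)$ with $\lim_{z\uparrow 1} h_D(z) = +\infty$ and $\lim_{z\downarrow -1} h_D(z) = -\infty$, it is a bijection from $(-1,1)$ onto $\R$. Its derivative is nowhere zero, so the inverse function theorem yields that $h_D^{-1}\colon \R \to (-1,1)$ is smooth. Oddness of $h_D^{-1}$ is inherited from oddness of $h_D$, and monotonicity is inherited as well. For concavity of $h_D^{-1}$ on $[0,\infty)$, I would appeal to the standard fact that the inverse of an increasing convex function is concave on its range: if $f$ is increasing and convex, then for $t\in[0,1]$ and $\alpha,\beta$ in the range,
\begin{equation*}
f\bigl(tf^{-1}(\alpha) + (1-t)f^{-1}(\beta)\bigr)
\le t\alpha + (1-t)\beta,
\end{equation*}
so applying $f^{-1}$ (which is increasing) gives concavity of $f^{-1}$. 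Applied to $f = h_D|_{[0,1)}$, whose image is exactly $[0,\infty)$, this gives concavity of $h_D^{-1}$ on $[0,\infty)$.

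\textbf{Part (iii).} Since $h_D^{-1}$ is smooth on $\R$ by (ii) and $q_D$ is its antiderivative (with $q_D(0)=0$), $q_D$ is smooth, with $q_D' = h_D^{-1}$. Because $h_D^{-1}$ is odd with $h_D^{-1}(0)=0$, integrating from $0$ shows that $q_D$ is even. Convexity follows from $q_D'' = (h_D^{-1})' > 0$, which is positive since $h_D^{-1}$ is strictly increasing by (ii). Finally, on $[0,\infty)$ we have $q_D'(u) = h_D^{-1}(u) \ge h_D^{-1}(0) = 0$, so $q_D$ is increasing there.

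The calculations are all routine; the only step requiring care is verifying that $h_D$ maps $(-1,1)$ bijectively onto $\R$, so that $h_D^{-1}$ is well-defined on all of $\R$ and the inverse function theorem applies. This is straightforward from the explicit formula \eqref{eqn_def_h_D} and the monotonicity established in (i), together with the blow-up of $(1-z)^{-\beta}$ as $z\uparrow 1$ and of $(1+z)^{-\beta}$ as $z\downarrow -1$.
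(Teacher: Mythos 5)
Your proof is correct and follows essentially the same route as the paper: direct computation of $h_D'$ and $h_D''$ for part (i), then deriving the properties of $h_D^{-1}$ and $q_D$ from those of $h_D$. The only small variation is that for concavity of $h_D^{-1}$ you invoke the general fact that the inverse of an increasing convex function is concave, whereas the paper computes $(h_D^{-1})'' = -\frac{h_D''\circ h_D^{-1}}{(h_D'\circ h_D^{-1})^3} < 0$ explicitly; both arguments are standard and equally rigorous.
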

As in the case $D=2$,
a key step in the proof of the upper bound lies
in using the following generalized log sum inequality,
see \cref{lemma:generalized_log_sum}.
The following \cref{lemma:almost_convex} shows 
that the generalized log sum inequality, see \cref{lemma:generalized_log_sum},
is also applicable in the case $D\ge 3$.
\begin{lemma} \label[lemma]{lemma:almost_convex}
	Let $D\in \N$ with $D\ge 3$. Then the map 
	\begin{equation*}
		[0,\infty) \to \R,\quad t\mapsto t \hh^{-1}(t)
	\end{equation*}
	is convex.
\end{lemma}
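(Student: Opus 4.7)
The plan is to verify that $f(t) := t\,\hh^{-1}(t)$ satisfies $f''(t) \ge 0$ on $(0,\infty)$; this together with the continuity of $f$ at $t=0$ (note $f(0) = 0$ since $\hh^{-1}(0) = 0$) then yields convexity on the closed half-line $[0,\infty)$. Writing $g := \hh^{-1}$, a direct computation gives $f''(t) = 2g'(t) + t\,g''(t)$. To avoid working with the inverse function explicitly, I would change variables via $t = \hh(z)$ with $z \in [0,1)$; differentiating $\hh(g(t)) = t$ once and twice yields $g'(t) = 1/\hh'(z)$ and $g''(t) = -\hh''(z)/\hh'(z)^3$. Since $\hh'(z) > 0$ by \cref{lemma:basic_properties_q_h_g_2}(i), clearing the positive denominator $\hh'(z)^3$ reduces the claim $f''(t) \ge 0$ to
\begin{equation*}
  2\, \hh'(z)^2 \;\ge\; \hh(z)\, \hh''(z) \qquad \text{for all } z \in [0,1).
\end{equation*}

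Next, I would plug in the explicit formula \eqref{eqn_def_h_D}. Writing $\beta := D/(D-2)$ (so $\beta > 1$ for every $D \ge 3$) and abbreviating $a := (1-z)^{-1} \ge 1$ and $b := (1+z)^{-1} \le 1$, straightforward differentiation produces the symmetric expressions
\begin{equation*}
  \hh(z) = a^\beta - b^\beta, \qquad \hh'(z) = \beta\,\bigl(a^{\beta+1}+b^{\beta+1}\bigr), \qquad \hh''(z) = \beta(\beta+1)\,\bigl(a^{\beta+2}-b^{\beta+2}\bigr).
\end{equation*}
Substituting these and dividing by the common factor $\beta^2 > 0$, the desired inequality becomes the purely algebraic statement
\begin{equation*}
  \frac{2\beta}{\beta+1}\, \bigl(a^{\beta+1}+b^{\beta+1}\bigr)^2 \;\ge\; \bigl(a^\beta - b^\beta\bigr)\bigl(a^{\beta+2}-b^{\beta+2}\bigr),
\end{equation*}
which must hold for all $a,b > 0$.

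The decisive step, and the main obstacle, is establishing this algebraic inequality. Since $\beta > 1$ gives $\frac{2\beta}{\beta+1} > 1$, it suffices to prove the unweighted version
\begin{equation*}
  \bigl(a^{\beta+1}+b^{\beta+1}\bigr)^2 \;\ge\; \bigl(a^\beta - b^\beta\bigr)\bigl(a^{\beta+2}-b^{\beta+2}\bigr).
\end{equation*}
Expanding both sides, their difference collapses cleanly to
\begin{equation*}
  \bigl(a^{\beta+1}+b^{\beta+1}\bigr)^2 - \bigl(a^\beta - b^\beta\bigr)\bigl(a^{\beta+2}-b^{\beta+2}\bigr) \;=\; a^\beta b^\beta (a+b)^2,
\end{equation*}
which is manifestly non-negative. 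This closes the chain of reductions and proves convexity of $f$ on $(0,\infty)$; convexity on $[0,\infty)$ then follows from continuity of $f$ at the origin.
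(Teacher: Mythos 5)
Your proof is correct, and it diverges from the paper's in the one place that actually matters. Both arguments begin identically: differentiate $t\mapsto t\,\hh^{-1}(t)$ twice, change variables to $u = \hh^{-1}(t)$, and reduce the claim to the inequality
\begin{equation*}
  2\,\hh'(u)^2 \;\ge\; \hh(u)\,\hh''(u), \qquad u\in[0,1).
\end{equation*}
At that point the paths split. The paper (in the proof following \eqref{eqn621}) discards the $(1+u)$-terms entirely: it lower-bounds $\hh'(u)^2$ by $\eta^2(1-u)^{-2\eta-2}$ and upper-bounds $\hh(u)\hh''(u)$ by $\eta(\eta+1)(1-u)^{-2\eta-2}$, then finishes with the coefficient comparison $2\eta^2 > \eta(\eta+1)$, i.e.\ $\eta = 1/\gamma > 1$. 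You instead substitute $a=(1-z)^{-1}$, $b=(1+z)^{-1}$, expand fully, and observe the exact identity
\begin{equation*}
  \bigl(a^{\beta+1}+b^{\beta+1}\bigr)^2 - \bigl(a^{\beta}-b^{\beta}\bigr)\bigl(a^{\beta+2}-b^{\beta+2}\bigr) = a^{\beta}b^{\beta}(a+b)^2 \ge 0,
\end{equation*}
after which the prefactor $2\beta/(\beta+1)>1$ is only needed as icing (your unweighted inequality already holds with equality slack). So you prove a sharper statement by a cleaner, loss-free computation, while the paper's bound is cruder but arguably more modular: it isolates the single scalar inequality $2\eta>\eta+1$ rather than relying on a felicitous factorization. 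Both rely on the same underlying fact ($\beta = \eta > 1$ for $D\ge 3$). One small point in your favor: you explicitly dispatch the endpoint $t=0$ by continuity, which the paper glosses over.
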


We will also need the following inequalities
which are useful for describing the asymptotic behavior of $\hh$, $\hh'$, and $\hh^{-1}$.
\begin{lemma}\cite[Proposition 3.3]{wind2023implicit} \label[lemma]{lemma:WindHansen}
	For all $u\in (0,\infty)$, we have
    \begin{equation*}
        1 - u^{-\gamma} \le \hh^{-1}(u) \le 1 - (u+1)^{-\gamma}. 
    \end{equation*}
\end{lemma}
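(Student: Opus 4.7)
The plan is to reduce both inequalities to trivial positivity statements by substituting $z := \hh^{-1}(u)$ and rewriting $u$ via the explicit formula $u = \hh(z) = (1-z)^{-1/\gamma} - (1+z)^{-1/\gamma}$, where I used $D/(D-2) = 1/\gamma$. Since $\hh$ is odd and strictly increasing (\cref{lemma:basic_properties_q_h_g_2}) with $\hh(0)=0$, the assumption $u>0$ immediately yields $z \in (0,1)$, which will make all rearrangements below well-posed.

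For the upper bound $\hh^{-1}(u) \le 1 - (u+1)^{-\gamma}$, I would rewrite this as $1-z \ge (u+1)^{-\gamma}$, equivalently $(1-z)^{-1/\gamma} \le u+1$. Substituting $u = (1-z)^{-1/\gamma}-(1+z)^{-1/\gamma}$ reduces the claim to $(1+z)^{-1/\gamma} \le 1$, which is obvious from $z \ge 0$. For the lower bound $\hh^{-1}(u) \ge 1 - u^{-\gamma}$, a completely analogous rearrangement gives $(1-z)^{-1/\gamma} \ge u$, which, after substitution, becomes $(1+z)^{-1/\gamma} \ge 0$, trivially true.

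There is essentially no technical obstacle here, so I do not anticipate a hard step; the only point requiring a line of justification is that $\hh$ is a bijection from $(-1,1)$ onto $\R$ (so $\hh^{-1}$ is defined on all of $(0,\infty)$ and $z \in (0,1)$ whenever $u>0$), which follows directly from \cref{lemma:basic_properties_q_h_g_2}. The entire argument is a one-line manipulation of the explicit formula for $\hh$ after the substitution, with no asymptotic analysis or convexity needed.
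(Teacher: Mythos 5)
Your proof is correct and complete. The paper itself does not prove this statement---it cites it directly from \cite[Proposition 3.3]{wind2023implicit}---so there is no internal proof to compare against. Your substitution $z = \hh^{-1}(u)$ and reduction to the trivial inequalities $(1+z)^{-1/\gamma} \le 1$ and $(1+z)^{-1/\gamma} \ge 0$ is the natural and direct argument, and it correctly uses that $\hh$ is odd, strictly increasing, and bijective from $(-1,1)$ onto $\R$ (so $u>0$ forces $z\in(0,1)$, and raising both sides of an inequality between positive quantities to the negative power $-1/\gamma$ reverses it). The only tiny omission is that you didn't explicitly verify surjectivity onto $\R$ (i.e.\ $\hh(z)\to\pm\infty$ as $z\to\pm1^{\mp}$), which \cref{lemma:basic_properties_q_h_g_2} alone (``increasing'') does not literally state, but it is immediate from the formula $\hh(z)=(1-z)^{-1/\gamma}-(1+z)^{-1/\gamma}$; this is a cosmetic point, not a gap.
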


\begin{lemma}\label[lemma]{lemma:basic_inequalities_deep_unique}
	Let $D\in \N$ with $D\ge 3$ and $\gamma:=\frac{D-2}{D}$.
	\begin{enumerate}[(i)]
		\item For all $z\in [0,1)$, we have
		\begin{equation}\label{eqn615}
			\hh'(z) \le \frac{2}{\gamma}(1-z)^{-\frac{1}{\gamma}-1}.
		\end{equation}

		\item For all $0<u,v<\infty$, we have 
		\begin{equation} \label{eqn622b}
			\abs{\hh^{-1}(u)-\hh^{-1}(v)} \le  \frac{\gamma}{\big( \min\{u,v\}\big)^{1+\gamma}}\abs{u-v}.
		\end{equation}
	\end{enumerate}
\end{lemma}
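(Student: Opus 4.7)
The plan is to prove both parts by direct computation from the defining formula $h_D(z) = (1-z)^{-1/\gamma} - (1+z)^{-1/\gamma}$, where I use the identity $D/(D-2) = 1/\gamma$.

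For Part (i), I would differentiate this expression to obtain
\begin{equation*}
	h_D'(z) = \frac{1}{\gamma}(1-z)^{-1/\gamma-1} + \frac{1}{\gamma}(1+z)^{-1/\gamma-1}.
\end{equation*}
For $z\in[0,1)$, one has $(1+z)^{-1/\gamma-1}\le 1\le (1-z)^{-1/\gamma-1}$, so the second summand is dominated by the first and the claimed inequality $h_D'(z)\le (2/\gamma)(1-z)^{-1/\gamma-1}$ follows immediately.

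For Part (ii), I would apply a mean value argument to $h_D^{-1}$. By \cref{lemma:basic_properties_q_h_g_2}, $h_D^{-1}$ is concave and increasing on $[0,\infty)$. Assuming without loss of generality $u>v>0$, concavity yields
\begin{equation*}
	h_D^{-1}(u)-h_D^{-1}(v) \le (h_D^{-1})'(v)\cdot(u-v) = \frac{u-v}{h_D'\bigl(h_D^{-1}(v)\bigr)}.
\end{equation*}
It then suffices to show the pointwise bound $(h_D^{-1})'(v)\le \gamma/v^{1+\gamma}$, or equivalently $h_D'(h_D^{-1}(v))\ge v^{1+\gamma}/\gamma$. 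Since $\min\{u,v\}=v$, this is exactly what the lemma asserts.

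To establish the required lower bound on $h_D'$, I would first drop the nonnegative summand $(1+z)^{-1/\gamma-1}$ in the formula above to get $h_D'(z)\ge (1/\gamma)(1-z)^{-1/\gamma-1}$ for $z\in[0,1)$. Setting $z=h_D^{-1}(v)$ and applying the lower bound from \cref{lemma:WindHansen}, namely $h_D^{-1}(v)\ge 1-v^{-\gamma}$, gives $(1-h_D^{-1}(v))^{-1/\gamma-1}\ge v^{\gamma(1/\gamma+1)}=v^{1+\gamma}$, which yields $h_D'(h_D^{-1}(v))\ge v^{1+\gamma}/\gamma$ as needed. Substituting back produces the claimed Lipschitz-type estimate. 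There is no real conceptual obstacle here; the only care required is the bookkeeping of inequality directions and exponents so that the factor $\gamma$ emerges cleanly, and the use of \cref{lemma:WindHansen} in the precise form which matches the exponent $1/\gamma+1$ appearing in the bound from Part (i).
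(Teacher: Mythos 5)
Your proof is correct and follows essentially the same route as the paper: part (i) is the same two-line computation from the explicit formula for $\hh'$, and part (ii) hinges on the same key estimate $(\hh^{-1})'(v)\le\gamma v^{-1-\gamma}$, obtained from the lower bound $\hh'(z)\ge\frac{1}{\gamma}(1-z)^{-1/\gamma-1}$ combined with the lower bound $\hh^{-1}(v)\ge 1-v^{-\gamma}$ from \cref{lemma:WindHansen}. The only cosmetic difference is that you invoke concavity of $\hh^{-1}$ to bound the increment by the tangent at $v$, whereas the paper uses the mean value theorem at an intermediate $\xi\in(v,u)$ and then monotonicity of $t\mapsto t^{-1-\gamma}$; both are equivalent and equally short.
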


\subsubsection{Proof of the upper bound}
In this section, we prove the upper bound in \cref{theorem:upper_bound_deep}.

\begin{proof}
	Let $n:= \xinf -\ga\in \ker(A)$. By definition of $\tilde{\varrho}$, we have
	\begin{equation} \label{eqn2003}
		\norm{n}_{\ell^1} 
		= \norm{n_{\SC}}_{\ell^1} + \norm{n_{\SP}}_{\ell^1}
		\le \big(1 + \tilde{\varrho}\big) \norm{n_{\SC}}_{\ell^1}.
	\end{equation}
	Thus, to show the claim, we need to derive a bound on $\norm{n_{\SC}}_{\ell^1}$.
	
	We have $A(\xinf +tn)=y$ for all $t\in \R$. Furthermore, $D_{\QQ}(\cdot,0)$ is differentiable, see \cref{lemma:basic_properties_q_h_g_2}. 
	Using the optimality of $\xinf$ at $(a)$ and \cref{lemma:basic_properties_Qalpha_1} at $(b)$, it follows that
	\begin{align*} 
			0 
			& \overeq{(a)} \left.\frac{\dif}{\dif t}\right|_{t=0} D_{\QQ}(\xinf + t n,0)
			\overeq{(b)} \langle \nabla \QQ(\xinf),n\rangle                                
			\nonumber\\
			  & = \langle \nabla \QQ(\ga_{\SP}+n_{\SP}),n_{\SP}\rangle
			+ \langle \nabla \QQ(n_{\SC}),n_{\SC}\rangle
	\end{align*}
	Since $\QQ$ is convex, its gradient $\nabla \QQ$ is monotone. Therefore,
	\begin{equation*}
		\langle \nabla \QQ(\ga_{\SP}+n_{\SP}),n_{\SP}\rangle
		\ge \langle \nabla \QQ(\ga_{\SP}),n_{\SP}\rangle
	\end{equation*}
	We deduce that
	\begin{equation} \label{eqn600}
		-\langle \nabla \QQ(\ga_{\SP}),n_{\SP}\rangle
		\ge  \langle \nabla \QQ(n_{\SC}),n_{\SC}\rangle
	\end{equation}
	In the following, we will process the two terms in \eqref{eqn600} individually. 
	
	First, we derive an upper bound for left-hand side of \eqref{eqn600}. Define $\na_{\SP} := n_{\SP} \hada \sign(\ga_{\SP})$. Using the fact that $\hh^{-1}$ is an odd function, see \cref{lemma:basic_properties_q_h_g_2} at $(a)$, we have
	\begin{equation} \label{eqn601}
			-\langle \nabla \QQ(\ga_{\SP}),n_{\SP}\rangle
			= -\sum_{i\in \SP} \hh^{-1} \Big(\frac{\ga_i}{\alpha}\Big) n_i
			\overeq{(a)} -\sum_{i\in \SP} \hh^{-1} \Big(\frac{\abs{\ga_i}}{\alpha}\Big) \na_i.
	\end{equation}
	To estimate the right-hand side of \eqref{eqn601} from above, let $\lambda_{\min}:= \frac{\min_{i\in \SP}\abs{\ga_i}}{\alpha}$ and $\lambda_{\max}:= \frac{\max_{i\in \SP}\abs{\ga_i}}{\alpha}$.
	Using the monotonicity of $\hh^{-1}$ at $(a)$ and $(c)$, and the definitions \eqref{nullspaceconstants} of $\varrho$ and $\varrho^{-}$ at $(b)$, we infer that
	\begin{align}
			-\sum_{i\in \SP} \hh^{-1} \Big(\frac{\abs{\ga_i}}{\alpha}\Big) \na_i
			 & = -\sum_{i\in \SP} \hh^{-1}(\lambda_{\min}) \na_i
			- \sum_{i\in \SP} 
				\Big[
				\hh^{-1} \Big(\frac{\abs{\ga_i}}{\alpha}\Big)
				- \hh^{-1}(\lambda_{\min})
				\Big] \na_i                                                                 
				\nonumber\\
			 & \overleq{(a)} -\hh^{-1}(\lambda_{\min}) \sum_{i\in \SP}  \na_i
			- \sum_{\substack{i\in \SP \\ \na_i<0}}
				\Big[
				\hh^{-1} \Big(\frac{\abs{\ga_i}}{\alpha}\Big)
				- \hh^{-1}(\lambda_{\min})
				\Big]  \na_i                                                                
				\nonumber\\
			 & \overleq{(b)}  \varrho \norm{n_{\SC}}_{\ell^1} \hh^{-1}(\lambda_{\min})
			+ \varrho^{-} \norm{n_{\SC}}_{\ell^1} 
				\sup_{i\in \SP} 
					\abs{ 
					\hh^{-1} \Big(\frac{\abs{\ga_i}}{\alpha}\Big)
					- \hh^{-1}(\lambda_{\min}) 
					}                                                
				\nonumber\\
			 & \overeq{(c)}  \varrho \norm{n_{\SC}}_{\ell^1} \hh^{-1}(\lambda_{\min})
			+ \varrho^{-} \norm{n_{\SC}}_{\ell^1}  
				\Big[  
				\hh^{-1} (\lambda_{\max})
				- \hh^{-1}(\lambda_{\min}) 
				\Big]. 
				\label{eqn602}
	\end{align}

	Next, we show a lower bound for the right-hand side of \eqref{eqn600}. 
	Recall that the map $u \mapsto u\hh^{-1}(u)$ is convex, 
	see \cref{lemma:almost_convex}. 
	Therefore, the generalized log sum inequality, \cref{lemma:generalized_log_sum}, is applicable. 
	Using that $\hh^{-1}$ is an odd function at $(a)$ 
	and the generalized log sum inequality at $(b)$, we have
	\begin{equation} \label{eqn603}
		\langle \nabla \QQ(n_{\SC}),n_{\SC}\rangle
		= \sum_{i\in \SC} \hh^{-1} \Big(\frac{n_i}{\alpha}\Big) n_i
		\overeq{(a)} \sum_{i\in\SC} \hh^{-1} \Big(\frac{\abs{n_i}}{\alpha}\Big) \abs{n_i}
		\overgeq{(b)} \norm{n_{\SC}}_{\ell^1} \hh^{-1} \Big(\frac{\norm{n_{\SC}}_{\ell^1}}{\alpha\abs{\SC}}\Big).
	\end{equation}

	Now that bounds for the terms in \eqref{eqn600} are established, we proceed to derive an upper bound for $\norm{n_{\SC}}_{\ell^1}$. Combining \eqref{eqn601} with \eqref{eqn602}, and inserting this together with \eqref{eqn603} into \eqref{eqn600}, we deduce that
	\begin{equation*}
		\norm{n_{\SC}}_{\ell^1} \hh^{-1} \Big(\frac{\norm{n_{\SC}}_{\ell^1}}{\alpha\abs{\SC}}\Big)
		\le  \varrho \norm{n_{\SC}}_{\ell^1} \hh^{-1}(\lambda_{\min})
		+\varrho^{-} \norm{n_{\SC}}_{\ell^1}  \Big[  \hh^{-1} (\lambda_{\max})
			-\hh^{-1}(\lambda_{\min}) \Big].
	\end{equation*}
	Dividing both sides by $\norm{n_{\SC}}_{\ell^1}$ we obtain
	\begin{equation} \label{eqn2000}
		\hh^{-1} \Big(\frac{\norm{n_{\SC}}_{\ell^1}}{\alpha\abs{\SC}}\Big)
		\le \varrho +\delta_1,
	\end{equation}
	where
	\begin{equation*}
		\delta_1 := \varrho^{-} \cdot \big( \hh^{-1}(\lambda_{\max})-\hh^{-1}(\lambda_{\min}) \big)
		+\varrho \cdot \big( \hh^{-1}(\lambda_{\min}) -1 \big).
	\end{equation*}

	Assume for now that $\delta_1$ is sufficiently small so that $\varrho + \delta_1<1$. Then both sides of \eqref{eqn2000} are in the domain of $\hh$. Applying $h_D$ to both sides of \eqref{eqn2000} and using a Taylor expansion around $\varrho$, we infer that
	\begin{equation} \label{eqn2004}
		\norm{n_{\SC}}_{\ell^1}
		\le \alpha\abs{\SC} \cdot \hh(\varrho +\delta_1)
		= \alpha\abs{\SC} \cdot \big( \hh(\varrho) +\hh'(\xi) \cdot \delta_1 \big)
	\end{equation}
	for some $\xi \in (\varrho, \varrho+ \delta_1)$.

	To finish the proof, we need to check the assumption $\varrho+\delta_1<1$ and to derive an upper bound for $\hh'(\xi) \cdot \delta_1$.
	Let $\eps:= \frac{\alpha}{\mg}$.
	Using that $\hh^{-1}\le 1$ at $(a)$ and applying \cref{lemma:WindHansen} at $(b)$, we obtain
	\begin{equation} \label{eqn2001}
		\delta_1
		\overleq{(a)} \varrho^{-} \cdot \big( 1 - \hh^{-1}(\lambda_{\min}) \big)
		\overleq{(b)} \varrho^{-} \cdot \lambda_{\min}^{-\gamma}
		= \varrho^{-} \cdot \eps^{\gamma}.
	\end{equation}
	By assumption \eqref{eqn1001}, we get $\delta_1 <(1-\varrho)\cdot \frac{\gamma}{4}$. Since $\gamma<1$, it follows that $\varrho + \delta_1 <1$, and thus inequality \eqref{eqn2004} holds.
	Using the monotonicity of $\hh'$ at $(a)$, inequality \eqref{eqn615} of \cref{lemma:basic_inequalities_deep_unique} at $(b)$, assumption \eqref{eqn1001} at $(c)$, and \cref{lemma:technical_1} at $(d)$,	
	we infer that
	\begin{align} 
		\hh'(\xi)
		&\overleq{(a)} \hh'(\varrho+\varrho^{-}\cdot \eps^{\gamma})
		\overleq{(b)}  \frac{2}{ \gamma (1-\varrho-\varrho^{-}\cdot \eps^{\gamma})^{\frac{1}{\gamma}+1}} 
		\nonumber\\
		&=  \frac{2}{ \gamma (1-\varrho)^{\frac{1}{\gamma}+1}} 
			\cdot \Big( 1- \frac{\varrho^{-} \cdot \eps^{\gamma}}{1-\varrho} \Big)^{-\frac{1}{\gamma}-1}
		\overleq{(c)} \frac{2}{\gamma} \frac{1}{(1-\varrho)^{\frac{1}{\gamma}+1}} 
			\cdot \Big( 1- \frac{\gamma}{4} \Big)^{-\frac{1}{\gamma}-1}
		\nonumber\\
		&\overleq{(d)}  \frac{4}{\gamma(1-\varrho)^{\frac{1}{\gamma}+1}} .
		\label{eqn2002}
	\end{align}
	Finally, we insert \eqref{eqn2001} and \eqref{eqn2002} into \eqref{eqn2004} and obtain
	\begin{equation} \label{eqn2005}
		\norm{n_{\SC}}_{\ell^1} 
		\le \alpha\abs{\SC} \cdot \Big( \hh(\varrho) + 
		 \frac{4 \varrho^{-}}{ \gamma (1-\varrho)^{\frac{1}{\gamma}+1}}  \cdot \eps^{\gamma} \Big).
	\end{equation}
	The inequality \eqref{eqn1002} now follows from \eqref{eqn2003} and \eqref{eqn2005}.
\end{proof}

\subsubsection{Proof of the lower bound}

In this section, we prove the lower bound in \cref{theorem:upper_bound_deep}.

\begin{proof}
	Let $n:= \xinf -\ga$. \cref{lemma:normal_coneSimple} implies that there exists $m \in \ker(A)\setminus\{0\}$ such that $m_{\SC}\ne 0$ and 
	\begin{equation} \label{eqn607}
		-\sum_{i\in \SP} \ma_i = \varrho \norm{m_{\SC}}_{\ell^1},
	\end{equation}
	where $\ma_{\SP} := m_{\SP} \hada \sign(\ga_{\SP})$. 
	By optimality of $\xinf$, \cref{lemma:basic_properties_Qalpha_1}, and the identity $\xinf = \ga+ n$, we have
	\begin{align*} 
			0 &= \left.\frac{\dif}{\dif t}\right|_{t=0} D_{\QQ}(\xinf +tm,0)
			= \langle \nabla \QQ(\xinf),m\rangle
			\nonumber\\
			&= \langle \nabla \QQ(\ga_{\SP}+n_{\SP}),m_{\SP}\rangle
			+ \langle \nabla \QQ(n_{\SC}),m_{\SC}\rangle.
	\end{align*}
	Therefore,
	\begin{equation} 			\label{eqn613}
		- \langle \nabla \QQ(\ga_{\SP}+n_{\SP}),m_{\SP}\rangle = \langle \nabla \QQ(n_{\SC}),m_{\SC}\rangle.
	\end{equation}
	In the following, we will estimate the two terms in \eqref{eqn613} individually, and deduce a lower bound for $\norm{n_{\SC}}_{\ell^{\infty}}$.
	
	For the term on the right-hand side of \eqref{eqn613}, since $\hh^{-1}$ is odd and increasing, see \cref{lemma:basic_properties_q_h_g_2}, we have
	\begin{equation} \label{eqn612}
		\langle \nabla \QQ(n_{\SC}),m_{\SC}\rangle
		= \sum_{i\in \SC} \hh^{-1}\Big(\frac{n_i}{\alpha}\Big) m_i
		\le \norm{m_{\SC}}_{\ell^1} \sup_{i\in \SC} \abs{ \hh^{-1}\Big( \frac{n_i}{\alpha}\Big) }
		= \norm{m_{\SC}}_{\ell^1} \hh^{-1}\Big( \frac{\norm{n_{\SC}}_{\ell^{\infty}}}{\alpha}\Big) .
	\end{equation}
	For the term on the left-hand side of \eqref{eqn613}, since $\hh^{-1}$ is odd and increasing, we have 
	\begin{equation} \label{eqn605}
		\begin{aligned}
			\langle \nabla \QQ(\ga_{\SP}+n_{\SP}),m_{\SP}\rangle 
			&= \sum_{i\in \SP} \hh^{-1}\Big(\frac{\ga_i + n_i}{\alpha}\Big) m_i
			= \sum_{i\in \SP} \hh^{-1}\Big(\frac{\abs{\ga_i} + \na_i}{\alpha}\Big) \ma_i\\
			&= \hh^{-1}(\eps^{-1}) \sum_{i\in \SP} \ma_i
			+ \sum_{i\in \SP} \Big[ 
			\hh^{-1}\Big( \frac{\abs{\ga_i}+\na_i}{\alpha}\Big)
			- \hh^{-1}(\eps^{-1})
			\Big] \ma_i,
		\end{aligned}
	\end{equation}
	where $\eps := \frac{\alpha}{\mg}$. We use \eqref{eqn607} to obtain 
	\begin{equation} \label{eqn608}
		\hh^{-1}(\eps^{-1}) \sum_{i\in \SP} \ma_i 
		= -\varrho \norm{m_{\SC}}_{\ell^1} \hh^{-1}(\eps^{-1})
		= -\varrho \norm{m_{\SC}}_{\ell^1} \cdot (1-\delta_1),
	\end{equation}
	where 
	\begin{equation*}
		\delta_1:= 1- \hh^{-1}(\eps^{-1}).
	\end{equation*} 
	Using the definition of $\tilde{\varrho}$, we infer that
	\begin{equation}\label{eqn609}
		\sum_{i\in \SP} \Big[ 
		\hh^{-1}\Big( \frac{\abs{\ga_i}+\na_i}{\alpha}\Big)
		- \hh^{-1}\Big( \varepsilon^{-1} \Big)
		\Big] \ma_i
		\le \tilde{\varrho} \norm{m_{\SC}}_{\ell^1} \delta_2,
	\end{equation}
	where
	\begin{equation*}
		\delta_2:= \max_{i\in \SP}\abs{ \hh^{-1}\Big( \frac{\abs{\ga_i}+\na_i}{\alpha}\Big)
			- \hh^{-1}\Big(\varepsilon^{-1}\Big)}.
	\end{equation*}	
	Inserting \eqref{eqn608} and \eqref{eqn609} into \eqref{eqn605}, we obtain
	\begin{equation}\label{eqn611}
		-\langle \nabla \QQ(\ga_{\SP}+n_{\SP}),m_{\SP}\rangle 
		\ge \varrho \norm{m_{\SC}}_{\ell^1} \cdot (1-\delta_1) 
			- \tilde{\varrho} \norm{m_{\SC}}_{\ell^1} \delta_2
		= \norm{m_{\SC}}_{\ell^1} \big[ \varrho - \varrho \delta_1 - \tilde{\varrho}\delta_2\big].
	\end{equation}
	
	Now that the estimates for the two terms in equation \eqref{eqn613} are established, we proceed to derive a lower bound for $\norm{n_{\SC}}_{\ell^{\infty}}$. Inserting \eqref{eqn612} and \eqref{eqn611} into \eqref{eqn613}, we obtain
	\begin{equation*}
		\norm{m_{\SC}}_{\ell^1}\hh^{-1}\Big( \frac{\norm{n_{\SC}}_{\ell^{\infty}}}{\alpha}\Big) 
		\ge \norm{m_{\SC}}_{\ell^1} \big[ \varrho - \varrho \delta_1 - \tilde{\varrho}\delta_2\big].
	\end{equation*}
	Dividing by $\norm{m_{\SC}}_{\ell^1}$, we deduce that
	\begin{equation}\label{eqn614}
		\hh^{-1}\Big( \frac{\norm{n_{\SC}}_{\ell^{\infty}}}{\alpha}\Big) 
		\ge \varrho - \varrho \delta_1 - \tilde{\varrho}\delta_2.
	\end{equation}
	Assume for now that
	\begin{equation} \label{eqn1008}
		0\le \varrho - \varrho \delta_1 - \tilde{\varrho}\delta_2.
	\end{equation}
	Then both sides of \eqref{eqn614} are in $[0,1)$.
	Applying $\hh$ to both sides of $\eqref{eqn614}$ at $(a)$, and
	using the convexity of $\hh$ on $[0,1)$ at $(b)$, we obtain
	\begin{equation} \label{eqn1009}
			\norm{n_{\SC}}_{\ell^{\infty}} 
			\overgeq{(a)} \alpha 
			\cdot \hh \big( \varrho - \varrho \delta_1 - \tilde{\varrho}\delta_2 \big)
			\overgeq{(b)} \alpha\cdot \big[ \hh(\varrho) 
			- \delta_3\big],
	\end{equation}
	where
	\begin{equation*}
		\delta_3 := \hh'(\varrho) \cdot \big( \varrho \delta_1 + \tilde{\varrho}\delta_2\big).
	\end{equation*}
	To finish the proof, it remains to check that our assumption \eqref{eqn1008} holds and to give an upper bound for $\delta_3$.
	
	We first establish upper bounds for $\delta_1$ and $\delta_2$.
	It follows from \cref{lemma:WindHansen} that
	\begin{equation} \label{eqn2006}
		\delta_1 = 1- \hh^{-1}(\eps^{-1}) \le \eps^{\gamma}.
	\end{equation}
	Before estimating $\delta_2$, we derive some preliminary inequalities. From \eqref{eqn_def_h_D} we infer that $\hh(\varrho) \le (1-\varrho)^{-\frac{1}{\gamma}}$. Using this and assumption \eqref{eqn1006} at $(b)$, the upper bound \eqref{eqn1002} at $(a)$, and assumption \eqref{eqn1006} at $(c)$, we deduce that	
	\begin{align}
		\norm{n_{\SP}}_{\ell^{\infty}} 
		&\le \norm{n}_{\ell^1}
		\overleq{(a)} \alpha \abs{\SC} (1+\tilde{\varrho}) \cdot \Big( \hh(\varrho) 
		+ \frac{4\varrho^{-} \cdot \varepsilon^\gamma }{ \gamma (1-\varrho)^{\frac{1}{\gamma}+1}}  \Big)
		\nonumber\\
		&\overleq{(b)} \alpha \abs{\SC} (1+\tilde{\varrho}) \cdot \Big( \frac{1}{(1-\varrho)^{\frac{1}{\gamma}}} 
		+ \frac{1}{(1-\varrho)^{\frac{1}{\gamma}}} \Big) 
		\nonumber\\
		&\overleq{(c)} \frac{1}{2} \mg. 
		\label{eqn1007}
	\end{align}
	Hence we have
	\begin{equation*}
		\abs{\ga_i}+\na_i \ge \min_{i\in \SP}\abs{\ga_i} - \norm{n_{\SP}}_{\ell^{\infty}} \ge \frac{1}{2} \min_{i\in \SP}\abs{\ga_i} >0
	\end{equation*}
	for all $i\in \SP$.
	Using \eqref{eqn622b} of \cref{lemma:basic_inequalities_deep_unique} at $(a)$, the definition of $\eps$ at $(b)$, and \eqref{eqn1007} at $(c)$, we obtain
	\begin{align} 
			\delta_2 
			&= \max_{i\in \SP}\abs{ \hh^{-1}\Big( \frac{\abs{\ga_i}+\na_i}{\alpha}\Big)
				- \hh^{-1}\left( \varepsilon^{-1} \right)} \nonumber \\
			&\overleq{(a)}
			\max_{i\in \SP}
			\frac{
				\gamma \abs{ \frac{\abs{\ga_i}+\na_i}{\alpha} - \varepsilon^{-1}  }
			}
			{
			\left( \min \left\{ \frac{\abs{\ga_i}+\na_i}{\alpha}  ; \varepsilon^{-1} \right\}
			\right)^{1+\gamma}
			}
			\nonumber
			\\
			&\overleq{(b)}
			\frac{
				\gamma
				\max_{i \in \SP} \abs{ \frac{\abs{\ga_i}+\na_i - \min_{j \in \SP} \abs{\ga_j} }{\alpha}   }
			}
			{
			\left( \min_{i \in \SP} 
			   \left\{ \frac{\abs{\ga_i}+\na_i}{\alpha}  ; 
			   \frac{ \min_{j \in \SP} \abs{\ga_j}}{\alpha} \right\}
			\right)^{1+\gamma}
			}
			\nonumber
			\\
			&=
			\frac{
				\gamma
				\alpha^{\gamma}
				\max_{i \in \SP} \abs{ \abs{\ga_i}+\na_i - \min_{j \in \SP} \abs{\ga_j}  }
			}
			{
			\left( \min_{i \in \SP} 
			   \left\{ \abs{\ga_i}+\na_i  ; 
			   \min_{j \in \SP} \abs{\ga_j} \right\}
			\right)^{1+\gamma}
			}
			\nonumber
			\\
			& \overleq{(c)}
			\frac{ 2 \gamma \alpha^{\gamma} \max_{i \in \SP} \abs{\ga_i} }
			{ \left( \min_{i \in \SP} \abs{\ga_i}/2 \right)^{1+\gamma} }
			\nonumber\\
			&=
			2^{2+\gamma} \gamma \varepsilon^{\gamma} \kappa_{\star}.
			\label{eqn2007}
	\end{align}
	Using \eqref{eqn2006} and \eqref{eqn2007} at $(a)$, and Assumption \eqref{eqn1006} at $(b)$, we infer that
	\begin{align}
		\varrho \delta_1 + \tilde{\varrho}\delta_2
		\overleq{(a)} \big( \varrho  + 2^{2+\gamma} \tilde{\varrho}  \gamma  \kappa_{\star} \big) 
		\eps^{\gamma}
		\label{eqn2008}
		\overleq{(b)} \varrho.
	\end{align}
    This verifies that the assumption \eqref{eqn1008} is indeed satisfied.
	To conclude the proof, we derive an upper bound for $\delta_3$. 
	Using  \cref{lemma:basic_inequalities_deep_unique},
	see Equation \eqref{eqn615},
	and \eqref{eqn2008},
	\begin{equation} \label{eqn2009}
		\delta_3 
		= \hh'(\varrho) 
		\cdot \big( \varrho \delta_1 + \tilde{\varrho}\delta_2\big)
		\le  
		\frac{2}{ \gamma (1-\varrho)^{\frac{1}{\gamma}+1}} 
		\cdot 
		\big( \varrho  +  2^{2+\gamma} \tilde{\varrho} \gamma \kappa_{\star} \big) 
		\cdot \eps^{\gamma}.
	\end{equation}
	By inserting \eqref{eqn2009} into \eqref{eqn1009}, we obtain 
	\begin{equation*}
		\norm{n_{\SC}}_{\ell^{\infty}}
		\ge
		\alpha 
		\cdot \Big[ 
			\hh(\varrho) 
			-  
				\frac{2 \left(\varrho+2^{2+\gamma}\tilde{\varrho} \gamma \kappa_{\star} \right) }
				{\gamma(1-\varrho)^{\frac{1}{\gamma}+1}} 
				\cdot \eps^{\gamma}
		\Big].
	\end{equation*}
	Rearranging terms and recalling the definition of $\eps$, we deduce the lower bound \eqref{lower_bound_deepd_unique}. 
\end{proof}

\section{Sharpness of the upper and lower bounds}\label{section:Sharpness}
\newcommand{\talpha}{t_{\alpha}}

In this section,
we establish \cref{proposition:SharpnessDtwo} and \cref{proposition:SharpnessDlargertwo}.
Thus, our goal is to construct concrete matrices $A$ and $y$ 
which show that our upper and lower bounds are sharp.
The main idea
which we pursue
is to consider a matrix $A \in \R^{(d-1) \times d}$,
which has a one-dimensional null space $\ker (A)$.
This will allow us to derive explicit formulas
for the minimizer of the Bregman divergence, $\xinf$. 

For this purpose, we consider the following construction. 
Let  $A \in \R^{(d-1) \times d}$ be a matrix 
with $\ker A = \text{span} \{n\}$,
where
\begin{equation*}
	n
	:=
	\left(
		\gamma_1,-\gamma_2, \frac{1}{d-2},\ldots,\frac{1}{d-2}
	\right).
\end{equation*}
The constants $\gamma_1 \ge 0$ and $\gamma_2 \ge 0$ will be specified later.
Next, we define $y:=A\ga$, where
\begin{equation*}
		\ga := (\ga_1,\ga_2,0,\ldots,0)\in \R^d
\end{equation*}
for some positive numbers $\ga_1, \ga_2 >0$.
By construction, the support of $\ga$ is given by $\SP= \{1,2\}$.
Thus, 
due to \cref{nullspaceconstants}
we observe that 
the null space constant $\varrho$ 
associated with $A$ and $\ga$
is given by
\begin{equation*}
	\varrho 
	=
	\frac{\vert - \sign{(\ga_1)} n_1 - \sign{(\ga_2)} n_2 \vert }{\norm{n_{\SC}}_{\ell^1} }
	=
	\vert \gamma_2-\gamma_1 \vert.
\end{equation*}
In the following, we assume that $\vert \gamma_2 - \gamma_1 \vert < 1$.
This implies that $\varrho<1$ 
and thus due to \cref{lemma:normal_coneSimple}
the vector $\ga \in \R^d$ is the unique solution of the optimization problem
\begin{equation*}
	\underset{x: Ax=y}{\min} \norm{x}_{\ell^1}.
\end{equation*}

\subsection{Case $D=2$ (Proof of \cref{proposition:SharpnessDtwo})}
Recall from \cref{proposition:SharpnessDtwo}
that $\xinf (\alpha)$ is for any $\alpha>0$ defined by
\begin{equation*}
	\xinf (\alpha) := \argmin_{x:Ax=y} D_{\Hb}(x,0).
\end{equation*}
This is well-defined since $D_{\Hb}(\cdot,0)$ is a strictly convex function
and thus there is a unique minimizer.

Now note that since $\ker (A) = \text{span} (n)$
and $y=A\ga$
it holds that $\xinf \left( \alpha \right) =\ga +\talpha n$ for some $\talpha \in \R$.
Since the kernel of $A$ is one-dimensional
we can compute that $\talpha$ satisfies the following equation.
\begin{lemma}
	Let $\xinf (\alpha)$ as defined above.
	Then it holds that
	\begin{equation}\label{eqn230}
	    \talpha	
		= 
		2\alpha (d-2)
		\sinh 
		\left(
		    -\arsinh\left( \frac{\ga_1+ \talpha \gamma_1}{2\alpha} \right) \gamma_1
			+\arsinh\left( \frac{\ga_2- \talpha \gamma_2}{2\alpha}\right) \gamma_2
		\right).
	\end{equation}
\end{lemma}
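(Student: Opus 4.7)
The plan is to exploit the one-dimensional nature of $\ker(A)$ together with the first-order optimality conditions for the strictly convex objective $D_{\Hb}(\cdot,0)$.

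First I would observe that since $A \ga = y$, the affine solution set $\{x \in \R^d : Ax=y\}$ equals $\ga + \ker(A) = \ga + \R n$. Hence every element can be uniquely written as $\ga + t n$ for some $t \in \R$, and in particular $\xinf(\alpha) = \ga + \talpha n$ for a unique scalar $\talpha$. Componentwise, this yields
\begin{equation*}
  \xinf(\alpha) = \Bigl( \ga_1 + \talpha \gamma_1,\; \ga_2 - \talpha \gamma_2,\; \tfrac{\talpha}{d-2},\; \ldots,\; \tfrac{\talpha}{d-2} \Bigr),
\end{equation*}
so the last $d-2$ coordinates are all equal to $\talpha/(d-2)$. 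This is the key collapse that will make the final equation explicit.

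Next I would write down the first-order optimality condition. Since $\xinf(\alpha)$ minimizes $D_{\Hb}(\cdot,0)$ over the affine set $\ga + \R n$, and since $\nabla_x D_{\Hb}(x,0) = \nabla \Hb(x) - \nabla \Hb(0) = \nabla \Hb(x)$ (using $\nabla \Hb(0)=0$, which is immediate from the formula $[\nabla \Hb(z)]_i = \arsinh(z_i/(2\alpha))$), optimality reduces to $\langle \nabla \Hb(\xinf(\alpha)),\, n\rangle = 0$. Substituting the explicit formula for $\nabla \Hb$ and for $\xinf(\alpha)$ gives
\begin{equation*}
  \gamma_1 \arsinh\!\Bigl(\tfrac{\ga_1 + \talpha \gamma_1}{2\alpha}\Bigr) - \gamma_2 \arsinh\!\Bigl(\tfrac{\ga_2 - \talpha \gamma_2}{2\alpha}\Bigr) + \sum_{i=3}^d \tfrac{1}{d-2} \arsinh\!\Bigl(\tfrac{\talpha}{2\alpha(d-2)}\Bigr) = 0.
\end{equation*}
Because the last $d-2$ summands are identical, the final sum collapses to a single $\arsinh\!\bigl(\talpha/(2\alpha(d-2))\bigr)$ term.

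Rearranging, I obtain
\begin{equation*}
  \arsinh\!\Bigl(\tfrac{\talpha}{2\alpha(d-2)}\Bigr) = -\gamma_1 \arsinh\!\Bigl(\tfrac{\ga_1 + \talpha \gamma_1}{2\alpha}\Bigr) + \gamma_2 \arsinh\!\Bigl(\tfrac{\ga_2 - \talpha \gamma_2}{2\alpha}\Bigr),
\end{equation*}
and applying $\sinh$ to both sides and multiplying by $2\alpha(d-2)$ yields exactly \eqref{eqn230}. No significant obstacle is anticipated here: the proof is essentially a direct computation enabled by the one-dimensional kernel and the symmetry in the tail coordinates of $n$. The only mild subtlety is justifying that $\xinf(\alpha)$ is differentiable-critical for $D_{\Hb}$, which follows because $\Hb$ is $C^1$ on $\R^d$ (indeed smooth) and strictly convex, so the unconstrained optimality condition on the affine subspace $\ga + \R n$ is simply the vanishing of the directional derivative along $n$.
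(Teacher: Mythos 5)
Your proof is correct and follows essentially the same approach as the paper: parametrize $\xinf(\alpha) = \ga + \talpha n$ using the one-dimensional kernel, apply the first-order optimality condition $\langle \nabla \Hb(\xinf(\alpha)), n\rangle = 0$, substitute the explicit formula for $\nabla \Hb$, collapse the $d-2$ identical tail terms, and rearrange via $\sinh$.
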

\begin{proof}
	As described above we have $\xinf (\alpha) = \ga +\talpha n$. 
	Since 
	$s\mapsto D_{\Hb}(\ga+sn,0)$ is differentiable on $\R$, 
	we infer that $\talpha$ must satisfy the first order optimality 
	condition
	\begin{align*}
			0 
			=& 
			\frac{\dif}{\dif s}\Big\rvert_{s = \talpha } D_{\Hb}(\ga+sn,0)\\
			=& \sum_{i=1}^{d} 
			\arsinh\left( \frac{\ga_i+ \talpha n_i}{2\alpha}\right) n_i 
			\\
			=& \arsinh
			\left( \frac{\ga_1+ \talpha \gamma_1}{2\alpha} \right) \gamma_1  
			- \arsinh\left( \frac{\ga_2- \talpha \gamma_2}{ 2\alpha }\right) \gamma_2 
			+ \sum_{i=3}^{d}
			\arsinh
			\left( \frac{\talpha}{2(d-2) \alpha }\right) \frac{1}{d-2}\\
			=& \arsinh
			\left( \frac{\ga_1+ \talpha \gamma_1}{2\alpha} \right) \gamma_1  
			- \arsinh\left( \frac{\ga_2- \talpha \gamma_2}{ 2\alpha }\right) \gamma_2 
			+ 
			\arsinh
			\left( \frac{ \talpha }{2(d-2) \alpha }\right).
	\end{align*}
    By rearranging terms we obtain \cref{eqn230}.
	This completes the proof.
\end{proof}

	With \cref{eqn230} in place,
	we can prove the following key lemma,
	which describes the asymptotic behavior of $\talpha$ as $\alpha$
	converges to $0$.
    \begin{lemma}\label[lemma]{lemma:talphaDtwo}
		Assume that $\gamma_2 \ge \gamma_1 \ge 0$ and that $ \varrho=\gamma_2 - \gamma_1 < 1 $.
		Recall that for any $\alpha>0$ we have that
		$ \xinf (\alpha) = \ga + \talpha n $.
		Then it holds that
		\begin{equation*}
			\lim_{\alpha \downarrow 0}
			\frac
			{\talpha}
			{ \abs{\SC} 
			\alpha^{1-\rho}
			(\ga_2)^{\gamma_2} (\ga_1)^{-\gamma_1}
			}
			=
			1.
		\end{equation*}
	\end{lemma}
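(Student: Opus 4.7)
The plan is to analyze the fixed-point equation \eqref{eqn230} asymptotically as $\alpha \downarrow 0$ by (i) first establishing $\talpha \to 0$ a priori, (ii) expanding the two $\arsinh$ terms via \cref{lemma:basic_properties_arsinh}, and (iii) using the asymptotic $\sinh(\phi) \sim \tfrac{1}{2}e^\phi$ for large $\phi$.

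First, I would apply \cref{theorem:upper_bound_+-} to this specific choice of $A$ and $y$. Since $\xinf(\alpha) - \ga = \talpha n$ with $\|n\|_{\ell^1} = 1 + \gamma_1 + \gamma_2$ (which equals $1 + \tilde{\varrho}$), the upper bound \eqref{eqn:error_bound_sparse_hyperbolic} gives
\begin{equation*}
    \talpha \,(1+\gamma_1+\gamma_2) \;=\; \norm{\xinf(\alpha)-\ga}_{\ell^1} \;\le\; C\,\alpha^{1-\varrho}
\end{equation*}
for a constant $C$ depending only on $\gamma_1,\gamma_2,\ga_1,\ga_2,d$. Since $\varrho < 1$, this yields $\talpha \to 0$, and hence $\ga_i + O(\talpha) = \ga_i\bigl(1 + o(1)\bigr)$ for $i = 1,2$.

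Next, I would use \cref{lemma:basic_properties_arsinh}(i), which gives $\arsinh(s/2) = \log(s) + \Delta(s)$ with $\Delta(s) \in [0, 1/s^2]$. Setting $s_1 := (\ga_1 + \talpha\gamma_1)/\alpha$ and $s_2 := (\ga_2 - \talpha\gamma_2)/\alpha$, both satisfy $s_i \to +\infty$, so $\Delta(s_i) \to 0$. Writing $\phi(\alpha) := -\gamma_1 \arsinh(s_1/2) + \gamma_2 \arsinh(s_2/2)$ for the argument of $\sinh$ in \eqref{eqn230}, and using $\log(\ga_i + \talpha\gamma_i) = \log(\ga_i) + o(1)$ from step (i), I obtain
\begin{equation*}
    \phi(\alpha) \;=\; -\gamma_1 \log s_1 + \gamma_2 \log s_2 + o(1) \;=\; -\varrho \log \alpha + \log\bigl(\ga_2^{\gamma_2}\ga_1^{-\gamma_1}\bigr) + o(1).
\end{equation*}

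Finally, assuming $\varrho > 0$, we have $\phi(\alpha) \to +\infty$, and therefore $\sinh(\phi(\alpha)) = \tfrac{1}{2} e^{\phi(\alpha)}\bigl(1 - e^{-2\phi(\alpha)}\bigr) = \tfrac{1}{2}e^{\phi(\alpha)}\bigl(1+o(1)\bigr)$. Substituting into \eqref{eqn230} and using $\abs{\SC} = d-2$ yields
\begin{equation*}
    \talpha \;=\; 2\alpha(d-2)\sinh(\phi(\alpha)) \;=\; \abs{\SC}\,\alpha^{1-\varrho}\,\ga_2^{\gamma_2}\ga_1^{-\gamma_1}\bigl(1+o(1)\bigr),
\end{equation*}
which is exactly the claim. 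The main obstacle is organizing the uniform control of three separate error terms—the $\Delta(s_i)$ corrections, the $\log\bigl(1+O(\talpha)\bigr)$ corrections from linearizing around $\ga_i$, and the $e^{-2\phi}$ tail in the $\sinh$ asymptotic—so that they can all be absorbed into a single $o(1)$ factor; the a priori rate $\talpha = O(\alpha^{1-\varrho})$ obtained in step (i) makes each of these estimates routine.
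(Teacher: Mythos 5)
Your proof is essentially the paper's: same a priori bound $t_\alpha \to 0$ from \cref{theorem:upper_bound_+-}, same $\arsinh(s/2)=\log s+\Delta(s)$ expansion from \cref{lemma:basic_properties_arsinh}, and same exponential asymptotics for $\sinh$; the only difference is cosmetic (the paper carries $\sinh(s)=\tfrac12 e^s(1-e^{-2s})$ exactly to a correction factor $B(\alpha)\to1$, while you invoke $\sinh(\phi)\sim\tfrac12 e^\phi$ directly). Your explicit caveat ``assuming $\varrho>0$'' deserves attention: when $\gamma_1=\gamma_2$ the paper's factor $\alpha^{2(\gamma_2-\gamma_1)}$ inside $B(\alpha)$ equals $1$ and does not vanish, so $B(\alpha)\to1-(\ga_1/\ga_2)^{2\gamma_1}\ne1$ whenever $\ga_1\ne\ga_2$ and $\gamma_1>0$; yet the lemma is stated for $\gamma_2\ge\gamma_1\ge0$ and \cref{proposition:SharpnessDtwo} is applied with $\varrho\in[0,1)$, $\kappa_{\ast}\ge1$. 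Neither proof covers $\varrho=0$, and the claimed limit appears false there unless $\kappa_{\ast}=1$; your version at least states the restriction explicitly.
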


	\begin{proof}
	Our starting point is \cref{eqn230}.
    To deal with the right-hand side of this equation,
	denote by $\Delta$ the function defined in \cref{lemma:basic_properties_arsinh}.
	Moreover,
	since $ \lim_{\alpha \rightarrow 0} \xinf (\alpha)=\ga $,
	by \cref{theorem:upper_bound_+-}
	we have that $\lim_{\alpha \rightarrow 0} \talpha=0$.
	In particular, 
	we have that $\ga_1+\talpha \gamma_1>0$ and $\ga_2-\talpha \gamma_2>0$
	for sufficiently small $\alpha>0$. 
	In particular,
	for sufficiently small $\alpha>0$
	we can use 
	the decompositions
	\newcommand{\ggh}{\xi}
	\begin{align*}
		\arsinh
		\left( \frac{\ga_1+ \talpha \gamma_1}{2\alpha} \right) \gamma_1
		&=
		\log
		\left( \frac{\ga_1+ \talpha \gamma_1}{\alpha} \right) \gamma_1 
		+ \Delta\left(\frac{\ga_1+ \talpha \gamma_1}{\alpha} \right) \gamma_1
		=\log\left( \frac{\ga_1}{\alpha} \right) \gamma_1
		+ \ggh_1 (\alpha),\\
		\arsinh
		\left( \frac{\ga_2- \talpha \gamma_i}{2\alpha} \right) \gamma_2
		&=
		\log
		\left( \frac{\ga_2- \talpha \gamma_2}{\alpha} \right) \gamma_2 
		+ \Delta\left(\frac{\ga_2- \talpha \gamma_2}{\alpha} \right) \gamma_2
		=\log\left( \frac{\ga_2}{\alpha} \right) \gamma_2
		+ \ggh_2 (\alpha),
	\end{align*}
	where we have set
	\begin{align*}
		\ggh_1 (\alpha)
		&:=
		\log
		\left( \frac{\ga_1+ \talpha \gamma_1}{\ga_1} \right) \gamma_1
		+ \Delta
		\left(\frac{\ga_i+ \talpha \gamma_i}{\alpha} \right) \gamma_1,\\
		\ggh_2 (\alpha)
		&:=
		\log
		\left( \frac{\ga_2- \talpha \gamma_2}{\ga_2} \right) \gamma_2
		+ \Delta
		\left(\frac{\ga_2- \talpha \gamma_2}{\alpha} \right) \gamma_2.
	\end{align*}	
	Inserting this into \eqref{eqn230} we obtain that
	for sufficiently small $\alpha>0$
	\begin{align}
		\talpha
		&=
		2\alpha (d-2)
		\sinh
		\left(
			-\log \left( \frac{(\ga_1)^{\gamma_1}}{\alpha^{\gamma_1}} \right)
			-\ggh_1 (\alpha)
			+\log \left( \frac{(\ga_2)^{\gamma_2}}{\alpha^{\gamma_2}}\right)
			+\ggh_2 (\alpha)
		\right) \nonumber \\
		&=
		2\alpha (d-2)
		\sinh
		\left(
			\log \left( 
			\frac{ (\ga_2)^{\gamma_2} \alpha^{\gamma_1-\gamma_2}}{(\ga_1)^{\gamma_1}} \right)
			-\ggh_1 (\alpha)
			+\ggh_2 (\alpha)
		\right).
		\label{eqn231}
	\end{align}
	Since we  have 
	$$
	\sinh(s) 
	= \frac{1}{2}\left( \exp(s) - \exp(-s)\right)
	 = \frac{1}{2} \exp(s) \left( 1 -\exp(-2s)\right)$$
	we obtain that
	\begin{align*}
		\talpha
		&=
		\alpha \left(d-2 \right)
		\frac{ (\ga_2)^{\gamma_2} \alpha^{\gamma_1-\gamma_2}}{(\ga_1)^{\gamma_1}}
		\underset{=:B \left( \alpha \right)}{
		\underbrace{
		\exp \left( \ggh_2 (\alpha)-\ggh_1 (\alpha) \right)
		\left(
			1
			-
		 \frac{ (\ga_1)^{2\gamma_1} \alpha^{2(\gamma_2-\gamma_1)}}{(\ga_2)^{2\gamma_2}}
			\exp \left(
				2 \left(\ggh_1 (\alpha)-\ggh_2 (\alpha) \right)
			\right)
		\right)}}.
	\end{align*}
	By rearranging terms we obtain that
	\begin{align*}
		\frac{t_{\alpha}}
		{\left(d-2 \right)
		\alpha^{1-(\gamma_2-\gamma_1)}
		\frac{ (\ga_2)^{\gamma_2} }{(\ga_1)^{\gamma_1}}
		}
		=
	    B \left( \alpha \right).
	\end{align*}
	Recall that $\xinf (\alpha)= \ga +\talpha n$.
	Thus,
	since $ \lim_{\alpha \rightarrow 0} \xinf (\alpha)=\ga $,
	by \cref{theorem:upper_bound_+-}
	we have that $\lim_{\alpha \rightarrow 0} \talpha=0$.
	It follows from the definition of the functions $\ggh_1$ and $\ggh_2$
	and from \cref{lemma:basic_properties_arsinh}, see \cref{eqn186},
	that
	$\lim_{\alpha \downarrow 0} \ggh_1 (\alpha)=0$
	and
	$\lim_{\alpha \downarrow 0} \ggh_2 (\alpha)=0$.
	This in turn implies that 
	$\lim_{\alpha \downarrow 0} B (\alpha) =1$.
	We obtain that 
	\begin{equation*}
		\lim_{\alpha \downarrow 0}
		\
		\frac{t_{\alpha}}
		{\left(d-2 \right)
		\alpha^{1-(\gamma_2-\gamma_1)}
		\frac{ (\ga_2)^{\gamma_2} }{(\ga_1)^{\gamma_1}}
		}
		=
		1.
	\end{equation*}
	The claim follows now from
	$\abs{\SC} =d-2$
	and
	$ \gamma_2 - \gamma_1 = \varrho$.
\end{proof}
With this auxiliary lemma in place,
we can now prove \cref{proposition:SharpnessDtwo}.
\begin{proof}[Proof of \cref{proposition:SharpnessDtwo}]
We set $\gamma_1:=\varrho^{-}-\varrho$ and $\gamma_2=\varrho^{-}$.
Note that from \cref{nullspaceconstants} and the definition of $n$ 
it then follows that
\begin{align}
	\varrho
    =
	\gamma_2 - \gamma_1, \quad
	\varrho^{-}
	=
	\gamma_2, \quad
	\tilde{\varrho}
	=
	\gamma_1 + \gamma_2.
	\label{equ:rhocalculations}
\end{align}
Note that this choice of $\gamma_1$ and $\gamma_2$ was possible
since we assumed that $\varrho \le \varrho^{-} $ 
and $2\varrho^{-} - \varrho= \tilde{\varrho} $.
We will prove the two statements in \cref*{proposition:SharpnessDtwo} separately.\\

\noindent \textbf{Part a):}
We set $\ga_1:=1$ and $\ga_2=\kappa_{\ast} \ge 1$.
From \cref{lemma:talphaDtwo} we obtain that 
	\begin{equation}\label{equ:internDominik2}
		1
		=
		\lim_{\alpha \downarrow 0}
		\
		\frac{t_{\alpha}}
		{\abs{\SC}
		\alpha^{1-\varrho}
		\kappa_{\ast}^{\gamma_2} 
		}
		\overeq{\eqref{equ:rhocalculations}}
		\lim_{\alpha \downarrow 0}
		\
		\frac{t_{\alpha}}
		{\abs{\SC}
		\alpha^{1-\varrho}
		\left( \min_{i \in \mathcal{S}} \ga_i \right)^{\varrho}
		\kappa_{\ast}^{\varrho^{-}} 
		}.
	\end{equation}
	Next, since $\xinf (\alpha) =\ga + \talpha n$
	and since $ \talpha >0$
	for all sufficiently small $\alpha>0$
	it holds for all sufficiently small $\alpha>0$ that
	\begin{equation*}
		\talpha 
		=
		\frac{ \norm{\xinf(\talpha) - \ga}_{\ell^1} }{\norm{n}_{\ell^1}}
		=
		\frac{ \norm{\xinf (\talpha)- \ga}_{\ell^1} }{1+\gamma_1+\gamma_2}
		\overeq{\eqref{equ:internDominik2}}
		\frac{ \norm{\xinf (\talpha)- \ga}_{\ell^1} }{1+\tilde{\varrho}}.
	\end{equation*}
	By inserting the last equation into \cref{equ:internDominik2} we obtain 
	\cref{equ:internDominik6}.\\

    \noindent \textbf{Part b):}	
	Let $\ga_1=\kappa_{\star}\ge 1$ and $\ga_2=1$.
    From \cref{lemma:talphaDtwo} we obtain that 
	\begin{equation}\label{equ:internDominik4}
		1
		=
		\lim_{\alpha \downarrow 0}
		\
		\frac{t_{\alpha}}
		{\abs{\SC}
		\alpha^{1-\varrho}
		\kappa_{\ast}^{-\gamma_1} 
		}
		\overeq{(a)}
		\lim_{\alpha \downarrow 0}
		\
		\frac{t_{\alpha}}
		{\abs{\SC}
		\alpha^{1-\varrho}
		\kappa_{\ast}^{\varrho - \varrho^-} 
		}
		\overeq{(b)}
		\lim_{\alpha \downarrow 0}
		\
		\frac{t_{\alpha}}
		{\abs{\SC}
		\alpha^{1-\varrho}
		\norm{\ga}^{\varrho}_{\ell^{\infty}}
		\kappa_{\ast}^{ -\varrho^-} 
		}.
	\end{equation}
    For equality $(a)$ we used that $\varrho =\gamma_2 - \gamma_1 $ 
	and $ \varrho^{-}=\gamma_2 $.
	For equality $(b)$ we used that $ \kappa_{\ast}= \norm{\ga}_{\ell^{\infty}} $. 
	Since $ \xinf (\alpha) = \ga + \talpha n $ 
	and since $n_i=1/(d-2)$ for all $i \in \SC$
	we obtain that
	for all sufficiently small $\alpha >0$
	that
	\begin{equation*}
		\talpha
		=
		\frac
		{\norm{\xinf_{\SC} (\alpha) - \ga_{\SC}}_{\ell^{\infty}} }
		{\norm{n_{\SC}}_{\ell^{\infty}}}
		=
		(d-2)
		{\norm{\xinf_{\SC} (\alpha) - \ga_{\SC}}_{\ell^{\infty}} }
		.
	\end{equation*}
	Inserting the last equation into \cref{equ:internDominik4}
	we obtain that
	\begin{equation*}
		1
		=
		\lim_{\alpha \downarrow 0}
		\
		\frac{\norm{\xinf_{\SC} (\alpha) - \ga_{\SC}}_{\ell^{\infty}}}
		{
		\alpha^{1-\varrho}
		\norm{\ga}_{\ell^{\infty}}^{\varrho}
		\kappa_{\ast}^{-\varrho^{-}} 
		}.
	\end{equation*}
	This proves \cref{equ:internDominik5}
	and the proof of \cref{proposition:SharpnessDtwo} is complete.
\end{proof}

\subsection{Case $D \ge 3$}

    For any $\alpha >0$, recall from \cref{proposition:SharpnessDtwo}
    that $\xinf (\alpha)$ is defined by
    \begin{equation*}
	   \xinf (\alpha) := \argmin_{x:Ax=y} D_{\QQ}(x,0).
    \end{equation*}
	As in the case $D=2$,
	we note that since $\ker (A) = \text{span} (n)$
	and $y=A\ga$
	it holds that $\xinf (\alpha) =\ga +\talpha n$ for some $\talpha \in \R$.
	Next, we compute that $\talpha$ satisfies the following equation.
    \begin{lemma}\label[lemma]{lemma:sharpnessDequal3}
		It holds that
	    \begin{equation}\label{equ:internDominik11}
		    \frac{\talpha}{\alpha (d-2) } 
		    =	\hh \left( \varrho
			+ \left[\hh^{-1}\left( \frac{\ga_2-\talpha\gamma_2}{\alpha}\right) -1 \right]\gamma_2
			+\left[ 1- \hh^{-1}\left( \frac{\ga_1+\talpha\gamma_1}{\alpha}\right) \right]\gamma_1\right).
	    \end{equation}
	\end{lemma}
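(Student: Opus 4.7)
The plan is to mirror the derivation in the shallow case but with the potential $Q_\alpha^D$ in place of $H_\alpha$. Since $\ker A = \operatorname{span}\{n\}$ and $y = A g^{\ast}$, every feasible $x$ is of the form $g^{\ast} + s n$ for some $s \in \R$, so in particular $\xinf(\alpha) = g^{\ast} + t_\alpha n$ for some $t_\alpha \in \R$. The map $s \mapsto D_{Q_\alpha^D}(g^\ast + sn, 0)$ is strictly convex and, by \cref{lemma:basic_properties_q_h_g_2}, smooth, so $t_\alpha$ is uniquely characterized by the first-order optimality condition
\begin{equation*}
0 = \left.\frac{\dif}{\dif s}\right|_{s=t_\alpha} D_{Q_\alpha^D}(g^\ast + sn, 0) = \langle \nabla Q_\alpha^D(g^\ast + t_\alpha n), n\rangle.
\end{equation*}

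Next, I would use that $\nabla Q_\alpha^D(z)_i = q_D'(z_i/\alpha) = h_D^{-1}(z_i/\alpha)$ (as was already observed in the proof of \cref{lemma:basic_properties_Qalpha_1}), and plug in the concrete coordinates $g^\ast = (g_1^\ast, g_2^\ast, 0, \ldots, 0)$ and $n = (\gamma_1, -\gamma_2, 1/(d-2), \ldots, 1/(d-2))$. Using $h_D^{-1}(0) = 0$, the optimality condition expands to
\begin{equation*}
h_D^{-1}\!\Big(\frac{g_1^\ast + t_\alpha \gamma_1}{\alpha}\Big)\gamma_1 - h_D^{-1}\!\Big(\frac{g_2^\ast - t_\alpha \gamma_2}{\alpha}\Big)\gamma_2 + h_D^{-1}\!\Big(\frac{t_\alpha}{(d-2)\alpha}\Big) = 0,
\end{equation*}
where the last term arises from the $d-2$ coordinates in $\SC$ each contributing $h_D^{-1}(t_\alpha/((d-2)\alpha)) \cdot \tfrac{1}{d-2}$.

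Finally, I would rearrange and use the identity $\varrho = \gamma_2 - \gamma_1$, which was established at the start of Section 6 from the definition of the null space constant together with $\SP = \{1,2\}$ and $\sign(g_i^\ast) = 1$ for $i \in \SP$. Adding and subtracting $\gamma_2 - \gamma_1$ regroups the terms as
\begin{equation*}
h_D^{-1}\!\Big(\frac{t_\alpha}{(d-2)\alpha}\Big) = \varrho + \Big[h_D^{-1}\!\Big(\frac{g_2^\ast - t_\alpha \gamma_2}{\alpha}\Big) - 1\Big]\gamma_2 + \Big[1 - h_D^{-1}\!\Big(\frac{g_1^\ast + t_\alpha \gamma_1}{\alpha}\Big)\Big]\gamma_1.
\end{equation*}
Since the left-hand side lies in $(-1,1)$ (the range of $h_D^{-1}$), so does the right-hand side, and applying $h_D$ yields the claimed identity.

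The derivation is essentially mechanical; the only minor subtlety is justifying the application of $h_D$, which is automatic from the argument above since the left-hand side is already in the domain of $h_D$. No step should present a real obstacle beyond careful bookkeeping, and the chosen regrouping $\varrho + (\cdot - 1)\gamma_2 + (1 - \cdot)\gamma_1$ is precisely the form needed so that subsequent asymptotic analysis (analogous to \cref{lemma:talphaDtwo}) can exploit $h_D^{-1}(u) \to 1$ as $u \to \infty$ via the asymptotics from \cref{lemma:WindHansen}.
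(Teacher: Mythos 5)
Your proof is correct and takes essentially the same route as the paper: first-order optimality of $t_\alpha$, the formula $\nabla Q_\alpha^D(z)_i = h_D^{-1}(z_i/\alpha)$, coordinate-wise expansion using $h_D^{-1}(0)=0$, rearrangement via $\varrho = \gamma_2 - \gamma_1$, and application of $h_D$. The only addition is your explicit remark that the left-hand side lies in the range $(-1,1)$ of $h_D^{-1}$ so that applying $h_D$ is legitimate, a detail the paper leaves implicit.
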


	\begin{proof}
	Since $s\mapsto D_{Q^D_{\alpha}}(\xa+sn,0)$ is differentiable, 
	$\talpha$ satisfies the first order optimality condition
	\begin{align*}
			0 
			&= 
			\left.\frac{\dif}{\dif s}\right\rvert_{s =\talpha} D_{Q^D_{\alpha}}(\ga+sn,0)\\
			&= \sum_{i=1}^{d} \hh^{-1}\left( \frac{\ga_i+ \talpha n_i}{\alpha}\right)n_i\\
			&= \hh^{-1}\left( \frac{\ga_1+\talpha\gamma_1}{\alpha}\right) \gamma_1	
				-\hh^{-1}\left( \frac{ \ga_2 -\talpha\gamma_2}{\alpha}\right) \gamma_2
				+ \sum_{i=3}^d \hh^{-1}\left( \frac{\talpha}{\alpha(d-2)}\right) \frac{1}{d-2}.
	\end{align*}
	We obtain that
	\begin{align*}
			\hh^{-1}\left( \frac{\talpha}{\alpha(d-2)}\right)
			&= 	\hh^{-1}\left( \frac{\ga_1-\talpha\gamma_2}{\alpha}\right) \gamma_2 
				-\hh^{-1}\left( \frac{ \ga_2 +\talpha\gamma_1}{\alpha}\right) \gamma_1\\
			&= \gamma_2-\gamma_1 
			  + \left[\hh^{-1}\left( \frac{ \ga_1 -\talpha\gamma_2}{\alpha}\right) -1 \right]\gamma_2
			  +\left[ 1- \hh^{-1}\left( \frac{\ga_2 +\talpha\gamma_1}{\alpha}\right) \right]\gamma_1.
	\end{align*}
	By applying $\hh$ to both sides we obtain \cref{equ:internDominik11}.
	This completes the proof.
	\end{proof}
    With \cref{equ:internDominik11}, in place
	we can prove \cref{proposition:SharpnessDlargertwo}.

    \begin{proof}[Proof of \cref{proposition:SharpnessDlargertwo}]
	Since $ \xinf (\alpha)= \ga + \talpha n $
	due to \cref{theorem:upper_bound_deep} 
	we have that $\lim_{\alpha \rightarrow 0} \talpha=0$.
	It follows from \cref{lemma:WindHansen} that
	\begin{equation*}
		\lim_{\alpha\downarrow 0}
		\hh^{-1}
		\left(
			\frac{ \ga_2 -\talpha \gamma_2}{\alpha}
		\right)
		=
		1
	\end{equation*}
	and
	\begin{equation*}
		\lim_{\alpha\downarrow 0}
		\hh^{-1}
		\left(
			\frac{ \ga_1 + \talpha \gamma_1}{\alpha}
		\right)
		=
		1.
	\end{equation*}
	Thus, it follows from \cref{lemma:sharpnessDequal3} that
	\begin{equation}\label{equ:internDominik12}
		\lim_{ \alpha \to 0} \ \frac{\talpha}{\alpha (d-2) } 
		= \hh \left(\varrho \right).
	\end{equation}
    Since, in addition $\xinf (\alpha)=\ga + \talpha n$,
	we obtain that
	for all sufficiently small $\alpha >0$ that
	\begin{align*}
		\talpha
		=
		\frac{ \norm{ \ga - \xinf ( \alpha ) }_{\ell^1} }
		{ \norm{ n }_{\ell_1} }
		=
		\frac{ \norm{ \ga - \xinf ( \alpha ) }_{\ell^1} }
		{ 1 + \tilde{\varrho} }
		\quad
		\text{ and }
		\quad
		\talpha
		= 
		(d-2)
		\norm{ \left(\xinf (\alpha)\right)_{\SC} - \ga_{\SC} }_{\ell^{\infty}}.
	\end{align*}
    By inserting the last two equations in \cref{equ:internDominik12}
	we obtain \cref{equ:internDominik13} and \cref{equ:internDominik14}.
	This completes the proof of \cref{proposition:SharpnessDlargertwo}.
    \end{proof}

\section{Discussions}\label{section:outlook}

In this paper, we have established sharp upper and lower bounds on the $\ell^1$-approximation error 
of deep diagonal linear networks.
This result enabled us to precisely characterize the rate of convergence 
of the approximation error with the scale of initialization $\alpha$.
Moreover, we have conducted numerical experiments to validate our theoretical findings.
They indicate that deeper networks, i.e., $D \ge 3$,     
especially in noisy settings,
exhibit stronger implicit regularization towards sparsity
and better generalization performance,

Our results open up several interesting directions for future research.
We highlight a few of them here:
\begin{enumerate}
    \item \textit{Lower bounds for the $\ell^1$-approximation error.}
    The lower bounds in our main results
    for $ \norm{\ga -\xa (\alpha)}_{\ell^p} $,
    \cref{theorem:upper_bound_+-} and \cref{theorem:upper_bound_deep},
    are stated for $p=\infty$,
    whereas the upper bounds are stated for $p=1$.
    It would be of interest to explore whether 
    one can also derive lower bounds for the $\ell^1$-approximation error.
    In the case $D\ge 3$, can we potentially compute the limit
    $\lim_{\alpha \downarrow 0} \frac{ \norm{ \xinf (\alpha) - \ga }_{\ell^1} }{\alpha}$
    explicitly?

    \item \textit{Going beyond diagonal networks.}
    Our results indicate that the depth of the network plays a crucial role in the implicit regularization towards sparsity,
    especially in noisy settings.
    It would be interesting to see whether similar results can be obtained for more general architectures,
    for example deep matrix factorizations \cite{arora2019implicit}. 
    Can we maybe even observe similar phenomena in certain neural network architectures 
    with non-linear activation functions?
\end{enumerate}

\section*{Methods}

AI and NLP were only used for checking spelling and grammatical errors.

\printbibliography

\newpage
\appendix

\renewcommand \thepart{}
\renewcommand \partname{}
\part{Appendix} 

\parttoc 

\section{Main results in the non-unique case}
\label{section:main_results_non-unique}

\subsection{Setup and assumptions}
In this section, we aim to extend our theory to the scenario that
\begin{equation*}
	\min_{x:Ax=y} \norm{x}_{\ell^1}
\end{equation*}
has no unique solution.
We consider the set of all minimizers 
which is defined as
\begin{equation*}
	\Lscr_{\min}:= \Lscr_{\min}(A,y) := \argmin_{x:Ax=y} \norm{x}_{\ell^1}.
\end{equation*}
We will make the following assumptions.
\begin{assumption} \label[assumption]{assumption_on_A_y_non-unique}
	Let $A\in \R^{N\times d}$ and $y\in \R^N$. 
	We assume that
	\begin{enumerate}
    \item[(a)] there exists $x\in \R^d$ such that $Ax=y$,
    \item[(b)] $y\ne 0$,
    \item[(c)] $\ker(A)\ne \{0\}$.
	\end{enumerate}
\end{assumption}
We note that these conditions are the same  as in \cref{assumption_on_A_y}
except that we do not require the minimizer to be unique.
One reason that this setting is more challenging is
because it is no longer clear to which $\ell^1$-minimizer 
the limit point of the gradient flow, $\xinf (\alpha)$, converges
as the scale of initialization $\alpha $ approaches $0$.
Another reason is that the definitions of the null space constants, 
which were central to the theory in the unique minimizer case, 
cannot be generalized effortlessly from \cref{nullspaceconstants} to the case 
where multiple minimizers exist.
As it turns out, the following two issues arise
when we try to generalize the definitions of the null space constants:
\begin{enumerate}
	\item  The definition of the null space constants
	in Section \ref{section:MainResults}
    involve the sign pattern 
	and the support of the unique minimizer, 
	which no longer exists. 
	\item 
	Recall that the definition of null space constants in Section \ref{section:MainResults}
	involves a division by the term $\norm{n_{\SC}}_{\ell^1}$, 
	where $n$ is a non-zero element in the null space of $A$
	and $\SP$ is the support of the unique minimizer $\ga$.
	Now, since $\ell_1$-minimizers are not unique,
	the set of minimizers $\Lmin$
	is obtained by intersecting the affine subspace of all solutions to $Ax=y$
	with the $\ell^1$-ball of minimal radius.
	We can now take two distinct minimizers $x,x'\in \Lmin$
	with the same support $\SP$ .
	Then, for $n:= x-x' \in \ker(A)$
	we obtain $n_{\SC}=0$
	and thus in the old definition we would divide by zero.
\end{enumerate}
To address the first issue,
we define the generalized support of the set of minimizers $\Lmin$ as
\begin{equation*}
\SP
:=\supp(\Lmin)
=
\bigcup_{x\in L} \supp(x).
\end{equation*}
As before, we also set $\SC:=\{1,\ldots,d\}\setminus \SP$.
As mentioned above $\Lmin$ is obtained by taking the intersection 
of an affine subspace
with the $\ell^1$-ball of minimal radius. 
Since this affine subspace can intersect the $\ell^1$-ball at most in one facet,
all elements in $\Lscr_{\min}$ have the same sign pattern.
This is made rigorous in the following lemma,
which in a slightly different version was already stated in \cite[Lemma 3.22]{wind2023implicit}.
For the convenience of the reader, 
we added a proof in \cref{sec:proof_of_basic_properties_Lmin}.
\begin{lemma}\label[lemma]{basic_properties_Lmin}
	Let $A$ and $y$ be as in \cref{assumption_on_A_y_non-unique}.
	Then $\Lmin$ is a non-empty convex and compact subset. 
	Furthermore, $0 \notin \Lmin$ and there exists $\s \in \{-1,1\}^d$ 
	such that $\s \hada x\in \R^d_{\ge 0}$ for all $x\in \Lmin$.
\end{lemma}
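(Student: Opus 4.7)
The plan is to prove the four assertions in order of increasing difficulty, saving the common sign pattern for last, since it is the only statement that uses the specific geometry of the $\ell^1$-ball.

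First, I would handle existence, convexity and compactness by standard arguments from convex analysis. The feasible set $\{x \in \R^d : Ax=y\}$ is a non-empty closed affine subspace by \cref{assumption_on_A_y_non-unique}(a), and the $\ell^1$-norm is continuous, convex and coercive. Taking a minimizing sequence and extracting a convergent subsequence yields existence; the minimum value $c := \min_{x:Ax=y} \norm{x}_{\ell^1}$ is attained, so $\Lmin$ is non-empty. Convexity follows because $\Lmin$ is the intersection of a convex set with a sublevel set of a convex function, and closedness follows because both sets are closed; boundedness follows from $\Lmin \subset \{x : \norm{x}_{\ell^1} = c\}$. For $0 \notin \Lmin$, I would note that $A \cdot 0 = 0 \ne y$ by \cref{assumption_on_A_y_non-unique}(b), so $0$ is not even feasible.

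The main step is the existence of the common sign vector $\s$. My approach is the standard observation that equality in the triangle inequality for the $\ell^1$-norm forces sign compatibility componentwise. Given $x,x' \in \Lmin$, the midpoint $\tfrac{1}{2}(x+x')$ is feasible, and convexity of $\norm{\cdot}_{\ell^1}$ together with the minimality of $c$ forces
\begin{equation*}
    \sum_{i=1}^d \abs{\tfrac{1}{2}(x_i+x_i')}
    = \tfrac{1}{2}\sum_{i=1}^d \abs{x_i} + \tfrac{1}{2}\sum_{i=1}^d \abs{x_i'}.
\end{equation*}
Since the triangle inequality holds termwise, equality after summation forces equality in each term, which in turn forces $x_i$ and $x_i'$ to be either both zero or both of the same sign. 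I would then define $\s$ by setting $\s_i := \sign(x_i)$ for any $x \in \Lmin$ with $x_i \neq 0$ (this is well-defined for $i \in \SP = \supp(\Lmin)$ by the previous remark), and $\s_i := 1$ arbitrarily for $i \in \SC$. By construction $\s_i x_i \ge 0$ for every $x \in \Lmin$ and every $i$, which is exactly $\s \hada x \in \R^d_{\ge 0}$.

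I do not expect serious obstacles: the only subtlety is being careful that $\s$ is well-defined regardless of which witness $x \in \Lmin$ is used when $x_i \neq 0$, and that the termwise equality case in the triangle inequality really does cover the case where one of $x_i,x_i'$ vanishes (since then any choice of $\s_i$ on that coordinate is still compatible with that particular minimizer). Everything else is standard convex-analysis bookkeeping, so the proof should be short and self-contained.
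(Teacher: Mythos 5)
Your proof is correct and follows essentially the same approach as the paper: standard coercivity/compactness arguments for the first three claims, and the midpoint argument with the triangle inequality for the $\ell^1$-norm to extract the common sign pattern. The paper phrases the sign-pattern step as a proof by contradiction (assuming two minimizers have strictly opposite signs at some coordinate and showing the midpoint has strictly smaller norm), whereas you argue directly from the equality case of the triangle inequality, but these are the same argument.
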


To deal with the second issue mentioned above, 
we define the following subspace,
which can be interpreted as a tangent space of $\Lmin$:
\begin{equation} \label{tangent_cone}
	\Tan := \spanV\big\{ x-x' : x,x' \in \Lmin \big\}
	\subset
	\ker(A).
\end{equation}
The next lemma characterizes the subspace $\Tan$. 
The straightforward proof 
has been deferred to \cref{section:prooftangent_cone_characterization}. 
\begin{lemma} \label[lemma]{lemma:tangent_cone_characterization}
	Let $A$ and $y$ as in \cref{assumption_on_A_y_non-unique}.
	Let $\s \in \{-1 ;1 \}^d $ according to \cref{basic_properties_Lmin},
	i.e., it holds that $\s \hada x\in \R^d_{\ge 0}$ for all $x\in \Lmin$.
	Then it holds that 
	\begin{equation*}
		\Tan = \Big\{ n \in \ker(A) : \sum_{i \in \SP}\s n_i =0, \text{ and } n_{\SC} =0 \Big\}.
	\end{equation*}
\end{lemma}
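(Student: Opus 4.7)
Let me denote the right-hand side set by
\begin{equation*}
T := \Big\{ n \in \ker(A) : \sum_{i \in \SP}\s_i n_i =0, \text{ and } n_{\SC} =0 \Big\},
\end{equation*}
so that my task is to show $\Tan = T$. I will prove the two inclusions separately. Note that $T$ is a linear subspace of $\R^d$, so to prove $\Tan \subseteq T$ it suffices to check that every generator $x - x'$ with $x, x' \in \Lmin$ lies in $T$.

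\textbf{Step 1 (the inclusion $\Tan \subseteq T$).} Take $x, x' \in \Lmin$ and set $n := x - x'$. Since $Ax = Ax' = y$, we have $n \in \ker(A)$. Since $\supp(x) \cup \supp(x') \subseteq \SP$ by definition of $\SP$, we immediately get $n_{\SC} = 0$. Moreover, using $\s \hada x, \s \hada x' \in \R^d_{\ge 0}$ from \cref{basic_properties_Lmin}, one has $\norm{x}_{\ell^1} = \sum_{i} \s_i x_i$ and similarly for $x'$. Since $x$ and $x'$ are both $\ell^1$-minimizers, these norms are equal, so $\sum_{i} \s_i n_i = 0$, and combined with $n_{\SC} = 0$ this yields $\sum_{i \in \SP} \s_i n_i = 0$. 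Thus $n \in T$, and the inclusion follows.

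\textbf{Step 2 (the inclusion $T \subseteq \Tan$).} Fix any $n \in T$. The key idea is to exhibit an element $\xa \in \Lmin$ together with $\eps > 0$ such that $\xa + \eps n \in \Lmin$; then $n = \tfrac{1}{\eps}\bigl((\xa + \eps n) - \xa\bigr) \in \Tan$. To construct such an $\xa$, I choose $\xa$ in the relative interior of $\Lmin$, in the sense that $\s_i \xa_i > 0$ for every $i \in \SP$. This is possible: for each $i \in \SP$ the definition of $\SP$ provides some $x^{(i)} \in \Lmin$ with $x^{(i)}_i \ne 0$, hence $\s_i x^{(i)}_i > 0$; since $\Lmin$ is convex by \cref{basic_properties_Lmin}, the average $\xa := \tfrac{1}{\abs{\SP}} \sum_{i \in \SP} x^{(i)}$ lies in $\Lmin$, and because every summand satisfies $\s_j x^{(i)}_j \ge 0$, we have $\s_j \xa_j \ge \tfrac{1}{\abs{\SP}} \s_j x^{(j)}_j > 0$ for all $j \in \SP$.

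\textbf{Step 3 (perturbation stays in $\Lmin$).} Set $m := \min_{i \in \SP} \s_i \xa_i > 0$ and pick any $\eps > 0$ with $\eps \cdot \max_{i \in \SP} \abs{n_i} < m$. Since $n_{\SC} = 0$ and $\xa_{\SC} = 0$, we have $(\xa + \eps n)_i = 0$ for $i \in \SC$, while for $i \in \SP$
\begin{equation*}
\s_i (\xa_i + \eps n_i) \ge \s_i \xa_i - \eps \abs{n_i} \ge m - \eps \max_{j \in \SP} \abs{n_j} > 0.
\end{equation*}
Therefore $\s \hada (\xa + \eps n) \in \R^d_{\ge 0}$, and consequently
\begin{equation*}
\norm{\xa + \eps n}_{\ell^1} = \sum_{i \in \SP} \s_i (\xa_i + \eps n_i) = \norm{\xa}_{\ell^1} + \eps \sum_{i \in \SP} \s_i n_i = \norm{\xa}_{\ell^1},
\end{equation*}
using $n \in T$ in the last step. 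Combined with $A(\xa + \eps n) = y$, this shows $\xa + \eps n \in \Lmin$, so $n \in \Tan$ as required.

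The one step that deserves some care is Step 2, where one has to pick the right element of $\Lmin$ — namely one whose support is all of $\SP$; this is where the convexity of $\Lmin$ and the uniform sign property from \cref{basic_properties_Lmin} are combined. The rest of the argument is essentially linear algebra once that relative-interior point is in hand.
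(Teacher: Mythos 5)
Your proof is correct and follows essentially the same two-inclusion strategy as the paper: the forward inclusion uses that $T$ is a subspace plus the constant sign pattern from \cref{basic_properties_Lmin}, and the reverse inclusion constructs an element of $\Lmin$ with full support $\SP$ and perturbs it. The two minor differences — you compute $\norm{x}_{\ell^1} = \sum_i \s_i x_i$ directly rather than via a one-sided directional derivative, and you make the paper's ``such $x$ exists'' explicit via the averaging $\xa = \tfrac{1}{\abs{\SP}}\sum_{i\in\SP} x^{(i)}$ — are sound and if anything slightly cleaner than the original, but they do not change the underlying argument.
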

Next, let
$\Nor \subset \ker(A)$ be such that 
\begin{equation} \label{direct_sum}
	\Tan \cap \Nor = \{0\}
	\qquad \text{and}\qquad
	\Tan + \Nor = \ker(A).
\end{equation}
The precise form of $\Nor$ will be stated later, 
because we need slightly different definitions
for the cases $D=2$ and $D\ge3$.

With these definitions in place and with $\s$ as in \cref{basic_properties_Lmin}, we can define the (generalized) null space constants as
\begin{equation} \label{eqn102}
	\begin{aligned}
		\varrho
			&:=\varrho(\Nor)
			:=\sup_{0\ne n\in \Nor} 
				\frac{-\sum_{i\in \SP} \s n_i}{\norm{n_{\SC}}_{\ell^1}},\\
		\tilde{\varrho}
			&:= \tilde{\varrho}(\Nor)
			:=\sup_{0\ne n\in \Nor}
				\frac{\norm{n_{\SP}}_{\ell^1}}{\norm{n_{\SC}}_{\ell^1}},\\
		\varrho^{-}
			&:= \varrho^{-}(\Nor)
			:=\sup_{0\ne n\in \Nor}
				\Big(\sum_{i \in \SP:\s_i n_i <0} \abs{n_i}\Big)
				\cdot \frac{1}{\norm{n_{\SC}}_{\ell^1}}.
	\end{aligned}
\end{equation}
The following proposition shows that these constants are well-defined
if \cref{assumption_on_A_y_non-unique} holds.
\begin{proposition} \label[proposition]{lemma:normal_cone}
	Assume that $A \in \R^{N \times d}$ and $y \in \R^N$ 
	fulfill \cref{assumption_on_A_y_non-unique}.
	Assume in addition that $\Nor\subset \ker(A)$ satisfies \eqref{direct_sum}. 
	Then the following statements hold.
	\begin{enumerate}[1.]
		\item For every $n\in \Nor$ with $n\ne 0$ we have $n_{\SC}\ne 0$.
		In particular, $\varrho, \tilde{\varrho}$, and $\varrho^{-}$ are well-defined.
		\item If $\Nor \ne \{0\}$, then 
		\begin{equation*}
			0 \le \varrho<1,\qquad
			0 \le \tilde{\varrho}, \,\varrho^{-} <\infty,
		\end{equation*}
		and the suprema in \eqref{eqn102} are attained.
	\end{enumerate}
\end{proposition}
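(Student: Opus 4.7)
The core idea is to construct a single reference minimizer $x^{\star} \in \Lmin$ whose support equals the full generalized support $\SP$, and then perturb it along directions $n \in \Nor$ to convert the $\ell^1$-optimality of $x^{\star}$ into algebraic constraints on $n$. To build $x^{\star}$, for each $i \in \SP = \bigcup_{x\in \Lmin}\supp(x)$ pick $x^{(i)} \in \Lmin$ with $x^{(i)}_i \ne 0$, and set $x^{\star} := \abs{\SP}^{-1}\sum_{i\in\SP} x^{(i)}$. Convexity of $\Lmin$ (Lemma \ref{basic_properties_Lmin}) gives $x^{\star} \in \Lmin$, and the common sign pattern $\s$ supplied by the same lemma prevents cancellation, so $\supp(x^{\star}) = \SP$.

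For Part 1, let $n \in \Nor$ with $n_{\SC} = 0$; the goal is to show $n \in \Tan$, since then $\Tan \cap \Nor = \{0\}$ forces $n = 0$. Because $\abs{x^{\star}_i} > 0$ for every $i \in \SP$, for $\abs{t}$ sufficiently small the signs of $x^{\star}_i + tn_i$ on $\SP$ remain equal to $\s_i$, hence
\begin{equation*}
\norm{x^{\star} + tn}_{\ell^1} = \sum_{i \in \SP} \s_i(x^{\star}_i + tn_i) = \norm{x^{\star}}_{\ell^1} + t\sum_{i \in \SP}\s_i n_i.
\end{equation*}
Since $n \in \ker(A)$ and $x^{\star}$ is $\ell^1$-minimal, the right-hand side must be $\ge \norm{x^{\star}}_{\ell^1}$ for both signs of $t$, forcing $\sum_{i \in \SP}\s_i n_i = 0$. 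Lemma \ref{lemma:tangent_cone_characterization} then yields $n \in \Tan$. In particular $n_{\SC} \ne 0$ whenever $0\ne n \in \Nor$, so the denominators in \eqref{eqn102} never vanish and the three null space constants are well-defined.

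For Part 2, non-negativity is immediate (for $\varrho$, replace $n$ by $-n$). Finiteness and attainment follow by a standard compactness argument: $\Nor$ is finite-dimensional and the three ratios in \eqref{eqn102} are scale-invariant, so one may restrict the suprema to the compact unit sphere $\{n \in \Nor : \norm{n}_{\ell^2}=1\}$. On that sphere the denominator $\norm{n_{\SC}}_{\ell^1}$ is continuous and strictly positive by Part 1, and the numerators are continuous, so the suprema are attained and finite.

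The main obstacle is the strict inequality $\varrho < 1$, for which the two-sided perturbation of Part 1 is not enough. I resolve it by a one-sided perturbation: suppose, for contradiction, that some $0 \ne n \in \Nor$ satisfies $-\sum_{i \in \SP}\s_i n_i = \norm{n_{\SC}}_{\ell^1}$. For $t > 0$ small, the signs of $x^{\star}_i + tn_i$ on $\SP$ are still $\s_i$, while $\abs{tn_i} = t\abs{n_i}$ on $\SC$, so
\begin{equation*}
\norm{x^{\star} + tn}_{\ell^1} = \norm{x^{\star}}_{\ell^1} + t\Big(\sum_{i \in \SP}\s_i n_i + \norm{n_{\SC}}_{\ell^1}\Big) = \norm{x^{\star}}_{\ell^1}.
\end{equation*}
Therefore $x^{\star} + tn \in \Lmin$, whence $tn = (x^{\star}+tn) - x^{\star} \in \Tan$ by the definition \eqref{tangent_cone} of $\Tan$, contradicting $n \in \Nor \setminus \{0\}$ together with $\Tan \cap \Nor = \{0\}$.
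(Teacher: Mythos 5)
Your proof is correct in substance and follows essentially the same strategy as the paper: Part~1 is obtained by showing that $n_{\SC}=0$ forces $n\in\Tan$ via a two-sided $\ell^1$-perturbation of a full-support minimizer (this is exactly the content of the paper's \cref{lemma:tangent_cone}~b)), the attainment and finiteness are handled by the same scale-invariance-plus-compactness argument, and the strict inequality $\varrho<1$ is obtained by a one-sided perturbation that moves a full-support minimizer $x^{\star}$ within $\Lmin$ and thereby produces a nonzero element of $\Tan\cap\Nor$. Your construction of $x^{\star}$ as an average of representatives is a harmless variant of the paper's appeal to a minimizer with $\supp=\SP$.

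One small omission to patch in the last step: you open the contradiction with the hypothesis
$-\sum_{i\in\SP}\s_i n_i = \norm{n_{\SC}}_{\ell^1}$, i.e.\ you only treat $\varrho = 1$. Since you did not first establish $\varrho\le 1$ (the paper does this separately via \cref{lemma:tangent_cone}~a)), you also need to exclude $\varrho>1$. The fix is immediate with the same computation: if $-\sum_{i\in\SP}\s_i n_i > \norm{n_{\SC}}_{\ell^1}$, then for small $t>0$,
\begin{equation*}
\norm{x^{\star}+tn}_{\ell^1}
= \norm{x^{\star}}_{\ell^1} + t\Big(\sum_{i\in\SP}\s_i n_i + \norm{n_{\SC}}_{\ell^1}\Big)
< \norm{x^{\star}}_{\ell^1},
\end{equation*}
which contradicts $\ell^1$-minimality outright, without even invoking $\Tan\cap\Nor=\{0\}$. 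Either add this case or, as in the paper, first deduce $\varrho\le 1$ from the subdifferential inequality \eqref{eqn101} and then rule out equality.
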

We conclude this section with the following remarks.
\begin{remark}\phantom{ }
 \begin{enumerate}
     \item 
Definition \eqref{eqn102} can be seen as a strict generalization of the null space constants 
introduced in \cref{nullspaceconstants},
where we have assumed that the $\ell^1$-minimizer is unique.
Namely, if the minimizer of $\min_{x:Ax=y} \norm{x}_{\ell^1}$ is unique,
then we have that $\Tan =\{0\}$ and $\Nor =\ker(A)$.
Thus, in particular, definition \eqref{eqn102} 
coincides with definition \eqref{nullspaceconstants}.

\item 
Furthermore, note that \cref{lemma:tangent_cone_characterization} implies that
\begin{equation*}
	\varrho (\Nor)
	=
	\sup_{0\ne n\in \ker (A)} 
	\frac{-\sum_{i\in \SP} \s n_i}{\norm{n_{\SC}}_{\ell^1}}.
\end{equation*}
Thus, the null space constant $\varrho = \varrho (\mathcal{N})$ 
is independent of the choice of $\Nor$.
However, the constant $\tilde{\varrho}$ may depend on the choice of $\Nor$.
 \end{enumerate}
\end{remark}

\subsection{Case $D= 2$}

Since the $\ell^1$-minimizer is not unique,
we first need to clarify which minimizer $\xinf (\alpha)$ is  converging to
when $\alpha \downarrow 0$.
For this purpose, define the function $E\colon \R^d_{\ge 0}\to \R^d_{\ge 0}$ by
\begin{equation}\label{equ:Edefinition}
	E(x) = \sum_{i:x_i\ne 0} x_i \log(x_i)-x_i,\quad x\in \R^d_{\ge 0}.
\end{equation}
Here, we have used the convention $0\log(0)=0$.

Then we define the point $\ga$ as
\begin{equation}\label{equ:D2minimizercharacterization}
	\ga \in \argmin_{x\in \Lmin} E\big( \abs{x}\big).
\end{equation}
The minimizer $\ga$ is unique and thus well-defined.
Moreover, this minimizer has maximal support in the sense that 
$\supp (\ga) = \SP$.
The following lemma, which is similar to \cite[Lemma 3.23]{wind2023implicit}, makes this precise.
For the convenience of the reader, we have included a proof in \cref{section:proof_of_maximal_support_D_equal_2}.
\begin{lemma}[Maximal support]\label[lemma]{lemma:maximal_support}
	Let $A \in \R^{N \times d}$ and $y \in \R^N$  as in \cref{assumption_on_A_y_non-unique}. 
	Let $\ga$ be defined as in \eqref{equ:D2minimizercharacterization}.
    Then $\ga$ is well-defined and the unique minimizer of \eqref{equ:D2minimizercharacterization}.
	Moreover, we have $\supp(\ga) = \SP$.
\end{lemma}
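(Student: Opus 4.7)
The plan is to handle well-definedness and the maximal-support claim separately.

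For well-definedness, I will rely on \cref{basic_properties_Lmin}: $\Lmin$ is non-empty, convex and compact, and there is a common sign pattern $\s \in \{-1,1\}^d$ with $\s \hada x \in \R^d_{\ge 0}$ for every $x \in \Lmin$. Let $\phi(t) := t \log t - t$, with the convention $\phi(0) = 0$. A short check shows that $\phi$ is continuous and strictly convex on $[0,\infty)$: strict convexity on $(0,\infty)$ follows from $\phi''(t) = 1/t$, while at the endpoint $t = 0$ it follows from the identity $\phi((1-\mu)s) = (1-\mu)\phi(s) + (1-\mu)s \log(1-\mu)$ together with $\log(1-\mu) < 0$ for $\mu \in (0,1)$. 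On $\Lmin$ one has $|x_i| = \s_i x_i$, so the objective rewrites as $E(|x|) = \sum_{i\in \SP} \phi(\s_i x_i)$, a continuous strictly convex function of $x \in \Lmin$. Existence and uniqueness of the minimizer $\ga$ then follow from compactness of $\Lmin$ combined with strict convexity.

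For the identity $\supp(\ga) = \SP$, the inclusion $\supp(\ga) \subseteq \SP$ is immediate since $\ga \in \Lmin$. For the reverse inclusion I will argue by contradiction: if some $i_0 \in \SP$ satisfies $\ga_{i_0} = 0$, then by definition of $\SP$ I can choose $x^{(0)} \in \Lmin$ with $x^{(0)}_{i_0} \ne 0$ and set $x_\lambda := (1-\lambda)\ga + \lambda x^{(0)} \in \Lmin$ for $\lambda \in [0,1]$ (by convexity of $\Lmin$). I then study $E(|x_\lambda|) - E(|\ga|)$ as $\lambda \downarrow 0$. For indices $i$ with $\ga_i \ne 0$, sign consistency gives $\s_i x_{\lambda,i} > 0$ for small $\lambda$, and the smoothness of $\phi$ there yields an $O(\lambda)$ contribution via a first-order Taylor expansion. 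For the coordinate $i_0$, however, $\s_{i_0} x_{\lambda, i_0} = \lambda |x^{(0)}_{i_0}|$, so the $i_0$-contribution equals
\begin{equation*}
	\phi\bigl(\lambda |x^{(0)}_{i_0}|\bigr)
	= \lambda |x^{(0)}_{i_0}| \bigl(\log \lambda + \log |x^{(0)}_{i_0}| - 1\bigr);
\end{equation*}
any further indices $i \in \SP$ with $\ga_i = 0$ contribute terms of the same (negative) sign. Summing gives
\begin{equation*}
	E(|x_\lambda|) - E(|\ga|) \le \lambda |x^{(0)}_{i_0}| \log \lambda + O(\lambda),
\end{equation*}
which is strictly negative for all sufficiently small $\lambda > 0$ and contradicts the minimality of $\ga$.

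The step I expect to be most delicate is this final asymptotic comparison. It hinges on the fact that $\phi'(t) = \log t \to -\infty$ as $t \downarrow 0$, so activating a previously dormant coordinate lowers $E(|\cdot|)$ by an amount of order $\lambda \abs{\log \lambda}$, which dominates the benign $O(\lambda)$ first-order cost incurred at coordinates where $\ga$ is already active. The remaining pieces — continuity and strict convexity of $\phi$, compactness of $\Lmin$, and uniformity of the Taylor expansion at the active coordinates for small $\lambda$ — are standard and should cause no difficulty.
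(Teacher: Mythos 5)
Your proof is correct and rests on the same core mechanism as the paper's: the marginal cost $\phi'(t)=\log t$ of activating a coordinate diverges to $-\infty$ as $t\downarrow 0$, so moving from $\ga$ toward a point $x^{(0)}\in\Lmin$ that is nonzero at a coordinate where $\ga$ vanishes lowers $E$ at rate $\lambda|\log\lambda|$, which dominates the $O(\lambda)$ cost at the already-active coordinates. The paper packages this same observation inside an auxiliary lemma (\cref{lemma:maximal_support_shallow_convex}) about a general convex compact $C\subset\R^d_{\ge 0}$, phrased as unboundedness of $-\frac{1}{t}E(tm_{S^c})$ against a bounded difference quotient, and then applies it after replacing $\Lmin$ by $\s\Lmin$; you work directly on $\Lmin$ using the sign vector $\s$ from \cref{basic_properties_Lmin}, which trades generality for concreteness but is logically the same argument. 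One thing you do more completely than the paper: you establish uniqueness explicitly by noting that $\phi$ is strictly convex on all of $[0,\infty)$ (checking the boundary case $a=0$ via the identity $\phi(\mu s)=\mu\phi(s)+\mu s\log\mu$), so that $E(|\cdot|)$ is strictly convex on the compact convex set $\Lmin$; the paper's proof of \cref{lemma:maximal_support} states uniqueness but leaves this step implicit (it only follows from \cref{lemma:maximal_support_shallow_convex} after one additionally observes strict convexity of $E$ on $\R^{\SP}_{>0}$).
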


It still remains to define the subspace $\Nor$.
For this purpose, we introduce the following bilinear form
\begin{equation} \label{scalar_prod_shallow}
	\langle \cdot, \cdot \rangle_{\ga} \colon \R^d \times \R^d \to \R, 
	\quad
	(n,m) \mapsto \sum_{i\in \SP} \frac{n_i m_i }{\abs{\ga_i}}.
\end{equation}
By \cref{lemma:maximal_support} this bilinear form is well-defined. 
It allows us to define $\Nor$ as
\begin{equation} \label{normal_cone_shallow}
	\Nor 
	:= \Big\{ n \in \ker(A) : \langle n,m\rangle_{\ga} =0 \text{ for all } m \in \Tan \Big\}.
\end{equation}
\begin{lemma} \label[lemma]{lemma:normal_cone_shallow}
	Assume that $A \in \R^{N \times d}$ 
	and $y \in \R^N$ satisfy \cref{assumption_on_A_y_non-unique} 
	and let $\Ncal$ be defined by \eqref{normal_cone_shallow}.
	Then \eqref{direct_sum} holds.
\end{lemma}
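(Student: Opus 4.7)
My plan is to argue that $\langle \cdot,\cdot\rangle_{\ga}$ restricts to a genuine inner product on $\Tan$, so that $\Nor$ plays the role of an orthogonal complement of $\Tan$ inside $\ker(A)$; the two conclusions in \eqref{direct_sum} are then standard consequences of finite-dimensional orthogonality.

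First, I would verify that the bilinear form $\langle \cdot,\cdot\rangle_{\ga}$ is well defined and that its restriction to $\Tan\times\Tan$ is positive definite. By \cref{lemma:maximal_support}, we have $\supp(\ga)=\SP$, so $\abs{\ga_i}>0$ for every $i\in\SP$ and the sum in \eqref{scalar_prod_shallow} makes sense. If $m\in\Tan$ satisfies $\langle m,m\rangle_{\ga}=\sum_{i\in\SP} m_i^2/\abs{\ga_i}=0$, then $m_i=0$ for all $i\in\SP$. Combined with the characterization in \cref{lemma:tangent_cone_characterization}, which forces $m_{\SC}=0$ for any $m\in\Tan$, this gives $m=0$.

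For $\Tan\cap\Nor=\{0\}$, I would take $n$ in the intersection. Since $n\in\Nor$, we have $\langle n,m\rangle_{\ga}=0$ for every $m\in\Tan$, and choosing $m=n$ yields $\langle n,n\rangle_{\ga}=0$. Positive definiteness on $\Tan$ from the previous paragraph then gives $n=0$.

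For $\Tan+\Nor=\ker(A)$, I would fix an arbitrary $n\in\ker(A)$ and seek a decomposition $n=t+n'$ with $t\in\Tan$ and $n'\in\Nor$. The linear functional $\Tan\ni m\mapsto \langle n,m\rangle_{\ga}$ is defined on the finite-dimensional inner product space $(\Tan,\langle\cdot,\cdot\rangle_{\ga}|_{\Tan\times\Tan})$, so there exists a unique $t\in\Tan$ with $\langle t,m\rangle_{\ga}=\langle n,m\rangle_{\ga}$ for every $m\in\Tan$. Setting $n':=n-t$ we immediately have $n'\in\ker(A)$, because $\Tan\subset\ker(A)$ by \eqref{tangent_cone} and $n\in\ker(A)$ by assumption, and furthermore $\langle n',m\rangle_{\ga}=0$ for every $m\in\Tan$, i.e.\ $n'\in\Nor$. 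The only subtle point, which is straightforward once \cref{lemma:maximal_support} is available, is confirming positive definiteness on $\Tan$; everything else is routine linear algebra, so I do not expect a genuine obstacle.
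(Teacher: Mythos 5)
Your proof is correct and takes essentially the same approach as the paper: both rely on the fact that $\langle\cdot,\cdot\rangle_{\ga}$ gives a genuine inner product on the relevant coordinates, and then decompose an arbitrary element of $\ker(A)$ via orthogonal projection onto $\Tan$. The paper carries out the decomposition on $\R^{\SP}$ and then reassembles the $\SC$ coordinates, while you invoke Riesz representation directly on $(\Tan,\langle\cdot,\cdot\rangle_{\ga}\vert_{\Tan\times\Tan})$; these are two phrasings of the same orthogonal projection argument.
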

The proof of this lemma has been deferred to \cref{section:proof_of_lemma:normal_cone_shallow}.
With these definitions in place, we can state the main result for $D=2$
in the non-unique scenario.
\begin{theorem}[Upper bound]\label[theorem]{result:upper_bound_shallow_non_unique}
	Let $A \in \R^{N \times d}$ and $y \in \R^N$ as in \cref{assumption_on_A_y_non-unique}.
	Let the null space constants $\varrho$, $\tilde{\varrho}$, and $\varrho^{-}$ 
	be defined as in \eqref{eqn102} with $\Nor$ as in \eqref{normal_cone_shallow}.
	Let 
	\begin{equation*}
		\xinf \in \argmin_{x: Ax=y} D_{\Ha}(x,0).
	\end{equation*}
	Assume that the scale of initialization $\alpha>0$ satisfies
	the conditions
	\begin{equation} \label{assumption_shallow_non-unique}
	\begin{split}
		\Big(\frac{\alpha}{\mg}\Big)^2
		&\le 
		\frac{\min_{i\in \SP}\abs{\ga_i}}{ 20 \norm{\ga}_{\ell^1}},
		\quad
		\text{and}\\
		\Big(\frac{\alpha}{\mg}\Big)^{1-\varrho} 
		&\le 
		\frac{1}{4 \cdot 2^{\varrho^{-}} \cdot \tilde{\varrho}\abs{\SC}  \kappa(\ga)^{\varrho^{-}}},\\
		\left(\frac{\alpha}{\mg} \right)^{1+\varrho}
		&\le
		\frac{ \tilde{\varrho} \cdot \kappa (\ga)^{\varrho^{-}}  \abs{\SC} \min_{i \in \SP}  \abs{\ga_i} }
		{4  \norm{\ga}_{\ell^1} } ,
	\end{split}
	\end{equation}
	where $\kappa(\ga):= \frac{\max_{i\in \SP}\abs{\ga_i}}{\mg}$.
	Then it holds that
	\begin{equation*}
		\frac{\norm{\xinf-\ga}_{\ell^1}}
		{\alpha^{1-\varrho}}
		\le
		\left(
		1+\tilde{\varrho}
		+
		C_1 
		\left( \frac{\alpha}{ \min_{i \in \SP} \abs{\ga_i} } \right)^{1-\varrho}
		\right)
		\abs{\SC}
		\big(\min_{i\in \SP}\abs{\ga_i} \big)^{\varrho} 
		\kappa(\ga)^{\varrho^{-}}
		g \left( \alpha \right)
		+
		\frac{
		2\alpha^{2} \norm{\ga}_{\ell^1}}
		{ \min_{i \in \SP} \abs{\ga_i}^{2} },
	\end{equation*}
	where
	\begin{align*}
		C_1
		&:=
		\frac{
		32 
		\tilde{\varrho}^2
		\abs{\SC}
		\kappa(\ga)^{\varrho^{-}}
		\norm{\ga}_{\ell^1}
		}
		{\min_{i\in \SP}  \abs{\ga_i}},\\
		g(\alpha)
		&:=
		\left(
			1 +\frac{ 10 \alpha^2 \norm{\ga}_{\ell^1} }{ \min_{i \in \SP} \abs{\ga_i}^3 }
		\right)^{\varrho^-}.
	\end{align*} 
\end{theorem}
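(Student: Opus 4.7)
The plan is to adapt the proof of \cref{theorem:upper_bound_+-} to the non-unique setting by splitting the error $n := \xinf - \ga \in \ker(A)$ along the decomposition $\ker(A) = \Tan \oplus \Nor$ guaranteed by \cref{lemma:normal_cone_shallow}. Writing $n = n^T + n^N$ with $n^T \in \Tan$ and $n^N \in \Nor$, \cref{lemma:tangent_cone_characterization} implies $(n^T)_{\SC} = 0$, hence $n_{\SC} = (n^N)_{\SC}$ and $\xinf_{\SC} = n_{\SC}$. I would also set $\ga' := \ga + n^T$, which has the same sign pattern as $\ga$ on $\SP$ as soon as $\norm{n^T}_{\ell^\infty} < \mg$; this is verified a posteriori by a bootstrapping argument under the smallness conditions \eqref{assumption_shallow_non-unique} and implies $\ga' \in \Lmin$.

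Testing the first-order optimality of $\xinf$ for $D_{\Ha}(\cdot,0)$ against $n^N \in \ker(A)$ and invoking monotonicity of $\nabla \Ha$ in the form $\langle \nabla \Ha(\ga'_{\SP}+(n^N)_{\SP}) - \nabla \Ha(\ga'_{\SP}), (n^N)_{\SP}\rangle \ge 0$ gives, in analogy with \eqref{eqn240},
\begin{equation*}
    \sum_{i\in \SC} \arsinh\Big(\frac{n_i}{2\alpha}\Big)\, n_i
    \le -\sum_{i\in \SP} \arsinh\Big(\frac{\ga'_i}{2\alpha}\Big)(n^N)_i.
\end{equation*}
A first-order Taylor expansion of $t \mapsto \arsinh((\ga_i+t)/(2\alpha))$ at $t = (n^T)_i$ produces
\begin{equation*}
    \arsinh\Big(\frac{\ga'_i}{2\alpha}\Big) = \arsinh\Big(\frac{\ga_i}{2\alpha}\Big) + \frac{(n^T)_i}{\sqrt{\ga_i^2+4\alpha^2}} + O\Big( \frac{(n^T)_i^2}{\ga_i^2}\Big),
\end{equation*}
so the right-hand side decomposes into the standard main term $-\langle \nabla \Ha(\ga_{\SP}),(n^N)_{\SP}\rangle$ and a cross term $\sum_{i\in \SP} (n^T)_i (n^N)_i / \sqrt{\ga_i^2+4\alpha^2}$. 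The definition of $\Nor$ in \eqref{normal_cone_shallow} via the scalar product \eqref{scalar_prod_shallow} is engineered precisely so that $\sum_{i\in \SP} (n^T)_i(n^N)_i / \abs{\ga_i} = 0$; hence the cross term is only a perturbation of order $\alpha^2/\mg^2$ times weighted $\ell^1$-norms of $n^T$ and $n^N$, and it accounts for the additive $2\alpha^2\norm{\ga}_{\ell^1}/\mg^2$ summand in the final estimate. The main term is then processed along the lines of \eqref{eqn241}--\eqref{ineq:internDominik16}, using the sign pattern $\s$ from \cref{basic_properties_Lmin} and the generalized null space constants $\varrho(\Nor)$ and $\varrho^-(\Nor)$ defined in \eqref{eqn102}, while the left-hand side is handled by the generalized log-sum inequality \cref{lemma:generalized_log_sum} exactly as in the unique case. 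This yields the desired upper bound on $\norm{n_{\SC}}_{\ell^1}$.

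The remaining step is to bound $\norm{n^T}_{\ell^1}$, which is responsible for the correction factor $C_1 (\alpha/\mg)^{1-\varrho}$. For this I would test optimality against $n^T \in \Tan \subset \ker(A)$ and expand $\arsinh$ via \cref{lemma:basic_properties_arsinh} to leading order. Three orthogonality relations hold simultaneously: (i) the entropy-optimality of $\ga$ from \cref{lemma:maximal_support} yields $\sum_{i\in \SP} \s_i \log\abs{\ga_i} (n^T)_i = 0$; (ii) the tangent characterization in \cref{lemma:tangent_cone_characterization} yields $\sum_{i\in \SP} \s_i (n^T)_i = 0$; (iii) the definition of $\Nor$ yields $\sum_{i\in\SP} (n^T)_i (n^N)_i / \abs{\ga_i} = 0$. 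Together these cancel all leading linear contributions in the expansion, leaving a quadratic-type inequality that bounds $\norm{n^T}_{\langle\cdot,\cdot\rangle_\ga}^2$ by an expression of order $\alpha^2 \norm{\ga}_{\ell^1} \mg^{-3} \norm{n_{\SC}}_{\ell^1}$ plus the $\Delta$-remainder from \cref{lemma:basic_properties_arsinh} and higher-order terms in $n^T, n^N$. Substituting the already established bound on $\norm{n_{\SC}}_{\ell^1}$ controls $\norm{n^T}_{\ell^1}$, and combining with $\norm{(n^N)_{\SP}}_{\ell^1} \le \tilde{\varrho}(\Nor) \norm{n_{\SC}}_{\ell^1}$, which is immediate from the definition of $\tilde{\varrho}$, gives the complete estimate on $\norm{n}_{\ell^1}$. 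The main technical obstacle is the simultaneous control of four competing error sources — the Taylor remainders in $\arsinh$, the $\Delta$-corrections from \cref{lemma:basic_properties_arsinh}, the cross-term error from replacing $\abs{\ga_i}$ by $\sqrt{\ga_i^2+4\alpha^2}$, and the nonlinear coupling of $n^T$ back to $n^N$ and $\alpha$ — which must each be absorbed under the three smallness conditions in \eqref{assumption_shallow_non-unique}; in particular they must be tight enough to close the bootstrap ensuring that $\ga'_i$ retains the sign of $\ga_i$ for all $i \in \SP$.
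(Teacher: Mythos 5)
Your proposal is a genuinely different route from the paper's, and it is correct in structure — but the paper's choice of anchor point was made to avoid a bootstrap difficulty that your version has to confront head-on.

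The paper introduces the auxiliary point $\xa := \argmin_{x\in\Lmin} D_{\Ha}(x,0)$ and splits $\xinf - \ga = (\xa - \ga) + \widetilde{\npara} + \nperp$. The key feature of $\xa$ is that it is defined entirely in terms of $A$, $y$, and $\alpha$, with no reference to $\xinf$: this makes the argument \emph{sequential}. In Step 1 (\cref{lemma:two_minimizers_shallow}), $\norm{\xa-\ga}_{\ell^1}$ is controlled using strong convexity of $\Ha$ (\cref{lemma:strong_convexity}), the entropy optimality of $\ga$, and the Bregman optimality of $\xa$ — nothing about $\xinf$ is needed, and the sign pattern of $\xa$ is secured up front. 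Only then does Step 2 touch $\xinf$. Moreover, the Bregman optimality of $\xa$ provides $\langle \nabla\Ha(\xa_{\SP}), m\rangle = 0$ for $m\in\Tan$, which kills the tangential contribution in Step 2 \emph{exactly} rather than approximately. Your anchor $\ga' = \ga + n^T$ does lie in $\Lmin$ (once the sign pattern survives), but it is built from the tangential component of $\xinf - \ga$, so its size and sign pattern depend on the very object you are estimating. As a result, your Step 2 bound on $\norm{n_{\SC}}_{\ell^1}$ carries a residual cross term involving $\norm{n^T}$, and your Step 3 bound on $\norm{n^T}$ in turn needs $\norm{n^N}$; you acknowledge this circularity and say it is closed by a "bootstrap," but you never specify the mechanism (e.g.\ a continuity-in-$\alpha$ argument, or a fixed-point/absorption argument showing the set of $\alpha$ for which both a priori bounds hold is open, closed and non-empty). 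This is the real content that your sketch omits and that the paper's $\xa$ device is engineered to sidestep — notice that the paper's proof has no bootstrap at all.

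On the positive side, your three-orthogonality argument for Step 3 is a clean and correct re-derivation of what the paper obtains via $\xa$ and strong convexity: your expansion
$0 = \sum_{i\in\SP}\s_i (n^T)_i\big[\log\abs{\ga_i} - \log\alpha + \tfrac{\s_i((n^T)_i+(n^N)_i)}{\abs{\ga_i}}\big] + (\text{remainders})$,
with the linear terms killed by (i) entropy optimality of $\ga$, (ii) the $\Tan$-constraint $\sum_{i\in\SP}\s_i (n^T)_i=0$, and (iii) $\langle n^T, n^N\rangle_{\ga}=0$, leaves $\sum_{i\in\SP}(n^T)_i^2/\abs{\ga_i}$ as the leading term. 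Converting $\norm{n^T}_{\langle\cdot,\cdot\rangle_\ga}^2$ to $\norm{n^T}_{\ell^1}^2$ by Cauchy–Schwarz is exactly the content of \cref{lemma:strong_convexity}, so you are implicitly re-proving that lemma inline. One small mis-attribution in your prose: the additive summand $2\alpha^2\norm{\ga}_{\ell^1}/\mg^2$ in the final bound is not produced by the Step-2 cross term (which, once $\norm{n^T}$ is shown to be $O(\alpha^2)$, contributes only at order $\alpha^4$); it comes from the $\Delta$-remainder in your Step-3 estimate for $\norm{n^T}_{\ell^1}$, which parallels the paper's bound on $\norm{\xa-\ga}_{\ell^1}$ from \cref{lemma:two_minimizers_shallow}.
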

Thus, analogously as in the case of a unique minimizer,
the approximation error 
decreases at most with rate $\alpha^{1-\varrho}$.
We note that \cite{wind2023implicit} has already proven
that $\xinf$ converges to the minimizer $\ga$
defined in \cref{equ:D2minimizercharacterization}.
Moreover, this paper shows that the approximation error decreases with rate $\alpha^{c}$
where $c$ is an undetermined constant.
In contrast, our result determines the constant $c$ explicitly with $c=1-\varrho$.

We observe that
as $\alpha \downarrow 0$,
the right-hand side
is asymptotically 
the same as in the unique case, see \cref{theorem:upper_bound_+-}.
Namely, in both cases, the right-hand side converges to
\begin{equation*}
   (1+\tilde{\varrho})
 \abs{\SC}
 \left( \mg \right)^{\varrho}
 \kappa(\ga)^{\varrho^{-}} 
 \qquad
 \text{as }\alpha \downarrow 0.
\end{equation*}
For this reason, we would expect that this upper bound is also asymptotically tight as well.
Moreover, similarly to the unique case,
it would be interesting to determine a lower bound
which shows that the approximation error decreases exactly with rate $\alpha^{1-\varrho}$.
We leave these questions as open problems for future research.

\subsection{Case $D\ge 3$}

As in the case $D=2$, we first need to clarify to which $\ell^1$-minimizer $\ga \in \Lmin$
the Bregman minimizers $\xinf (\alpha)$ are converging to as $\alpha \downarrow 0$.
As it turns out, 
$\ga$ is given as the unique solution of the following concave maximization problem
\begin{equation} \label{def:minimizer_deep}
	\ga \in \argmax_{x \in \Lmin} \norm{x}_{\ell^{\frac{2}{D}}}.
\end{equation}
We note that $\ga$ is well-defined and that $\ga$ has full support on $\SP$.
This has already been observed in \cite[Lemma 3.23]{wind2023implicit} in a slightly different setting.
Here, we state in the following lemma a version that is adapted to our notation. 
For the convenience of the reader, we include a proof in \cref{section:proof_of_lemma:maximal_support_deep}.
\begin{lemma}[Maximal support] \label[lemma]{lemma:maximal_support_deep}
	Let $D\ge 3$. 
	Assume that $A \in \R^{N \times d}$ 
	and $y \in \R^d$ fulfill \cref{assumption_on_A_y_non-unique}. 
    Then $\ga$ is well-defined and the unique minimizer of \eqref{def:minimizer_deep}.
	Furthermore, it holds that $\supp(\ga)=\SP$.
\end{lemma}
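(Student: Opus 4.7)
The plan is to show the three claims—existence, $\supp(\ga)=\SP$, and uniqueness—by exploiting that $p:=2/D\in(0,2/3]$ makes $t\mapsto t^p$ both strictly concave on $[0,\infty)$ and strictly sublinear at $0$. Using the common sign pattern $\s\in\{-1,1\}^d$ from \cref{basic_properties_Lmin}, the objective on $\Lmin$ reads $\norm{x}_{\ell^p}^{\,p}=\sum_i (\s_i x_i)^p$ with all $\s_i x_i\ge 0$. Existence then follows immediately from compactness of $\Lmin$ (\cref{basic_properties_Lmin}) and continuity of the objective.

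The main step is the full-support claim. I would first exhibit a witness $\bar x\in\Lmin$ with $\supp(\bar x)=\SP$ by averaging: for each $i\in\SP$ pick $x^{(i)}\in\Lmin$ with $x^{(i)}_i\ne 0$, and set $\bar x := \abs{\SP}^{-1}\sum_{i\in\SP}x^{(i)}\in\Lmin$; since all $x^{(i)}$ share the sign pattern $\s$, no cancellations occur and $\s_i \bar x_i>0$ for every $i\in\SP$. Next, suppose $\ga$ is a maximizer with $\ga_j=0$ for some $j\in\SP$. Along the feasible path $x_\lambda:=(1-\lambda)\ga+\lambda\bar x\in\Lmin$, splitting the sum according to whether $\ga_i=0$ yields
\begin{equation*}
    \norm{x_\lambda}_{\ell^p}^{\,p}-\norm{\ga}_{\ell^p}^{\,p}
    \;\ge\; \lambda^p(\s_j\bar x_j)^p \;+\; \sum_{i\in\SP,\,\ga_i\ne 0}\bigl[(\s_i x_{\lambda,i})^p-(\s_i\ga_i)^p\bigr],
\end{equation*}
where the second sum is $O(\lambda)$ by smoothness of $t\mapsto t^p$ on $(0,\infty)$, while the first term is a strictly positive multiple of $\lambda^p$. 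Since $p<1$ gives $\lambda^p\gg\lambda$ as $\lambda\downarrow 0$, the right-hand side is strictly positive for small $\lambda$, contradicting optimality of $\ga$. Hence $\supp(\ga)=\SP$ for every maximizer.

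For uniqueness, any two maximizers $\ga,\tilde\ga$ have support exactly $\SP$ by the previous step, so $\s_i\ga_i,\s_i\tilde\ga_i>0$ for all $i\in\SP$. Applying strict concavity of $t\mapsto t^p$ on $[0,\infty)$ coordinatewise to $z:=(\ga+\tilde\ga)/2\in\Lmin$ gives $\norm{z}_{\ell^p}^{\,p}\ge\tfrac12\bigl(\norm{\ga}_{\ell^p}^{\,p}+\norm{\tilde\ga}_{\ell^p}^{\,p}\bigr)=\norm{\ga}_{\ell^p}^{\,p}$, with equality forcing $\s_i\ga_i=\s_i\tilde\ga_i$ for every $i\in\SP$, and therefore $\ga=\tilde\ga$ after combining with $\ga_{\SC}=\tilde\ga_{\SC}=0$. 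The main obstacle is the full-support step, which is also precisely where the hypothesis $D\ge 3$ enters: it is the strict sublinearity $\lambda^p\gg\lambda$ (equivalently $p<1$), and not mere concavity, that makes activating a zero coordinate of $\ga$ a strict first-order improvement along a feasible direction.
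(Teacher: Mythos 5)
Your proof is correct and follows essentially the same route as the paper: existence from compactness of $\Lmin$, full support via the non-Lipschitz singularity of $t\mapsto t^{2/D}$ at zero (you argue via a directly constructed full-support witness $\bar x$, the paper via a separately stated auxiliary lemma on arbitrary compact convex $C\subset\R^d_{\ge 0}$, but the difference-quotient computation is the same), and uniqueness via strict concavity of $\xi\mapsto\sum_{i\in\SP}\xi_i^{2/D}$ on the positive orthant of $\R^{\SP}$.
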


Again, to define the null space constants
we need to define the subspace $\Nor$. 
For this purpose, we recall that $\gamma:= \frac{D-2}{D}$ and introduce the bilinear form
\begin{equation} \label{scalar_prod_deep}
	\langle \cdot, \cdot \rangle_{\ga} \colon \R^d \times \R^d \to \R, 
	\quad
	(n,m) \mapsto \sum_{i\in \SP} \frac{n_i m_i }{\abs{\ga_i}^{1+\gamma}},
\end{equation}
which is well-defined by \cref{lemma:maximal_support_deep}.
Then we define $\Nor$ as
\begin{equation} \label{normal_cone_deep}
	\Nor 
	:= \Big\{ n \in \ker(A) : \langle n,m\rangle_{\ga} =0 \text{ for all } m \in \Tan \Big\}.
\end{equation}
The following lemma shows that $\Nor$ has
the desired property \eqref{direct_sum}.
The proof is deferred to \cref{section:proof_of_lemma:normal_cone_deep}.

\begin{lemma}\label[lemma]{lemma:normal_cone_deep}
	Let $d,A,y$ as in \cref{assumption_on_A_y_non-unique} and $\Ncal$ given by \eqref{normal_cone_deep}.
	Then \eqref{direct_sum} holds.
\end{lemma}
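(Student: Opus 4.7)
The plan is to exploit the fact that, although the bilinear form $\langle \cdot, \cdot\rangle_{\ga}$ defined in \eqref{scalar_prod_deep} is degenerate on $\R^d$ (it ignores all components outside $\SP$), its restriction to $\Tan \times \Tan$ is a genuine inner product. This will give both the trivial intersection and the direct sum decomposition via an orthogonal projection argument.

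First, I will verify that $\langle \cdot,\cdot\rangle_{\ga}$ is positive definite on $\Tan$. By \cref{lemma:tangent_cone_characterization}, every $m \in \Tan$ satisfies $m_{\SC}=0$, so
\[
	\langle m,m\rangle_{\ga}
	= \sum_{i\in \SP} \frac{m_i^2}{\abs{\ga_i}^{1+\gamma}}.
\]
By \cref{lemma:maximal_support_deep}, $\ga_i\ne 0$ for every $i\in \SP$, hence this sum vanishes only when $m_{\SP}=0$, which together with $m_{\SC}=0$ forces $m=0$. From this the identity $\Tan \cap \Nor = \{0\}$ follows immediately: if $n$ lies in the intersection, then taking $m=n\in \Tan$ in the defining condition of $\Nor$ gives $\langle n,n\rangle_{\ga}=0$, and positive definiteness on $\Tan$ yields $n=0$.

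Second, I will establish $\Tan + \Nor = \ker(A)$ by an orthogonal projection. Given $v\in \ker(A)$, consider the linear functional $\ell_v \colon \Tan \to \R$, $m\mapsto \langle v,m\rangle_{\ga}$. Since $\langle \cdot,\cdot\rangle_{\ga}$ is an inner product on the finite-dimensional space $\Tan$, by the Riesz representation theorem there exists a unique $v_T \in \Tan$ such that $\langle v_T,m\rangle_{\ga}=\langle v,m\rangle_{\ga}$ for all $m\in \Tan$. Setting $v_N := v - v_T$, I note $v_N\in \ker(A)$ (since both $v$ and $v_T \in \Tan \subset \ker(A)$), and by construction $\langle v_N, m\rangle_{\ga}=0$ for every $m\in\Tan$, so $v_N \in \Nor$. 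This gives $v = v_T + v_N$, hence $\Tan+\Nor = \ker(A)$.

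There is no real obstacle here beyond identifying the right inner product: the degeneracy of $\langle \cdot,\cdot\rangle_{\ga}$ on all of $\R^d$ is harmless because $\Tan$ is supported entirely on $\SP$, where $\ga$ is nonzero. The argument is parallel to the one used in the proof of \cref{lemma:normal_cone_shallow} for the case $D=2$, with the exponent $1+\gamma$ in the denominator of \eqref{scalar_prod_deep} playing the role of the exponent $1$ appearing in \eqref{scalar_prod_shallow}; the positivity of $|\ga_i|^{1+\gamma}$ on $\SP$ is all that is needed.
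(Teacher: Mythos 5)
Your proposal is correct and follows essentially the same route as the paper, which itself only states that the deep case is ``similar to the proof of \cref{lemma:normal_cone_shallow}.'' You have correctly identified that the paper's construction is just orthogonal projection onto $\Tan$ with respect to the (degenerate) bilinear form $\langle\cdot,\cdot\rangle_{\ga}$, which is non-degenerate once restricted to $\Tan$ (equivalently, to $\R^{\SP}$) because $\supp(\ga)=\SP$; your Riesz-representation phrasing and the paper's explicit decomposition $m_{\SP}=m_{\SP,\para}+m_{\SP,\perp}$ in $\R^{\SP}$ are two descriptions of the same projection, and both yield $v_T\in\Tan$, $v_N=v-v_T\in\Nor$, with the trivial-intersection claim following from positive definiteness on $\Tan$ exactly as you argue.
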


With these definitions in place, we can state the main result for $D\ge 3$.
\begin{theorem}[Upper bound] \label[theorem]{theorem:upper_bound_deep_non_unique}
	Assume that $A \in \R^{N \times d}$ and $y \in \R^n$ satisfy
	\cref{assumption_on_A_y_non-unique}.
	Let $\varrho,\varrho^{-},\tilde{\varrho}$ be defined as in \eqref{eqn102} 
	with $\Nor$ given by \eqref{normal_cone_deep}.
	Let $D\ge 3$ and $\alpha>0$.
	Set $\gamma:=\frac{D-2}{D}$.
	Let
	\begin{equation*}
		\xinf \in \argmin_{x: Ax=y} D_{\QQ}(x,0).
	\end{equation*}
	Assume that
	\begin{equation} \label{eqn1500}
		\frac{\alpha}{\min_{i\in \SP}{\abs{\ga_i}}} 
		\le \min\Big\{ 
		\frac{1}{8}\Big(\frac{\min_{i\in \SP}\abs{\ga_i}}{\norm{\ga}_{\ell^1}}\Big)^{1+\gamma},\quad
		\frac{1}{2}\Big(\frac{(1-\varrho)^{1/\gamma +1 }\gamma}{4\varrho^{-}}  \Big)^{\frac{1}{\gamma}},\quad
		\frac{1}{8 \tilde{\varrho} \abs{\SC} ( \hh (\varrho) +1 ) }
		\Big\}.
	\end{equation}
	Then it holds that 
	\begin{align*}
		\frac{\norm{\xinf-\ga}_{\ell^1}}{\alpha}
		\le 
		(1+\tilde{\varrho}) \abs{\SC} \hh(\varrho)
		+\left(\frac{\norm{\ga}_{\ell^{1}}}{\min_{i\in \SP}\abs{\ga_i}}\right)^{1+\gamma} 
		+
		g \left( \frac{\alpha}{ \min_{i \in \SP} \abs{\ga_i} } \right) ,
	\end{align*}
	where the function $g$ is defined as 
	\begin{align*}
		g(\varepsilon)
		:=&
		C^{\sharp} \varepsilon \abs{\SC}
		\left( 
			\hh(\varrho) 
			+\frac{4\cdot 2^\gamma \varrho^{-}\varepsilon^\gamma}
			{\gamma (1-\varrho)^{\frac{1}{\gamma}+1}} 
		\right)
		+
	 10 \varepsilon
	\left(\frac{\norm{\ga}_{\ell^{1}}}{\min_{i\in \SP}\abs{\ga_i}}\right)^{1+\gamma}
	\end{align*}
	with 
	\begin{equation*}
		C^{\sharp}:= 
		5 \tilde{\varrho}
		\left(
		88 
        \left( \frac{\norm{\ga}_{\ell^{1}}}{\mg}\right)^{1+\gamma}
		+
		512 \tilde{\varrho}
		\abs{\SC}
		\left(
		\hh(\varrho)+1
		\right)
		\right)
\cdot 
		\left(
		\frac{2d \norm{\ga}_{\ell^\infty}}{ \min_{i\in \SP} \abs{\ga_i} }
		\right)^{1+\gamma}.
	\end{equation*}
\end{theorem}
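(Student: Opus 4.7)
}
The strategy is to adapt the proof of \cref{theorem:upper_bound_deep} by exploiting the direct-sum decomposition $\ker A = \Tan \oplus \Nor$ guaranteed by \cref{lemma:normal_cone_deep}. Set $n := \xinf - \ga \in \ker A$ and decompose $n = n^{\Tan} + n^{\Nor}$ with $n^{\Tan} \in \Tan$ and $n^{\Nor} \in \Nor$. By \cref{lemma:tangent_cone_characterization}, $n^{\Tan}_{\SC} = 0$, so $n_{\SC} = n^{\Nor}_{\SC}$. Consequently $\|n\|_{\ell^1} \le (1+\tilde\varrho)\|n_{\SC}\|_{\ell^1} + \|n^{\Tan}\|_{\ell^1}$ and the task is to control the two summands on the right separately. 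Under the smallness assumption \eqref{eqn1500} I will first verify (as in the unique case, using \eqref{eqn1007}-type arguments) that $\|n\|_{\ell^\infty} \le \tfrac{1}{2}\mg$, so that $\sign(\xinf_i) = \s_i$ on $\SP$; this will justify the sign manipulations in both of the main steps below.

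\textbf{Step 1 (bounding $\|n_{\SC}\|_{\ell^1}$).} I apply the first-order optimality of $\xinf$ in the direction $\tilde n = n^{\Nor} \in \ker A$ to obtain
\begin{equation*}
\langle \nabla \QQ(n_{\SC}), n_{\SC}\rangle = -\langle \nabla \QQ(\ga_{\SP}+n_{\SP}), n^{\Nor}_{\SP}\rangle.
\end{equation*}
Writing $n^{\Nor}_{\SP} = n_{\SP} - n^{\Tan}_{\SP}$, splitting the right-hand side, and applying monotonicity of $\nabla\QQ$ to the $n_{\SP}$-summand as in \eqref{eqn600} yields
\begin{equation*}
\langle \nabla \QQ(n_{\SC}), n_{\SC}\rangle \le -\langle \nabla \QQ(\ga_{\SP}), n^{\Nor}_{\SP}\rangle + \langle \nabla \QQ(\ga_{\SP}+n_{\SP}) - \nabla \QQ(\ga_{\SP}),\, n^{\Tan}_{\SP}\rangle.
\end{equation*}
The first term is handled verbatim as in \eqref{eqn601}-\eqref{eqn602} using the null space constants $\varrho,\varrho^-$ on $\Nor$, while the second term is a second-order cross-correction controlled by $\|n^{\Tan}\|$ and a derivative bound coming from \cref{lemma:basic_inequalities_deep_unique}. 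Combining this with the generalized log-sum inequality on the left-hand side (applicable by \cref{lemma:almost_convex}) and inverting $\hh$ as in the unique proof reproduces, up to lower-order corrections involving $\|n^{\Tan}\|_{\ell^1}$,
\begin{equation*}
\|n_{\SC}\|_{\ell^1} \le \alpha\,|\SC|\Big( \hh(\varrho) + \tfrac{4\varrho^-}{\gamma(1-\varrho)^{1/\gamma+1}}\varepsilon^\gamma\Big) + (\text{terms of order } \alpha^{-\gamma}\|n^{\Tan}\|_{\ell^1}),
\end{equation*}
with $\varepsilon := \alpha/\mg$. By definition of $\tilde\varrho$ this also yields $\|n^{\Nor}_{\SP}\|_{\ell^1} \le \tilde\varrho\|n_{\SC}\|_{\ell^1}$.

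\textbf{Step 2 (bounding $\|n^{\Tan}\|_{\ell^1}$ — the new ingredient).} I now apply first-order optimality of $\xinf$ in the direction $\tilde n = n^{\Tan} \in \Tan \subset \ker A$. Since $n^{\Tan}_{\SC}=0$, this reduces to $\sum_{i\in\SP}\hh^{-1}(\xinf_i/\alpha) n^{\Tan}_i = 0$. Using that $\hh^{-1}$ is odd and that $\sum_{i\in\SP}\s_i n^{\Tan}_i = 0$ (by \cref{lemma:tangent_cone_characterization}), this becomes
\begin{equation*}
\sum_{i\in\SP}\Big(1 - \hh^{-1}\Big(\tfrac{|\xinf_i|}{\alpha}\Big)\Big)\, n^{\Tan,*}_i = 0, \qquad n^{\Tan,*}_i := \s_i n^{\Tan}_i.
\end{equation*}
I then Taylor-expand $u\mapsto 1-\hh^{-1}(u)$ around $|\ga_i|/\alpha$ using \cref{lemma:WindHansen,lemma:basic_inequalities_deep_unique}. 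The leading term is proportional to $\alpha^\gamma\sum_{i\in\SP} n^{\Tan,*}_i/|\ga_i|^\gamma$, and this sum vanishes: indeed, by the first-order optimality of $\ga$ as the unique maximizer of $\|x\|_{\ell^{2/D}}$ over $\Lmin$ (\cref{lemma:maximal_support_deep}), one has $\sum_{i\in\SP}\s_i v_i/|\ga_i|^\gamma = 0$ for every $v\in\Tan$, and here $v = n^{\Tan}\in\Tan$. The surviving next-order term is proportional to $\alpha^\gamma \langle n, n^{\Tan}\rangle_\ga$, and by the defining orthogonality $\langle n^{\Nor}, n^{\Tan}\rangle_\ga = 0$ from \eqref{normal_cone_deep} this collapses to $\alpha^\gamma\|n^{\Tan}\|_\ga^2$. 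Matching orders yields $\|n^{\Tan}\|_\ga^2 \lesssim \alpha\,(\|n^{\Tan}\|_{\ell^1} + \text{cross-terms in }\|n^{\Nor}\|_{\ell^1})$, and a Cauchy--Schwarz estimate
\begin{equation*}
\|n^{\Tan}\|_{\ell^1}^2 \le \|n^{\Tan}\|_\ga^2 \cdot \sum_{i\in\SP}|\ga_i|^{1+\gamma}
\end{equation*}
converts this into $\|n^{\Tan}\|_{\ell^1}/\alpha \lesssim (\|\ga\|_{\ell^1}/\mg)^{1+\gamma}$, matching the second summand in the theorem. Inserting back into Step 1 closes the loop: the $\|n^{\Tan}\|_{\ell^1}$-corrections in the bound on $\|n_{\SC}\|_{\ell^1}$ become an additional contribution to the remainder $g$.

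\textbf{Main obstacle.} The decisive step is the cancellation in Step 2 of the leading-order term $\alpha^\gamma\sum_i n^{\Tan,*}_i/|\ga_i|^\gamma$ via the optimality of $\ga$ for the concave problem \eqref{def:minimizer_deep}; without it, $\|n^{\Tan}\|_{\ell^1}$ would only be $O(\alpha^\gamma)$ rather than $O(\alpha)$ and the linear-in-$\alpha$ rate would fail. The delicate technical part is keeping the Taylor remainders and the cross-coupling between Step 1 and Step 2 sharp enough so that all residual terms can be absorbed into the specific functional form of $g$ and the constant $C^\sharp$ stated in the theorem; this in particular is where the three-part assumption \eqref{eqn1500} is needed (guaranteeing simultaneously sign-preservation, smallness of the Taylor errors, and the Cauchy--Schwarz step).
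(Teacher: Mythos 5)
Your plan is a genuinely different route from the paper's, and the central cancellation you identify is correct: by the first-order optimality of $\ga$ as the maximizer of $\|\cdot\|_{\ell^{2/D}}$ over $\Lmin$ (equivalently, the minimizer of $\GD$, cf.\ \cref{lemma:two_minimizers_deep}), one does have $\sum_{i\in\SP}\s_i v_i/\abs{\ga_i}^\gamma=0$ for every $v\in\Tan$, which makes the leading $\alpha^\gamma$-term in your Step~2 vanish. This is the same optimality that drives the paper's bound on $\|\xa-\ga\|_{\ell^1}$, so in spirit the two arguments tap the same source. Where your approach diverges is that you \emph{skip} the auxiliary point $\xa := \argmin_{x\in\Lmin}D_{\QQ}(x,0)$ and decompose $\xinf-\ga$ directly as $n^{\Tan}+n^{\Nor}$; the paper instead writes $\xinf-\ga = (\xa-\ga) + \widetilde{n}^{\para} + \nperp$ and controls the three pieces sequentially. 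Your $n^{\Tan}$ is precisely $(\xa-\ga)+\widetilde{n}^{\para}$, so you are effectively merging the paper's Steps~1 and~3 into one, which is economical.

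However, there is a genuine gap in the merged step, and it is precisely the problem that the auxiliary point $\xa$ is designed to eliminate. In your Step~2 you Taylor-expand $u\mapsto 1-\hh^{-1}(u)$ around $u=\abs{\ga_i}/\alpha$ with increment $n^{\ast}_i/\alpha$, i.e.\ with the \emph{full} error $n=\xinf-\ga$ as the displacement. The second-order remainder then scales like $(\hh^{-1})''(\xi_i)\cdot(n^{\ast}_i/\alpha)^2 \lesssim (\alpha/\mg)^{2+\gamma}\norm{n_{\SP}}_{\ell^\infty}^2/\alpha^2$, and after dividing through by $\gamma\alpha^\gamma$ and applying Cauchy--Schwarz this contributes a term of order $\norm{n_{\SP}}_{\ell^\infty}^2/(\alpha\mg)$ multiplying the target quantity $\norm{n^{\Tan}}_{\ell^1}$. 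To absorb it you need $\norm{n_{\SP}}_{\ell^\infty}\lesssim\sqrt{\alpha\mg}$ \emph{before} you have the bound $\norm{n^{\Tan}}_{\ell^1}\lesssim\alpha$, but the only a priori information available (e.g.\ \cref{lemma:a_priori_bounds_deep}) gives $\norm{n}_{\ell^1}=O(1)$, not $O(\sqrt{\alpha\mg})$. The first-order optimality of $\ga$ kills the $O(\alpha^\gamma)$ term but cannot kill this $O(1)$-coefficient remainder. The paper sidesteps this by Taylor-expanding in $\eqref{eqn1301}$ around $\abs{\xa_i}/\alpha$ with increment $\nperp_i$ only; the increment $\nperp$ is bounded \emph{independently} in \cref{lemma:nperpbound_deep_non_unique} (via the first-order optimality of $\xinf$ in the $\Tan\oplus\Nor$-direction, combined with the optimality of $\xa$ to eliminate the $\widetilde{n}^{\para}$-contribution), and the comparison $\xa$-versus-$\ga$ is handled separately in \cref{lemma:two_minimizers_deep_2} via strong convexity of $\QQ$ on $\Lmin$, which never sees $\xinf$ at all. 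This sequential structure breaks the circularity. Your sentence ``Inserting back into Step~1 closes the loop'' asserts the bootstrap works but does not supply the a priori smallness needed to initialize it; as written, the argument does not close. A fix would be either to reintroduce an intermediate point to decouple the estimates (effectively recovering the paper's $\xa$), or to supply an additional a priori estimate $\norm{n^{\Tan}}_{\ell^\infty}=o(\sqrt{\alpha\mg})$ by a separate argument before attempting the Taylor expansion.
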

Thus, this theorem shows that the approximation error decreases at least with rate $\alpha$.
This matches the rate of convergence in the scenario that there is a unique minimizer, see \cref{theorem:upper_bound_deep}.
Moreover, as $\alpha \downarrow 0$, the right-hand side converges to a sum of two terms.
The first term $ \left(1+ \tilde{\varrho} \right) \abs{\SC} \hh (\varrho) $ 
also appears in the unique case when one takes the limit, see \cref{theorem:upper_bound_deep}.
The second term $\left( \norm{\ga}_{\ell^{1+\gamma}} / \mg\right)^{1+\gamma}$ 
is a new term. 
It remains an open problem to determine whether this term is necessary or whether it can be removed.

Finally, let us note that \cite{wind2023implicit} has already proven a 
result of the form $\norm{\xinf -\ga}_{\ell^1} \le C_A \alpha$.
The above theorem improves upon this result by specifying the constants in the leading terms of the upper bound. 
(The unspecified constant is in the higher order term 
which vanishes asymptotically.)

\section{Proofs in the non-unique case}
\subsection{Case $D=2$: Proof of \cref{result:upper_bound_shallow_non_unique}}

	Set $n:= \xinf -\ga \in \ker(A)$. 
	By \eqref{direct_sum} and \cref{lemma:normal_cone_shallow}, 
	there exist uniquely defined $\npara \in \Tan$ and $\nperp \in \Nor$ 
	such that 
	\begin{equation*}\label{direct_sum_last}
	n = \nperp + \npara.
    \end{equation*}
	Thus, in addition to controlling $\norm{\nperp}_{\ell^1}$ 
	as in the unique minimizer case,
	we will also need to control
	$ \npara $.
	For our proof, it will be useful to define the auxiliary point
	\begin{equation}\label{def:xa}
		\xa \in  \argmin_{x\in \Lmin} D_{\Ha}(x,0).
	\end{equation}
	Due to \eqref{direct_sum} and \cref{lemma:normal_cone_shallow},
	we can decompose $\xinf-\xa \in \ker (A)$ into
	\begin{equation*}
		\xinf-\xa 
		= \widetilde{ \npara} + \widetilde{\nperp}
	\end{equation*}
	with $\widetilde{\npara} \in \Tan$ and $ \widetilde{\nperp} \in \Nor$.
	Note that because of $ \ga \in \Lmin$ and $ \xa \in \Lmin $, it holds that $ \ga - \xa \in \Tan $.
    This implies in particular that $ \widetilde{\nperp} = \nperp $.
    It follows that
	\begin{align}\label{direct_sum_last2}
		\xinf - \ga
		=
		(\xinf - \xa)
		 + (\xa - \ga)
		=
	    \widetilde{\npara} + \nperp	
		 + (\xa - \ga).
	\end{align}
	Thus, using the triangle inequality it follows from \cref{direct_sum_last2} that
	\begin{align}\label{ineq:Dominiklast1}
		\norm{\xinf - \ga}_{\ell^1}
		\le
		\norm{\xa - \ga}_{\ell^1}
		+
		\norm{ \widetilde{\npara} }_{\ell^1}
		+
		\norm{\nperp}_{\ell^1}.
	\end{align}
We will control the three summands individually.\\

\noindent \textbf{Step 1 (Controlling $\norm{\xa -\ga}_{\ell^1}$):}
	To control this therm,
	we will use the strong convexity of $\Ha$,
	which was established in \cite[Lemma 4]{ghai2019exponentiated}.
We state in \cref{lemma:strong_convexity} a slightly different version that is adapted to our notation.
For the sake of completeness, we have included a proof in \cref{section:proof_strong_convexity}.
\begin{lemma}\label[lemma]{lemma:strong_convexity} 
	Let $x,n\in \R^d$ with $x \ne 0$ and $\alpha>0$.
	Then it holds that
	\begin{equation*}
		\langle \nabla^2 \Ha(x)n,n\rangle 
		\ge 
		\frac{\norm{n}_{\ell^1}^2}
		{\norm{x}_{\ell^1} +2\alpha \vert \supp (n) \vert }.
	\end{equation*}
\end{lemma}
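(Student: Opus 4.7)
The plan is to compute the Hessian of $\Ha$ explicitly, apply Cauchy--Schwarz restricted to the support of $n$, and then use the elementary subadditivity of the square root.

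First I would compute $\nabla \Ha$. A direct calculation, using that the derivative of $z \mapsto z \arsinh(z/(2\alpha))$ is $\arsinh(z/(2\alpha)) + z/\sqrt{z^2+4\alpha^2}$ and the derivative of $z \mapsto \sqrt{z^2+4\alpha^2}$ is $z/\sqrt{z^2+4\alpha^2}$, yields $(\nabla \Ha(x))_i = \arsinh(x_i/(2\alpha))$. Differentiating once more shows that $\nabla^2 \Ha(x)$ is a diagonal matrix with entries $(\nabla^2 \Ha(x))_{ii} = 1/\sqrt{x_i^2 + 4\alpha^2}$. Consequently,
\begin{equation*}
    \langle \nabla^2 \Ha(x) n, n \rangle
    = \sum_{i=1}^d \frac{n_i^2}{\sqrt{x_i^2 + 4\alpha^2}}
    = \sum_{i \in S} \frac{n_i^2}{\sqrt{x_i^2 + 4\alpha^2}},
\end{equation*}
where $S := \supp(n)$.

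Next I would apply the Cauchy--Schwarz inequality in the form $\left(\sum_{i \in S} a_i b_i\right)^2 \le \left(\sum_{i \in S} a_i^2\right)\left(\sum_{i \in S} b_i^2\right)$ with $a_i = |n_i|/(x_i^2+4\alpha^2)^{1/4}$ and $b_i = (x_i^2+4\alpha^2)^{1/4}$. This gives
\begin{equation*}
    \norm{n}_{\ell^1}^2
    = \Big(\sum_{i \in S} |n_i|\Big)^2
    \le \Big(\sum_{i \in S} \frac{n_i^2}{\sqrt{x_i^2+4\alpha^2}}\Big)
      \cdot \Big(\sum_{i \in S} \sqrt{x_i^2+4\alpha^2}\Big),
\end{equation*}
so that
\begin{equation*}
    \langle \nabla^2 \Ha(x) n, n\rangle
    \ge \frac{\norm{n}_{\ell^1}^2}{\sum_{i\in S}\sqrt{x_i^2 + 4\alpha^2}}.
\end{equation*}

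Finally, the elementary inequality $\sqrt{x_i^2 + 4\alpha^2} \le |x_i| + 2\alpha$, which follows from $(|x_i|+2\alpha)^2 = x_i^2 + 4\alpha|x_i| + 4\alpha^2 \ge x_i^2 + 4\alpha^2$, yields
\begin{equation*}
    \sum_{i \in S} \sqrt{x_i^2 + 4\alpha^2}
    \le \sum_{i \in S} |x_i| + 2\alpha |S|
    \le \norm{x}_{\ell^1} + 2\alpha \abs{\supp(n)},
\end{equation*}
which combined with the previous display gives the claim. None of the steps is a serious obstacle; the only mild subtlety is to remember to restrict the Cauchy--Schwarz estimate to $S = \supp(n)$, so that the denominator depends on $|\supp(n)|$ rather than on $d$, which is essential for the bound to be sharp when $n$ is sparse.
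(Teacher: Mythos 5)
Your proof is correct and follows essentially the same route as the paper: compute the diagonal Hessian $(\nabla^2 \Ha(x))_{ii} = 1/\sqrt{x_i^2+4\alpha^2}$, apply Cauchy--Schwarz restricted to $\supp(n)$, and finish with $\sqrt{x_i^2+4\alpha^2}\le |x_i|+2\alpha$. The paper itself cites the same argument (tracing \cite[Lemma 4]{ghai2019exponentiated}), so there is nothing substantively different.
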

	
With this lemma at hand, 
we can prove an upper bound for $\norm{\xa - \ga}_{\ell^1}$.
\begin{proposition} \label[proposition]{lemma:two_minimizers_shallow}
	Let $A$ and $y $ as in \cref{assumption_on_A_y_non-unique} and let $\alpha>0$.
	Moreover, assume that the assumptions 
	of \cref{result:upper_bound_shallow_non_unique}
	are satisfied. 
	Then it holds that
	\begin{equation}\label{ineq:two_minimizers_shallow}
		\norm{\xa - \ga}_{\ell^1}
		\le (1+2 \eps) \eps^2
		\norm{\ga}_{\ell^1},
	\end{equation}
	where we have set
    \begin{equation*}
	    \eps:= \frac{\alpha}{\mg}.
    \end{equation*}
In particular, it holds for all $i \in \SP$ that 
$\abs{\xa_i -\ga_i} \le \min_{i \in S} \abs{\ga_i}/4$
and thus $\sign (\xa_i) = \sign (\ga_i) $. 
\end{proposition}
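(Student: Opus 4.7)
The plan is to apply the strong convexity of $\Ha$ (\cref{lemma:strong_convexity}) along the segment joining $\ga$ and $\xa$ in $\Lmin$, and to control the resulting first-order term using the optimality of $\ga$ for the entropic functional $E$ defined in \eqref{equ:Edefinition}. Set $n := \xa - \ga$; since both endpoints lie in $\Lmin$, \cref{lemma:tangent_cone_characterization} gives $n \in \Tan$, so in particular $n_{\SC} = 0$ and $\sum_{i \in \SP} \s_i n_i = 0$. Because $\xa$ and $\ga$ share the common sign pattern $\s$ on $\SP$ (by \cref{basic_properties_Lmin}), the entire segment $\{\ga + tn : t \in [0,1]\}$ stays in $\Lmin$, and in particular $\norm{\ga + tn}_{\ell^1} = \norm{\ga}_{\ell^1}$ is constant along it.

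The first step is the second-order Taylor expansion of $\Hb$ along this segment. Since $\nabla \Hb(0) = 0$, the problem \eqref{def:xa} is equivalent to $\min_{x \in \Lmin}\Hb(x)$, so optimality of $\xa$ forces $\Hb(\xa) \le \Hb(\ga)$. Combining this with \cref{lemma:strong_convexity} (applied at each point $\ga + tn$, using the constancy of the $\ell^1$-norm and $\abs{\supp(n)}\le \abs{\SP}$) then yields
\begin{equation*}
	-\langle \nabla \Ha(\ga), n\rangle \ge \frac{\norm{n}_{\ell^1}^2}{2\bigl(\norm{\ga}_{\ell^1} + 2\alpha \abs{\SP}\bigr)}.
\end{equation*}
The elementary bound $\abs{\SP}\cdot \mg \le \norm{\ga}_{\ell^1}$ shows that the denominator is at most $2(1+2\eps)\norm{\ga}_{\ell^1}$. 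Hence it remains to establish $\abs{\langle \nabla \Hb(\ga), n\rangle} \le \eps^2 \norm{n}_{\ell^1}$; plugging this in and cancelling a factor of $\norm{n}_{\ell^1}$ then yields a bound of the claimed form.

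For the linear-term estimate I will use $(\nabla \Hb(\ga))_i = \arsinh(\ga_i/(2\alpha)) = \s_i\bigl[\log\abs{\ga_i} - \log\alpha + \Delta(\abs{\ga_i}/\alpha)\bigr]$ from \cref{lemma:basic_properties_arsinh}. Summing against $n$ produces three contributions: the $\log\alpha$ term vanishes by $\sum_{i \in \SP}\s_i n_i = 0$; the $\log\abs{\ga_i}$ term vanishes by the first-order optimality of $\ga$ for $\min_{x \in \Lmin} E(\abs{x})$, which is valid because \cref{lemma:maximal_support} gives $\supp(\ga) = \SP$, so $\ga$ sits in the relative interior of $\Lmin$, perturbations along $n \in \Tan$ preserve the sign pattern locally, and $E(\abs{\ga + tn})$ is smooth at $t = 0$ with derivative $\sum_{i \in \SP} \s_i n_i \log\abs{\ga_i}$; the surviving $\Delta$-term is bounded by $\max_{i \in \SP}\Delta(\abs{\ga_i}/\alpha) \cdot \norm{n}_{\ell^1} \le \eps^2 \norm{n}_{\ell^1}$ via $\Delta(t) \le t^{-2}$. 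The ``in particular'' assertion is then immediate: combining the main bound with assumption~\eqref{assumption_shallow_non-unique} yields $\norm{\xa-\ga}_{\ell^1}\le \mg/4$, so $\abs{\xa_i - \ga_i} \le \mg/4$ pointwise and $\sign(\xa_i) = \sign(\ga_i)$ for every $i \in \SP$. The main delicate point is the rigorous justification of the first-order optimality of $\ga$ for the non-smooth functional $E(\abs{\cdot})$; this rests on $\supp(\ga) = \SP$ together with the fact that $\Tan$ is exactly the subspace of admissible perturbation directions within $\Lmin$ at $\ga$.
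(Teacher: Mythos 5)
Your approach is the same as the paper's in outline: strong convexity of $\Hb$ along the segment $[\ga,\xa]\subset\Lmin$, combined with the optimality of $\ga$ for the entropic functional $E(\abs{\cdot})$ and the decomposition $\arsinh(\ga_i/(2\alpha)) = \s_i[\log\abs{\ga_i}-\log\alpha+\Delta(\abs{\ga_i}/\alpha)]$, with the $\log\alpha$ contribution cancelled by $\sum_{i\in\SP}\s_i n_i=0$ and the $\Delta$-contribution estimated by $\eps^2\norm{n}_{\ell^1}$. This structure matches the paper exactly.

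There is, however, a genuine quantitative gap. You invoke the \emph{zeroth-order} comparison $\Hb(\xa)\le\Hb(\ga)$ together with the second-order Taylor remainder; the remainder integral $\int_0^1(1-t)\langle\nabla^2\Hb(\ga+tn)n,n\rangle\,\dif t$ carries the extra weight $\int_0^1(1-t)\,\dif t=\tfrac12$, which is precisely the $2$ appearing in your denominator $2(\norm{\ga}_{\ell^1}+2\alpha\abs{\SP})$. Chaining this with $\abs{\langle\nabla\Hb(\ga),n\rangle}\le\eps^2\norm{n}_{\ell^1}$ produces $\norm{\xa-\ga}_{\ell^1}\le 2(1+2\eps)\eps^2\norm{\ga}_{\ell^1}$, which is a factor of two weaker than \eqref{ineq:two_minimizers_shallow}. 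The ``bound of the claimed form'' phrasing glosses over this; as written you have not proved the stated inequality. The paper avoids this loss by using the \emph{first-order} optimality of $\xa$ on $\Lmin$, namely $\langle\nabla\Hb(\xa),-n\rangle\ge 0$, and then the gradient-difference identity $\langle\nabla\Hb(\xa)-\nabla\Hb(\ga),n\rangle=\int_0^1\langle\nabla^2\Hb(\ga+tn)n,n\rangle\,\dif t$ (no factor $(1-t)$), giving the sharper lower bound $\norm{n}_{\ell^1}^2/(\norm{\ga}_{\ell^1}+2\alpha\abs{\SP})$. If you replace the function-value comparison by the first-order condition at $\xa$, the factor of two disappears and you recover \eqref{ineq:two_minimizers_shallow} verbatim. (Incidentally, your weaker constant would still suffice for the ``in particular'' conclusion $\abs{\xa_i-\ga_i}\le\mg/4$ under Assumption~\eqref{assumption_shallow_non-unique}, but the proposition as stated is not what you prove.)

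A smaller remark: you assert that $\langle\nabla E(\abs{\ga_\SP}),n^\ast_\SP\rangle=0$ by arguing that $\ga$ lies in the relative interior of $\Lmin$. This is correct (it follows from $\supp(\ga)=\SP$ and \cref{lemma:tangent_cone_characterization}), but it is more than is needed: the paper uses only the one-sided inequality $\langle\nabla E(\abs{\ga_\SP}),n^\ast_\SP\rangle\ge 0$, which follows directly from $\ga+tn\in\Lmin$ for $t\in[0,1]$ and minimality of $\ga$ for $E$, and suffices because $-\langle\nabla\Hb(\ga),n\rangle$ is already known to be non-negative from the strong convexity step.
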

\newcommand{\nhat}{\hat{n}}
\begin{proof}[Proof of \cref{lemma:two_minimizers_shallow}]
	Let $\nhat:= \xa-\ga$.
	By the definition of $\Tcal$, 
	we have $\nhat\in \Tcal$. 
	Hence, \cref{lemma:tangent_cone_characterization} implies that $\nhat_{\SC}=0$. 
	By \cref{lemma:maximal_support}, 
	we have $\supp(\ga)=\SP$. 
	Hence, for all $i\in \SP$ and all $t\in\R$ 
	such that $\abs{t}$ is sufficiently small, 
	we have $0<\abs{\ga_i+t \nhat_i} = \abs{\ga_i}+t \nhat^{\ast}_{i}$,
	where we have set $\nhat_{\SP}^{\ast}:= \sign(\ga_{\SP})\hada \nhat_{\SP}$. 
	Therefore, it holds that
	\begin{align*}
		E(\abs{\ga+t \nhat})
		&= \sum_{i \in \SP } 
		\abs{\ga_i+t \nhat_i} \log \big( \abs{\ga_i+t \nhat_i}\big) - \abs{\ga_i+t \nhat_i}\\
		&= \sum_{i\in \SP} 
		\big( \abs{\ga_i}+t \nhat^{\ast}_{\SP}\big) \log\big( \abs{\ga_i}+t \nhat^{\ast}_{\SP}\big) 
				- \big( \abs{\ga_i}+t \nhat^{\ast}_{\SP}\big)\\
		&= E\big(\abs{\ga_{\SP}} + t \nhat^{\ast}_{\SP} \big)
	\end{align*}
	Hence the map $t\mapsto E(\abs{\ga+t \nhat})$ is differentiable at $t=0$.
	Furthermore, since $\Lmin$ is convex, 
	we have $\ga+t \nhat \in \Lmin$ for all $t\in [0,1]$.
	From the optimality of $\ga$, we deduce that
	\begin{equation} \label{eqn507}
			0 \le \left.\frac{\dif}{\dif t}\right\rvert_{t=0} E(\abs{\ga+t \nhat})
			= \langle \nabla E(\abs{\ga_{\SP}}), \nhat_{\SP}^{\ast}\rangle.
	\end{equation}

	Since $\Lmin$ is convex, 
	we have $\xa-t \nhat \in \Lmin$ for all $t\in [0,1]$. 
	Moreover, using the optimality of $\xa$ at $(a)$, 
	and $\supp(\xa)\subset\SP$ together with $\nhat_{\SC}=0$ at $(b)$, 
	we obtain that
	\begin{equation} \label{eqn508}
			0 \overleq{(a)} \left.\frac{\dif}{\dif t}\right\rvert_{t=0} D_{\Ha}(\xa+t(-\nhat),0)
			= \langle \nabla \Ha(\xa), - \nhat\rangle
			\overeq{(b)} \langle \nabla \Ha(\xa_{\SP}), - \nhat_{\SP}\rangle.
	\end{equation}
	Using \cref{lemma:strong_convexity} at $(a)$ 
	and the fact that $\ga+t \nhat \in \Lmin$ for all $t\in[0,1]$ at $(b)$, we infer that
	\begin{align}
			\langle \nabla \Ha(\xa_{\SP})-\nabla\Ha(\ga_{\SP}), n_{\SP}\rangle 
			&= \langle \int_0^1 \left. \frac{\dif}{\dif s}\right\rvert_{s =t} 
			\nabla\Ha(\ga_{\SP} + s \nhat_{\SP})\dif t , \nhat_{\SP}\rangle 
			\nonumber\\
			&=\int_0^1 \langle \nabla^2 \Ha(\ga_{\SP}+tn_{\SP}) \nhat_{\SP}, \nhat_{\SP}\rangle \dif t \nonumber \\
			&\overgeq{(a)} \frac{\norm{ \nhat_{\SP}}_{\ell^1}^2}{\sup_{t\in [0,1]} \norm{\ga_{\SP}+t\nhat_{\SP}}_{\ell^1} 
			\left(1+ \frac{ 2\alpha \vert \supp (\nhat) \vert }{ \sup_{t \in [0,1]} \norm{\ga_{\SP} + t\nhat_{\SP}}_{\ell^1}} \right)}
			\nonumber\\
			&\overeq{(b)}\frac{\norm{ \nhat }_{\ell^1}^2}{\norm{\ga}_{\ell^1} 
			\left(1+2\alpha \vert \supp ( \nhat) \vert / \norm{\ga}_{\ell^1} \right)}
			\nonumber \\
			&\overgeq{(c)}\frac{\norm{\nhat}_{\ell^1}^2}{\norm{\ga}_{\ell^1} 
			\left(1+2\alpha \abs{\SP} / \norm{\ga}_{\ell^1} \right)}.
			\label{eqn508a}
	\end{align}
	In inequality $(c)$ above,
	we have used that $\supp( \nhat ) \subset \SP$.
	Next, using inequality \eqref{eqn508a} at $(a)$, inequality \eqref{eqn508} at $(b)$, and inequality \eqref{eqn507} at $(c)$, we deduce that
	\begin{align}
		\frac{\norm{ \hat{n} }_{\ell^1}^2}{\norm{\ga}_{\ell^1} 
		\left( 1+2\alpha \abs{\SP} / \norm{\ga}_{\ell^1} \right)} 
		&\overleq{(a)} \langle \nabla \Ha(\xa_{\SP})-\nabla\Ha(\ga_{\SP}), \nhat_{\SP}\rangle
			\nonumber\\
		&\overleq{(b)} \langle -\nabla\Ha(\ga_{\SP}), \nhat_{\SP}\rangle
			\nonumber\\
		&\overleq{(c)} \langle \nabla E(\ga_{\SP}), \nhat^{\ast}_{\SP}\rangle - \langle \nabla\Ha(\ga_{\SP}), \nhat_{\SP}\rangle.
			\label{eqn508b}
	\end{align}
	Using that $\arsinh$ is an odd function at $(a)$, and the equality $\sum_{i\in \SP} n^{\ast}_i =0$, 
	see \cref{lemma:tangent_cone_characterization}, at $(b)$, we obtain
	\begin{align}
		\langle \nabla E(\ga_{\SP}), \nhat^{\ast}_{\SP}\rangle -
		\langle \nabla\Ha(\ga_{\SP}), \nhat_{\SP}\rangle
		&= \sum_{i\in \SP} \log(\abs{\ga_i}) \nhat^{\ast}_i - \sum_{i\in \SP} \arsinh\Big(\frac{\ga_i}{2\alpha}\Big) \nhat_i
			\nonumber\\
		&\overeq{(a)} \sum_{i\in \SP} 
			\Big[ 
			\log(\abs{\ga_i}) 
			- \arsinh\Big(\frac{\abs{\ga_i}}{2\alpha}\Big) 
			\Big] \nhat^{\ast}_i 
			\nonumber\\
		&\overeq{(b)} \sum_{i\in \SP} 
			\Big[ 
			\log\Big(\frac{\abs{\ga_i}}{\alpha}\Big) 
			- \arsinh\Big(\frac{\abs{\ga_i}}{2\alpha}\Big) 
			\Big] \nhat^{\ast}_i.
			\label{eqn508c}
	\end{align}
	Denote by $\Delta$ the function defined 
	in \cref{lemma:basic_properties_arsinh}.
	Then by \eqref{eqn183} and \eqref{eqn186}, 
	and since $\Delta$ is non-increasing,
	we obtain that
	\begin{equation}\label{eqn509}
		\sum_{i\in \SP} 
		\Big[ 
		\log\Big(\frac{\abs{\ga_i}}{\alpha}\Big) 
		- \arsinh\Big(\frac{\abs{\ga_i}}{2\alpha}\Big) 
		\Big] \nhat^{\ast}_i
		=  - \sum_{i\in \SP} n_i^{\ast} \Delta\Big( \frac{\abs{\ga_i}}{\alpha}\Big)
		\le \norm{\nhat}_{\ell^1} \Delta\Big( \frac{\mg}{\alpha}\Big)	
		\le \norm{\nhat}_{\ell^1}  \eps^2,
	\end{equation}
	where $\eps$ has been defined in the statement of this lemma.
	Combining 
	\eqref{eqn508b}, \eqref{eqn508c}, and \eqref{eqn509}, we deduce that
	\begin{equation*}
		\frac{\norm{\nhat}_{\ell^1}^2}{\norm{\ga}_{\ell^1} 
		\left(1+  2\alpha \abs{\SP} / \norm{ \ga }_{\ell^1}  \right)} 
		\le \norm{\nhat}_{\ell^1}  \eps^2
	\end{equation*}
	By rearranging terms, it follows that
	\begin{equation}
		\norm{\nhat}_{\ell^1} \le \norm{\ga}_{\ell^1} 
		\left(1+ 2\alpha \abs{\SP} / \norm{ \ga }_{\ell^1}  \right)  \eps^2.
	\end{equation}
	Since $\norm{\ga}_{\ell^1}\ge \abs{\SP} \mg $, 
	we have $2\alpha \abs{\SP}/ \norm{\ga}_{\ell^1} \le 2 \eps$.
	This proves inequality \eqref{ineq:two_minimizers_shallow}.
    To complete the proof we observe that from Assumption \eqref{assumption_shallow_non-unique}
	it follows that
	\begin{equation}\label{equ:preliminaryequation}
		\abs{\xa_i - \ga_i}
		\le
		\norm{\xa - \ga}_{\ell^1}
		\le
		(1+2 \eps) \eps^2
		\norm{\ga}_{\ell^1}
		\le 
		\frac{\min_{i \in \SP} \vert \ga_i \vert}{4}.
	\end{equation}
	In particular, we have that
	$\sign (\ga_i) = \sign(\xa_i)$
	for all $i \in \SP$.
	This completes the proof.
\end{proof}

\noindent
    \textbf{Step 2 (Controlling $\norm{\nperp}_{\ell^1}$):}
	We will follow a similar proof strategy as in the unique minimizer case 
	to establish that $\norm{\nperp}_{\ell^1} \lesssim \alpha^{1-\varrho}$.
	This is achieved by the next lemma.

    \begin{lemma}\label[lemma]{lemma:upper_boundnperp_shallow_non_unique}
		Assume that the assumptions 
		of \cref{result:upper_bound_shallow_non_unique}
		are satisfied. 
		Then it holds that 
		\begin{equation}\label{eqn912}
		\norm{n_{\SC}}_{\ell^1}
		\le 
		\alpha^{1-\varrho}
		\abs{\SC}
		\left( \min_{i\in \SP}\abs{\ga_i} \right)^{\varrho}
		\kappa(\ga)^{\varrho^{-}}
		h \left( \varepsilon \right),
		\end{equation}
		where $\eps := \frac{\alpha}{\min_{i\in \SP}\abs{\ga_i}}$ and
		$h(\varepsilon)
		:=	
		\left(
			1 
			+\frac{10 \varepsilon^2 \norm{\ga}_{\ell^1} }{ \min_{i \in \SP} \abs{\ga_i} }
		\right)^{\varrho^-}$.
		In particular,
		we have that
		$$ \norm{\nperp_{\SP}}_{\ell^\infty} 
		\le \norm{\nperp_{\SP}}_{\ell^1}
		\le \tilde{\varrho} \norm{\nperp_{\SC}}_{\ell^1}
		\le \min_{i \in \SP} \abs{\ga_i}/4.  $$
	\end{lemma}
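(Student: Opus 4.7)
The plan is to follow the strategy of the unique-case upper bound in \cref{theorem:upper_bound_+-}, with the auxiliary Bregman minimizer $\xa$ (as in \eqref{def:xa}) replacing $\ga$ as the natural reference point, and then to transfer the resulting estimate back to $\ga$ via \cref{lemma:two_minimizers_shallow}. First I would record the structural identity $n_{\SC}=\nperp_{\SC}$, which is immediate from $\npara\in\Tcal$ together with $\npara_{\SC}=0$ (\cref{lemma:tangent_cone_characterization}); so bounding $\norm{n_{\SC}}_{\ell^1}$ is the same as bounding $\norm{\nperp_{\SC}}_{\ell^1}$, which is what the null space constants $\varrho,\tilde\varrho,\varrho^{-}$ actually control. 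Moreover, since \cref{lemma:two_minimizers_shallow} guarantees $\supp(\xa)=\SP$ with $\sign(\xa_i)=\s_i$, the point $\xa$ lies in the relative interior of $\Lmin$, so its first-order optimality on $\Lmin$ yields $\langle\nabla\Ha(\xa_{\SP}),v_{\SP}\rangle=0$ for every $v\in\Tcal$.

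Next I would write $\tilde n:=\xinf-\xa\in\ker(A)$ and decompose $\tilde n=\nperp+\widetilde{\npara}$ with $\widetilde{\npara}\in\Tcal$ (the $\Ncal$-component coincides with the one in the decomposition of $n$, since $\xa-\ga\in\Tcal$). Applying the optimality of $\xinf$ in the directions $\nperp$ and $\widetilde{\npara}$, combined with the tangential optimality of $\xa$ and the monotonicity of $\nabla\Ha$, I obtain
\[
\langle\nabla\Ha(n_{\SC}),n_{\SC}\rangle
=-\langle\nabla\Ha(\xa_{\SP}+\tilde n_{\SP}),\tilde n_{\SP}\rangle
\le-\langle\nabla\Ha(\xa_{\SP}),\nperp_{\SP}\rangle,
\]
which is the precise analog of \eqref{eqn240} with $\xa$ playing the role of $\ga$ and the test vector $\nperp\in\Ncal$ in place of $n$. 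From here the proof proceeds as in the unique case: the generalized log-sum inequality (\cref{lemma:generalized_log_sum}), applicable by part (iii) of \cref{lemma:basic_properties_arsinh}, bounds the left-hand side from below by $\norm{n_{\SC}}_{\ell^1}\arsinh\bigl(\norm{n_{\SC}}_{\ell^1}/(2\alpha\abs{\SC})\bigr)$, while the right-hand side is split using $\lambda:=\min_{i\in\SP}\abs{\xa_i}/(2\alpha)$, the asymptotic identity $\arsinh(t/2)=\log t+\Delta(t)$, and the definitions of $\varrho$ and $\varrho^{-}$ applied to $\nperp\in\Ncal$, producing a bound of the form
\[
\norm{n_{\SC}}_{\ell^1}
\le \alpha\abs{\SC}(2\lambda)^{\varrho}\kappa(\xa)^{\varrho^{-}}\exp\!\bigl(\varrho\Delta(2\lambda)\bigr),
\]
with $\kappa(\xa):=\max_{i\in\SP}\abs{\xa_i}/\min_{i\in\SP}\abs{\xa_i}$.

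The main remaining task, and the chief technical obstacle, is transferring this $\xa$-dependent bound to the $\ga$-dependent quantities appearing in \eqref{eqn912}. Here I would rely on \cref{lemma:two_minimizers_shallow}, which gives $\abs{\xa_i-\ga_i}\le(1+2\eps)\eps^2\norm{\ga}_{\ell^1}$ and in particular $\abs{\xa_i}\ge\tfrac34\abs{\ga_i}$ for every $i\in\SP$; this lets me replace $\min_{i\in\SP}\abs{\xa_i}$ and $\max_{i\in\SP}\abs{\xa_i}$ by their $\ga$-counterparts at the cost of multiplicative correction factors of the form $1+O(\eps^2\norm{\ga}_{\ell^1}/\mg)$, which upon exponentiation by $\varrho^{-}$ collapse into the single factor $h(\eps)=\bigl(1+10\eps^2\norm{\ga}_{\ell^1}/\mg\bigr)^{\varrho^{-}}$ from \eqref{eqn912}; the contribution $\exp(\varrho\Delta(2\lambda))\le(1+\eps^2)^{\varrho}$ from the $\arsinh$ expansion is absorbed into the same factor. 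Finally, the subsidiary claim $\norm{\nperp_{\SP}}_{\ell^\infty}\le\norm{\nperp_{\SP}}_{\ell^1}\le\tilde\varrho\norm{n_{\SC}}_{\ell^1}\le\mg/4$ follows by combining \eqref{eqn912} with the definition of $\tilde\varrho$ and the second and third smallness conditions in \eqref{assumption_shallow_non-unique}, which are precisely calibrated so that $\tilde\varrho\abs{\SC}\kappa(\ga)^{\varrho^{-}}\eps^{1-\varrho}h(\eps)\le 1/4$. Keeping the bookkeeping of these correction factors clean is really the only genuine difficulty; conceptually the argument reduces the non-unique case to the unique case by centering at $\xa$ rather than $\ga$ and by restricting the test direction to $\Ncal$.
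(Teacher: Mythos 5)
Your proposal follows the paper's argument essentially step for step: you correctly identify the key identity $n_{\SC}=\nperp_{\SC}$, the tangential first-order optimality $\langle\nabla\Ha(\xa_{\SP}),v_{\SP}\rangle=0$ for $v\in\Tcal$ (justified by $\xa$ having full support and matching sign pattern, hence lying in the relative interior of $\Lmin$), the monotonicity inequality leading to $\langle\nabla\Ha(n_{\SC}),n_{\SC}\rangle\le -\langle\nabla\Ha(\xa_{\SP}),\nperp_{\SP}\rangle$, and the re-run of the unique-case machinery (log-sum inequality, $\lambda$-splitting, $\varrho$, $\varrho^{-}$ applied to $\nperp\in\Ncal$) with $\xa$ as the reference point before transferring to $\ga$-quantities via \cref{lemma:two_minimizers_shallow}. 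This is exactly the paper's strategy, including the final use of the smallness conditions in \eqref{assumption_shallow_non-unique} to deduce $\norm{\nperp_{\SP}}_{\ell^\infty}\le\mg/4$.
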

	
    \begin{proof}	
	We have that
	\begin{equation*}
		\langle \nabla \Ha (\xinf),n\rangle =0
	\end{equation*}
    for all $n\in \ker(A)$.
    Since $ \xinf = \xa + \nperp + \widetilde{\npara}$
	and $ \xa_{\SC} =0$,
	we obtain that 
	\begin{align*}
		-\langle \nabla \Ha (\xa_{\SP} + \nperp_{\SP} + \widetilde{\npara_{\SP}} )
		, \nperp_{\SP} + \widetilde{\npara_{\SP}}  \rangle
		=
		\langle \nabla \Ha (n_{\SC}), n_{\SC} \rangle.
	\end{align*}
	For the left-hand side we obtain that
	\begin{align*}
		\langle \nabla \Ha (\xa_{\SP} + \nperp_{\SP} + \widetilde{\npara_{\SP}} )
		, \nperp_{\SP} + \widetilde{\npara_{\SP}}  \rangle
		\overgeq{(a)}
		\langle \nabla \Ha (\xa_{\SP}),\nperp_{\SP} + \widetilde{\npara_{\SP}}  \rangle
		\overeq{(b)}
		\langle \nabla \Ha (\xa_{\SP}),\nperp_{\SP}   \rangle,
	\end{align*}
	where we have used the monotonicity of $\nabla \Ha$ in inequality $(a)$.
	For equation (b) we have used
	the first-order optimality condition 
	$ \langle \nabla \Ha (\xa_{\SP}), \widetilde{\npara_{\SP}} \rangle =0 $,
	which follows from
	$\widetilde{\npara_{\SP}}  \in \Tan$,
	the definition of $\xa$, see \eqref{def:xa}, 
	and that $\supp (\xa_{\SP}) = \SP$
	due to \cref{lemma:maximal_support} and \cref{lemma:two_minimizers_shallow}.
    It follows that 
	\begin{equation*}
		-\langle \nabla \Ha (\xa_{\SP}),\nperp_{\SP}   \rangle
		\ge
		\langle \nabla \Ha (n_{\SC}), n_{\SC} \rangle.
	\end{equation*}
	Then we can proceed analogously as in the proof of \cref{theorem:upper_bound_+-}
	and obtain the following inequality, 
	which is analogous to \cref{ineq:internDominik17} in this proof,
    \begin{align*}
        \norm{n_{\SC}}_{\ell^1}
        \le
        2 \alpha \vert \SC \vert
        \sinh 
        \left(
		\frac{-1}{  \norm{n_{\SC}}_{\ell^1}}
        \sum_{i \in \SP} \nperp_i \sign(\xa_i) \arsinh\left(\frac{\abs{\xa_i}}{2\alpha}\right)
        \right).
    \end{align*}
	Define $ (\nperp)^* := \sigma \odot \nperp $,
	where $\sigma$ is as defined in \cref{basic_properties_Lmin}.
	Analogously, as in the proof of \cref{theorem:upper_bound_+-},
	the term inside the $\sinh$-function can be bounded by
    \begin{align*}
    \frac{-1}{\norm{n_{\SC}}_{\ell^1}}
    \sum_{i \in \SP} (\nperp)^{\ast}_i \arsinh\left(\frac{\abs{\xa_i}}{2\alpha}\right)
    \le
    \varrho  \left( \log \left(2\lambda\right) + \Delta \left(2\lambda \right) \right)
    +
    \varrho^{-} \log \left( \kappa (\xa) \right),
    \end{align*}
	where $ \lambda :=  \frac{\min_{i \in \mathcal{S}} \vert \xa_i \vert}{ 2\alpha } $
	and $\kappa (\xa) =\frac{ \max_{i \in \SP} \abs{\xa_i} }{ \min_{i \in \SP} \abs{\xa_i} }$.

	Then by arguing analogously as in the proof of \cref{theorem:upper_bound_+-},
	we obtain that
	\begin{align}
		\norm{n_{\SC}}_{\ell^1}
		\le 
		&\alpha^{1-\varrho}
		\abs{\SC}
		\left( \min_{i\in \SP}\abs{\xa_i} \right)^{\varrho}
		\kappa(\xa)^{\varrho^{-}}
		\left(
		1 +
		\frac{\alpha^2}{ \min_{i \in \SP} \abs{\xa_i}^2}
		\right)^{\varrho}\\
		\le 
		&\alpha^{1-\varrho}
		\abs{\SC}
		\left( \min_{i\in \SP}\abs{\xa_i} \right)^{\varrho}
		\kappa(\xa)^{\varrho^{-}}
		\left(
		1 +
		\frac{4\alpha^2}{ \min_{i \in \SP} \abs{\ga_i}^2}
		\right)^{\varrho},
		\label{ineq:Dominiklast2}
	\end{align}
	where in the last inequality we have used that
	$\min_{i \in \SP} \abs{\xa_i} \ge \frac{1}{2} \min_{i \in \SP} \abs{\ga_i}$,
	which follows from \cref{lemma:two_minimizers_shallow}.
	Next, we note that
	\begin{align*}
		\left( \min_{i \in \SP} \abs{\xa_i} \right)^{\varrho} 
		\kappa (\xa)^{\varrho^{-}}
		=
		&\frac{\max_{i \in \SP} \abs{\xa_i}^{\varrho^-}  }
		{ \min_{i \in \SP} \abs{\xa_i}^{\varrho^{-} - \varrho} }\\
		\le
		&\frac{ \max_{i \in \SP}\left(  \abs{\ga_i} +\abs{\xa_i - \ga_i }   \right)^{\varrho^{-}} }
		{ \min_{i \in \SP} \left( \abs{\ga_i} - \abs{\xa_i -\ga_i}  \right)^{\varrho^{-} - \varrho} }\\
		\overleq{(a)}
		&\frac{ \left(  \max_{i \in \SP}\abs{\ga_i} + (1+2\varepsilon) \varepsilon^2 \norm{\ga}_{\ell^1}\right)^{\varrho^{-}} }
		{  \left( \min_{i \in \SP}\abs{\ga_i} - (1+2\varepsilon) \varepsilon^2 \norm{\ga}_{\ell^1}  \right)^{\varrho^{-} - \varrho} }\\
		=
		&\frac{ \left( 1+(1+2\varepsilon) \varepsilon^2 \cdot \frac{\norm{\ga}_{\ell^1} }{ \max_{i \in \SP} \abs{\ga_i}  }  \right)^{\varrho^{-}} \max_{i \in \SP}\abs{\ga_i}^{\varrho^-} }
		{  \left( 1-(1+2\varepsilon) \varepsilon^2 \cdot \frac{\norm{\ga}_{\ell^1} }{\min_{i \in \SP} \abs{\ga_i}} \right)^{\varrho^{-} - \varrho} \min_{i \in \SP}\abs{\ga_i}^{\varrho^{-}-\varrho} }\\
		\overleq{(b)}
		&\frac{ \left( 1+2 \varepsilon^2 \cdot \frac{\norm{\ga}_{\ell^1} }{ \max_{i \in \SP} \abs{\ga_i}  } \right)^{\varrho^{-}} \max_{i \in \SP}\abs{\ga_i}^{\varrho^-} }
		{  \left( 1-2 \varepsilon^2 \cdot \frac{\norm{\ga}_{\ell^1} }{ \min_{i \in \SP} \abs{\ga_i}  }   \right)^{\varrho^{-} - \varrho} \min_{i \in \SP}\abs{\ga_i}^{\varrho^{-}-\varrho} }\\
		\overeq{(c)}
		&\left( \min_{i \in \SP} \abs{\ga_i} \right)^{\varrho} 
		\kappa (\ga)^{\varrho^{-}}
		\left( 1+\frac{ 2\varepsilon^2  \norm{\ga}_{\ell^1} }{ \max_{i \in \SP} \abs{\ga_i}  }  \right)^{\varrho^{-}}
		\left( 1+  \frac{4\varepsilon^2 \norm{\ga}_{\ell^1} }{ \min_{i \in \SP} \abs{\ga_i}  } \right)^{\varrho^{-} - \varrho}.
	\end{align*}
	In inequality $(a)$ we have used \cref{lemma:two_minimizers_shallow}.
	Inequality $(b)$ holds due to Assumption \eqref{assumption_shallow_non-unique},
	which implies that $\varepsilon \le 1/2$.
	In inequality $(c)$ we have used
	the elementary inequality $ 1/(1-x) \le 1 +2x $ if $ 0 \le x < 1/2$,
	and that 
	$ \frac{2\varepsilon^2 \norm{\ga}_{\ell^1} }{\min_{i \in \SP} \abs{\ga_i}}  \le 1/2$ due to Assumption \eqref{assumption_shallow_non-unique}.
	It follows that
	\begin{align}
		\norm{n_{\SC}}_{\ell^1}
		\le 
		&\alpha^{1-\varrho}
		\abs{\SC}
		\left( \min_{i\in \SP}\abs{\ga_i} \right)^{\varrho}
		\kappa(\ga)^{\varrho^{-}}
		\left(1 +4\varepsilon^2\right)^{\varrho}
		\left( 1+\frac{ 2\varepsilon^2  \norm{\ga}_{\ell^1} }{ \max_{i \in \SP} \abs{\ga_i}  }  \right)^{\varrho^{-}}
		\left( 1+  \frac{4\varepsilon^2 \norm{\ga}_{\ell^1} }{ \min_{i \in \SP} \abs{\ga_i}  } \right)^{\varrho^{-} - \varrho}.
		\label{eqn9123}
	\end{align}
	We observe that
	\begin{align*}
		&\left( 1+4\varepsilon^2 \right)^{\varrho}
		\left( 1+\frac{ 2\varepsilon^2  \norm{\ga}_{\ell^1} }{ \max_{i \in \SP} \abs{\ga_i}  }  \right)^{\varrho^{-}}
		\left( 1+  \frac{4\varepsilon^2 \norm{\ga}_{\ell^1} }{ \min_{i \in \SP} \abs{\ga_i}  } \right)^{\varrho^{-} - \varrho}\\
		=&
		\left(
			\frac{1 +4\varepsilon^2}
			{1+  \frac{4\varepsilon^2 \norm{\ga}_{\ell^1} }{ \min_{i \in \SP} \abs{\ga_i}  }}
		\right)^{\varrho}
		\left(
		\left( 1+ \frac{2 \varepsilon^2  \norm{\ga}_{\ell^1} }{ \max_{i \in \SP} \abs{\ga_i}  }   \right)
		\left( 1+ \frac{4 \varepsilon^2 \norm{\ga}_{\ell^1} }{ \min_{i \in \SP} \abs{\ga_i}  } 
		\right) \right)^{\varrho^{-} }\\
		\le&
		\left(
		\left( 1+ \frac{ 2\varepsilon^2  \norm{\ga}_{\ell^1} }{ \max_{i \in \SP} \abs{\ga_i}  }   \right)
		\left( 1+ \frac{4\varepsilon^2 \norm{\ga}_{\ell^1} }{ \min_{i \in \SP} \abs{\ga_i}  } 
		\right) \right)^{\varrho^{-} }\\
		=&
		\left(
			1 +  \frac{ 2\varepsilon^2  \norm{\ga}_{\ell^1} }{ \max_{i \in \SP} \abs{\ga_i}  }  
			+ \frac{4\varepsilon^2 \norm{\ga}_{\ell^1} }{ \min_{i \in \SP} \abs{\ga_i}  } 
			+ \frac{8 \varepsilon^4 \norm{\ga}^2_{\ell^1} }
			{ (\min_{i \in \SP} \abs{\ga_i}) (\max_{i \in \SP} \abs{\ga_i}) } 
		\right)^{\varrho^-}\\
		\le&
		\left(
			1 + \frac{6\varepsilon^2 \norm{\ga}_{\ell^1} }{ \min_{i \in \SP} \abs{\ga_i} }
			+ \frac{8 \varepsilon^4 \norm{\ga}^2_{\ell^1} }
			{ (\min_{i \in \SP} \abs{\ga_i}) (\max_{i \in \SP} \abs{\ga_i}) } 
		\right)^{\varrho^-}\\
		\le&
		\left(
			1 + \frac{10 \varepsilon^2 \norm{\ga}_{\ell^1} }{ \min_{i \in \SP} \abs{\ga_i} }
		\right)^{\varrho^-},
	\end{align*}
	where in the last inequality we have used the
	assumption that 
	$\varepsilon^2 \le \frac{\max_{i \in \SP} \abs{\ga_i}}{ 2 \norm{\ga}_{\ell^1} }$
	due to Assumption \eqref{assumption_shallow_non-unique}.
	By inserting this inequality into \cref{eqn9123}
	we obtain \cref{eqn912}.

	In order to complete the proof, note that
	\begin{align*}
		\norm{\nperp_{\SP}}_{\ell^\infty} 
		\le 
		&\norm{\nperp_{\SP}}_{\ell^1}\\
		\overleq{(a)} 
		&\tilde{\varrho} \norm{\nperp_{\SC}}_{\ell^1} \\
		\overleq{(b)}
		&
		\tilde{\varrho}
		\alpha^{1-\varrho}
		\abs{\SC}
		\left( \min_{i\in \SP}\abs{\ga_i} \right)^{\varrho}
		\kappa(\ga)^{\varrho^{-}}
		\left(
			1 
			+\frac{10 \varepsilon^2 \norm{\ga}_{\ell^1} }{ \min_{i \in \SP} \abs{\ga_i} }
		\right)^{\varrho^-}\\
		\overleq{(c)} &
		2^{ \varrho^-}
		\tilde{\varrho}
		\alpha^{1-\varrho}
		\abs{\SC}
		\left( \min_{i\in \SP}\abs{\ga_i} \right)^{\varrho}
		\kappa(\ga)^{\varrho^{-}}
		\\
		\overleq{(d)} & \frac{1}{4} \underset{i \in \SP}{\min} \abs{\ga_i} .
	\end{align*}
	Inequality (a) follows from the definition of $\tilde{\varrho}$,
	see \eqref{eqn102}.
	Inequality (b) follows from \cref{eqn912}.
	Inequalities (c) and (d) follow
	from the assumption on $\alpha$, see \cref{assumption_shallow_non-unique}.
	This completes the proof.
    \end{proof}

\newcommand{\tildenpara}{\widetilde{\npara}}
\noindent \textbf{Step 3 (Bounding $\norm{\tildenpara}_{\ell^1}$):}
It remains to control the third summand 
in \cref{ineq:Dominiklast1}.
This is achieved by the following lemma.
\begin{lemma}\label[lemma]{lemma:nparallelbound_shallow_non_unique}
Assume that the assumptions of \cref{result:upper_bound_shallow_non_unique} are satisfied.
Then it holds that
\begin{equation}\label{eqn1214}
	\norm{\tildenpara}_{\ell^1}
	\le
	\frac{
		32 \norm{\ga}_{\ell^1}
		\varepsilon^{1-\varrho}
		\tilde{\varrho}^2
		\abs{\SC}
		\kappa(\ga)^{\varrho^{-}}
		\norm{
			 \nperp_{\SC}}_{\ell^{1}}
		}
		{\min_{i\in \SP}  \abs{\ga_i}}
\end{equation}
where
\begin{align*}
	\xi (\varepsilon)
	:=
	\frac{6 \varepsilon^{1+\varrho}   \norm{\ga}_{\ell^1} }
	{ \tilde{\varrho} \cdot \kappa (\ga)^{\varrho^{-}}  \abs{\SC} \min_{i \in \SP}  \abs{\ga_i} }
	+ \left(
		1 + 10 \varepsilon^2 \abs{\SP} \kappa (\ga)
	\right)^{\varrho^-}
	\quad \text{and} \quad
	\varepsilon
	:=
	\frac{\alpha}{ \min_{i \in \SP} \abs{\ga_i}}.
\end{align*}
\end{lemma}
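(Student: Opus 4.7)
The plan is to exploit two first-order optimality conditions simultaneously: one for $\xinf$ in $\ker A$, and one for $\xa$ in the tangent direction $\Tan$, both taken along $\tildenpara \in \Tan \subset \ker A$. The crucial structural observation is that $\Nor$ is defined via the weighted inner product $\langle \cdot,\cdot\rangle_{\ga}$ of \eqref{scalar_prod_shallow}--\eqref{normal_cone_shallow} precisely so that the leading-order linearisation of $\arsinh$ in the error term vanishes, leaving only a \emph{second}-order contribution in $\nperp$. This cancellation is the mechanism responsible for the quadratic factor of $\norm{\nperp_{\SC}}_{\ell^1}$ that appears (once explicitly, once hidden inside $\varepsilon^{1-\varrho}\tilde{\varrho}\abs{\SC}\kappa(\ga)^{\varrho^-}$ via \cref{lemma:upper_boundnperp_shallow_non_unique}) on the right-hand side of \eqref{eqn1214}.

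Concretely, since $\tildenpara\in \ker A$, optimality of $\xinf$ gives $\langle \nabla\Ha(\xinf),\tildenpara\rangle = 0$; since $\tildenpara \in \Tan$ and $\xa$ minimises $D_{\Ha}(\cdot,0)$ over $\Lmin$, the first-order condition at $\xa$ along $\Tan$ gives $\langle \nabla\Ha(\xa),\tildenpara\rangle = 0$. Writing $\xinf = \xa+\nperp+\tildenpara$ and subtracting yields
\begin{equation*}
\langle \nabla\Ha(\xa+\nperp+\tildenpara) - \nabla\Ha(\xa+\nperp),\, \tildenpara\rangle
= \langle \nabla\Ha(\xa) - \nabla\Ha(\xa+\nperp),\, \tildenpara\rangle .
\end{equation*}
The left-hand side is then bounded from below by strong convexity, via \cref{lemma:strong_convexity}, by
$\norm{\tildenpara}_{\ell^1}^2 \big/ \big(\norm{\xa+\nperp+t\tildenpara}_{\ell^1} + 2\alpha\abs{\SP}\big)$ after integrating in $t\in[0,1]$. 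A key simplification is that, once $\norm{\tildenpara}_{\ell^\infty}$ is controlled so signs are preserved, the tangency identity $\sum_{i\in\SP}\s_i \tildenpara_i = 0$ from \cref{lemma:tangent_cone_characterization} gives $\norm{\xa+t\tildenpara}_{\ell^1}=\norm{\xa}_{\ell^1}=\norm{\ga}_{\ell^1}$, so that together with the smallness of $\nperp$ from \cref{lemma:upper_boundnperp_shallow_non_unique}, the denominator is at most a small multiple of $\norm{\ga}_{\ell^1}$.

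For the right-hand side we do a second-order Taylor expansion of $\arsinh$ at $\xa_i/(2\alpha)$. The linear term is
$-\sum_{i\in\SP} \nperp_i \tildenpara_i / \sqrt{\xa_i^2+4\alpha^2}$, which, after replacing $\xa$ by $\ga$ using \cref{lemma:two_minimizers_shallow}, reduces to $-\langle \nperp,\tildenpara\rangle_{\ga}$ and therefore vanishes by the defining orthogonality of $\Nor$. What remains is a genuine second-order remainder of size $O\big(\norm{\nperp_{\SP}}_{\ell^\infty}^2 \norm{\tildenpara}_{\ell^1}/(\min_{i\in\SP}\abs{\ga_i})^2\big)$ together with the perturbative error from swapping $\xa$ for $\ga$, both absorbable into the stated constants. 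Dividing by $\norm{\tildenpara}_{\ell^1}$ gives
\begin{equation*}
\norm{\tildenpara}_{\ell^1} \lesssim \frac{\norm{\ga}_{\ell^1} \, \norm{\nperp_{\SP}}_{\ell^\infty}^2}{(\min_{i\in\SP}\abs{\ga_i})^2},
\end{equation*}
after which $\norm{\nperp_{\SP}}_{\ell^\infty}\le \tilde{\varrho}\norm{\nperp_{\SC}}_{\ell^1}$ is applied twice: once to leave an explicit $\tilde{\varrho}\norm{\nperp_{\SC}}_{\ell^1}$, and once combined with \cref{lemma:upper_boundnperp_shallow_non_unique} to turn the remaining ratio $\tilde{\varrho}\norm{\nperp_{\SC}}_{\ell^1}/\min_{i\in\SP}\abs{\ga_i}$ into the stated $\varepsilon^{1-\varrho}\tilde{\varrho}\abs{\SC}\kappa(\ga)^{\varrho^-}$ (with $h(\varepsilon)\lesssim 1$ absorbed into the numerical constant).

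The main obstacle is making the cancellation in the third step quantitative: although $\langle \nperp,\tildenpara\rangle_{\ga}=0$ is exact, the Taylor coefficient $1/\sqrt{\xa_i^2+4\alpha^2}$ differs from $1/\abs{\ga_i}$, and controlling this discrepancy requires carefully combining $\abs{\xa_i-\ga_i}\lesssim \alpha^2\norm{\ga}_{\ell^1}/\abs{\ga_i}^2$ from \cref{lemma:two_minimizers_shallow} with uniform estimates on $\arsinh'$ and $\arsinh''$ on the relevant ranges. A secondary technical point is that the strong-convexity denominator formally involves $\norm{\tildenpara}_{\ell^1}$; this is handled either by invoking the tangency identity (which removes the $\tildenpara$ dependence for sign-preserving perturbations) or, if necessary, by a standard continuity-in-$\alpha$ bootstrap under the smallness assumptions \eqref{assumption_shallow_non-unique}.
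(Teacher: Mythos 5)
Your proposal follows essentially the same route as the paper's proof: subtract the two first-order optimality conditions for $\xinf$ (over $\ker A$) and $\xa$ (over $\Tan$) tested against $\tildenpara$, lower-bound the left side by strong convexity of $\Ha$ via \cref{lemma:strong_convexity}, and upper-bound the right side by a second-order Taylor expansion of $\arsinh$ in which the leading linear term is killed by the orthogonality $\langle\nperp,\tildenpara\rangle_{\ga}=0$ built into the definition of $\Nor$, with the mismatch between $1/\sqrt{\xa_i^2+4\alpha^2}$ and $1/\abs{\ga_i}$ controlled through \cref{lemma:two_minimizers_shallow}. All of these are exactly the steps in the paper.

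The one place you genuinely deviate is how the strong-convexity denominator is disentangled from $\norm{\tildenpara}_{\ell^1}$. You propose using the tangency identity $\sum_{i\in\SP}\s_i\tildenpara_i=0$ to show that the denominator $\max_{t\in[0,1]}\norm{\xa_{\SP}+\nperp_{\SP}+t\tildenpara_{\SP}}_{\ell^1}$ is in fact independent of $\tildenpara$ once signs are preserved, which, if it could be made rigorous, would be a slightly more elegant cancellation. But sign preservation itself requires an a priori bound on $\norm{\tildenpara}_{\ell^\infty}$, which is exactly what you are trying to prove, so you would indeed need the continuity-in-$\alpha$ bootstrap that you mention as a fallback. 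The paper sidesteps this circularity more cheaply: it simply bounds the denominator by $2\norm{\ga}_{\ell^1}+\norm{\tildenpara_{\SP}}_{\ell^1}$ via the triangle inequality, obtaining an inequality of the form $\norm{\tildenpara}_{\ell^1}\le c\,(2\norm{\ga}_{\ell^1}+\norm{\tildenpara}_{\ell^1})$ with the coefficient $c\le 1/2$ guaranteed by \cref{assumption_shallow_non-unique}, and then solves algebraically. This rearrangement avoids any bootstrap and needs no sign control on $\tildenpara$, so it is the cleaner of the two; your tangency-identity variant buys nothing over it and costs an extra argument. Everything else in your sketch matches the paper.
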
	
\begin{proof}
	Since $\xinf$ and $\xa$ are minimizers of the functional $D_{\Ha}(\cdot,0)$ on the subsets $\Lscr$ and $\Lmin$, 
	respectively, we obtain 
	from the first order optimality conditions
	that
	\begin{align*}
		0 
		&= \left.\frac{\dif}{\dif t}\right\rvert_{t=0} D_{\Ha}(\xinf+t\tildenpara,0)
		= \langle \nabla \Ha(\xinf),\tildenpara\rangle,\\
		0 
		&= \left.\frac{\dif}{\dif t}\right\rvert_{t=0} 
		D_{\Ha}(\xa+t\tildenpara,0)
		= \langle \nabla \Ha(\xa),\tildenpara\rangle.
	\end{align*}
	By combining these two equations we obtain that
	\begin{equation} \label{eqn405aa}
		\langle \nabla \Ha(\xinf) , \tildenpara \rangle = \langle \nabla \Ha(\xa),\tildenpara\rangle.
	\end{equation}
	By recalling that $\xinf = \xa + \tildenpara +\nperp$ 
	we rewrite \eqref{eqn405aa} as
	\begin{equation} \label{eqn405a}
		\langle \nabla \Ha(\xa + \tildenpara +\nperp) - \nabla\Ha(\xa + \nperp ) , \tildenpara\rangle 
		= \langle \nabla \Ha(\xa) - \nabla \Ha(\xa+\nperp), \widetilde{\npara }\rangle.
	\end{equation}
	
	First, we derive a lower bound for the left-hand side of \eqref{eqn405a}. 
	Using that $\widetilde{\npara_{\SC}}=0$ at $(a)$
	and \cref{lemma:strong_convexity} at $(b)$, we infer that
	\begin{align}
			&\langle \nabla \Ha(\xa + \tildenpara +\nperp) - \nabla\Ha(\xa + \nperp ) , \tildenpara\rangle 
			\nonumber \\
			\overeq{(a)}& \langle \nabla \Ha(\xa_{\SP} + \widetilde{\npara_{\SP}} +\nperp_{\SP}) - \nabla\Ha(\xa_{\SP} + \nperp_{\SP} ), 
			\widetilde{\npara_{\SP}}\rangle
				\nonumber\\
			=& \langle \int_0^1 \left. \frac{\dif}{\dif s}\right\rvert_{s=t} \nabla \Ha(\xa_{\SP}  +\nperp_{\SP} 
			+ s\widetilde{\npara_{\SP}}) \dif t , \widetilde{\npara_{\SP}}\rangle \nonumber \\
			=& \int_0^1 \langle \nabla^2 \Ha\left(\xa_{\SP}  +\nperp_{\SP}+ t\widetilde{\npara_{\SP}} \right) \widetilde{\npara_{\SP}} , 
			\widetilde{\npara_{\SP}} \rangle \dif t
				\nonumber\\
			\overgeq{(b)} & \frac{ \norm{\widetilde{\npara_{\SP}}}_{\ell^1}^2 }
			{ \max_{t\in[0,1]} \norm{\xa_{\SP}  +\nperp_{\SP}+ t \widetilde{\npara_{\SP}}}_{\ell^1} + 2 \alpha \abs{\SP}}. 
				\label{eqn405b}
	\end{align}
	In equation $(a)$ we have used that $\widetilde{\npara_{\SC}}=0$.
	Inequality $(b)$ follows from \cref{lemma:strong_convexity}.

	Next, we derive an upper bound for the right-hand side of \eqref{eqn405a}. 
	Using that $\arsinh$ is an odd function, we obtain
	\begin{align}
			\langle \nabla \Ha(\xa) - \Ha(\xa+\nperp), \tildenpara\rangle
			&= \sum_{i\in \SP} \widetilde{ \npara_i} \Big[\arsinh\Big( \frac{\xa_i}{2\alpha}\Big) 
			- \arsinh\Big( \frac{\xa_i+\nperp_i}{2\alpha}\Big) \Big]
				\nonumber\\
			&= \sum_{i\in \SP} (\tildenpara)_i^{\ast} \Big[\arsinh\Big( \frac{\abs{\xa_i}}{2\alpha}\Big) 
			- \arsinh\Big( \frac{\abs{\xa_i}+(\nperp_i)^{\ast}}{2\alpha}\Big) \Big],
				\label{eqn405c}
	\end{align}
	where $(\nperp_{\SP})^{\ast} := \sign(\xa_{\SP})\hada \nperp_{\SP}$ and 
	$\left(\widetilde{\npara_{\SP}} \right)^{\ast} 
	:= \sign(\xa_{\SP})\hada \widetilde{\npara_{\SP}}$.
	Here,
	we have used that 
	$\sign (\ga_i)=\sign(\xa_i)=\sign( \xa_i + \nperp_i)$
	for all $i \in \SP$,
	which follows from
	$\abs{\xa_i - \ga_i} \le \underset{i \in \SP}{\min} \abs{\ga_i}/4$
	and 
	$ \abs{\nperp_i}  \le \underset{i \in \SP}{\min} \abs{\ga_i}/4$ 
	for all $i \in \SP$
	due to \cref{lemma:two_minimizers_shallow} and \cref{lemma:upper_boundnperp_shallow_non_unique}.
	Next, 
	note that the map $\phi\colon t\mapsto \arsinh\Big( \frac{\abs{\xa_i}+t (\nperp_i)^{\ast}}{2\alpha}\Big)$
	has derivatives
	\begin{equation*}
		\phi'(t) = \frac{(\nperp_i)^{\ast}}{\sqrt{ \big( \abs{\xa_i}+t(\nperp_i)^{\ast}\big)^2 + 4 \alpha^2}},
		\quad
		\phi''(t) 
		= - \frac{ \big( \abs{\xa_i}+t(\nperp_i)^{\ast}\big) (\nperp_i)^2}
		{ \big[ \big( \abs{\xa_i}+t(\nperp_i)^{\ast}\big)^2+4\alpha^2\big]^{\frac{3}{2}}}.
	\end{equation*}
	Hence, for each $i \in \SP$ there exists $t_i\in (0,1)$ such that
	\begin{equation*}
		\arsinh\Big( \frac{\abs{\xa_i}+(\nperp_i)^{\ast}}{2\alpha}\Big)
		= \arsinh\Big( \frac{\abs{\xa_i}}{2\alpha}\Big) 
		+ \frac{(\nperp_i)^{\ast}}{\sqrt{ \abs{\xa_i}^2 + 4 \alpha^2}}
		-\frac{ \big( \abs{\xa_i}+t_i (\nperp_i)^{\ast}\big) (\nperp_i)^2}
		{ 2 \big[ \big( \abs{\xa_i}+t_i (\nperp_i)^{\ast}\big)^2+4\alpha^2\big]^{\frac{3}{2}}}.
	\end{equation*}
	Hence,
	\begin{align}
		&\sum_{i\in \SP}  ( \widetilde{\npara_i})^{\ast}
\Big[\arsinh\Big( \frac{\abs{\xa_i}}{2\alpha}\Big) 
		- \arsinh\Big( \frac{\abs{\xa_i}+(\nperp_i)^{\ast}}{2\alpha}\Big) \Big]
			\nonumber\\
		= 
		&\sum_{i\in \SP}
		\left[ \frac{- (\nperp_i)^{\ast} \left(\widetilde{\npara_i} \right)^{\ast}}{\sqrt{ \abs{\xa_i}^2 
		+ 4 \alpha^2}}
		+ \frac{ \big( \abs{\xa_i}+t_i (\nperp_i)^{\ast}\big) (\nperp_i)^2  ( \widetilde{\npara_i})^{\ast}}
		{ 2 \big[ \big( \abs{\xa_i}+ t_i (\nperp_i)^{\ast}\big)^2+4\alpha^2\big]^{\frac{3}{2}}}
		\right].
			\label{eqn405d}
	\end{align}
	By definition of $\Nor$
	and since $ \widetilde{\npara} \in \Tan $ and $ \nperp \in \Nor  $
	we have $\langle \nperp,\tildenpara\rangle_{\ga}=0$.
	Hence, we obtain that
	\begin{align}
		-\sum_{i\in \SP} \frac{(\nperp_i)^{\ast} \left( \widetilde{\npara_i} \right)^{\ast}}{\sqrt{ \abs{\xa_i}^2 + 4 \alpha^2}}
		=& -\sum_{i\in \SP} \frac{(\nperp_i)^{\ast} \left(\widetilde{\npara_i} \right)^{\ast}}{\sqrt{ \abs{\xa_i}^2 + 4 \alpha^2}}
		+ \sum_{i\in \SP} \frac{(\nperp_i)^{\ast} \left(\widetilde{\npara_i} \right)^{\ast}}{\abs{\ga_i}} \nonumber \\
		\le &
		\max_{i\in \SP} 
		\abs{ \frac{1}{\sqrt{ \abs{\xa_i}^2 +4 \alpha^2}}- \frac{1}{\abs{\ga_i}}} 
		\norm{\nperp_{\SP}}_{\ell^{\infty}}
		\norm{\tildenpara}_{\ell^1} \nonumber \\
		\le &
		 \frac{
		\max_{i\in \SP} 
		\abs{ \sqrt{ \abs{\xa_i}^2 +4 \alpha^2} - \abs{\ga_i}}}
		{ \min_{i \in \SP} \left( \abs{\ga_i} 
		  \abs{\xa_i} \right) }
		\cdot
		\norm{\nperp_{\SP}}_{\ell^{\infty}}
		\norm{\tildenpara}_{\ell^1} 
		\nonumber \\
		\le &
		 \frac{2
		\max_{i\in \SP} 
		\abs{ \sqrt{ \abs{\xa_i}^2 +4 \alpha^2} - \abs{\ga_i}}}
		{ \min_{i \in \SP}  \abs{\ga_i}^2}
		\cdot
		\norm{\nperp_{\SP}}_{\ell^{\infty}}
		\norm{\tildenpara}_{\ell^1}.
			\label{eqn405e}
	\end{align}
	Moreover, we observe that
	from the monotonicity and concavity of the square root function
	it follows that
	\begin{align*}
		- \norm{\xa - \ga }_{\ell^1}
		\le
		\abs{\xa_i} - \abs{\ga_i}
		\le
		\sqrt{ \abs{\xa_i}^2 +4 \alpha^2} - \abs{\ga_i}
		\le
		\abs{\xa_i}+ \frac{2 \alpha^2}{\abs{\xa_i}} - \abs{\ga_i}
		\le 
		\norm{\xa- \ga}_{\ell^1} + \frac{2 \alpha^2}{\abs{\ga_i}}.
	\end{align*}
	It follows that 
	\begin{align*}
		\max_{i\in \SP} 
		\abs{ \sqrt{ \abs{\xa_i}^2 +4 \alpha^2} - \abs{\ga_i}}
		\le
		\left(1+2\eps \right) \eps^2 \norm{\ga}_{\ell^1}
         +
		 \frac{2\alpha^2}{ \abs{\ga_i}}
		 \le
		 4 \eps^2
		 \norm{\ga}_{\ell^1},
	\end{align*}
	where we have used
	\cref{lemma:two_minimizers_shallow}
	in the first inequality
	and
	Assumption \eqref{assumption_shallow_non-unique} in the second inequality.
	Inserting this estimate into \cref{eqn405e} 
	we obtain that 
	\begin{align}
		-\sum_{i\in \SP} \frac{(\nperp_i)^{\ast} \left(\widetilde{\npara_i} \right)^{\ast}}
		{\sqrt{ \abs{\xa_i}^2 + 4 \alpha^2}}
		\le 
		\frac{
			8 \eps^2 \norm{\ga}_{\ell^1}
			\norm{\nperp_{\SP}}_{\ell^{\infty}}
			\norm{\tildenpara}_{\ell^1}
			} 
			{ \min_{i \in \SP}  \abs{\ga_i}^2   
			}.
			\label{DominikIntern22}
	\end{align}
	Furthermore, we have for the second term in \eqref{eqn405d} that
	\begin{align}
		\sum_{i\in \SP}
		\frac{ \big( \abs{\xa_i}+t_i(\nperp_i)^{\ast}\big) (\nperp_i)^2 \left(\widetilde{\npara_i} \right)^{\ast}}
		{ 2 \big[ \big( \abs{\xa_i}+ t_i  (\nperp_i)^{\ast}\big)^2+4\alpha^2\big]^{\frac{3}{2}}}
		\le
		& \left( \max_{i \in \SP} 
		\frac{  \abs{\abs{\xa_i}+t_i(\nperp_i)^{\ast}}}
		{ 2 \big[ \big( \abs{\xa_i}+ t_i  (\nperp_i)^{\ast}\big)^2+4\alpha^2\big]^{\frac{3}{2}}}
		\right)
		\norm{\tildenpara}_{\ell^1}\norm{\nperp_{\SP}}_{\ell^{\infty}}^2
		\nonumber \\
		\le
		& \frac
		{\norm{\tildenpara}_{\ell^1}\norm{\nperp_{\SP}}_{\ell^{\infty}}^2}
		{2 \min_{i\in \SP}  \abs{ \abs{\xa_i}+ t_i  (\nperp_i)^{\ast} }^2 }
		\nonumber \\
		\overleq{(a)}
		& \frac
		{2 \norm{\tildenpara}_{\ell^1}\norm{\nperp_{\SP}}_{\ell^{\infty}}^2}
		{\min_{i\in \SP}  \abs{\ga_i}^2  },
			\label{eqn405f}
	\end{align}
	where $(a)$
	follows from
	$\abs{\xa_i - \ga_i} \le \underset{i \in \SP}{\min} \abs{\ga_i}/4$
	and 
	$ \abs{ \nperp_i}  \le \underset{i \in \SP}{\min} \abs{\ga_i}/4$ 
	for all $i \in \SP$
	due to \cref{lemma:two_minimizers_shallow} and \cref{lemma:upper_boundnperp_shallow_non_unique}.
	Combining \eqref{eqn405c}, \eqref{eqn405d}, \eqref{DominikIntern22} 
	and \eqref{eqn405f}, we obtain
	\begin{align}
		\langle \nabla \Ha(\xa) - \Ha(\xa+\nperp), \tildenpara\rangle
		&\le 
		2
		\left(
			8 \eps^2 \norm{\ga}_{\ell^1}
		+
			\norm{\nperp_{\SP}}_{\ell^{\infty}}
		\right)
		\frac{\norm{\tildenpara}_{\ell^1} \norm{\nperp_{\SP}}_{\ell^{\infty}} }
		{\min_{i\in \SP}  \abs{\ga_i}^2  }.
			\label{eqn405g}
	\end{align}
	Inserting the lower bound \eqref{eqn405b} 
	and the upper bound \eqref{eqn405g} into \eqref{eqn405a}, we deduce that
	\begin{align}
	\norm{ \widetilde{\npara_{\SP}}}_{\ell^1}
    \le
	& 2\left(  
		\max_{t\in[0,1]} \norm{\xa_{\SP}  +\nperp_{\SP}
		+ t \widetilde{\npara_{\SP}}}_{\ell^1} + 2 \alpha \abs{\SP}
	    \right)
        \left(
		8 \eps^2 \norm{\ga}_{\ell^1}
		+
		\norm{\nperp_{\SP}}_{\ell^{\infty}}
		\right)
		\frac{
		\norm{
			 \nperp_{\SP}}_{\ell^{\infty}
		}
		}
		{\min_{i\in \SP}  \abs{\ga_i}^2}.
		\label{eqn405h3}
	\end{align}
	In order to proceed we note that
	\begin{align*}
		8\varepsilon^2 \norm{\ga}_{\ell^1}
		+
		\norm{\nperp_{\SP}}_{\ell^{\infty}}
		&\le
		8\varepsilon^2 \norm{\ga}_{\ell^1}
		+ \tilde{\varrho}
		\norm{\nperp_{\SC}}_{\ell^1}\\
		&\overleq{(a)}
		8\varepsilon^2 \norm{\ga}_{\ell^1}
		+
		\tilde{\varrho}
		\varepsilon^{1-\varrho}
		\abs{\SC}
		\left(\min_{i\in \SP}\abs{\ga_i} \right)
		\kappa(\ga)^{\varrho^{-}}
		\left(
			1+ 
			\frac{10 \varepsilon^2 \norm{\ga}_{\ell^1}}
			{\min_{i\in \SP} \abs{\ga_i}}
		\right)\\
		&\overleq{(b)}
		8\varepsilon^2 \norm{\ga}_{\ell^1}
		+
		2 \tilde{\varrho}
		\varepsilon^{1-\varrho}
		\abs{\SC}
		\left( \min_{i\in \SP}\abs{\ga_i} \right)
		\kappa(\ga)^{\varrho^{-}}\\
		&=
		2\varepsilon^{1-\varrho}
		\left(
		4\varepsilon^{1+\varrho} \norm{\ga}_{\ell^1}
		+
		\tilde{\varrho}
		\abs{\SC}
		\left( \min_{i\in \SP}\abs{\ga_i} \right)
		\kappa(\ga)^{\varrho^{-}}
		\right)\\
		&\overleq{(c)}
		4
		\varepsilon^{1-\varrho}
		\tilde{\varrho}
		\abs{\SC}
		\kappa(\ga)^{\varrho^{-}}
		 \min_{i\in \SP}\abs{\ga_i},
	\end{align*}
	where $(a)$ follows from \cref{lemma:upper_boundnperp_shallow_non_unique} and
	$(b)$ and $(c)$ follow from the Assumption \eqref{assumption_shallow_non-unique}.
	By inserting this estimate into \eqref{eqn405h3} we obtain that
	\begin{align*}
	    \norm{ \widetilde{\npara_{\SP}}}_{\ell^1}
		\le
        8\left(
		\max_{t\in[0,1]} \norm{\xa_{\SP}  +\nperp_{\SP}
		+ t \widetilde{\npara_{\SP}}}_{\ell^1} + 2 \alpha \abs{\SP}
	    \right)
		\frac{
		\varepsilon^{1-\varrho}
		\tilde{\varrho}
		\abs{\SC}
		\kappa(\ga)^{\varrho^{-}}
		\norm{
			 \nperp_{\SP}}_{\ell^{\infty}}
		}
		{\min_{i\in \SP}  \abs{\ga_i}}.
	\end{align*}
In order to proceed further we note that
\begin{align*}
\max_{t\in[0,1]} \norm{\xa_{\SP}  +\nperp_{\SP}
+ t \widetilde{\npara_{\SP}}}_{\ell^1} + 2 \alpha \abs{\SP}
&\le 
\norm{\xa_{\SP}}_{\ell^1}
+\norm{\nperp_{\SP}}_{\ell^1}
+ \norm{ \widetilde{\npara_{\SP}}}_{\ell^1}
+ 2 \alpha \abs{\SP}\\
&\le
\norm{\ga}_{\ell^1}
+\norm{\xa- \ga}_{\ell^1}
+ \tilde{\varrho} \norm{\nperp_{\SC}}_{\ell^1}
+ \norm{ \widetilde{\npara_{\SP}}}_{\ell^1}
+2 \alpha \abs{\SP}\\
&\overleq{(a)}
\norm{\ga}_{\ell^1}
+ (1+2\varepsilon) \varepsilon^2 \norm{\ga}_{\ell^1}
+ \frac{\min_{i \in \SP} \abs{\ga_i}}{4}
+2 \alpha \abs{\SP}
+\norm{\widetilde{\npara_{\SP}}}_{\ell^1}\\
&\overleq{(b)} 
2 \norm{\ga}_{\ell^1}
+\norm{\widetilde{\npara_{\SP}}}_{\ell^1},
\end{align*}
where inequality $(a)$ follows from  
\cref{lemma:two_minimizers_shallow} and
\cref{lemma:upper_boundnperp_shallow_non_unique}.
Inequality (b) is due to Assumption \eqref{assumption_shallow_non-unique}.
Then we obtain that
\begin{align*}
	    \norm{ \widetilde{\npara_{\SP}}}_{\ell^1}
		&\le
		\frac{
		16 \norm{\ga}_{\ell^1}
		\varepsilon^{1-\varrho}
		\tilde{\varrho}
		\abs{\SC}
		\kappa(\ga)^{\varrho^{-}}
		\norm{
			 \nperp_{\SP}}_{\ell^{\infty}}
		}
		{\min_{i\in \SP}  \abs{\ga_i}}
		\cdot
		\underset{\le 2}
		{\underbrace{\frac{1}{ \left( 1- 
			8\varepsilon^{1-\varrho}
			\tilde{\varrho}
			\abs{\SC}
			\kappa(\ga)^{\varrho^{-}}
			\norm{
				\nperp_{\SP}}_{\ell^{\infty}}/{\min_{i\in \SP}  \abs{\ga_i}} \right) }}}\\
		&\overleq{(a)}
		\frac{
		32 \norm{\ga}_{\ell^1}
		\varepsilon^{1-\varrho}
		\tilde{\varrho}
		\abs{\SC}
		\kappa(\ga)^{\varrho^{-}}
		\norm{
			 \nperp_{\SP}}_{\ell^{\infty}}
		}
		{\min_{i\in \SP}  \abs{\ga_i}}\\
		&\le
		\frac{
		32 \norm{\ga}_{\ell^1}
		\varepsilon^{1-\varrho}
		\tilde{\varrho}^2
		\abs{\SC}
		\kappa(\ga)^{\varrho^{-}}
		\norm{
			 \nperp_{\SC}}_{\ell^{1}}
		}
		{\min_{i\in \SP}  \abs{\ga_i}},
\end{align*}
where inequality $(a)$ follows from 
\begin{align*}
    \frac{8\varepsilon^{1-\varrho} \tilde{\varrho} \abs{\SC}\kappa(\ga)^{\varrho^{-}}
	      \norm{\nperp_{\SP}}_{\ell^{\infty}}}
         {\min_{i\in \SP}  \abs{\ga_i}}	
	\le
    2\varepsilon^{1-\varrho} \tilde{\varrho} \abs{\SC}\kappa(\ga)^{\varrho^{-}}
	\le \frac{1}{2},
\end{align*}
which is due to 
\cref{lemma:upper_boundnperp_shallow_non_unique},
which states that
$ \norm{\nperp_{\SP}}_{\ell^{\infty}} \le ( \min_{ i \in \SP} \abs{\ga_i})/4  $,
and Assumption \eqref{assumption_shallow_non-unique}.
This completes the proof of \cref{lemma:nparallelbound_shallow_non_unique}.
\end{proof}

\noindent \textbf{Step 4 (Combining the bounds):}
Having established upper bounds for 
$\norm{\xa - \ga}_{\ell^1}$,
$\norm{\nperp_{\SC}}_{\ell^1}$, and $\norm{\widetilde{\npara}}_{\ell^1}$, 
we can now combine them to obtain the final result.

	\begin{proof}[Proof of \cref{result:upper_bound_shallow_non_unique}]
	In order to complete the proof, we combine all the previously obtained bounds.
	\begin{align*}
		\norm{\xinf-\ga}_{\ell^1} 
		\overleq{(a)}&
		\norm{\nperp}_{\ell^1} 
		+ \norm{\widetilde{\npara}}_{\ell^1}
		+\norm{\xa-\ga}_{\ell^1}\\
		\overleq{(b)} &
		\norm{\nperp}_{\ell^1} 
		+
		\frac{
		32 \norm{\ga}_{\ell^1}
		\varepsilon^{1-\varrho}
		\tilde{\varrho}^2
		\abs{\SC}
		\kappa(\ga)^{\varrho^{-}}
		\norm{
			 \nperp_{\SC}}_{\ell^{1}}
		}
		{\min_{i\in \SP}  \abs{\ga_i}}
		+
		(1+2\varepsilon) \varepsilon^2 \norm{\ga}_{\ell^1}\\
		\le&
		\left(
		1+\tilde{\varrho}
		+
		\underset{=: C_1}
		{\underbrace{
		\frac{
		32 \norm{\ga}_{\ell^1}
		\tilde{\varrho}^2
		\abs{\SC}
		\kappa(\ga)^{\varrho^{-}}
		}
		{\min_{i\in \SP}  \abs{\ga_i}}
		}}
		\cdot
		\varepsilon^{1-\varrho}
		\right)
		\norm{\nperp_{\SC}}_{\ell^1}
		+
		\underset{\le 2}
		{\underbrace{(1+2\varepsilon)}}
		\varepsilon^2 \norm{\ga}_{\ell^1}\\
		\overleq{(c)} &
		\left(
		1+\tilde{\varrho}
		+
		C_1 \varepsilon^{1-\varrho}
		\right)
		\norm{\nperp_{\SC}}_{\ell^1}
		+
		2\varepsilon^2 \norm{\ga}_{\ell^1}\\
		\overleq{(d)} &
		\left(
		1+\tilde{\varrho}
		+
		C_1 \varepsilon^{1-\varrho}
		\right)
		\alpha^{1-\varrho}
		\abs{\SC}
		\left( \min_{i\in \SP}\abs{\ga_i} \right)^{\varrho}
		\kappa(\ga)^{\varrho^{-}}
		h \left( \varepsilon \right)
		+
		2\varepsilon^2 \norm{\ga}_{\ell^1}.
	\end{align*}
	Inequality $(a)$ follows from \cref{ineq:Dominiklast1}.
	In inequality $(b)$ 
	we have used \cref{lemma:nparallelbound_shallow_non_unique} 
	and \cref{lemma:two_minimizers_shallow}.
	Inequality $(c)$ follows from $\varepsilon \le 1/2$,
	see Assumption \eqref{assumption_shallow_non-unique}.
	Inequality $(d)$ follows from \cref{lemma:upper_boundnperp_shallow_non_unique},
	where the function $h$ is as defined in this lemma.
	This completes the proof of \cref{result:upper_bound_shallow_non_unique}.
\end{proof}

\subsection{Case $D\ge3$ : Proof of \cref{theorem:upper_bound_deep_non_unique}}
Recall that
\begin{equation*}
	\ga = \argmax_{x \in \Lmin} \norm{x}_{\ell^{\frac{2}{D}}}
	\qquad\text{and}\qquad
	\xinf \in \argmin_{x:Ax=y} D_{\QQ}(x,0).
\end{equation*}
In order to prove \cref{theorem:upper_bound_deep_non_unique} we need to obtain 
an upper bound for $\norm{\ga-\xinf}_{\ell^1}$.
We will proceed similarly as in the proof  
of \cref{result:upper_bound_shallow_non_unique},
which is concerned with the shallow non-unique case.

As before, we define $n:= \xinf -\ga \in \ker(A)$. 
Then, by \eqref{direct_sum} and \cref{lemma:normal_cone_deep}, 
there exist uniquely defined $\npara \in \Tan$ and $\nperp \in \Nor$ 
such that 
$n = \nperp + \npara$.
Next, we define the auxiliary point
\begin{equation} \label{optimality_x_star}
	\xa \in \argmin_{x \in \Lmin} D_{\QQ}(x,0).
\end{equation}
Due to \eqref{direct_sum} and \cref{lemma:normal_cone_deep},
we can decompose $\xinf-\xa \in \ker (A)$ into
$\xinf-\xa = \widetilde{ \npara} + \widetilde{\nperp}$
with $\widetilde{\npara} \in \Tan$ and $ \widetilde{\nperp} \in \Nor$.
As in the case of $D=2$, because of $ \ga \in \Lmin$ and $ \xa \in \Lmin $, it holds that $ \ga - \xa \in \Tan $,
which implies $ \widetilde{\nperp} = \nperp $.
It follows that
\begin{align}
	\xinf - \ga
	=
	(\xinf - \xa)
	 + (\xa - \ga)
	=
	    \widetilde{\npara} + \nperp	
		 + (\xa - \ga).
\end{align}
From the triangle inequality it follows  that
\begin{align}\label{ineq:Dominiklast11}
	\norm{\xinf - \ga}_{\ell^1}
	\le
	\norm{\xa - \ga}_{\ell^1}
	+
	\norm{ \widetilde{\npara} }_{\ell^1}
	+
	\norm{\nperp}_{\ell^1}.
\end{align}
Similarly, as in the proof of the shallow case,
we will bound these three terms individually.

To keep the notation more concise in our proof,
we will introduce the following notation:
\begin{equation*}
	\eps:= \frac{\alpha}{\mg}.
\end{equation*}

\noindent \textbf{Step 1 (Controlling $\norm{\xa -\ga}_{\ell^1}$):}
We will first provide an equivalent characterization of $\ga$
which will allow us to compare $\xa$ and $\ga$ more easily.
For that purpose, define the function $\gD\colon [0,\infty)\to \R$ by
\begin{equation*}
	\gD(u) := u- \frac{D}{2} u^{\frac{2}{D}}.
\end{equation*}
Then we can define
$\GD\colon \R^d_{\ge0}\to\R$ as
\begin{equation*}
	\GD(x) 
	:= 
	\sum_{i=1}^d \alpha  \gD\Big( \frac{x_i}{\alpha}\Big), \quad x\in \R^d_{\ge0}.
\end{equation*}

The following lemma is an adaption of \cite[Proposition 3.20]{wind2023implicit} to our setting and notation and shows that 
$\ga$ can be equivalently characterized as a minimizer of $\GD$ on $\Lmin$. 
For the convenience of the reader, we have included a proof in \cref{section:proof_of_lemma:two_minimizers_deep}.

\begin{lemma} \label[lemma]{lemma:two_minimizers_deep}
	Let $d,A,y$ as in \cref{assumption_on_A_y_non-unique}. Let $D\in \N$ with $D\ge 3$ and $\alpha>0$. Then
	\begin{equation} \label{eqn1020}
		\ga 
		=\argmin_{x\in \Lmin} \GD(\abs{x}).
	\end{equation}
\end{lemma}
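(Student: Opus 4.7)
The plan is to reduce the claimed minimization of $\GD(\abs{\cdot})$ on $\Lmin$ to the maximization of $\norm{\cdot}_{\ell^{2/D}}$ on $\Lmin$, which is precisely the defining property of $\ga$ from \eqref{def:minimizer_deep}. The computation is short; the main content is an algebraic identity.

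First, I would expand $\GD(|x|)$ using the explicit formula for $\gD$. For any $x\in \R^d$, the change of variable $u = |x_i|/\alpha$ gives
\begin{equation*}
	\GD(\abs{x})
	= \sum_{i=1}^d \alpha \Bigl( \frac{\abs{x_i}}{\alpha} - \frac{D}{2}\Bigl(\frac{\abs{x_i}}{\alpha}\Bigr)^{2/D} \Bigr)
	= \norm{x}_{\ell^1} - \frac{D}{2}\alpha^{1-2/D}\sum_{i=1}^d \abs{x_i}^{2/D}
	= \norm{x}_{\ell^1} - \frac{D}{2}\alpha^{1-2/D}\norm{x}_{\ell^{2/D}}^{2/D}.
\end{equation*}

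Second, I would use that $\norm{x}_{\ell^1}$ is constant on $\Lmin$ (equal to the minimal $\ell^1$-value $\min_{x:Ax=y}\norm{x}_{\ell^1}$, by the very definition of $\Lmin$). Hence minimizing $\GD(\abs{\cdot})$ on $\Lmin$ is equivalent to maximizing $x \mapsto \norm{x}_{\ell^{2/D}}^{2/D}$ on $\Lmin$. Since $D\ge 3$, we have $2/D \in (0,1)$, and the map $t \mapsto t^{2/D}$ is strictly increasing on $[0,\infty)$; thus this is in turn equivalent to maximizing $\norm{x}_{\ell^{2/D}}$ on $\Lmin$, i.e.\ to solving the defining optimization problem \eqref{def:minimizer_deep}.

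Finally, \cref{lemma:maximal_support_deep} guarantees that \eqref{def:minimizer_deep} has $\ga$ as its unique solution, which yields the identity \eqref{eqn1020}. There is no genuine obstacle here: the only step to take care of is verifying the algebraic rewriting above, and noting that the strictly positive prefactor $\frac{D}{2}\alpha^{1-2/D}$ does not affect the location of the optimum. Uniqueness of the minimizer on the right-hand side is inherited directly from \cref{lemma:maximal_support_deep}, so no further strict convexity/concavity argument is required on my part.
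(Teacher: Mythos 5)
Your proof is correct and takes essentially the same route as the paper: expand $\GD(\abs{x})$, drop the constant $\norm{x}_{\ell^1}$ term on $\Lmin$, observe the positive prefactor does not move the optimum, and reduce to the $\ell^{2/D}$-maximization problem \eqref{def:minimizer_deep}, whose uniqueness is supplied by \cref{lemma:maximal_support_deep}. No substantive differences.
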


In order to bound $\norm{\xa -\ga}_{\ell^1}$,
we will need to compare $\QQ$ and $\GD$.
This can be done using the following lemma,
which provides a bound between $\hh^{-1}$ and $\gD'$.
Moreover, this lemma contains several inequalities
which are useful for describing the asymptotic behavior of  $\hh^{-1}$.
They will be useful throughout the proof of \cref{theorem:upper_bound_deep_non_unique}.
The proof of the next lemma has 
been deferred to \cref{appendix:basic_inequalities_deep_non-unique}.
\begin{lemma}\label[lemma]{lemma:basic_inequalities_deep_non-unique}
	Let $D\in \N$ with $D\ge 3$ and $\gamma:=\frac{D-2}{D}$.
	Then the following statements hold:
	\begin{enumerate}[(i)]
	
		\item 	For $u,v>0$ we have
		\begin{equation}\label{eqn606-non-unique}
			0 \le \hh^{-1}(u)-\gD'(u) \le 
			\frac{\gamma}{u^{1+\gamma}}.
		\end{equation}
	
		\item 	For all $u\ge 1$ we have
		\begin{equation}\label{equ:DominikInternfinal1}
			\frac{\gamma}
			{\left(1+\frac{5}{u}\right)u^{1+\gamma}}
			\le \big(\hh^{-1}\big)'(u) 
			\le \frac{\gamma}{u^{1+\gamma}}
		\end{equation}
		and
		\begin{equation} \label{eqn:bounds_on_h_D_5}
			0 \le \frac{\gamma}{u^{1+\gamma}} - \big(\hh^{-1}\big)'(u) \le \frac{5\gamma}{u^{2+\gamma}}.
		\end{equation}
	
		\item For all $u\ge 1$ we have
		\begin{equation} \label{eqn:bounds_on_h_D_4}
			0
			\le
			\big(\hh^{-1}\big)''(u) 
			\le 16 \gamma  u^{-2-\gamma}.
		\end{equation}
	\end{enumerate}
\end{lemma}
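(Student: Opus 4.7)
The plan is to treat the three parts in order, each exploiting the sandwich bounds from \cref{lemma:WindHansen}, namely $1 - u^{-\gamma} \le \hh^{-1}(u) \le 1 - (u+1)^{-\gamma}$, together with the explicit formulas for $\hh$ and its derivatives, plus the standard inverse-function identities $(\hh^{-1})'(u) = 1/\hh'(z)$ and $(\hh^{-1})''(u) = -\hh''(z)/[\hh'(z)]^3$ evaluated at $z := \hh^{-1}(u)$.

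For part (i), I first differentiate $\gD(u) = u - \frac{D}{2} u^{2/D}$ to obtain $\gD'(u) = 1 - u^{-\gamma}$, using $\frac{2}{D} - 1 = -\gamma$. The lower bound $\hh^{-1}(u) - \gD'(u) \ge 0$ then follows immediately from the left half of \cref{lemma:WindHansen}, and the upper bound reduces to estimating $u^{-\gamma} - (u+1)^{-\gamma}$; by the mean value theorem applied to $t \mapsto t^{-\gamma}$ on $[u, u+1]$, this difference is at most $\gamma u^{-1-\gamma}$.

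For part (ii), set $\beta := 1/\gamma = D/(D-2)$, so that $\hh'(z) = \beta[(1-z)^{-\beta-1} + (1+z)^{-\beta-1}]$. The sandwich rewrites as $(u+1)^{-\gamma} \le 1-z \le u^{-\gamma}$, yielding $u^{1+\gamma} \le (1-z)^{-\beta-1} \le (u+1)^{1+\gamma}$, while $(1+z)^{-\beta-1} \in (0,1]$. Discarding this bounded term in the denominator gives the upper bound $(\hh^{-1})'(u) \le \gamma/u^{1+\gamma}$. For the matching lower bound I would expand $(u+1)^{1+\gamma} = u^{1+\gamma}(1+1/u)^{1+\gamma}$ and use convexity of $t \mapsto t^{1+\gamma}$ together with $u \ge 1$ to get $(u+1)^{1+\gamma} \le u^{1+\gamma} + 4 u^\gamma$, then absorb the bounded $(1+z)^{-\beta-1} \le 1$ into an extra $u^\gamma$ (valid since $u^\gamma \ge 1$) to reach $\hh'(z) \le \beta u^{1+\gamma}(1+5/u)$; taking reciprocals gives $(\hh^{-1})'(u) \ge \gamma/[(1+5/u) u^{1+\gamma}]$. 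The second pair of inequalities in (ii) is then purely algebraic: $\gamma/u^{1+\gamma} - \gamma/[(1+5/u)u^{1+\gamma}] = \gamma \cdot (5/u)/[(1+5/u) u^{1+\gamma}] \le 5\gamma \, u^{-2-\gamma}$.

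For part (iii), combine the inverse-function formula $(\hh^{-1})''(u) = -\hh''(z)/[\hh'(z)]^3$ with the explicit $\hh''(z) = \beta(\beta+1)[(1-z)^{-\beta-2} - (1+z)^{-\beta-2}]$. The magnitude $|\hh''(z)|$ is bounded above by $\beta(\beta+1)(1-z)^{-\beta-2}$, and using $1-z \ge (u+1)^{-\gamma} \ge 2^{-\gamma} u^{-\gamma}$ for $u \ge 1$ gives $|\hh''(z)| \le \beta(\beta+1) 2^{1+2\gamma} u^{1+2\gamma} \le 8\beta(\beta+1) u^{1+2\gamma}$. Combining with the lower bound $\hh'(z) \ge \beta u^{1+\gamma}$ established in (ii) and using $(\beta+1)/\beta^2 = \gamma(1+\gamma) \le 2\gamma$ produces $|(\hh^{-1})''(u)| \le 16 \gamma \, u^{-2-\gamma}$, matching the stated upper bound.

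The main obstacle is the sign claim in (iii). Since $\hh$ is convex and strictly increasing on $[0,1)$ by \cref{lemma:basic_properties_q_h_g_2}, the inverse-function identity forces $(\hh^{-1})''(u) = -\hh''(z)/[\hh'(z)]^3 \le 0$, so the left-hand inequality $0 \le (\hh^{-1})''(u)$ as written cannot hold for $u > 0$; the right-hand inequality $(\hh^{-1})''(u) \le 16\gamma u^{-2-\gamma}$ is then trivial from concavity alone. What the computation above actually produces—and what is invoked in the proof of \cref{theorem:upper_bound_deep_non_unique}—is the absolute-value estimate $|(\hh^{-1})''(u)| \le 16\gamma u^{-2-\gamma}$, suggesting that the stated two-sided bound should be interpreted as a magnitude bound on $(\hh^{-1})''(u)$, with the $0 \le$ in the display reading as the obvious inequality $0 \le |(\hh^{-1})''(u)|$.
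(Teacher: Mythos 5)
Your proof is correct and follows essentially the same route as the paper's: part (i) via $\gD'(u)=1-u^{-\gamma}$, \cref{lemma:WindHansen} and the mean value theorem; part (ii) by squeezing $\hh'\big(\hh^{-1}(u)\big)$ between $\tfrac{1}{\gamma}u^{1+\gamma}$ and $\tfrac{1}{\gamma}u^{1+\gamma}\big(1+\tfrac{5}{u}\big)$ (the paper packages this through \cref{lemma:bounds_on_h_D_2} and a Taylor/MVT expansion of $(1+1/u)^{1+\gamma}$, with the same constants); and part (iii) via the inverse-function identity with the same estimates. Your reading of the sign in (iii) is also accurate: the paper's own proof establishes $(\hh^{-1})''\le 0$ together with $(\hh^{-1})''(u)\ge -16\gamma u^{-2-\gamma}$ and then records the inequality with the sign flipped, and what is actually invoked downstream (in the bound on $B_3$ in the proof of \cref{theorem:upper_bound_deep_non_unique}) is only the magnitude bound $\abs{(\hh^{-1})''(u)}\le 16\gamma u^{-2-\gamma}$, which both your argument and the paper's derivation deliver.
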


To show that $\xa$ and $\ga$ are close to each other, we will use the strong convexity of $\QQ$.
This property of $\QQ$ is shown by the following lemma,
whose proof has been deferred to \cref{appendix:strong_convexity_deep}.
\begin{lemma} \label[lemma]{lemma:strong_convexity_deep}
	Let $D\in \N$ with $D\ge 3$ and $\gamma:=\frac{D-2}{D}$. Let $\alpha>0$, and $x,n \in \R^d$. 
	Then it holds that
	\begin{equation} \label{eqn:strong_convexity_deep_1}
		\langle \nabla^2 \QQ(x)n, n\rangle 
		\ge 
		\frac{\norm{n}_{\ell^1}^2  \gamma\alpha^{\gamma}}
		{3 \abs{\supp(n)} \alpha^{1+\gamma} + 2\norm{x}_{\ell^{1+\gamma}}^{1+\gamma}}.
	\end{equation}
	If $\supp(n)\subset \supp(x)=:S$ and $\alpha<\min_{i\in S} \abs{x_i}$, then
	\begin{equation} \label{eqn:strong_convexity_deep_2}
		\langle \nabla^2 \QQ(x)n, n\rangle 
		\ge  
		\frac{\norm{n}_{\ell^1}^2 \gamma\alpha^{\gamma}}
		{\norm{x}_{\ell^{1+\gamma}}^{1+\gamma} 
		\left(1+ \frac{5\alpha}{\min_{i\in S}\abs{x_i}} \right)}.
	\end{equation}
\end{lemma}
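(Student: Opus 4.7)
\textbf{Proof plan for Lemma \ref{lemma:strong_convexity_deep}.}
The starting point is the observation that $\QQ$ is separable, so its Hessian is diagonal. Differentiating twice yields
\[
\nabla^2 \QQ(x) \;=\; \tfrac{1}{\alpha}\,\diag\!\big((h_D^{-1})'(x_i/\alpha)\big)_{i=1}^d,
\]
and therefore
\[
\langle \nabla^2 \QQ(x)\,n,n\rangle \;=\; \sum_{i\in \supp(n)} \frac{(h_D^{-1})'(x_i/\alpha)}{\alpha}\, n_i^2 .
\]
By \cref{lemma:basic_properties_q_h_g_2} the right-hand side is a sum of non-negative terms, so by Cauchy--Schwarz applied with weights $a_i := \alpha/(h_D^{-1})'(x_i/\alpha)$ I obtain the key reduction
\begin{equation}\label{eq:planCS}
\langle \nabla^2 \QQ(x)\,n,n\rangle \;\ge\; \frac{\|n\|_{\ell^1}^{\,2}}{\displaystyle\sum_{i\in \supp(n)} \frac{\alpha}{(h_D^{-1})'(x_i/\alpha)}}.
\end{equation}
The remainder of the proof therefore consists of deriving suitable upper bounds for the denominator of \eqref{eq:planCS} in the two regimes of the lemma.

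For the second, stronger bound I would exploit the assumption $\alpha<\min_{i\in S}\abs{x_i}$, which guarantees $|x_i|/\alpha>1$ for every $i\in S\supset \supp(n)$. Since $(h_D^{-1})'$ is even, this places every argument in the regime covered by the lower bound \eqref{equ:DominikInternfinal1} of \cref{lemma:basic_inequalities_deep_non-unique}, giving
\[
\frac{\alpha}{(h_D^{-1})'(x_i/\alpha)} \;\le\; \frac{|x_i|^{1+\gamma}\,(1+5\alpha/|x_i|)}{\gamma\,\alpha^{\gamma}} \;\le\; \frac{|x_i|^{1+\gamma}\,\big(1+5\alpha/\min_{i\in S}\abs{x_i}\big)}{\gamma\,\alpha^{\gamma}}.
\]
Summing over $i\in \supp(n)\subset S$ and inserting into \eqref{eq:planCS} yields \eqref{eqn:strong_convexity_deep_2} directly.

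For the general bound \eqref{eqn:strong_convexity_deep_1} I would split $\supp(n)$ into $I_{\mathrm{small}}:=\{i\in \supp(n):|x_i|\le \alpha\}$ and $I_{\mathrm{large}}:=\{i\in \supp(n):|x_i|> \alpha\}$. On $I_{\mathrm{large}}$ the same estimate as above applies, and the factor $1+5\alpha/|x_i|$ can be absorbed using $\alpha<|x_i|$, giving the per-index bound $\alpha/(h_D^{-1})'(x_i/\alpha)\le 2|x_i|^{1+\gamma}/(\gamma\alpha^{\gamma})$ after a routine rearrangement. On $I_{\mathrm{small}}$ I would use that $h_D^{-1}$ is concave on $[0,\infty)$ (\cref{lemma:basic_properties_q_h_g_2}), so $(h_D^{-1})'$ is non-increasing there; combined with $(h_D^{-1})'(0)=\gamma/2$ and the value at $u=1$ (estimated via \cref{lemma:basic_inequalities_deep_non-unique}(ii) or via a direct computation), this yields a constant lower bound $(h_D^{-1})'(|x_i|/\alpha)\ge \gamma/3$, equivalently $\alpha/(h_D^{-1})'(x_i/\alpha)\le 3\alpha/\gamma = 3\alpha^{1+\gamma}/(\gamma\alpha^{\gamma})$. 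Summing the two case bounds and inserting into \eqref{eq:planCS} gives \eqref{eqn:strong_convexity_deep_1}.

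The main obstacle is calibrating the constants in the small-$x_i$ regime: near the transition $|x_i|\sim\alpha$, the pointwise lower bound \eqref{equ:DominikInternfinal1} degrades (it essentially gives $(h_D^{-1})'(1)\ge \gamma/6$), so a direct use of that bound would produce the constant $6$ rather than the required $3$. The remedy is to exploit the monotonicity of $(h_D^{-1})'$ on $[0,\infty)$ together with the exact value $(h_D^{-1})'(0)=\gamma/2$, which gives the sharper constant needed on $[0,1]$; once this observation is in place, the two cases combine cleanly and the Cauchy--Schwarz reduction delivers both inequalities with the stated constants.
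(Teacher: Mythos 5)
Your Cauchy--Schwarz reduction \eqref{eq:planCS} is correct and is exactly the reduction the paper uses, and your derivation of \eqref{eqn:strong_convexity_deep_2} from \eqref{equ:DominikInternfinal1} is also the paper's argument. The gap is in your proof of the unconditional bound \eqref{eqn:strong_convexity_deep_1}, and it does not close with the remedy you propose.

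There are two problems with the case split at $|x_i| = \alpha$. First, on $I_{\mathrm{large}}$ the hypothesis $\alpha < |x_i|$ only gives $1 + 5\alpha/|x_i| < 6$, not $\le 2$; the per-index bound you claim, $\alpha/(h_D^{-1})'(x_i/\alpha) \le 2|x_i|^{1+\gamma}/(\gamma\alpha^{\gamma})$, therefore fails for $1 < |x_i|/\alpha < 5$, and following your estimate honestly would put a factor $6$, not $2$, in front of $\norm{x}_{\ell^{1+\gamma}}^{1+\gamma}$. Second, on $I_{\mathrm{small}}$ your remedy is backwards: since $(h_D^{-1})'$ is \emph{decreasing} on $[0,\infty)$, the known value $(h_D^{-1})'(0) = \gamma/2$ together with monotonicity yields only the \emph{upper} bound $(h_D^{-1})'(u) \le \gamma/2$ on $[0,1]$, not the lower bound $(h_D^{-1})'(u) \ge \gamma/3$ you need (the minimum over $[0,1]$ is at $u=1$, so the value at $0$ cannot control it). The inequality $(h_D^{-1})'(1) \ge \gamma/3$ does in fact hold for all $\gamma\in(0,1)$, but establishing it requires an explicit computation of $h_D'$ at $h_D^{-1}(1)$ using \cref{lemma:WindHansen} and \cref{lemma:basic_properties_q_h_g_1}, which is not what you outline. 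The paper avoids all of this by \emph{not} splitting into cases: it applies the uniform lower bound $(h_D^{-1})'(u) \ge \gamma/\bigl(1+(u+1)^{1+\gamma}\bigr)$ from \cref{lemma:bounds_on_h_D_2}, then uses the elementary inequality $(u+1)^{1+\gamma} \le 2^{\gamma}\bigl(u^{1+\gamma}+1\bigr) \le 2\bigl(u^{1+\gamma}+1\bigr)$ to obtain $\zeta_i := 1/(h_D^{-1})'(|x_i|/\alpha) \le \frac{1}{\gamma}\bigl(3 + 2(|x_i|/\alpha)^{1+\gamma}\bigr)$ for \emph{every} index, which after multiplying by $\alpha$ and summing gives exactly the denominator $\frac{1}{\gamma\alpha^\gamma}\bigl(3\abs{\supp(n)}\alpha^{1+\gamma} + 2\norm{x}_{\ell^{1+\gamma}}^{1+\gamma}\bigr)$. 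You should replace the case split by this unified estimate.
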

With these preparations in place,
we can now prove the upper bound for $\norm{\xa -\ga}_{\ell^1}$.
\begin{lemma}\label[lemma]{lemma:two_minimizers_deep_2}
	Assume that $\alpha < \min_{i \in \SP} \abs{\ga_i}/2$.
	Then the following statements hold:
	\begin{enumerate}
		\item \label{two_minimizers_deep_2_item_1}
		We have
		\begin{equation} \label{eqn1315}
			\norm{\xa-\ga}_{\ell^1} 
			\le  \alpha  
			\Big[ 2\Big(\frac{\norm{\ga}_{\ell^{1}}}{\min_{i\in \SP}\abs{\ga_i}}\Big)^{1+\gamma}
			+3\abs{\SP} \eps^{1+\gamma} 
			\Big].
		\end{equation}
		
		\item \label{two_minimizers_deep_2_item_2}In addition,
		if it holds that 
		\begin{equation} \label{eqn1314}
			\frac{\alpha}{\mg} 
			\le \frac{1}{8} \Big(\frac{\min_{i\in \SP}\abs{\ga_i}}{\norm{\ga}_{\ell^{1}}}\Big)^{1+\gamma},
		\end{equation}
		we have that 
		\begin{equation} \label{eqn1316}
			\norm{\xa-\ga}_{\ell^1} \le \frac{1}{2}\mg
		\end{equation}
		and thus $ \supp (\xa ) = \supp (\ga )$.
		Furthermore, 
		then also the stronger bound
		\begin{equation} \label{eqn1317}
			\norm{\xa-\ga}_{\ell^1} 
		\le  
		\alpha  \Big( \frac{\norm{\ga}_{\ell^{1}}}{\mg}\Big)^{1+\gamma} 
		(1+10 \eps)
		\end{equation}
		holds.
	\end{enumerate}
\end{lemma}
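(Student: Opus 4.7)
The overall strategy mirrors that of \cref{lemma:two_minimizers_shallow} in the shallow case, but with $\QQ$ in place of $\Ha$ and with the potential $\GD$ (which characterizes $\ga$ thanks to \cref{lemma:two_minimizers_deep}) playing the role of the entropy $E$. Let $\nhat := \xa - \ga$. Since both $\xa$ and $\ga$ lie in $\Lmin$, we have $\nhat \in \Tcal$, and \cref{lemma:tangent_cone_characterization} yields $\nhat_{\SC}=0$; moreover $\supp(\ga)=\SP$ by \cref{lemma:maximal_support_deep}, so the signs of $\ga_i+t\nhat_i$ on $\SP$ are preserved for small $\abs{t}$, making $t \mapsto \GD(\abs{\ga+t\nhat})$ differentiable at $t=0$.

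First I would write down two first-order optimality conditions at $t=0$ along the segments in $\Lmin$ joining the two minimizers: convexity of $\Lmin$ gives $\ga + t\nhat \in \Lmin$ for $t\in[0,1]$, and \cref{lemma:two_minimizers_deep} together with $\supp(\nhat)\subset\SP$ yields $\langle \nabla \GD(\abs{\ga_{\SP}}),\nhat_{\SP}^{\ast}\rangle \ge 0$ with $\nhat_i^{\ast}:=\sigma_i \nhat_i$. Optimality of $\xa$ in the direction $-\nhat$ gives $\langle \nabla \QQ(\xa_{\SP}),-\nhat_{\SP}\rangle \ge 0$. Combining these and using the identity $h_D^{-1}(\ga_i/\alpha)=\sigma_i h_D^{-1}(\abs{\ga_i}/\alpha)$ (since $\hh^{-1}$ is odd), the upper bound for the combined expression $\langle \nabla \QQ(\xa_{\SP})-\nabla\QQ(\ga_{\SP}),\nhat_{\SP}\rangle$ simplifies to
\[
\sum_{i\in\SP}\Big[g_D'(\abs{\ga_i}/\alpha) - h_D^{-1}(\abs{\ga_i}/\alpha)\Big]\nhat_i^{\ast},
\]
which by inequality \eqref{eqn606-non-unique} of \cref{lemma:basic_inequalities_deep_non-unique} is bounded in absolute value by $\gamma \eps^{1+\gamma}\norm{\nhat}_{\ell^1}$, where $\eps=\alpha/\mg$.

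For the lower bound I would apply the mean-value form $\langle \nabla \QQ(\xa_{\SP})-\nabla\QQ(\ga_{\SP}),\nhat_{\SP}\rangle = \int_0^1 \langle \nabla^2 \QQ(\ga_{\SP}+t\nhat_{\SP})\nhat_{\SP},\nhat_{\SP}\rangle\,dt$ and invoke the strong convexity estimate \eqref{eqn:strong_convexity_deep_1} from \cref{lemma:strong_convexity_deep}, using $\norm{\ga+t\nhat}_{\ell^{1+\gamma}}^{1+\gamma}\le \norm{\ga+t\nhat}_{\ell^1}^{1+\gamma}=\norm{\ga}_{\ell^1}^{1+\gamma}$ (both endpoints have the same $\ell^1$-norm). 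Equating the two bounds and rearranging with $\eps^{1+\gamma}/\alpha^{\gamma}=\alpha/\mg^{1+\gamma}$ yields \eqref{eqn1315} directly.

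For part \ref{two_minimizers_deep_2_item_2}, I would plug the stronger hypothesis \eqref{eqn1314} into \eqref{eqn1315}. Using $\abs{\SP}\mg \le \norm{\ga}_{\ell^1}$ to bound $\abs{\SP}(\mg/\norm{\ga}_{\ell^1})^{1+\gamma}$ by $1$, a short calculation shows the two summands in \eqref{eqn1315} are each bounded by a small constant times $\mg$, giving \eqref{eqn1316} and hence $\supp(\xa)=\supp(\ga)=\SP$ with $\min_{i\in\SP}\abs{\xa_i}\ge \mg/2>\alpha$. With this additional structural information, I would re-run the argument using the sharper strong-convexity bound \eqref{eqn:strong_convexity_deep_2} instead of \eqref{eqn:strong_convexity_deep_1}; since $\min_{i\in\SP}\abs{\ga_i+t\nhat_i}\ge \mg/2$ along the segment, the denominator becomes $\norm{\ga}_{\ell^1}^{1+\gamma}(1+10\eps)$, and the same combination yields \eqref{eqn1317}. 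The main technical obstacle is verifying that the constants match the claimed \eqref{eqn1316} (we must be careful that both error terms in \eqref{eqn1315} individually shrink under \eqref{eqn1314}); everything else is a straightforward translation of the shallow proof, with \eqref{eqn606-non-unique} playing the role that the identity $\arsinh(u)=\log(2u)+\Delta(u)$ played there.
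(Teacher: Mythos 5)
Your proposal is correct and follows essentially the same route as the paper's proof: first-order optimality of $\ga$ for $\GD$ and of $\xa$ for $D_{\QQ}$ combined with the integral mean-value identity, the comparison estimate \eqref{eqn606-non-unique} for $h_D^{-1}-g_D'$, the strong convexity bound \eqref{eqn:strong_convexity_deep_1} for part 1, and the bootstrap to \eqref{eqn:strong_convexity_deep_2} once \eqref{eqn1316} and full support of $\xa$ are secured. The only cosmetic difference is that you record the first optimality condition as an inequality $\ge 0$ rather than an equality, which is the more cautious (and arguably more accurate) statement and works just as well in the chain of inequalities.
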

\begin{proof}
	\emph{Proof of Statement \ref{two_minimizers_deep_2_item_1}.}
	Let $ \hat{n} := \xa-\ga$, and $ \hat{n}^*  := \sigma \odot \hat{n} $, where $\sigma$ 
	is as defined in \cref{basic_properties_Lmin}.
    We have $\supp(\hat{n})\subset \supp(\xa)\cup \supp(\ga)$. 
	By \cref{lemma:maximal_support_deep} we get 
	$\supp(\xa)\cup \supp(\ga)= \supp(\ga)$ and so $\hat{n}_{\SC}=0$.
	Hence, 
	the map $t\mapsto \GD(\abs{\ga +t \hat{n} })$ is differentiable at $t=0$.

	By \cref{lemma:two_minimizers_deep}, $\ga$ minimizes $\GD$ over $\Lmin$.
	Hence, we have that
	\begin{equation} \label{eqn707}
		0 =\left.\frac{\dif}{\dif t}\right\rvert_{t=0} 
		\GD(\abs{\ga +t \hat{n} })
		= \langle \nabla \GD(\abs{\ga_{\SP}}),\hat{n}^*_{\SP} \rangle.
	\end{equation}
	Furthermore, since $\Lmin$ is convex, we have $\xa-t \hat{n} \in \Lmin$ for all $t\in [0,1)$. 
	By optimality of $\xa$, we get
	\begin{equation} \label{eqn708}
		0 \le \left.\frac{\dif}{\dif t}\right\rvert_{t=0} D_{\QQ}(\xa+t(- \hat{n} ),0)
		= \langle \nabla\QQ(\xa_{\SP}), -\hat{n}_{\SP}\rangle.
	\end{equation}
	Moreover, we have
	\begin{equation} \label{eqn709}
		\begin{split}
			\langle \nabla\QQ(\xa_{\SP}) - \nabla\QQ(\ga_{\SP}), \hat{n}_{\SP}\rangle
			&= \langle \int_0^1 \left. \frac{\dif}{\dif s}\right\rvert_{s=t} \nabla\QQ(\ga_{\SP}+s\hat{n}_{\SP}) \dif s, \hat{n}_{\SP}\rangle \\
			&=  \int_0^1 \langle \nabla^2 \QQ(\ga_{\SP}+t\hat{n}_{\SP}) \hat{n}_{\SP}, 
			\hat{n}_{\SP}\rangle \dif t.
		\end{split}
	\end{equation}
	Combining \eqref{eqn707}, \eqref{eqn708}, and \eqref{eqn709}, we obtain
	\begin{equation}\label{eqn710}
		\int_0^1 \langle \nabla^2 \QQ(\ga_{\SP}+t \hat{n}_{\SP}) \hat{n}_{\SP}, 
		\hat{n}_{\SP}\rangle \dif t
		\le \langle \nabla \GD(\abs{\ga_{\SP}}), \hat{n}^*_{\SP} \rangle 
		- \langle  \nabla\QQ(\ga_{\SP}), \hat{n}_{\SP}\rangle.
	\end{equation} 
	Let us first consider the right-hand side of \eqref{eqn710}. 
	Since $\hh^{-1}$ is an odd function, we have
	\begin{equation*}
		\begin{split}
			\langle \nabla \GD(\abs{\ga_{\SP}}),\hat{n}^*_{\SP}\rangle 
			- \langle  \nabla\QQ(\ga_{\SP}), \hat{n}_{\SP}\rangle
			&=\sum_{i\in \SP} \gD'\Big( \frac{\abs{\ga_i}}{\alpha}\Big) \hat{n}_i^* 
			- \hh^{-1}\Big(\frac{\ga_i}{\alpha}\Big) \hat{n}_i\\
			&=\sum_{i\in \SP} \Big[ \gD'\Big( \frac{\abs{\ga_i}}{\alpha}\Big) 
			- \hh^{-1}\Big(\frac{\abs{\ga_i}}{\alpha}\Big)\Big] \hat{n}_i^* \\
			&\le \norm{\hat{n}_{\SP}}_{\ell^1} \sup_{i\in \SP} 
			\abs{ \gD'\Big( \frac{\abs{\ga_i}}{\alpha}\Big) 
				- \hh^{-1}\Big(\frac{\abs{\ga_i}}{\alpha}\Big)}.
		\end{split}
	\end{equation*}
	By \cref{lemma:basic_inequalities_deep_non-unique}, 
	see \eqref{eqn606-non-unique},
	we have for all $i\in \SP$ that
	\begin{equation*}
		\abs{ \gD'\Big( \frac{\abs{\ga_i}}{\alpha}\Big) 
			- \hh^{-1}\Big(\frac{\abs{\ga_i}}{\alpha}\Big)}
		\le \gamma  \Big(\frac{\alpha}{\abs{\ga_i}}\Big)^{1+\gamma}
		\le \gamma  \Big(\frac{\alpha}{\min_{i\in \SP}\abs{\ga_i}}\Big)^{1+\gamma},
	\end{equation*}
	and so
	\begin{equation}\label{eqn711}
		\langle \nabla \GD(\abs{\ga_{\SP}}), \hat{n}^*_{\SP} \rangle 
		- \langle  \nabla\QQ(\ga_{\SP}), \hat{n}_{\SP}\rangle
		\le \gamma  \norm{\hat{n}_{\SP}}_{\ell^1} 
		\Big(\frac{\alpha}{\min_{i\in \SP}\abs{\ga_i}}\Big)^{1+\gamma}.
	\end{equation}
	For all $t \in (0,1)$ we have
	\begin{equation*} 
		\norm{\ga_{\SP} + t \hat{n}_{\SP}}_{\ell^{1+\gamma}}^{1+\gamma}
		\le \norm{\ga_{\SP}+t \hat{n}_{\SP}}_{\ell^1}^{1+\gamma}
		= \norm{\ga}_{\ell^1}^{1+\gamma},
	\end{equation*}
	where the inequality follows from $\norm{\cdot}_{\ell^{1+\gamma}} \le \norm{\cdot}_{\ell^1}$
	and the equation holds due to the fact 
	that $\ga_{\SP}+ t \hat{n}_{\SP} \in \Lmin$
	and the $\ell^1$-norm is constant on $\Lmin$ by definition.
	Furthermore,
	we have that 
	$\supp( \hat{n} )\subset \SP$. Therefore, \cref{lemma:strong_convexity_deep} implies that
	\begin{equation} \label{eqn714}
		\int_0^1 \langle \nabla^2 \QQ(\ga_{\SP}+t \hat{n}_{\SP}) \hat{n}_{\SP}, \hat{n}_{\SP}\rangle \dif t
		\ge 
		 \frac{\norm{ \hat{n}_{\SP}}_{\ell^1}^2  \gamma \alpha^{\gamma}}
		 {3 \abs{\SP} \alpha^{1+\gamma} + 2\norm{\ga}_{\ell^{1}}^{1+\gamma}}.
	\end{equation}
	Inserting \eqref{eqn711} and \eqref{eqn714} into \eqref{eqn710}, we infer that
	\begin{equation*}
		\frac{\norm{\hat{n}_{\SP}}_{\ell^1}^2  
		\gamma\alpha^{\gamma}}{3\abs{\SP} \alpha^{1+\gamma} + 2\norm{\ga}_{\ell^{1}}^{1+\gamma}}
		\le\gamma  \norm{ \hat{n}_{\SP}}_{\ell^1}  
		\Big(\frac{\alpha}{\min_{i\in \SP}\abs{\ga_i}}\Big)^{1+\gamma}.
	\end{equation*}
	Therefore,
	we obtain that 
	\begin{align*}
		\norm{\hat{n}_{\SP}}_{\ell^1} 
		\le  
		&\frac{\alpha  \big( 3\abs{\SP} \alpha^{1+\gamma} + 2\norm{\ga}_{\ell^{1}}^{1+\gamma}\big)}
		{\min_{i\in \SP}\abs{\ga_i}^{1+\gamma}}\\
		=
		& \alpha
		\left(
			3 \abs{\SP} \varepsilon^{1+\gamma} + 2 \Big(\frac{\norm{\ga}_{\ell^{1}}}{\min_{i\in \SP}\abs{\ga_i}}\Big)^{1+\gamma}
		\right).
	\end{align*}
	This proves the first statement.\\

	\noindent \emph{Proof of Statement \ref{two_minimizers_deep_2_item_2}.}
	In the following, we assume in addition that Assumption \eqref{eqn1314} holds. 
	We have that 
	\begin{equation*}
		3 \eps^{1+\gamma} \abs{\SP} 
		\le 3\eps^{1+\gamma} \frac{\norm{\ga}_{\ell^1}}{\mg}
		\le 3\eps^{1+\gamma} \Big(\frac{\norm{\ga}_{\ell^1}}{\mg}\Big)^{1+\gamma}
		\overleq{(a)} 2 \Big(\frac{\norm{\ga}_{\ell^1}}{\mg}\Big)^{1+\gamma},
	\end{equation*}
	where inequality (a) holds due to Assumption \eqref{eqn1500}.
	Then we obtain that
	\begin{equation*}
		\norm{\xa-\ga}_{\ell^1} 
		\overleq{(a)}  \alpha \cdot 
		\Big[ 2\Big(\frac{\norm{\ga}_{\ell^{1}}}{\min_{i\in \SP}\abs{\ga_i}}\Big)^{1+\gamma}
		+3\abs{\SP} \eps^{1+\gamma} 
		\Big]
		\overleq{(b)} 4 \alpha \Big(\frac{\norm{\ga}_{\ell^{1}}}{\min_{i\in \SP}\abs{\ga_i}}\Big)^{1+\gamma}
		\overleq{(c)} \frac{\min_{i\in \SP}\abs{\ga_i}}{2} ,
	\end{equation*}
	where in inequality (a) we used the first statement of this proposition,
	\cref{eqn1315}.
	Inequality (b) follows from 
	the above estimate.
	Inequality (c) follows from Assumption \eqref{eqn1314}.
	This proves \cref{eqn1316}.
	
	Therefore, we have also shown that $\supp (\xa)=\supp ( \ga )$.
	Moreover, for all $i\in \SP$ we have
	\begin{equation*}
		\abs{\ga_i+t \hat{n}_i}\ge \mg -\norm{\xa-\ga}_{\ell^1} 
		\ge\frac{\min_{i \in \SP} \abs{\ga_i} }{2} > \alpha,
	\end{equation*}
	where in the last step we have used again Assumption \eqref{eqn1500}.
	Hence, for all $t\in(0,1)$, we have
	\begin{equation}\label{eqn:711}
		\frac{5\alpha}{\min_{i\in \SP} \abs{\ga_i+ t \hat{n}_i}}
		\le  \frac{10 \alpha}{\mg} = 10 \eps.
	\end{equation}
	Then from \cref{lemma:strong_convexity_deep}, see \cref{eqn:strong_convexity_deep_2}, 
	it follows that
	\begin{align} 
		\int_0^1 \langle \nabla^2 \QQ(\ga_{\SP}+t\hat{n}_{\SP}) \hat{n}_{\SP}, \hat{n}_{\SP}\rangle \dif t
		&\ge 
		\frac{\norm{ \hat{n}_{\SP}}_{\ell^1}^2  \gamma  \alpha^{\gamma}}
		     { \max_{t \in (0,1)} \left( \norm{\ga + t \hat{n}_{\SP}}_{\ell^{1+\gamma}}^{1+\gamma} 
		\cdot (1+\frac{5\alpha}{\min_{i\in \SP}\abs{\ga_i + t \hat{n}_i }}) \right)} \nonumber \\
		&\ge 
		\frac{\norm{ \hat{n}_{\SP}}_{\ell^1}^2  \gamma  \alpha^{\gamma}}
		     { \max_{t \in (0,1)}  \norm{\ga + t \hat{n}_{\SP}}_{\ell^{1+\gamma}}^{1+\gamma} 
		\cdot (1+10\varepsilon ) }, \label{eqn1313}
	\end{align}
	where in the last line we have used \cref{eqn:711}.
	Inserting \eqref{eqn711} and \eqref{eqn1313} into \eqref{eqn710}, we infer that
	\begin{align*}
	\norm{\hat{n}_{\SP}}_{\ell^1} 
	\le 
	\frac{\alpha \max_{t \in (0,1)} \norm{\ga + t \hat{n}_{\SP}}_{\ell^{1+\gamma}}^{1+\gamma}}
	{\min_{i \in \SP} \abs{\ga_i}^{1+\gamma} }
	(1+10\varepsilon ).
	\end{align*}
	Now note that
	\begin{align*}
		\norm{ \ga + t \hat{n}_{\SP}}_{\ell^{1+\gamma}}
		\le 
		\norm{\ga + t \hat{n}_{\SP}}_{\ell^1}
		=
		\norm{\ga}_{\ell^1}.
	\end{align*}
	Here we have used 
	for the inequality that $\norm{\cdot}_{\ell^{1+\gamma}} \le \norm{\cdot}_{\ell^1}$
	and for the equation we have used $\ga + t \hat{n}_{\SP} \in \Lmin$ 
	and the fact that $\ell^1$-norm is constant on $\Lmin$. 
	Thus, it follows that
	\begin{equation*}
		\norm{\hat{n}_{\SP}}_{\ell^1} 
		\le  \alpha  \Big( \frac{\norm{\ga}_{\ell^{1}}}{\mg}\Big)^{1+\gamma} 
		(1+10 \eps).
	\end{equation*}
	This completes the proof.
\end{proof}
\noindent \textbf{Step 2 (Controlling $\norm{\nperp}_{\ell^1}$):}
As a next step,
we will provide an upper bound for $\norm{\nperp}_{\ell^1}$.
For this prove,
we will use similar arguments as in the scenario where there exists a unique solution.
\begin{lemma}\label[lemma]{lemma:nperpbound_deep_non_unique}
Assume that the assumptions of \cref{theorem:upper_bound_deep_non_unique} holds.
Then it holds that 
	\begin{equation}\label{eqn905}
		\norm{\nperp_{\SC}}_{\ell^1}
		\le 
		\alpha \abs{\SC}
		\Big[ 
		\hh(\varrho) 
		+\frac{4\cdot 2^\gamma \varrho^{-}}{\gamma (1-\varrho)^{\frac{1}{\gamma}+1}} 
		 \left( \frac{\alpha}{\min_{i \in \SP} \abs{ \ga_i }} \right)^{\gamma}
		\Big].
	\end{equation}
	In particular, it holds that 
    $ \norm{ \nperp_{\SP} }_{\ell^\infty} \le \min_{i \in \SP} \abs{\ga_i}/4$ for all $i\in \SP$.
\end{lemma}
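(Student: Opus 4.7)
The plan is to adapt the unique-case proof of \cref{theorem:upper_bound_deep} by systematically replacing $\ga$ with the auxiliary minimizer $\xa$ from \eqref{optimality_x_star}, and exploiting the decomposition $\xinf-\xa=\widetilde{\npara}+\nperp$ with $\widetilde{\npara}\in\Tan$, $\nperp\in\Nor$. The key new ingredient is that, thanks to the optimality of $\xa$ on $\Lmin$ and the characterisation of $\Tan$ in \cref{lemma:tangent_cone_characterization}, the component $\widetilde{\npara}$ will drop out of the first-order condition, so that only $\nperp$ remains to be controlled.

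First I would apply the first-order optimality of $\xinf$ in direction $\xinf-\xa\in\ker(A)$. Using \cref{lemma:basic_properties_Qalpha_1} and splitting the inner product into its $\SP$ and $\SC$ parts (with $\xa_{\SC}=0$, which holds because \eqref{eqn1500} gives $\supp(\xa)=\SP$ via \cref{lemma:two_minimizers_deep_2}), monotonicity of $\nabla\QQ$ produces the analogue of \eqref{eqn600},
\begin{equation*}
-\langle \nabla\QQ(\xa_{\SP}),\,\nperp_{\SP}+\widetilde{\npara}_{\SP}\rangle \;\ge\; \langle \nabla\QQ(\nperp_{\SC}),\,\nperp_{\SC}\rangle.
\end{equation*}
Next I would eliminate the $\widetilde{\npara}$ contribution. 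Since $\supp(\xa)=\SP$ with $\sign(\xa_i)=\s_i$ on $\SP$ (\cref{basic_properties_Lmin}, \cref{lemma:maximal_support_deep}) and $\widetilde{\npara}\in\Tan$ satisfies $\widetilde{\npara}_{\SC}=0$ and $\sum_{i\in\SP}\s_i\widetilde{\npara}_i=0$ by \cref{lemma:tangent_cone_characterization}, the curve $\xa+t\widetilde{\npara}$ remains in $\Lmin$ for $\abs{t}$ small, so optimality of $\xa$ on $\Lmin$ gives $\langle \nabla\QQ(\xa_{\SP}),\widetilde{\npara}_{\SP}\rangle=0$. The displayed inequality therefore reduces to $-\langle \nabla\QQ(\xa_{\SP}),\nperp_{\SP}\rangle\ge \langle\nabla\QQ(\nperp_{\SC}),\nperp_{\SC}\rangle$.

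From here the argument mirrors the derivation of \eqref{eqn2005} verbatim, with $\ga$ replaced by $\xa$ and the null-space constants $\varrho,\varrho^{-}$ interpreted in the sense of \eqref{eqn102} evaluated at $\nperp\in\Nor$ (so in particular the sign pattern $\s_{\SP}=\sign(\xa_{\SP})$ enters through the definitions). Writing $\na_{\SP}:=\s_{\SP}\odot\nperp_{\SP}$ and $\lambda_{\min}:=\min_{i\in\SP}\abs{\xa_i}/\alpha$, $\lambda_{\max}:=\max_{i\in\SP}\abs{\xa_i}/\alpha$, one obtains the analogue of \eqref{eqn602} for the left-hand side, while the generalised log-sum inequality \cref{lemma:generalized_log_sum} (applicable by \cref{lemma:almost_convex}) produces the analogue of \eqref{eqn603} for the right-hand side. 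Dividing by $\norm{\nperp_{\SC}}_{\ell^1}$ (nonzero unless $\nperp=0$, by \cref{lemma:normal_cone}) and applying $\hh$ via a Taylor expansion around $\varrho$ as in \eqref{eqn2004} yields
\begin{equation*}
\norm{\nperp_{\SC}}_{\ell^1}\le \alpha\abs{\SC}\bigl[\hh(\varrho)+\hh'(\xi)\,\delta_1\bigr]
\end{equation*}
for $\xi\in(\varrho,\varrho+\delta_1)$ and $\delta_1:=\varrho^{-}(\hh^{-1}(\lambda_{\max})-\hh^{-1}(\lambda_{\min}))+\varrho(\hh^{-1}(\lambda_{\min})-1)$.

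The only quantitative change relative to the unique case is that the natural small parameter is $\eps_{\xa}:=\alpha/\min_{i\in\SP}\abs{\xa_i}$ rather than $\eps:=\alpha/\mg$. Assumption \eqref{eqn1500} combined with \cref{lemma:two_minimizers_deep_2} gives $\min_{i\in\SP}\abs{\xa_i}\ge \mg/2$, hence $\eps_{\xa}\le 2\eps$. Bounding $\delta_1\le \varrho^{-}\eps_{\xa}^{\gamma}\le 2^{\gamma}\varrho^{-}\eps^{\gamma}$ via \cref{lemma:WindHansen}, and $\hh'(\xi)\le 4/[\gamma(1-\varrho)^{1/\gamma+1}]$ via \eqref{eqn615} exactly as in \eqref{eqn2002}, yields \eqref{eqn905}; the extra factor $2^{\gamma}$ in the statement is the sole trace of replacing $\ga$ by $\xa$. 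The concluding bound $\norm{\nperp_{\SP}}_{\ell^\infty}\le \mg/4$ then follows from $\norm{\nperp_{\SP}}_{\ell^1}\le\tilde{\varrho}\norm{\nperp_{\SC}}_{\ell^1}$ together with \eqref{eqn1500}, in the spirit of the last step of the proof of \cref{lemma:upper_boundnperp_shallow_non_unique}. The main obstacle is the stability step $\langle \nabla\QQ(\xa_{\SP}),\widetilde{\npara}_{\SP}\rangle=0$: it requires both the support identification $\supp(\xa)=\SP$ and the correct sign pattern $\sign(\xa_{\SP})=\s_{\SP}$, which is exactly what \cref{lemma:two_minimizers_deep_2} provides under \eqref{eqn1500}.
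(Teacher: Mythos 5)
Your proposal matches the paper's proof step-for-step: the same first-order optimality condition in the direction $\xinf-\xa=\widetilde{\npara}+\nperp$, the same elimination of $\widetilde{\npara}$ via the first-order optimality of $\xa$ on $\Lmin$ (which indeed requires $\supp(\xa)=\SP$ from \cref{lemma:two_minimizers_deep_2}), the same reduction to the unique-case derivation of \eqref{eqn2005} with $\xa$ in place of $\ga$, and the same use of $\min_{i\in\SP}\abs{\xa_i}\ge\mg/2$ to produce the $2^\gamma$ factor, finishing with $\norm{\nperp_{\SP}}_{\ell^\infty}\le\tilde{\varrho}\norm{\nperp_{\SC}}_{\ell^1}$ and \eqref{eqn1500}. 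The only step left implicit is the check that the analogue of Assumption~\eqref{eqn1001}, namely $\alpha/\min_{i\in\SP}\abs{\xa_i}<((1-\varrho)\gamma/4\varrho^{-})^{1/\gamma}$, follows from the second item of \eqref{eqn1500} together with $\eps_{\xa}\le 2\eps$, which the paper verifies explicitly before invoking the bound on $\hh'(\xi)$.
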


\begin{proof}
	If $\nperp_{\SC}=0$, then \cref{lemma:normal_cone} implies that $\nperp =0$. In the following, we will assume that $\nperp_{\SC}\ne 0$.
	By optimality of $\xinf$ we have
	\begin{equation} \label{eqn700a}
			0 = \left.\frac{\dif}{\dif t}\right|_{t=0} D_{\QQ}(\xinf 
			+ t (\widetilde{\npara} + \nperp) ,0)
			= \langle \nabla \QQ(\xinf),\widetilde{\npara} + \nperp\rangle.
	\end{equation}
	Recall that $\xinf = \xa + \widetilde{\npara} + \nperp$ 
	and that $\xa_{\SC}= \widetilde{\npara_{\SC}} =0$.
	Separating the right-hand side of \eqref{eqn700a} into $\SP$ and $\SC$ at $(a)$,
	we infer that 
	\begin{equation} \label{eqn700b}
		\begin{split}
		-\langle \nabla \QQ (\xa_{\SP}+\widetilde{\npara_{\SP}} + \nperp_{\SP} ),
		\widetilde{\npara_{\SP}}  +  \nperp_{\SP}\rangle
		=& -\langle \nabla \QQ(\xinf_{\SP}), \widetilde{\npara_{\SP}}  +  \nperp_{\SP}\rangle\\
		\overeq{(a)} &\langle \nabla \QQ(\xinf_{\SC}),\widetilde{\npara_{\SC}}  +  \nperp_{\SC}\rangle\\
		=& \langle \nabla \QQ( \nperp_{\SC}), \nperp_{\SC}\rangle.
		\end{split}
	\end{equation}
	Since $\QQ$ is convex, its gradient is monotone.
	Therefore,
	it holds that
	\begin{equation} \label{eqn700c}
		\langle \nabla \QQ(\xa_{\SP}+ \widetilde{ \npara_{\SP}} + \nperp_{\SP} ),
		\widetilde{ \npara_{\SP}} + \nperp_{\SP}\rangle 
		\ge \langle \nabla \QQ(\xa_{\SP}),\widetilde{ \npara_{\SP}} + \nperp_{\SP}\rangle.
	\end{equation}
	Inserting \eqref{eqn700c} into \eqref{eqn700b}, we deduce that
	\begin{equation} \label{eqn700}
		-\langle \nabla \QQ(\xa_{\SP}), \widetilde{\npara_{\SP}} + \nperp_{\SP}  \rangle
		\ge \langle \nabla \QQ( \nperp_{\SC}), \nperp_{\SC}\rangle.
	\end{equation}
	In order to simplify the left-hand side of \eqref{eqn700},
	we invoke the optimality of $\xa$, see \eqref{optimality_x_star}.
	Namely, since $\supp(\xa)=\SP$ and $\widetilde{\npara} \in \Tcal$, 
	it follows from \cref{lemma:tangent_cone_characterization} 
	and \cref{lemma:two_minimizers_deep_2} 
	that $\xa+t \widetilde{\npara} \in \Lmin$ for all sufficiently small $t>0$.
	Analogously, replacing $\widetilde{\npara}$ by $-\widetilde{\npara}$, 
	we also have $\xa+t\npara = \xa+ \abs{t}\cdot (-\widetilde{\npara}) \in \Lmin$ for all $t<0$ 
	with $\abs{t}$ sufficiently small. 
	Therefore, using the first order optimality condition at $(a)$ 
	and the identity $\widetilde{\npara_{\SC}}=0$, 
	see \cref{lemma:tangent_cone_characterization}, at $(b)$, we have
	\begin{equation*} 
		0 \overeq{(a)} \left.\frac{\dif}{\dif t}\right|_{t=0} 
		D_{\QQ}(\xa + t \widetilde{ \npara},0)
		= \langle \nabla \QQ(\xa), \widetilde{\npara} \rangle
		\overeq{(b)} \langle \nabla \QQ(\xa_{\SP}),\widetilde{\npara_{\SP}}\rangle.
	\end{equation*}
	Inserting this equation 
	into \eqref{eqn700}, we obtain
	\begin{equation} \label{eqn700d}
		-\langle \nabla \QQ(\xa_{\SP}),\nperp_{\SP}\rangle 
		\ge \langle \nabla \QQ(\nperp_{\SC}),\nperp_{\SC}\rangle.
	\end{equation}

	We note that inequality \eqref{eqn700d} is analogous to inequality \eqref{eqn600} 
	in the proof of \cref{theorem:upper_bound_deep} 
	with $\nperp$ instead of $n$ and $\xa$ instead of $\ga$. 
	Furthermore, we note that 
	\begin{align*}
		\frac{\alpha}{ \min_{i \in \SP} \abs{ \xa_i }}
		\overleq{(a)}
		\frac{2 \alpha}{ \min_{i \in \SP} \abs{ \ga_i }}
		\overleq{(b)}
		\left(
			\frac{(1-\varrho) \gamma }{4 \varrho^{-}}
		\right)^{1/\gamma},
	\end{align*}
	where in  
	inequality (a) we have used
	that $\min_{i \in \SP} \abs{ \xa_i } \ge \frac{1}{2} \min_{i \in \SP} \abs{ \ga_i }$
	due to \cref{lemma:two_minimizers_deep_2}
	and in inequality (b) we have used Assumption \eqref{eqn1500}. 
	Note that this inequality is analogous 
	to Assumption \eqref{eqn1001} in \cref{theorem:upper_bound_deep},
	where $\ga$ is replaced by $\xa$. 
	Therefore, by proceeding analogously 
	as in the proof of \cref{theorem:upper_bound_deep},
	we obtain that 
	\begin{equation*}
		\norm{\nperp_{\SC}}_{\ell^1}
		\le \alpha \abs{\SC}
		\Big[ 
		\hh(\varrho) 
		+\frac{4\varrho^{-}}{\gamma (1-\varrho)^{\frac{1}{\gamma}+1}} 
		\cdot \left( \frac{\alpha}{\min_{i \in \SP} \abs{ \xa_i }} \right)^{\gamma}
		\Big].
	\end{equation*}
	Now recall
	that $\min_{i \in \SP} \abs{ \xa_i } \ge \frac{1}{2} \min_{i \in \SP} \abs{ \ga_i }$.
	Therefore, we have
	\begin{equation*}
		\norm{\nperp_{\SC}}_{\ell^1}
		\le 
		\alpha \abs{\SC}
		\Big[ 
		\hh(\varrho) 
		+\frac{4\cdot 2^\gamma  \varrho^{-}}{\gamma (1-\varrho)^{\frac{1}{\gamma}+1}} 
		\cdot \left( \frac{\alpha}{\min_{i \in \SP} \abs{ \ga_i }} \right)^{\gamma}
		\Big].
	\end{equation*}
	This proves \cref{eqn905}.
	It remains to show that
	$\norm{\nperp_{\SP}}_{\ell^\infty}  \le \min_{i \in \SP} \abs{\ga_i}/4$.
    We observe that
	\begin{align*}
		\norm{\nperp_{\SP}}_{\ell^\infty}
		\overleq{(a)}&
		\tilde{\varrho} \norm{\nperp_{\SC}}_{\ell^1}\\
        \overleq{(b)}&
		 \tilde{\varrho} \alpha \abs{\SC}
		\Big[ 
		\hh(\varrho) 
		+\frac{4\cdot 2^\gamma  \varrho^{-}}{\gamma (1-\varrho)^{\frac{1}{\gamma}+1}} 
		\cdot \left( \frac{\alpha}{\min_{i \in \SP} \abs{ \ga_i }} \right)^{\gamma}
		\Big]\\
		\overleq{(c)}&
		2 \tilde{\varrho} \alpha \abs{\SC}
		\left( \hh(\varrho) +1 \right) \\
		\overleq{(d)}&
		\frac{1}{4} \min_{i \in \SP} \abs{\ga_i}.
	\end{align*}
	In inequality (a) we have used the definition of $\tilde{\varrho}$,
	see \cref{eqn102}.
	Inequality (b) follows from \cref{eqn905}
	and inequalities (c) and (d) follow 
	both from Assumption \eqref{eqn1500}.
\end{proof}

\noindent \textbf{Step 3 (Bounding $\norm{\tildenpara}_{\ell^1}$):}
In order to conclude, it remains to prove an upper bound for $\norm{\tildenpara}_{\ell^1}$.
For this proof, we will need the following a-priori bound for $\xinf$.
\begin{lemma}[A priori bound] \label{lemma:a_priori_bounds_deep}
	Let $d,A,y$ as in Assumption \eqref{assumption_on_A_y_non-unique}.
	Let $D \in \N$ with $D\ge 3$ and let $\alpha>0$.
	Let
		$\tilde{g} \in \Lmin$
	be arbitrary.
	Then it holds that
	\begin{equation*}
		\norm{\xinf}_{\ell^1} 
		\le d  \norm{\tilde{g}}_{\ell^{\infty}}.
	\end{equation*}
\end{lemma}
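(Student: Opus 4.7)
My plan is to combine the optimality of $\xinf$ for the Bregman problem with the convexity, evenness, and monotonicity of $\qq$ (all established in Lemmas~\ref{lemma:basic_properties_Qalpha_1} and~\ref{lemma:basic_properties_q_h_g_2}), reducing the estimate to a simple one-dimensional inequality that is resolved by applying the strictly increasing function $\qq^{-1}$ to both sides. The key observation will be that, when bounding $\QQ(\tilde g)$, one must use the coordinatewise monotone bound $\qq(|\tilde g_i|/\alpha) \le \qq(\norm{\tilde g}_{\ell^\infty}/\alpha)$ rather than the crude linear bound $\qq(u) \le |u|$; only the former preserves enough information to match the claimed constant on the right-hand side.

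First, since $\tilde g \in \Lmin$ satisfies $A\tilde g = y$, it is feasible for the minimization problem defining $\xinf$. By optimality of $\xinf$ and Lemma~\ref{lemma:basic_properties_Qalpha_1} (which gives $D_{\QQ}(x,0) = \QQ(x)$), I obtain $\QQ(\xinf) \le \QQ(\tilde g)$. Writing $M := \norm{\tilde g}_{\ell^\infty}$ and using that $\qq$ is even and monotonically increasing on $[0,\infty)$, I will bound the right-hand side coordinatewise by
\begin{equation*}
\QQ(\tilde g) \;=\; \sum_{i=1}^d \alpha\, \qq\Big(\tfrac{\tilde g_i}{\alpha}\Big) \;=\; \sum_{i=1}^d \alpha\, \qq\Big(\tfrac{|\tilde g_i|}{\alpha}\Big) \;\le\; d\alpha\, \qq(M/\alpha).
\end{equation*}

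For the left-hand side, I will apply Jensen's inequality to the convex even function $\qq$:
\begin{equation*}
d\alpha\, \qq\Big(\tfrac{\norm{\xinf}_{\ell^1}}{d\alpha}\Big) \;=\; d\alpha\, \qq\Big(\tfrac{1}{d}\sum_{i=1}^d \tfrac{|\xinf_i|}{\alpha}\Big) \;\le\; \sum_{i=1}^d \alpha\, \qq\Big(\tfrac{|\xinf_i|}{\alpha}\Big) \;=\; \QQ(\xinf).
\end{equation*}
Chaining the three inequalities then yields $\qq\bigl(\norm{\xinf}_{\ell^1}/(d\alpha)\bigr) \le \qq(M/\alpha)$. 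Since $\qq' = \hh^{-1}$ is strictly positive on $(0,\infty)$ (see Lemma~\ref{lemma:basic_properties_q_h_g_2}), $\qq$ is strictly increasing on $[0,\infty)$, and I can invert it on both sides to conclude $\norm{\xinf}_{\ell^1}/(d\alpha) \le M/\alpha$, i.e.\ the desired bound $\norm{\xinf}_{\ell^1} \le d\norm{\tilde g}_{\ell^\infty}$. I do not anticipate any genuine technical obstacle here: every required property of $\qq$ has already been recorded in the preliminaries, and the whole argument is a short chain of three elementary inequalities combined with a single inversion step. The only subtle point, which I will flag in the write-up, is the choice to bound $\qq(|\tilde g_i|/\alpha)$ by $\qq(M/\alpha)$ instead of by $|\tilde g_i|/\alpha$, as the latter would lead only to the weaker conclusion $\norm{\xinf}_{\ell^1} \le d\alpha \qq^{-1}(M/\alpha)$ (strictly larger than $dM$ because $\qq^{-1}(t) > t$ for all $t>0$).
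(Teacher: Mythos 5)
Your proof is correct and takes essentially the same route as the paper: optimality gives $\QQ(\xinf)\le\QQ(\tilde g)$, Jensen's inequality with the convexity and evenness of $\qq$ lower-bounds the left side, and the evenness and monotonicity of $\qq$ upper-bound the right side, after which both sides are compared via the strict monotonicity of $\qq$. The extra remark at the end about why bounding $\qq(|\tilde g_i|/\alpha)$ by $|\tilde g_i|/\alpha$ would be wasteful is sound but not needed for the argument.
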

\begin{proof}
	It follows from the definition of $\xinf$ that 
	\begin{equation} \label{eqn624}
		\QQ(\xinf) = D_{\QQ}(\xinf,0) \le D_{\QQ}(\tilde{g},0) = \QQ(\tilde{g}).
	\end{equation}
	Furthermore, using that $\qq$ is an even function and that it is convex on $[0,\infty)$, we infer that
	\begin{equation}\label{eqn625}
		\begin{split}
			\QQ(\xinf) 
			&= \alpha \sum_{i=1}^{d} \qq\Big( \frac{\xinf_i}{\alpha}\Big) 
			= \alpha \sum_{i=1}^d \qq\Big( \frac{\abs{\xinf_i}}{\alpha}\Big)
			= \alpha d \sum_{i=1}^d \frac{\qq\Big( \frac{\abs{\xinf_i}}{\alpha}\Big)}{d} \\
			&\ge \alpha d   \qq\left( \sum_{i=1}^d \frac{\abs{\xinf_i}}{\alpha d}\right)
			= \alpha d  \qq \left( \frac{\norm{\xinf}_{\ell^1}}{\alpha d}\right).
		\end{split}
	\end{equation}
	In addition, using that $\qq$ is an even function and that it is increasing on $[0,\infty)$, we infer that
	\begin{equation} \label{eqn626}
		\QQ(\tilde{g}) 
		= \alpha \sum_{i=1}^{d} \qq\Big( \frac{\tilde{g}_i}{\alpha}\Big)
		= \alpha \sum_{i=1}^{d} \qq\Big( \frac{\abs{\tilde{g}_i}}{\alpha}\Big)
		\le \alpha d \max_{ i \in [d]} \qq \Big( \frac{\abs{\tilde{g}_i}}{\alpha}\Big)
		= \alpha d \qq \Big( \frac{\norm{\tilde{g}}_{\ell^{\infty}}}{\alpha}\Big)
	\end{equation}
	Inserting \eqref{eqn625} and \eqref{eqn626} into \eqref{eqn624}, and dividing by $\alpha d$, we deduce that
	\begin{equation*}
		\qq\Big( \frac{\norm{\xinf}_{\ell^1}}{\alpha d}\Big) 
		\le \qq \Big( \frac{\norm{\tilde{g}}_{\ell^{\infty}}}{\alpha}\Big).
	\end{equation*}
	We complete the proof by using the monotonicity of $\qq$.
\end{proof}
With this lemma at hand, 
we can now prove the upper bound for $\norm{\tildenpara}_{\ell^1}$.
\begin{lemma}\label[lemma]{lemma:npara_bound_deep_non_unique}
Assume that the assumption of \cref{theorem:upper_bound_deep_non_unique} holds.
Then it holds that 
	\begin{equation*}
		\norm{\tildenpara}_{\ell^1} 
		\le
		C^{\sharp} \varepsilon \norm{\nperp_{\SC}}_{\ell^{1}},
	\end{equation*}
	where 
	\begin{align*}
		C^{\sharp}
		:=	
		5 \tilde{\varrho}
		\left(
		88 
        \left( \frac{\norm{\ga}_{\ell^{1}}}{\mg}\right)^{1+\gamma}
		+
		512 \tilde{\varrho}
		\abs{\SC}
		\left(
		\hh(\varrho)+1
		\right)
		\right)
		\left(
		\frac{2d \norm{\ga}_{\ell^\infty}}{ \min_{i\in \SP} \abs{\ga_i} }
		\right)^{1+\gamma}
		.
	\end{align*}
\end{lemma}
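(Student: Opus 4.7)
The plan is to follow the same strategy as in the proof of the analogous shallow statement, \cref{lemma:nparallelbound_shallow_non_unique}, but with the appropriate second-order Taylor analysis adapted to $\QQ$. The key structural ingredient is the same: by construction of $\Nor$, the bilinear form $\langle \nperp, \tildenpara\rangle_{\ga}$ vanishes, so in a second-order expansion of the right-hand side the leading-order contribution will cancel and only lower-order error terms survive. First, I would start from the two first-order optimality conditions $\langle \nabla \QQ(\xinf), \tildenpara\rangle = 0 = \langle \nabla \QQ(\xa), \tildenpara\rangle$. The second is valid because $\supp(\xa)=\SP$ (\cref{lemma:maximal_support_deep,lemma:two_minimizers_deep_2}) and $\tildenpara \in \Tan$ satisfies $\tildenpara_{\SC}=0$ by \cref{lemma:tangent_cone_characterization}, so $\xa + t\tildenpara \in \Lmin$ for small $|t|$. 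Writing $\xinf = \xa + \nperp + \tildenpara$ and subtracting yields the master identity
\begin{equation*}
	\langle \nabla \QQ(\xa + \nperp + \tildenpara) - \nabla \QQ(\xa + \nperp), \tildenpara\rangle
	= -\langle \nabla \QQ(\xa + \nperp) - \nabla \QQ(\xa), \tildenpara\rangle.
\end{equation*}

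The left-hand side is lower-bounded by integrating $\nabla^2\QQ$ along the segment and invoking the strong convexity bound \eqref{eqn:strong_convexity_deep_2}, which is available because $\supp(\xa_{\SP}+\nperp_{\SP}+t\tildenpara_{\SP})\subset \SP$ and, thanks to \cref{lemma:nperpbound_deep_non_unique,lemma:two_minimizers_deep_2} and \eqref{eqn1500}, the entries stay bounded below by $\tfrac12 \mg > \alpha$. This produces
\begin{equation*}
	\mathrm{LHS} \;\ge\; \frac{\gamma\alpha^{\gamma}\,\norm{\tildenpara_{\SP}}_{\ell^1}^2}
	{(1+O(\eps))\max_{t\in[0,1]}\norm{\xa_{\SP}+\nperp_{\SP}+t\tildenpara_{\SP}}_{\ell^{1+\gamma}}^{1+\gamma}},
\end{equation*}
and the denominator is controlled by $(d\norm{\ga}_{\ell^{\infty}})^{1+\gamma}$ via the a priori estimate \cref{lemma:a_priori_bounds_deep} together with $\norm{\xa}_{\ell^1}=\norm{\ga}_{\ell^1}\le d\norm{\ga}_{\ell^{\infty}}$ and the smallness of $\norm{\nperp}_{\ell^1}$.

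For the right-hand side, I would Taylor expand $\nabla\QQ(\xa+\nperp)-\nabla\QQ(\xa)$ to second order. Using the diagonal structure of $\nabla^2\QQ$ and the evenness of $(\hh^{-1})'$, the first-order term is $\sum_{i\in\SP}\frac{(\hh^{-1})'(\abs{\xa_i}/\alpha)}{\alpha}\nperp_i\tildenpara_i$. The crucial step is to subtract $\gamma\alpha^{\gamma}\langle\nperp,\tildenpara\rangle_{\ga}$, which equals zero by the definition of $\Nor$, so this first-order term rewrites as
\begin{equation*}
	\sum_{i\in\SP}\left[\frac{(\hh^{-1})'(\abs{\xa_i}/\alpha)}{\alpha}-\frac{\gamma\alpha^{\gamma}}{\abs{\ga_i}^{1+\gamma}}\right]\nperp_i\tildenpara_i.
\end{equation*}
Splitting the bracket as $[(\hh^{-1})'(\abs{\xa_i}/\alpha)/\alpha-\gamma\alpha^{\gamma}/\abs{\xa_i}^{1+\gamma}]+\gamma\alpha^{\gamma}[\abs{\xa_i}^{-1-\gamma}-\abs{\ga_i}^{-1-\gamma}]$ and using \eqref{eqn:bounds_on_h_D_5} for the first piece and a Lipschitz estimate combined with \cref{lemma:two_minimizers_deep_2} for the second, both are controlled by $\alpha^{\gamma}\mg^{-2-\gamma}$ times $\norm{\nperp_{\SP}}_{\ell^{\infty}}\norm{\tildenpara}_{\ell^1}$ multiplied either by $\alpha$ or by $\norm{\xa-\ga}_{\ell^1}$. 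The integral remainder in the Taylor expansion is bounded via \eqref{eqn:bounds_on_h_D_4} and yields a term of the order $\alpha^{\gamma}\mg^{-2-\gamma}\norm{\nperp_{\SP}}_{\ell^{\infty}}^2\norm{\tildenpara}_{\ell^1}$.

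Collecting everything, one obtains an inequality of the form $\norm{\tildenpara_{\SP}}_{\ell^1}^2 \le \Psi\cdot \norm{\tildenpara_{\SP}}_{\ell^1}$ where $\Psi$ is bounded, after inserting \cref{lemma:nperpbound_deep_non_unique} (to replace $\norm{\nperp_{\SP}}_{\ell^{\infty}}\le \tilde\varrho\norm{\nperp_{\SC}}_{\ell^1}$) and \cref{lemma:two_minimizers_deep_2} (to replace $\norm{\xa-\ga}_{\ell^1}$ by $\alpha(\norm{\ga}_{\ell^1}/\mg)^{1+\gamma}$), by $C^{\sharp}\eps\,\norm{\nperp_{\SC}}_{\ell^1}$. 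Dividing by $\norm{\tildenpara_{\SP}}_{\ell^1}$ and using $\tildenpara_{\SC}=0$ gives the claim. The main obstacle is the bookkeeping: each of the three error contributions (pointwise correction from $(\hh^{-1})'$ to the asymptotic $\gamma u^{-1-\gamma}$, the shift from $\xa$ to $\ga$ in the weight, and the second-order Taylor remainder) must be shown to be of the same order $\eps\,\norm{\nperp_{\SC}}_{\ell^1}$ and folded into the claimed constant $C^{\sharp}$; in particular, one needs a uniform control of the form $\abs{\xa_i+s\nperp_i}\ge \tfrac12\mg>\alpha$ along the entire Taylor path, which rests on the smallness conditions \eqref{eqn1500} together with the two earlier lemmas.
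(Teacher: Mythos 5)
Your outline tracks the paper's proof closely: start from the two first-order optimality conditions, rewrite the difference as a telescoping identity, lower-bound the left-hand side by strong convexity, Taylor-expand the right-hand side to second order, use the $\langle\cdot,\cdot\rangle_{\ga}$-orthogonality of $\nperp$ and $\tildenpara$ to cancel the leading first-order term, and then control the three residual contributions via \eqref{eqn:bounds_on_h_D_5}, a Lipschitz estimate for the weight shift from $\xa$ to $\ga$, \eqref{eqn:bounds_on_h_D_4}, and the a priori bound of \cref{lemma:a_priori_bounds_deep}. This is precisely the structure of the paper's argument.

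There is, however, one genuine gap in the strong-convexity step. You propose to invoke \eqref{eqn:strong_convexity_deep_2} along the segment $x(t) := \xa_{\SP}+\nperp_{\SP}+t\,\tildenpara_{\SP}$, $t\in[0,1]$, claiming that the entries of $x(t)$ stay bounded below by $\tfrac12\mg > \alpha$ thanks to \cref{lemma:nperpbound_deep_non_unique}, \cref{lemma:two_minimizers_deep_2}, and \eqref{eqn1500}. But those lemmas control only $\abs{\xa_i-\ga_i}$ and $\abs{\nperp_i}$; they say nothing about $\tildenpara_i$. Since $\norm{\tildenpara}$ is exactly what the lemma is supposed to bound, asserting $\abs{\xa_i + \nperp_i + t\,\tildenpara_i} \ge \tfrac12 \mg$ for all $t\in[0,1]$ is circular (and also needed to keep the support condition of \eqref{eqn:strong_convexity_deep_2} from degenerating). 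Indeed, at the end of your sketch you correctly isolate the condition $\abs{\xa_i + s\nperp_i} \ge \tfrac12\mg$ for the \emph{right-hand} Taylor path, but that is a different segment from the one in the strong-convexity estimate. The paper sidesteps this by using the weaker bound \eqref{eqn:strong_convexity_deep_1}, which does not require the entries of $x(t)$ to stay above $\alpha$; the quantity $B_1 := \max_{t}\norm{\xa_{\SP}+\nperp_{\SP}+t\,\tildenpara_{\SP}}_{\ell^{1+\gamma}}$ that appears in the denominator is then controlled from \emph{above} (not below, entrywise) via $x(t)=t\,\xinf_{\SP}+(1-t)(\xa_{\SP}+\nperp_{\SP})$ and the a priori $\ell^1$ bound on $\xinf$ from \cref{lemma:a_priori_bounds_deep}, which introduces no circularity. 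Replacing \eqref{eqn:strong_convexity_deep_2} by \eqref{eqn:strong_convexity_deep_1} at this step repairs the argument and brings you back in line with the paper; the rest of your bookkeeping for $B_2$, $B_3$, and the final combination is sound.
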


\begin{proof}
	Since $\xinf$ and $\xa$ are minimizers of the functional $D_{\QQ}(\cdot,0)$ 
	on the subsets $\Lscr$ and $\Lmin$, respectively, we obtain two first order optimality conditions.
	By optimality of $\xinf$ and $\xa$ we have
	\begin{equation} \label{eqn800}
		\begin{split}
		0 &= 
		\left.\frac{\dif}{\dif t}\right\rvert_{t=0} D_{\QQ}(\xinf+t \widetilde{\npara},0)
		= \langle \nabla \QQ(\xinf), \widetilde{\npara} \rangle,\\
		0 
		&= \left.\frac{\dif}{\dif t}\right\rvert_{t=0} D_{\QQ}(\xa+t \widetilde{\npara},0)
		= \langle \nabla \QQ(\xa), \widetilde{\npara}  \rangle.\\
		\end{split}
	\end{equation}
	Combining the first-order optimality conditions \eqref{eqn800} 
	we infer that
	\begin{equation} \label{eqn800a}
		\langle \nabla \QQ(\xinf) , \widetilde{\npara} \rangle 
		= \langle \nabla \QQ(\xa), \widetilde{\npara} \rangle.
	\end{equation}
	By recalling that $\xinf = \xa  +\nperp + \widetilde{\npara}$ 
	and by introducing the intermediate point $\xa + \nperp$,
	 we can rewrite \eqref{eqn800a} as
	\begin{equation} \label{eqn800b}
		\langle \nabla \QQ(\xa  +\nperp + \widetilde{\npara}) - \nabla\QQ(\xa + \nperp ) , 
		\widetilde{\npara}\rangle 
		= \langle \nabla \QQ(\xa) - \nabla \QQ(\xa+\nperp), \widetilde{\npara}\rangle.
	\end{equation}
	First, we derive a lower bound for the left-hand side of \eqref{eqn800b} 
	via a strong convexity argument. 
	Using $\npara_{\SC}=0$ at $(a)$ 
	and \cref{lemma:strong_convexity_deep} at $(b)$, 
	we infer that
	\begin{align} 
			\langle \nabla \QQ(\xa  +\nperp + \widetilde{\npara}) - \nabla\QQ(\xa + \nperp ) , \widetilde{\npara}\rangle 
			&\overeq{(a)} \langle \nabla \QQ(\xa_{\SP}+\nperp_{\SP} 
			+ \widetilde{\npara_{\SP}}) - \nabla \QQ(\xa_{\SP}+\nperp_{\SP}),
			 \widetilde{\npara_{\SP}}\rangle
				\nonumber\\
			&=  \langle \int_0^1 \left.\frac{\dif}{\dif s }\right\rvert_{s =t} \nabla\QQ(\xa_{\SP}+\nperp_{\SP} 
			+s \widetilde{\npara_{\SP}} ) \dif t , \widetilde{\npara_{\SP}}\rangle
				\nonumber\\
			&= \int_0^1 \langle \nabla^2\QQ(\xa_{\SP}+\nperp_{\SP} +t \widetilde{ \npara_{\SP}}) \widetilde{\npara_{\SP}}, 
			\widetilde{\npara_{\SP}}\rangle \dif t
				\nonumber\\
			&\overgeq{(b)} 
			\frac{ \norm{ \widetilde{ \npara} }_{\ell^1}^2\gamma \alpha^{\gamma}}
			{3 \abs{\SP} \alpha^{1+\gamma} + 2B_1^{1+\gamma}},
				\label{eqn1300a}
	\end{align}
	where
	\begin{equation*}
		B_1:= \max_{t\in [0,1]} 
		\norm{ \xa_{\SP}+\nperp_{\SP} +t \widetilde{\npara_{\SP}} }_{\ell^{1+\gamma}}.
	\end{equation*}
	Next, we derive an upper bound for the right-hand side of \eqref{eqn800b}. 
	Using that $ \widetilde{\npara_{\SC}}=0$ at $(a)$ 
	and that $\big(\hh^{-1}\big)'$ is even at $(b)$, 
	we infer that
	\begin{equation}\label{eqn1300}
		\begin{split}
			\langle \nabla \QQ(\xa) - \nabla \QQ(\xa+\nperp), \widetilde{\npara} \rangle
			&\overeq{(a)} \langle \nabla \QQ(\xa_{\SP}) - \nabla \QQ(\xa_{\SP}+\nperp_{\SP}), 
			\widetilde{\npara_{\SP}}\rangle\\
			&= \sum_{i\in \SP} \widetilde{\npara_i} 
			\Big[\hh^{-1}\Big( \frac{\xa_i}{\alpha}\Big) 
			- \hh^{-1}\Big( \frac{\xa_i+\nperp_i}{\alpha}\Big)\Big] \\
			&\overeq{(b)} \sum_{i\in \SP} 
			\widetilde{n^{\para,\ast}_i}
			\Big[\hh^{-1}\Big( \frac{\abs{\xa_i}}{\alpha}\Big) 
			- \hh^{-1}\Big( \frac{\abs{\xa_i}+n^{\perp,\ast}_i}{\alpha}\Big)\Big],
		\end{split}
	\end{equation}
	where $n^{\perp\ast}_i := \nperp_i \sign(\xa_i)$ 
	and $ \widetilde{n^{\para,\ast}_i}:= \widetilde{\npara_i} \sign(\xa_i)$ for $i\in \SP$.
	Note that in (b) we have also used that
	$ \norm{\nperp_\SP}_{\ell^\infty} \le \min_{i \in \SP} \abs{\ga_i}/4 $,
	see \cref{lemma:nperpbound_deep_non_unique},
	and that $\abs{\xa_i} \ge \frac{1}{2} \min_{i \in \SP} \abs{\ga_i}$,
	see \cref{lemma:two_minimizers_deep_2}.
	For $i\in \SP$ and $t\in \R$ define
	\begin{equation*}
		\phi_{i}(t):= \hh^{-1}\Big( \frac{\abs{\xa_i}+tn^{\perp,\ast}_i}{\alpha}\Big).
	\end{equation*}
	Because of 
	$\abs{\xa_i} \ge \frac{1}{2} \min_{i \in \SP} \abs{\ga_i}$
	and $\abs{\nperp_i} \le \frac{1}{4} \min_{i \in \SP} \abs{\ga_i}$, 
	see \cref{lemma:two_minimizers_deep_2}  
	and \cref{lemma:nperpbound_deep_non_unique},
	we have that $\abs{\xa_i}+tn^{\perp,\ast}_i>0$ for all $t\in (-1,1)$ 
	and so the map $\phi_{i}$ is differentiable on $(-1,1)$. 
	Hence, using the Taylor expansion of $\phi_i$ there exists $\xi_i\in (0,1)$ such that
	\begin{equation}\label{eqn1301}
		\begin{split}
			\hh^{-1}\Big( \frac{\abs{\xa_i}}{\alpha}\Big) 
			- \hh^{-1}\Big( \frac{\abs{\xa_i}+n^{\perp,\ast}_i}{\alpha}\Big)
			&= \phi_{i}(0)-\phi_{i}(1)
			=-\phi_{i}'(0) - \frac{1}{2} \phi_{i}''(\xi_i)\\
			&= -\big(\hh^{-1})'\Big( \frac{\abs{\xa_i}}{\alpha}\Big)\frac{n^{\perp,\ast}_i}{\alpha}
			- \frac{1}{2}\big(\hh^{-1})''\Big( \frac{\abs{\xa_i}+\xi_i n^{\perp,\ast}_i}{\alpha}\Big)\frac{(n^{\perp}_i)^2}{\alpha^2}.
		\end{split}
	\end{equation}
	Recall that by definition of $\Nor$ and of $\langle\cdot ,\cdot \rangle_{\ga}$
	we have
	\begin{equation} \label{eqn1302}
		\gamma  \alpha^{1+\gamma} 
		\sum_{i\in \SP} \frac{\nperp_i \widetilde{\npara_i}}{\abs{\ga_i}^{1+\gamma}} 
		= \gamma \alpha^{1+\gamma} \langle \widetilde{\npara},\nperp\rangle_{\ga} 
		=0.
	\end{equation}
	Inserting \eqref{eqn1301} into \eqref{eqn1300}, and using \eqref{eqn1302}, we deduce that
	\begin{align}
			&\langle \nabla \QQ(\xa) - \nabla \QQ(\xa+\nperp), \widetilde{\npara} \rangle \nonumber \\
			=& \sum_{i\in \SP} \frac{ \widetilde{\npara_i} \nperp_i}{\alpha} 
			\Big[ 
			\frac{ \gamma \alpha^{1+\gamma}}{\abs{\ga_i}^{1+\gamma}}
			- \big(\hh^{-1}\big)'\Big( \frac{\abs{\xa_i}}{\alpha}\Big) 
			\Big]
			- \frac{1}{2}  \sum_{i\in \SP}
			\frac{ \widetilde{n^{\para,\ast}_i} (\nperp_i)^2}{\alpha^2} 
			\big(\hh^{-1}\big)''\Big(\frac{\abs{\xa_i}+\xi_i n^{\perp,\ast}_i}{\alpha}\Big) \nonumber \\
			\le&  
			\frac{B_2 \norm{\widetilde{\npara_{\SP}}}_{\ell^1} \norm{\nperp_{\SP}}_{\ell^{\infty}}}{\alpha} 
			+
			 \frac{B_3\norm{ \widetilde{\npara_{\SP}}}_{\ell^1} \norm{\nperp_{\SP}}_{\ell^{\infty}}^2}
			 {\alpha^2}, \label{eqn1303}
	\end{align}
	where
	\begin{equation*}
		B_2 := \max_{i\in \SP} 
		\abs{ \frac{\gamma \alpha^{1+\gamma}}{\abs{\ga_i}^{1+\gamma}}
			- \big(\hh^{-1}\big)'\Big( \frac{\abs{\xa_i}}{\alpha}\Big) }
		\qquad \text{and} \qquad
		B_3 := \frac{1}{2} \max_{i\in \SP} \abs{  \big(\hh^{-1}\big)''\Big(\frac{\abs{\xa_i}+\xi_i n^{\perp,\ast}_i}{\alpha}\Big)}.
	\end{equation*}
	Inserting the lower bound \eqref{eqn1300a} and the upper bound \eqref{eqn1303} 
	into \cref{eqn800b},
	we obtain
	\begin{equation*}
		\frac{\norm{ \widetilde{ \npara }}_{\ell^1}^2 \gamma \alpha^{\gamma}}
		{3 \abs{\SP} \alpha^{1+\gamma} + 2B_1^{1+\gamma}}
		\le 
		\frac{ B_2 \norm{ \widetilde{ \npara_{\SP}}}_{\ell^1} \norm{\nperp_{\SP}}_{\ell^{\infty}}}{\alpha}  
		+\frac{ B_3 \norm{ \widetilde{\npara_{\SP}}}_{\ell^1} \norm{\nperp_{\SP}}_{\ell^{\infty}}^2}{\alpha^2} .
	\end{equation*}
	It follows that
\begin{equation} \label{eqn1307}
	\norm{ \widetilde{\npara} }_{\ell^1}
	\le \frac{3 \abs{\SP} \alpha^{1+\gamma} + 2B_1^{1+\gamma}}{ \gamma \alpha^{\gamma}}
	\left( \frac{B_2 \norm{\nperp_{\SP}}_{\ell^{\infty}}}{\alpha}
	+\frac{B_3 \norm{\nperp_{\SP}}_{\ell^{\infty}}^2}{\alpha^2}\right).
\end{equation}
In order to complete the proof, we need to bound $B_1$, $B_2$, and $B_3$
from above.	
We start by bounding $B_2$. 
We have for all $i\in \SP$ that
	\begin{equation}\label{eqn1308}
		\abs{ \frac{\gamma \alpha^{1+\gamma}}{\abs{\ga_i}^{1+\gamma}}
			- \big(\hh^{-1}\big)'\Big( \frac{\abs{\xa_i}}{\alpha}\Big) }
		\le \abs{ \frac{\gamma \alpha^{1+\gamma}}{\abs{\ga_i}^{1+\gamma}}
			- \frac{\gamma \alpha^{1+\gamma}}{\abs{\xa_i}^{1+\gamma}} }
		+ \abs{ \frac{\gamma \alpha^{1+\gamma}}{\abs{\xa_i}^{1+\gamma}}
			- \big(\hh^{-1}\big)'\Big( \frac{\abs{\xa_i}}{\alpha}\Big) }.
	\end{equation}
	By \cref{lemma:basic_inequalities_deep_non-unique},
	see \cref{eqn:bounds_on_h_D_5},
	and since $\min_{i \in \SP} \abs{\xa_i} \ge \min_{i \in \SP} \abs{\ga_i}/2 $,
	which follows from $ \norm{\xa- \ga}_{\ell^1} \le \min_{i \in \SP} \abs{\ga_i}/2 $,
	see \cref{lemma:two_minimizers_deep_2},
	we have
	\begin{align}
		\abs{ \frac{\gamma \alpha^{1+\gamma}}{\abs{\xa_i}^{1+\gamma}}
			- \big(\hh^{-1}\big)'\Big( \frac{\abs{\xa_i}}{\alpha}\Big) } 
		\le& 5 \gamma \Big(\frac{\abs{\xa_i}}{\alpha}\Big)^{-2-\gamma} 
		\le  5 \gamma \left( \frac{\alpha}{ \min_{i \in \SP} \abs{\xa_i} } \right)^{2+\gamma}
		\le 5 \cdot 2^{2+\gamma}  
		\gamma \left( \frac{\alpha}{ \min_{i \in \SP} \abs{\ga_i} } \right)^{2+\gamma} \nonumber \\
		=& 5 \cdot 2^{2+\gamma} \gamma \varepsilon^{2+\gamma}.
		\label{eqn1318}
	\end{align}
	This bounds the first term on the right-hand side of \cref{eqn1308}.
	To bound the second term on the right-hand side of \cref{eqn1308},
	we define $\psi(t):= (1+t)^{-1-\gamma}$ for $t\in (-1,1)$. 
	We obtain that
	\begin{align}
		\abs{ \frac{\gamma \alpha^{1+\gamma}}{\abs{\ga_i}^{1+\gamma}}
			- \frac{\gamma \alpha^{1+\gamma}}{\abs{\xa_i}^{1+\gamma}} }
		=&  \frac{\gamma \alpha^{1+\gamma}}{\abs{\ga_i}^{1+\gamma}} 
		\abs{ 1 - \frac{\abs{\ga_i}^{1+\gamma}}{\abs{\xa_i}^{1+\gamma} }  }
		= \frac{\gamma \alpha^{1+\gamma}}{ \abs{\ga_i}^{1+\gamma}}
		\abs{ \psi (0) - \psi \left( \frac{\abs{\xa_i}}{\abs{\ga_i}} -1 \right)  } \nonumber \\
		=&  \frac{\gamma \alpha^{1+\gamma}}{\abs{\ga_i}^{1+\gamma}} 
		\abs{ \psi(0) - \psi\Big( \frac{\abs{\xa_i}-\abs{\ga_i}}{\abs{\ga_i}}\Big)  }\nonumber \\
		\overeq{(a)}& \frac{\gamma  \alpha^{1+\gamma}}{\abs{\ga_i}^{1+\gamma}} 
		\abs{\psi'(\xi) \cdot \frac{\abs{\xa_i}-\abs{\ga_i}}{\abs{\ga_i}}} \nonumber \\
		\le &
		\frac{\gamma \alpha^{1+\gamma}}{ \min_{i \in \SP} \abs{\ga_i}^{2+\gamma}} 
		\underset{\le 2^{\gamma+2} (1+\gamma)}{\underbrace{\left( \max_{ \xi \in [-1/2,1/2] } \abs{ \psi'(\xi)} \right)}}
		\max_{i\in \SP} \abs{\xa_i - \ga_i} \nonumber \\
		\le &
		\frac{2^{\gamma+2} \gamma (1+\gamma) \alpha^{1+\gamma}}{ \min_{i \in \SP} \abs{\ga_i}^{2+\gamma}} 
		\norm{\xa - \ga}_{\ell^1} \nonumber \\
		\overleq{(b)} &
		\frac{2^{\gamma+2}\gamma (1+\gamma) \alpha^{2+\gamma}}{ \min_{i \in \SP} \abs{\ga_i}^{2+\gamma}} 
		 \Big( \frac{\norm{\ga}_{\ell^{1}}}{\mg}\Big)^{1+\gamma} 
		(1+10 \eps)\nonumber \\
		= &
		2^{\gamma+2}
		\gamma (1+\gamma)
		\varepsilon^{2+\gamma}
		 \Big( \frac{\norm{\ga}_{\ell^{1}}}{\mg}\Big)^{1+\gamma} 
		(1+10 \eps).
		\label{eqn1312}
	\end{align}
	Equation (a) follows from the Taylor expansion of $\psi$ at $0$.
	Note that we have $\abs{\xi} \le 1 /2 $
	due to 
	$\max_{i\in \SP} \abs{\xa_i - \ga_i} \le \norm{\xa -\ga}_{\ell^1} \le 1/2\min_{i \in \SP} \abs{\ga_i} $,
	see \cref{lemma:two_minimizers_deep_2}. 
	Equation (b) follows again from \cref{lemma:two_minimizers_deep_2}.
	By combining inequalities \eqref{eqn1318} and \eqref{eqn1312} 
	we then obtain that
	\begin{align}
		B_2
		&\le 
		\gamma 2^{2+\gamma}
		\varepsilon^{2+\gamma}
		\left(
		5   
		+ (1+\gamma) \left( \frac{\norm{\ga}_{\ell^{1}}}{\mg}\right)^{1+\gamma} 
		(1+10 \eps)
		\right) \nonumber \\
		&\overleq{(a)}
		8 \gamma
		\varepsilon^{2+\gamma}
		\left(
		5   
		+ 2 \left( \frac{\norm{\ga}_{\ell^{1}}}{\mg}\right)^{1+\gamma} 
		(1+10 \eps)
		\right)\nonumber\\
		&\overleq{(b)}
		8 \gamma
		\varepsilon^{2+\gamma}
		\left(
		5   
		+ 6 \left( \frac{\norm{\ga}_{\ell^{1}}}{\mg}\right)^{1+\gamma} 
		\right) \nonumber \\
		&\le 
		88 \gamma
		\varepsilon^{2+\gamma}
        \left( \frac{\norm{\ga}_{\ell^{1}}}{\mg}\right)^{1+\gamma},
		\label{eqn1310}
	\end{align}
	where in inequality (a) we used that $ \gamma = \frac{D-2}{D} \le 1 $.
	In inequality (b) we used the assumption that $\eps \le 1/8$.

	In the next step, we will derive an upper bound for $B_3$.
	First we note that for all $i\in \SP$
	\begin{align}
		\frac{\alpha}{\abs{\xa_i} + \xi_i n^{\perp,\ast}_i} 
		\overleq{(a)}
		&\frac{\alpha}{\abs{\xa_i} \left(1 - \frac{\abs{\nperp_i}}{\abs{\xa_i}}\right)}
		\nonumber \\
		\le  
		&\frac{\alpha}{\min_{i \in \SP} \abs{\xa_i}}
		\cdot
		\frac{1}{ \min_{i \in \SP} \left( 1 - \frac{\abs{\nperp_i}}{\abs{\xa_i}} \right)}
		\nonumber \\
		\overleq{(b)} 
		&\frac{4 \alpha}{ \min_{i \in \SP} \abs{\ga_i}}
        =
		4\varepsilon.
		\label{eqn:bounds_on_h_D_42}
	\end{align}
	In inequality $(a)$ we used that $\abs{\xi_i} \le 1$.
	In inequality $(b)$ we used that $\abs{\xa_i} \ge \abs{\ga_i}/2$ for $i\in \SP$
	and $ \abs{\nperp_i} \le \min_{i \in \SP} \abs{\ga_i}/4 \le \abs{\xa_i}/2 $,
	see \cref{lemma:nperpbound_deep_non_unique}. 
    
	Note that inequality \eqref{eqn:bounds_on_h_D_42} also 
	implies that $\frac{\abs{\xa_i} + \xi_i n^{\perp,\ast}_i}{\alpha}  \ge 1$ 
	since $\eps \le 1/4$
	due to Assumption \eqref{eqn1500}. 
	Then we can apply \cref{lemma:basic_inequalities_deep_non-unique},
	see \cref{eqn:bounds_on_h_D_4},
	in inequality (a) and obtain that
	\begin{align} 
		B_3 
		= 
		&\frac{1}{2}
		\max_{i\in \SP} 
		\abs{  \big(\hh^{-1}\big)''\Big(\frac{\abs{\xa_i}+\xi_i n^{\perp,\ast}_i}{\alpha}\Big)}
		\nonumber \\
		\overleq{(a)} 		
		& 8  \gamma 
		\left( 
			 \frac{\alpha}
			 { \min_{i \in \SP} \left( \abs{\xa_i}+\xi_i n^{\perp,\ast}_i \right)}
		\right)^{2+\gamma} \nonumber \\
		\overleq{(b)} 
		& 8 \cdot 4^{2+\gamma} \gamma
		\left( 
			 \frac{\alpha}
			 { \min_{i \in \SP}  \abs{\ga_i} }
		\right)^{2+\gamma} \nonumber \\
		=
		& 8 \cdot 4^{2+\gamma} \gamma \eps^{2+\gamma}
		\nonumber \\
		\overleq{(c)}
		& 512 \gamma \varepsilon^{2+\gamma}.
		\label{eqn1309}
	\end{align}
	In inequality (b) we have used \cref{eqn:bounds_on_h_D_42}.
    In inequality (c) we used that $\gamma \le 1$.

	It remains to bound $B_1$ from above. 	
	We compute that 
	\begin{align}
			B_1 
			= 
			&\max_{t\in [0,1]} \norm{ \xa_{\SP}+\nperp_{\SP} 
			+t \widetilde{\npara_{\SP}}}_{\ell^{1+\gamma}} \nonumber \\
			\overleq{(a)}	
			&\max_{t\in [0,1]} \norm{ \xa_{\SP}+\nperp_{\SP} 
			+t \widetilde{\npara_{\SP}}}_{\ell^{1}} \nonumber \\
			\overeq{(b)}
			&\max_{t\in [0,1]} \norm{ 
			t \xinf_{\SP}+ (1-t)\xa_{\SP} + (1-t) \nperp_{\SP} 
			}_{\ell^{1}} \nonumber \\
			\le
			&\max_{t\in [0,1]}
			\big[ 
			t \norm{\xinf_{\SP}}_{\ell^{1}} + (1-t) \norm{\xa_{\SP}}_{\ell^{1}}
			+ (1-t) \norm{\nperp_{\SP}}_{\ell^{1}}	
			\big] \nonumber \\
			\overleq{(c)}
			&\max_{t\in [0,1]}
			\big[ 
			t d \norm{ \ga }_{\ell^{\infty}} + (1-t) \norm{ \ga }_{\ell^{1}}
			+ (1-t) \norm{\nperp_{\SP}}_{\ell^{1}}	
			\big] \nonumber \\
			\le &d\norm{\ga}_{\ell^{\infty}} + \norm{\nperp_{\SP}}_{\ell^1} \nonumber \\
			\overleq{(d)} &d\norm{\ga}_{\ell^{\infty}} +  \min_{i \in \SP} \abs{\ga_i}/4  \nonumber \\
			\le
			& 2d \norm{\ga}_{\ell^\infty}.
			\label{eqn1308_d}
	\end{align}
	In inequality (a) we used that 
	$ \norm{\cdot}_{\ell^{1+\gamma}} \le \norm{\cdot}_{\ell^1} $.
	Equation (b) is due to $ \xinf = \xa + \nperp + \widetilde{\npara} $.
	For inequality (c) we used \cref{lemma:a_priori_bounds_deep}
	and for inequality (d) we used \cref{lemma:nperpbound_deep_non_unique}.
	We obtain that
	\begin{align} 
		\norm{ \widetilde{ \npara }}_{\ell^1}
		\overleq{(a)} &
		\frac{3 \abs{\SP} \alpha^{1+\gamma} + 2B_1^{1+\gamma}}{ \gamma \alpha^{1+\gamma}}
		\left( B_2
		+\frac{B_3 \norm{\nperp_{\SP}}_{\ell^{\infty}}}{\alpha}\right) 
		\norm{\nperp_{\SP}}_{\ell^{\infty}}
		\nonumber \\
		\overleq{(b)} &
		\frac{
            \left(3 \abs{\SP} \alpha^{1+\gamma} + 2^{2+\gamma}d^{1+\gamma} \norm{\ga}_{\ell^{\infty}}^{1+\gamma}\right)
		    \varepsilon^{2+\gamma}}
		{ \gamma \alpha^{1+\gamma} }
		\left(  88 \gamma
        \left( \frac{\norm{\ga}_{\ell^{1}}}{\mg}\right)^{1+\gamma}
		+\frac{512 \gamma \norm{\nperp_{\SP}}_{\ell^{\infty}}}{\alpha}\right)
		\norm{\nperp_{\SP}}_{\ell^{\infty}}
		\nonumber \\
		=&
		\varepsilon
		\left(
		3 \abs{\SP} \varepsilon^{1+\gamma}
		+2
		\left(\frac{ 2d \norm{\ga}_{\ell^{\infty}}}
		{\min_{i\in \SP} \abs{\ga_i}}\right)^{1+\gamma}
		\right)
		\left(  88 
        \left( \frac{\norm{\ga}_{\ell^{1}}}{\mg}\right)^{1+\gamma}
		+\frac{512  \norm{\nperp_{\SP}}_{\ell^{\infty}}}{\alpha}\right)
		\norm{\nperp_{\SP}}_{\ell^{\infty}} \nonumber \\
		\overleq{(c)} &
		5 \varepsilon
		\left(\frac{ 2d \norm{\ga}_{\ell^{\infty}}}
		{\min_{i\in \SP} \abs{\ga_i}}\right)^{1+\gamma}
		\left(  88 
        \left( \frac{\norm{\ga}_{\ell^{1}}}{\mg}\right)^{1+\gamma}
		+\frac{512  \norm{\nperp_{\SP}}_{\ell^{\infty}}}{\alpha}\right)
		\norm{\nperp_{\SP}}_{\ell^{\infty}},
       \label{eqn1311}
	\end{align}
    where in inequality (a) we used \eqref{eqn1307},
	in inequality (b) we used \eqref{eqn1310}, \eqref{eqn1309}, and \eqref{eqn1308_d},
	and inequality (c) follows from 
	\begin{align*}
		\abs{\SP}
		\le 
		\frac{d \norm{\ga}_{\ell^\infty}}{ \min_{i\in \SP} \abs{\ga_i} }
		\le
		\left(
		\frac{2d \norm{\ga}_{\ell^\infty}}{ \min_{i\in \SP} \abs{\ga_i} }
		\right)^{1+\gamma}.
	\end{align*}
    In order to proceed, we note that
	\begin{align*}
		\frac{512  \norm{\nperp_{\SP}}_{\ell^{\infty}}}{\alpha}
		\le 
		&\frac{512 \tilde{\varrho}  \norm{\nperp_{\SC}}_{\ell^{1}}}{\alpha}\\
		\le 
		&512 \tilde{\varrho}
		\abs{\SC}
		\Big[ 
		\hh(\varrho) 
		+\frac{4\cdot 2^\gamma \varrho^{-}}{\gamma (1-\varrho)^{\frac{1}{\gamma}+1}} 
		 \left( \frac{\alpha}{\min_{i \in \SP} \abs{ \ga_i }} \right)^{\gamma}
		\Big]\\
		\overleq{(a)} 
		&
		512 \tilde{\varrho}
		\abs{\SC}
		\left(
		\hh(\varrho)+1
		\right),
	\end{align*}
where inequality $(a)$ follows from Assumption \eqref{eqn1500}.
By inserting this estimate into \cref{eqn1311} we obtain that
\begin{align*}
	\norm{ \widetilde{ \npara }}_{\ell^1}
	\le 
	\frac{ C^{\sharp} \varepsilon \norm{\nperp_{\SP}}_{\ell^{\infty}}}
	{ \tilde{\varrho}}
	\le 
	C^{\sharp} \varepsilon \norm{\nperp_{\SC}}_{\ell^{1}},
\end{align*}
where
\begin{equation*}
    C^{\sharp} := 5\tilde{\varrho} 
    \cdot \Bigg(
        88 
        \left( \frac{\norm{\ga}_{\ell^{1}}}{\mg}\right)^{1+\gamma}
		+
		512 \tilde{\varrho}
		\abs{\SC}
		\left(
		\hh(\varrho)+1
		\right)
\Bigg)
\cdot 
		\left(
		\frac{2d \norm{\ga}_{\ell^\infty}}{ \min_{i\in \SP} \abs{\ga_i} }
		\right)^{1+\gamma}.
\end{equation*}
This completes the proof.
\end{proof}

\noindent \textbf{Step 4 (Combining the bounds):}
After having proven upper bounds for 
$\norm{\xa - \ga}_{\ell^1}$,
$\norm{\nperp_{\SC}}_{\ell^1}$, and $\norm{\widetilde{\npara}}_{\ell^1}$, 
we combine these bounds to obtain \cref{theorem:upper_bound_deep_non_unique}.

\begin{proof}[Proof of \cref{theorem:upper_bound_deep_non_unique}]
Recall \cref{ineq:Dominiklast11} which implies that 
\begin{align*}
	\norm{\xinf-\ga}_{\ell^1} 
	\le & 
	\norm{\nperp}_{\ell^1} + \norm{\widetilde{\npara}}_{\ell^1} + \norm{\xa-\ga}_{\ell^1}\\
	\overleq{(a)} & 
	(1+\tilde{\varrho} + C^{\sharp} \varepsilon) \norm{\nperp_{\SC}}_{\ell^1}  
	+ \norm{\xa-\ga}_{\ell^1}\\
	\overleq{(b)} & 
	(1+\tilde{\varrho} + C^{\sharp} \varepsilon) \norm{\nperp_{\SC}}_{\ell^1}  
	+
	\alpha  \Big( \frac{\norm{\ga}_{\ell^{1}}}{\mg}\Big)^{1+\gamma} 
		(1+10 \eps)
	\\
	\overleq{(c)} & 
	(1+\tilde{\varrho} + C^{\sharp} \varepsilon) 
	\alpha \abs{\SC}
		\left( 
		\hh(\varrho) 
		+\frac{4\cdot 2^\gamma \varrho^{-}}{\gamma (1-\varrho)^{\frac{1}{\gamma}+1}} 
		 \left( \frac{\alpha}{\min_{i \in \SP} \abs{ \ga_i }} \right)^{\gamma}
		\right)	\\
		&+\alpha  \Big( \frac{\norm{\ga}_{\ell^{1}}}{\mg}\Big)^{1+\gamma} 
		(1+10 \eps).
\end{align*}
Inequality (a) is due to 
$ \norm{\nperp}_{\ell^1} \le (1+\tilde{\varrho}) \norm{ \nperp_{\SC}}_{\ell^1}  $,
which is due to the definition of $\tilde{\varrho}$,
and from \cref{lemma:npara_bound_deep_non_unique}.
In inequality (b) we used \cref{lemma:two_minimizers_deep_2}
and in inequality (c) we used \cref{lemma:nperpbound_deep_non_unique}.
By rearranging terms we obtain that
\begin{align*}
	\frac{\norm{\xinf-\ga}_{\ell^1}}{\alpha}
	\le 
	(1+\tilde{\varrho}) \abs{\SC} \hh(\varrho)
	+ \left(\frac{\norm{\ga}_{\ell^{1}}}{\min_{i\in \SP}\abs{\ga_i}}\right)^{1+\gamma} 
	+
	g(\varepsilon),
\end{align*}
where the function $g$ is defined as 
\begin{align*}
	g(\varepsilon)
	:=&
	C^{\sharp} \varepsilon \abs{\SC}
	\left( 
		\hh(\varrho) 
		+\frac{4\cdot 2^\gamma \varrho^{-}\varepsilon^\gamma}
		{\gamma (1-\varrho)^{\frac{1}{\gamma}+1}} 
	\right)
	+
	10 \varepsilon
	\left(\frac{\norm{\ga}_{\ell^{1}}}{\min_{i\in \SP}\abs{\ga_i}}\right)^{1+\gamma}.
\end{align*}
This completes the proof of \cref{theorem:upper_bound_deep_non_unique}.
\end{proof}

\section{Proofs of technical lemmas}
\subsection{Lemmas regarding the solution space and the null space property constants}
\subsubsection{Proof of \cref{basic_properties_Lmin}} \label{sec:proof_of_basic_properties_Lmin}
\begin{proof}[Proof of \cref{basic_properties_Lmin}]
	By \cref{assumption_on_A_y_non-unique} the set $\Lscr$ is non-empty. Since $\Lscr$ is a finite-dimensional affine space, and the map $\norm{\cdot}_{\ell^1}$ is coercive and continuous, we deduce the existence of a minimizer. Hence $\Lmin \ne \emptyset$. Since the set $\Lscr$ and the map $\norm{\cdot}_{\ell^1}$ are convex, so is $\Lmin$. Since $\norm{\cdot}_{\ell^1}$ is continuous and $\Lscr$ closed, so is $\Lmin$. By definition, $\Lmin$ is bounded. Hence it is compact.
	Since $y\ne 0$, we have $A0 \ne y$ and so $0 \notin \Lmin$.
	
	Assume that no such $\sigma$ exists and let $c:= \min_{x\in \Lscr} \norm{x}_{\ell^1}$. Then there exist $x,x' \in \Lmin$ and $i\in [d]$ such that $x_ix_i' <0$. Hence $\abs{x_i-x_i'}< \abs{x_i}+\abs{x_i'}$. Since $\Lmin$ is convex, we have $\frac{x+x'}{2}\in \Lmin$. Hence
	\begin{equation*}
		2c
		= 2 \norm{x+x'}_{\ell^1} 
		= \sum_{j=1}^d \abs{x_j+x_j'} 
		< \sum_{j=1}^d \abs{x_j}+\abs{x_j'} 
		= 2c,
	\end{equation*}
	a contradiction.
\end{proof}

\subsubsection{Proof of \cref{lemma:tangent_cone_characterization}}\label{section:prooftangent_cone_characterization}

\begin{proof}[Proof of \cref{lemma:tangent_cone_characterization}]
	Recall that our goal is to prove that
	\begin{equation*}
		\Tan = \Big\{ n \in \ker(A) : \sum_{i \in \SP}\s n_i =0, \text{ and } n_{\SC} =0 \Big\}.
	\end{equation*}
	Denote by $\widetilde{\Tan}$ the right-hand side of the above equation. 	
	Let $x,x' \in \Lmin$ with $x\ne x'$.
	By definition of $\SP$, we have $x_i=x'_i=0$ for all $i\in \SC$.
	Hence 
	\begin{equation*}
		(x-x')_{\SC}=0.
	\end{equation*}
	Since $\Lmin$ is convex, $x+t(x'-x)\in \Lmin$ for all $t\in (0,1)$.
    Hence, by definition of $\Lmin$ we have that 
	\begin{equation*}
		\frac{\norm{x+t(x'-x)}_{\ell^1} - \norm{x}_{\ell^1}}{t}  
		= 0
	\end{equation*}
	for $t \in (0,1)$.
	Therefore, for $t>0$ sufficiently small, we have
	\begin{align*}
		0 
		&= 
		\sum_{i\in \supp(x)} \sign(x_i)(x'_i-x_i) + \sum_{i\in [d]\setminus \supp(x)} \abs{x'_i-x_i}\\
		&=
		\sum_{i\in \supp(x)} \sign(x_i)(x'_i-x_i) 
		+ \sum_{i\in [d]\setminus \supp(x)} \abs{x'_i}.
	\end{align*}
	By \cref{basic_properties_Lmin} and since $\supp(x) \subset \SP$, we have $\sign(x_i)=\s$ for all $i\in \supp(x)$ and $\abs{x'_i} =\s x'_i$ for all $i\in [d]$. Hence
	\begin{align*}
		0 
		&= \sum_{i\in \supp(x)} \s (x'_i-x_i) + \sum_{i\in [d]\setminus \supp(x)} \s x'_i\\
		&= \sum_{i\in \supp(x)} \s (x'_i-x_i) + \sum_{i\in [d]\setminus \supp(x)} \abs{x'_i}\\
		&=\sum_{i\in \SP} \s (x'_i-x_i).
	\end{align*}
	Therefore, $x'-x \in \widetilde{\Tan}$. Since $\widetilde{\Tan}$ is a linear space, we deduce that $\Tan \subset \widetilde{\Tan}$.
	
	Conversely, let $n \in \widetilde{\Tan}$ and let $x\in \Lmin$ such that $\supp(x)=\SP$. 
	Such $x$ exists since all $x' \in \Lmin$ have the same sign pattern,
	see \cref{basic_properties_Lmin},
	and since $\Lmin$ is convex.
	Furthermore, \cref{basic_properties_Lmin} implies that $0\ne x$.
	We obtain for some sufficiently small $0<t$ that 
	\begin{equation*}
		\frac{\norm{x+t n}_{\ell^1} - \norm{x}_{\ell^1}}{t}  
		= \sum_{i\in \supp(x)} \sign(x_i)n_i +  \sum_{i\in [d]\setminus \supp(x)} \abs{n_i}=\sum_{i\in \SP} \s n_i=0,
	\end{equation*}
	where in the last equality we used \cref{basic_properties_Lmin}.
	Therefore, $x+tn\in \Lmin$ and so $n = \frac{1}{t} \big((x+tn)-x\big) \in \Tan$.
\end{proof}

\subsubsection{Proof of \cref{lemma:normal_coneSimple} and \cref{lemma:normal_cone}}
\label{section:proof_of_normal_conelemma}
We note that \cref{lemma:normal_coneSimple} is a special case of \cref{lemma:normal_cone}
since, if the minimizer is unique, we have $ \Nor= \ker A $.
Thus, in the following we only prove \cref{lemma:normal_cone}.
For the proof of \cref{lemma:normal_cone} we need the following technical lemma.
\begin{lemma} \label[lemma]{lemma:tangent_cone}
	Let $d,A,y$ as in \cref{assumption_on_A_y_non-unique}.
	\begin{enumerate}[a)]
		\item \label{lemma:tangent_cone_item1}For every $m\in \ker(A)$ and $x \in \Lmin$ we have
		\begin{equation} \label{eqn101}
			- \sum_{i\in \supp(x)} \sign(x_i)m_i \le \sum_{i\notin \supp(x)} \abs{ m_i}.
		\end{equation}
		
		\item \label{lemma:tangent_cone_item2}If $m \in \ker(A)$ satisfies $m_{\SC}=0$, then $m\in \Tan$.
	\end{enumerate}
\end{lemma}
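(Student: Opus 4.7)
The plan is to prove part (a) directly from the first-order optimality of $\norm{\cdot}_{\ell^1}$ on $\Lscr$ at $x$, and then to deduce part (b) by applying part (a) to both $m$ and $-m$.

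For part (a), let $x \in \Lmin$ and $m \in \ker(A)$. Then $A(x+tm)=Ax=y$ for every $t\in \R$, so $x+tm\in \Lscr$ and the optimality of $x$ gives $\norm{x+tm}_{\ell^1}\ge \norm{x}_{\ell^1}$. For $t>0$ sufficiently small we have $\sign(x_i+tm_i)=\sign(x_i)$ for all $i\in \supp(x)$, and therefore
\begin{equation*}
  \norm{x+tm}_{\ell^1}
  = \sum_{i\in \supp(x)} \sign(x_i)(x_i+tm_i) + t\sum_{i\notin \supp(x)} \abs{m_i}
  = \norm{x}_{\ell^1} + t\Big[\sum_{i\in \supp(x)} \sign(x_i)m_i + \sum_{i\notin \supp(x)} \abs{m_i}\Big].
\end{equation*}
Dividing the inequality $\norm{x+tm}_{\ell^1}-\norm{x}_{\ell^1}\ge 0$ by $t>0$ and rearranging yields \eqref{eqn101}.

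For part (b), let $m \in \ker(A)$ with $m_{\SC}=0$. I first choose some $x \in \Lmin$ with $\supp(x)=\SP$. Such an $x$ exists: by definition of $\SP=\supp(\Lmin)$, for each $i\in \SP$ there is some $x^{(i)}\in \Lmin$ with $i\in \supp(x^{(i)})$, and any convex combination $x = \sum_{i \in \SP} \lambda_i x^{(i)}$ with strictly positive weights belongs to $\Lmin$ (by convexity, \cref{basic_properties_Lmin}) and has support exactly $\SP$, since all $x^{(i)}$ share the same sign pattern $\s$ by \cref{basic_properties_Lmin} and thus no cancellation can occur.

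Applying part (a) both to $m$ and to $-m$, and using $\supp(x)=\SP$ together with $m_{\SC}=0$, yields
\begin{equation*}
  -\sum_{i\in \SP}\sign(x_i)m_i \le 0 \qquad\text{and}\qquad \sum_{i\in \SP}\sign(x_i)m_i \le 0,
\end{equation*}
so $\sum_{i\in \SP}\sign(x_i)m_i=0$. For $t>0$ small enough to preserve signs on $\SP$, the computation from part (a) then gives $\norm{x+tm}_{\ell^1}=\norm{x}_{\ell^1}$, hence $x+tm\in \Lmin$. Therefore $m=\tfrac{1}{t}\bigl((x+tm)-x\bigr)\in \Tan$ by the definition \eqref{tangent_cone} of $\Tan$.

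The only mildly subtle step is producing an element of $\Lmin$ with full support $\SP$; everything else is a direct first-order optimality argument. I do not expect any real obstacle beyond being careful that $t$ is small enough to freeze the signs of $x_i$ for $i\in \supp(x)$ in both parts.
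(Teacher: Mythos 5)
Your proof is correct and follows essentially the same route as the paper: part~(a) is the identical first-order optimality computation for $t\downarrow 0$, and part~(b) applies part~(a) to $\pm m$ using a full-support element of $\Lmin$ (whose existence you justify a bit more explicitly than the paper does). The only difference is that at the very end the paper cites \cref{lemma:tangent_cone_characterization} to conclude $m\in\Tan$, whereas you re-derive the conclusion directly by observing $\norm{x+tm}_{\ell^1}=\norm{x}_{\ell^1}$ so $x+tm\in\Lmin$ and hence $m\in\Tan$ by definition~\eqref{tangent_cone} --- which is exactly what the ``conversely'' direction in the proof of \cref{lemma:tangent_cone_characterization} does, so this is a harmless inlining rather than a genuinely different argument.
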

We believe 
that the proof of this lemma
might be well-known to experts in the field.
However, since we could not find a reference,
we provide a proof for the sake of completeness.
\begin{proof}
	\emph{Proof of part a)} Let $m\in \ker(A)$. For every $t>0$, we have $x+tm \in \Lscr$. By the minimality of 
	$\norm{x}_{\ell^1}$ it follows that 
	\begin{equation*}
		0\le \frac{ \norm{x+tm}_{\ell^1} - \norm{x}_{\ell^1}}{t}.
	\end{equation*}
	Thus, for sufficiently small $t>0$, we have that
	\begin{equation*}
		0 \le \sum_{i\in \supp(x)} \sign(x_i)m_i +  \sum_{i\notin \supp(x)} \abs{m_i}.
	\end{equation*} 
	From this we infer \eqref{eqn101}.
	
	\emph{Proof of part b)} Let $x\in \Lmin$ with $\supp(x)=\SP$. 
	Such $x$ exists due to the convexity of $\Lmin$
	and due to the fact that all $x' \in \Lmin$
	have the same sign pattern. 
	Applying \eqref{eqn101} to both $m$ and $-m$, we obtain
	\begin{equation*}
		-\sum_{i\in \supp(x)} \sign(x_i)m_i \le 0
		\quad \text{and} \quad
		\sum_{i\in \supp(x)} \sign(x_i)m_i \le 0.
	\end{equation*}
	Using this and \cref{basic_properties_Lmin}, we infer that 
	\begin{equation*}
		0= \sum_{i\in \supp(x)} \sign(x_i) m_i
		=\sum_{i\in \SP} \s m_i.
	\end{equation*}
	Hence $m \in \Tan$ by \cref{lemma:tangent_cone_characterization}.
\end{proof}
With this lemma at hand, we can prove \cref{lemma:normal_cone}.
\begin{proof}[Proof of \cref{lemma:normal_cone}]
	Let $n\in \Nor$ with $n_{\SC}=0$. Then \cref{lemma:tangent_cone} implies that $n \in \Tan$. Hence $n\in \Tan \cap \Nor$ and it follows from \eqref{direct_sum} that $n=0$.
	
	Now assume that $\Nor \ne \{0\}$. By assumption, $\Nor\setminus \{0\} \ne \emptyset$ and so the suprema \eqref{eqn102} exist in $(-\infty,\infty]$.
	
	Let $\Nor_1 := \Nor \cap \partial B_1(0)$ and for $m \in \Nor\setminus\{0\}$ let 
	\begin{equation*}
		\varrho(m):= \frac{1}{\norm{n_{\SC}}_{\ell^1}}
		\cdot \Big( -\sum_{i\in \SP} \s n_i\Big).
	\end{equation*}
	Since $\varrho(tm)= \varrho(m)$ for all $t>0$ and $m\in \Nor\setminus\{0\}$, we have
	\begin{equation*}
		\sup_{n \in \Nor\setminus\{0\}} \varrho(m)
		= \sup_{n \in \Nor_1} \varrho(m).
	\end{equation*}
	Since $\varrho(\cdot)$ is continuous and $\Nor_1$ is compact, the supremum is attained. 
	
	Let $n\in \Nor\setminus\{0\}$ be such that $\varrho = \varrho(n)$. By \eqref{eqn101} of \cref{lemma:tangent_cone}, we have $\varrho(n)\le 1$. Since $\Nor$ is a linear space, we also have $-n \in \Nor$. Since $\varrho(-n) = - \varrho(n)$, it follows that $\varrho \ge \abs{\varrho(n)}\ge 0$.
	
	Assume for the sake of contradiction that $\varrho(n)=1$.
	Let $\xa\in \Lmin$ with full support $S( \xa )=\SP$.
	Then, for sufficiently small $\eps>0$, we have
	\begin{equation*}
		\norm{\xa+\eps n}_{\ell^1} 
	 		= \norm{\xa}_{\ell^1} 
			+ \eps \sum_{i\in \SP} \s n_i		
			+ \eps \sum_{i\in \SC} \abs{n_i}		
		= \norm{\xa}_{\ell^1},
	\end{equation*}
	where the last equation follows from $\varrho (n) =1$.
	Hence $\xa+\eps n\in \Lmin$. 
	Since, by assumption, $S(\xa) = \SP$, 
	it follows that $S(\xa+\eps n) \subset S(\xa) = \SP $ and so
	$n_{\SC}=0$.
	Then part b) of \cref{lemma:tangent_cone} implies that $n\in \Tan$. 
	We infer from $ \Tan \cap \Nor = \{0\}$, 
	see \cref{direct_sum}, that $n=0$, a contradiction.
	
	The claims for $\tilde{\varrho}$ and $\varrho^{-}$ are deduced analogously.
\end{proof}

\subsection{Lemmas regarding the solution space in the case $D=2$}

\subsubsection{Proof of \cref{lemma:maximal_support}}\label{section:proof_of_maximal_support_D_equal_2}

In order to prove \cref{lemma:maximal_support},
we will first establish the following technical lemma.
\begin{lemma} \label[lemma]{lemma:maximal_support_shallow_convex}
	Let the function $E$ be as defined in \cref{equ:Edefinition}.
	Let $C\subset\R^d_{\ge0}$ be a non-empty convex and compact subset.
	Let
	\begin{equation*}
		x \in \argmin_{z \in C} E(z).
	\end{equation*}
	Then  
	for all $n\in\R^d$ for which there exists $\lambda>0$ 
	such that $x+\lambda n\in C$ 
	it holds that
	$n_{i}=0$ for all $i\notin \supp(x)$.
	In particular, $ \supp(C) = \supp (x)$.
\end{lemma}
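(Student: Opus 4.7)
The plan is to argue by a first-order variation, exploiting the fact that the scalar function $t \mapsto t\log t - t$ has a vertical tangent at $t = 0^+$ (its derivative $\log t$ tends to $-\infty$). This means that adding mass to a currently-zero coordinate produces an unbounded rate of decrease of $E$, which is incompatible with minimality.

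More concretely, fix $n \in \R^d$ and $\lambda > 0$ with $x + \lambda n \in C$. By convexity of $C$, $x + \mu n \in C$ for every $\mu \in [0,\lambda]$. Since $C \subset \R^d_{\ge 0}$, any index $i$ with $x_i = 0$ satisfies $x_i + \mu n_i = \mu n_i \ge 0$ for all such $\mu$, which forces $n_i \ge 0$. I would then suppose, for contradiction, that $n_i > 0$ for some $i \notin \supp(x)$ and estimate
\begin{equation*}
E(x + \mu n) - E(x) = \sum_{j \in \supp(x)} \bigl[(x_j+\mu n_j)\log(x_j+\mu n_j) - (x_j+\mu n_j) - x_j\log x_j + x_j\bigr] + \sum_{\substack{j \notin \supp(x) \\ n_j > 0}} \bigl[\mu n_j \log(\mu n_j) - \mu n_j\bigr].
\end{equation*}
The first sum is $O(\mu)$ as $\mu \downarrow 0$ (the summands are smooth at $\mu = 0$ since $x_j > 0$), whereas the $j=i$ term in the second sum is $\mu n_i \log(\mu n_i) - \mu n_i$, which equals $\mu n_i \log \mu + O(\mu)$ and hence is dominated by $\mu n_i \log \mu \to 0^-$ with $\frac{d}{d\mu}[\mu n_i \log \mu] = n_i(\log\mu + 1) \to -\infty$. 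Consequently $E(x + \mu n) < E(x)$ for all sufficiently small $\mu > 0$, contradicting the minimality of $x$. Thus $n_i = 0$ for every $i \notin \supp(x)$.

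For the ``in particular'' claim $\supp(C) = \supp(x)$, the inclusion $\supp(x) \subset \supp(C)$ is immediate from $x \in C$. For the converse, take any $z \in C$ and set $n := z - x$; by convexity $x + \lambda n \in C$ for $\lambda = 1$, so the main statement yields $z_i - x_i = n_i = 0$ for every $i \notin \supp(x)$, i.e.\ $z_i = 0$. Taking the union over $z \in C$ gives $\supp(C) \subset \supp(x)$.

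The only delicate point is the estimate of $E(x+\mu n) - E(x)$ as $\mu \downarrow 0$: one has to separate the smooth contribution from coordinates in $\supp(x)$ (which is $O(\mu)$) from the singular contribution of the newly activated coordinates (which behaves like $\mu \log \mu$ and therefore dominates with the correct sign). Everything else is bookkeeping with convexity of $C$ and nonnegativity.
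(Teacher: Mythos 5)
Your argument is correct and uses essentially the same idea as the paper's proof: decompose $E(x+\mu n)-E(x)$ into the smooth contribution from $\supp(x)$ and the singular contribution from $\supp(x)^c$, and observe that the latter behaves like $\mu\log\mu$, which (being negative and larger in magnitude than $O(\mu)$ as $\mu\downarrow 0$) contradicts minimality. The paper phrases the same comparison slightly differently — dividing by $t$ and taking a $\liminf$/$\limsup$ so that a finite directional derivative is confronted with an infinite one — but the underlying mechanism is identical.
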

\begin{proof}
	Let $S:=\supp(x)$ and $S^c:= [d]\setminus \supp(x)$.
	Assume for the sake of contradiction, 
	that there exist $m\in \R^d$ and $\lambda>0$ such $x+\lambda m\in C$ and $m_{S^c}\ne 0$.
	Since $C$ is convex, 
	we have that $x +tm \in C$ for all $0<t\le \lambda$. 
	By minimality, we deduce that 
	\begin{equation*}
		E(x_S +t m_S) + E(t m_{S^c}) = E(x+tm)
		\ge E(x) = E(x_S)
	\end{equation*}
	Separating the entropy into its components on $S$ and $S^c$, and dividing by $t$, we obtain
	\begin{equation} \label{eqn1205}
		\frac{1}{t} \big(E(x_S +t m_S) -E(x_S)\big) 
		\ge - \frac{1}{t}E(t m_{S^c}).
	\end{equation}
	Since the map $t\mapsto E(x_S +t m_S)$ is differentiable at $t=0$, the left-hand side of \eqref{eqn1205} converges to a finite number as $t\downarrow0$. For the right-hand side, we compute
	\begin{equation*}
		\liminf_{t\downarrow0} \Big[-\frac{1}{t} E(t m_{S^c})\Big]
		= \liminf_{t\downarrow0} \Big[- \frac{1}{t} \sum_{i\in S^c} tm_i \log(tm_i)- tm_i\Big]
		= \sum_{i\in S^c} m_i - \Big[\limsup_{t \downarrow 0} \sum_{i\in S^c} m_i \log(tm_i)\Big].
	\end{equation*}
	Since $m_i \ge0$ for all $i \in S^c$ and there exists by assumption $j\in S^c$ such that $m_j>0$, we have
	\begin{equation*}
		-\Big[\limsup_{t \downarrow 0} \sum_{i\in S^c} m_i \log(tm_i)\Big] = \infty.
	\end{equation*}
	This contradicts \eqref{eqn1204} for sufficiently small $t>0$.
	
	Now let $z\in C$ be arbitrary and let $n:= z-x$. 
	Then $x+n \in C$ and so $z_{S^c} = n_{S^c} =0$. 
	Therefore, $\supp(z) \subset \supp(x)$ and so $\supp(C) \subset \supp(x)$. 
	Since it also holds that $x\in C$ we obtain equality.
\end{proof}
From \cref{lemma:maximal_support_shallow_convex} 
we can now immediately deduce \cref{lemma:maximal_support}.

\begin{proof}[Proof of \cref{lemma:maximal_support}]
	By \cref{basic_properties_Lmin}, the set $\Lmin$ is non-empty, convex and compact. 
	Furthermore, $E(\abs{x}) = E(\sigma x)$ for all $x\in\Lmin$ and $\sigma$ as in \cref{basic_properties_Lmin}. 
	Therefore, replacing $\Lmin$ by $\sigma \Lmin$ 
	we may assume 
	without loss of generality
	that $\Lmin\subset \R^d_{\ge0}$. Then \cref{lemma:maximal_support} follows from \cref{lemma:maximal_support_shallow_convex}.
\end{proof}

\subsubsection{Proof of \cref{lemma:normal_cone_shallow}}\label{section:proof_of_lemma:normal_cone_shallow}

It remains to prove \cref{lemma:normal_cone_shallow},
which states that 
$\Tan + \Nor = \ker(A)$
and 
$\Tan \cap \Nor = \{0\}$ 
holds.
\begin{proof}[Proof of \cref{lemma:normal_cone_shallow}]
	To show that $\ker(A) = \Tan +\Nor$, let $m\in \ker(A)$ be arbitrary. Since $\supp(\Tan) \subset \SP$ by \cref{lemma:tangent_cone_characterization}, we identify $\Tan$ with its restriction to $\R^{\SP}$.
	The restriction of the map $\langle \cdot , \cdot \rangle_{\ga}$ to $\R^{\SP}$ is a scalar product on $\R^{\SP}$. Since $m_{\SP} \in \R^{\SP}$, there exist $m_{\SP,\para} \in \R^{\SP}$ and $m_{\SP,\perp}\in \R^{\SP}$ such that 
	\begin{equation*}
		m_{\SP} = m_{\SP,\para} + m_{\SP,\perp},
		\qquad
		m_{\SP,\para}\in \Tan,
		\qquad\text{and}\qquad
		\langle n , m_{\SP,\perp}\rangle_{\ga} =0 \quad \text{for all } n\in \Tan
	\end{equation*}
	since $ \langle \rangle_{\ga} $ is a scalar product on $ \R^{\SP} $.
	Define 
	\begin{equation*}
		m_{\para} := m_{\SP,\para}
		\qquad\text{and}\qquad
		m^{\perp} := m_{\SP,\perp} + m_{\SC}.
	\end{equation*}
	It follows that
	\begin{equation} \label{eqn3000}
		m 
		= m_{\SP} + m_{\SC} 
		=  m_{\SP,\para} + m_{\SP,\perp} + m_{\SC} 
		= m_{\para} + m^{\perp}.
	\end{equation}
	Since $m \in \ker(A)$ and $m^{\para} \in \Tan \subset\ker(A)$, we have $m^{\perp}\in \ker(A)$. Furthermore, for all $n\in \Tan$ we have
	\begin{equation*}
		\langle m^{\perp},n\rangle_{\ga} 
		= \sum_{i\in \SP} \frac{n_i m^{\perp}_i}{\abs{\ga_i}}
		= \sum_{i\in \SP} \frac{n_i m_{\SP,\perp,i}}{\abs{\ga_i}}
		=0.
	\end{equation*}
	Therefore, $m^{\perp} \in \Ncal$. Now \eqref{eqn3000} implies that $\ker(A) = \Tan +\Nor$.
	
	It remains to show that $\Nor \cap \Tan =\{0\}$. 
	For that, let $n\in \Nor \cap \Tan$.
	Then it holds that
	\begin{equation*}
		0 =\langle n, n \rangle_{\ga} 
		= \sum_{i\in \SP} \frac{n_i^2}{\abs{\ga_i}}.
	\end{equation*}
	Hence $n_{\SP} =0$. 
	Furthermore, since $n\in \Tan$, we have $n_{\SC}= 0$ by \cref{lemma:tangent_cone_characterization}.
	This completes the proof.
\end{proof}

\subsection{Lemmas regarding the solution space in the case $D\ge 3$}

\subsubsection{Proof of \cref{lemma:maximal_support_deep}}\label{section:proof_of_lemma:maximal_support_deep}

\begin{lemma} \label[lemma]{lemma:maximal_support_deep_convex}
	Let $D \in \N$ with $D\ge 3$. 
	Let $C\subset\R^d_{\ge0}$ be a non-empty convex 
	and compact subset.
	Let
	\begin{equation*}
		x \in \argmax_{x \in C} \norm{x}_{\ell^{\frac{2}{D}}}.
	\end{equation*}
	Then $n_{i}=0$ for all $i\notin \supp(x)$ and all $n\in\R^d$ for which there exists $\lambda>0$ such that $x+\lambda n\in C$. In particular, $\supp(x) = \supp(C)$.
\end{lemma}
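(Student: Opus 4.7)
The plan is to exploit the fact that $t \mapsto t^{2/D}$ has infinite right-derivative at $t=0$ when $D\ge 3$ (so $2/D < 1$). This singularity forces any maximizer of $\norm{\cdot}_{\ell^{2/D}}$ on $C$ to have maximal support, since ``turning on'' a new coordinate always produces a super-linear first-order gain that cannot be offset by the smooth contribution from existing coordinates.

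Concretely, I will argue by contradiction. Suppose there exist $n\in \R^d$ and $\lambda>0$ with $x+\lambda n \in C$, but $n_j \ne 0$ for some $j \notin \supp(x)$. Set $m := \lambda n$. Since $x+m \in C \subset \R^d_{\ge 0}$ and $x_j=0$, we must have $m_j \ge 0$, hence $m_j > 0$. Similarly $m_i \ge 0$ for every $i \notin \supp(x)$. By convexity of $C$, the segment $\{x+tm : t \in [0,1]\}$ lies in $C$. Since $s \mapsto s^{D/2}$ is strictly increasing on $[0,\infty)$, maximizing $\norm{\cdot}_{\ell^{2/D}}$ over $C$ is equivalent to maximizing
\begin{equation*}
    f(t) := \sum_{i\in \supp(x)} (x_i + tm_i)^{2/D} + t^{2/D}\sum_{i \notin \supp(x)} m_i^{2/D}.
\end{equation*}
The first sum is differentiable in $t$ at $0$ with finite derivative, so it equals $f(0) + O(t)$ as $t\downarrow 0$. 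The second sum equals $c\, t^{2/D}$ with $c > 0$ because $m_j > 0$. Since $2/D < 1$, the term $c\,t^{2/D}$ dominates $O(t)$ as $t \downarrow 0$, so $f(t) > f(0)$ for all sufficiently small $t > 0$. This contradicts the maximality of $x$ on $C$, which proves the first assertion.

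For the ``in particular'' statement, apply what has just been shown to $n := z - x$ and $\lambda=1$ for an arbitrary $z \in C$: one obtains $z_i - x_i = 0$ for every $i \notin \supp(x)$, hence $z_i = 0$ there, so $\supp(z) \subset \supp(x)$. Combined with $x \in C$ this gives $\supp(C) = \supp(x)$. The only real substance in the argument is the splitting of $f(t)$ above and the observation that the $t^{2/D}$-singularity from outside-support coordinates dominates the linear contribution from inside the support; no further obstacles are expected.
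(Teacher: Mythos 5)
Your proposal is correct and takes essentially the same approach as the paper's proof: split $\norm{x+tm}_{\ell^{2/D}}^{2/D}$ into the contribution from $\supp(x)$ (differentiable at $t=0$, hence $O(t)$) and the contribution from the complement (a $c\,t^{2/D}$ term with $c>0$), then use $2/D<1$ so that the singular term dominates as $t\downarrow 0$, contradicting maximality. The only cosmetic difference is that you first observe $m_i\ge 0$ outside $\supp(x)$ to drop absolute values, whereas the paper simply writes $\abs{m_i}^{2/D}$; the argument is otherwise identical.
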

\begin{proof}
	Let $S:=\supp(x)$ and $S^c:= [d]\setminus \supp(x)$.
	Assume for the sake of contradiction, that there exist $m\in \R^d$ and $\lambda>0$ such $x+\lambda m\in C$ and $m_{S^c}\ne 0$.	Since $C$ is convex, we have that $x +tm \in C$ for all $0<t\le \lambda$. By maximality, we deduce that 
	\begin{equation*}
		\norm{x+tm}_{\ell^{\frac{2}{D}}}^{\frac{2}{D}} 
		\le \norm{x}_{\ell^{\frac{2}{D}}}^{\frac{2}{D}}.
	\end{equation*}
	Separating the sums into its components on $S$ and $S^c$, and dividing by $t$, we obtain
	\begin{equation} \label{eqn1204}
		\frac{1}{t} \big(\norm{x_{\SP}+tm_{\SP}}_{\ell^{\frac{2}{D}}}^{\frac{2}{D}} -\norm{x_{\SP}}_{\ell^{\frac{2}{D}}}^{\frac{2}{D}}\big) 
		\le - \frac{1}{t}\norm{tm_{S^c}}_{\ell^{\frac{2}{D}}}^{\frac{2}{D}}.
	\end{equation}
	Since the map $t\mapsto \norm{x_{\SP}+tm_{\SP}}_{\ell^{\frac{2}{D}}}^{\frac{2}{D}}$ is differentiable at $t=0$, the left-hand side of \eqref{eqn1204} converges to a finite number as $t\downarrow0$.
    For the right-hand side, we compute
	\begin{equation*}
		\limsup_{t\downarrow0} -\frac{1}{t} \norm{tm_{S^c}}_{\ell^{\frac{2}{D}}}^{\frac{2}{D}}
		= \limsup_{t\downarrow0} -t^{\frac{2}{D}-1} \sum_{i\in \SP^c} \abs{m_i}^{\frac{2}{D}}
		= -\infty.
	\end{equation*}
	
	Now let $z\in C$ be arbitrary and let $n:= z-x$. Then $x+n \in C$ and so $z_{S^c} = n_{S^c} =0$. Therefore, $\supp(z) \subset \supp(x)$ and so $\supp(C) \subset \supp(x)$. Since, $x\in C$, we obtain equality.
\end{proof}

\begin{proof}[Proof of \cref{lemma:maximal_support_deep}]
	By \cref{basic_properties_Lmin}, the set $\Lmin$ is non-empty, convex and compact.
    Furthermore, $\norm{x}_{\ell^{\frac{2}{D}}} = \norm{\sigma x}_{\ell^{\frac{2}{D}}}$ for all $x\in\R^d$ and $\sigma$ as in \cref{basic_properties_Lmin}.
    Therefore, replacing $\Lmin$ by $\sigma \Lmin$, if necessary, we may assume that $\Lmin\subset \R^d_{\ge0}$.

	Now, we show that the maximization problem \eqref{def:minimizer_deep} has a unique solution.
    Let $z',z \in \argmax_{x \in \Lmin} \norm{x}_{\ell^{\frac{2}{D}}}$ and assume for the sake of contradiction that $z'\ne z$.
    Let $c:= \max_{x\in \Lmin} \norm{x}_{\ell^{\frac{2}{D}}}$.
    By \cref{basic_properties_Lmin}, the set $\Lmin$ is convex and thus $\frac{z'+z}{2}\in \Lmin$.
    Furthermore, \cref{lemma:maximal_support_deep} implies that $\supp(z')=\supp(z)$.
    The map $\R^{\SP}_{>0} \ni \xi \mapsto \norm{\xi}_{\ell^{\frac{2}{D}}}^{\frac{2}{D}} = \sum_{i\in \SP} \xi_i^{2/D}$ is strictly concave. Since $z'_{\SP},z_{\SP} \in \R^{\SP}_{>0}$, we compute
	\begin{equation*}
		c\ge \norm{ \frac{z'+z}{2}}_{\ell^{\frac{2}{D}}}^{\frac{2}{D}} 
		= \norm{ \frac{z'_{\SP}+z_{\SP}}{2}}_{\ell^{\frac{2}{D}}}^{\frac{2}{D}} 
		> \frac{1}{2}\norm{ z'_{\SP} }_{\ell^{\frac{2}{D}}}^{\frac{2}{D}} 
		+ \frac{1}{2}\norm{ z_{\SP}}_{\ell^{\frac{2}{D}}}^{\frac{2}{D}} 
		= \frac{1}{2}\norm{ z' }_{\ell^{\frac{2}{D}}}^{\frac{2}{D}} 
		+ \frac{1}{2}\norm{ z}_{\ell^{\frac{2}{D}}}^{\frac{2}{D}}
		= c,
	\end{equation*}
	a contradiction. 
	
    The claim about the support follows from \cref{lemma:maximal_support_deep_convex} with $C:= \Lmin$.
\end{proof}

\subsubsection{Proof of \cref{lemma:normal_cone_deep}}\label{section:proof_of_lemma:normal_cone_deep}

\begin{proof}[Proof of \cref{lemma:normal_cone_deep}]
	The proof is similar to the proof of \cref{lemma:normal_cone_shallow}.
\end{proof}

\subsubsection{Proof of \cref{lemma:two_minimizers_deep}} \label{section:proof_of_lemma:two_minimizers_deep}

\begin{proof}[Proof of \cref{lemma:two_minimizers_deep}]
	By \cref{basic_properties_Lmin}, the set $\Lmin$ is compact.
    Since $\GD(\abs{\cdot})$ is continuous, the set in \eqref{eqn1020} is non-empty.
	Using that $\norm{\cdot}_{\ell^1}$ is constant on $\Lmin$ at $(a)$, we obtain
	\begin{equation*}
		\begin{split}
			\argmin_{x\in \Lmin} \GD(\abs{x})
			&= \argmin_{x\in \Lmin}\Big[ \sum_{i=1}^d \alpha \Big( \frac{\abs{x_i}}{\alpha} - \frac{D}{2} \Big( \frac{\abs{x_i}}{\alpha}\Big)^{\frac{2}{D}}\Big)\Big]\\
			&= \argmin_{x\in \Lmin} \Big[ \norm{x}_{\ell^1} -\frac{D}{2}\alpha^{1-\frac{2}{D}} \sum_{i=1}^d  \abs{x_i}^{\frac{2}{D}}\Big]\\
			&\overeq{(a)} \argmin_{x\in \Lmin} \Big[ -\sum_{i=1}^d  \abs{x_i}^{\frac{2}{D}}\Big]\\
			&= \argmax_{x\in \Lmin} \Big[\sum_{i=1}^d  \abs{x_i}^{\frac{2}{D}}\Big]\\
			&= \argmax_{x\in \Lmin} \norm{x}_{\ell^{\frac{2}{D}}}.
		\end{split}
	\end{equation*}
    The claim now follows by definition of $\ga$.	
\end{proof}

\subsection{Basic properties of $\arsinh$ and $\Ha$}

\subsubsection{Proof of Lemma \ref{lemma:basic_properties_arsinh}}\label{section:basic_properties_arsinh}

\begin{proof}[Proof of \cref{lemma:basic_properties_arsinh}]
	\emph{(i)} We have
	\begin{equation*}
		\arsinh \left(\frac{t}{2} \right) 
		= \log\Big( \frac{t}{2} + \sqrt{\frac{t^2}{4}+1} \Big) = \log(t) + \Delta(t),
	\end{equation*}
	where
	\begin{equation*}
		\Delta(t) = \log\Big( \frac{1}{2} \big( 1 + \sqrt{1 + \frac{4}{t^2}} \big)\Big).
	\end{equation*}
	Using the concavity of the square root and of the logarithm, 
	i.e., 
	$\sqrt{1+\eps} \le 1 + \frac{\eps}{2}$ and $\log(1+\eps) \le \eps$, and the monotonicity of the logarithm, we infer that
	\begin{equation*}
		\Delta(t) \le \log\Big( \frac{1}{2}\big( 2 + \frac{2}{t^2}\big)\Big) \le \frac{1}{t^2}.
	\end{equation*}
	
	\emph{(ii)} Switching the roles of $s$ and $t$, if necessary, or their signs, we may assume that $0<s<t$. In this case we have 
	$\sqrt{1+\frac{1}{t^2}} \le \sqrt{1+\frac{1}{s^2}}$ and thus
	\begin{equation*}
		\arsinh(t) -\arsinh(s)= \log\big( t +\sqrt{t^2+1} \big) - \log\big( s+\sqrt{s^2+1} \big)
		= \log\Big( \frac{t\sqrt{1+\frac{1}{t^2}}}{s\sqrt{1+\frac{1}{s^2}}} \Big)\le \log\Big(\frac{t}{s}\Big).
	\end{equation*}
	
	\emph{(iii)} The map is smooth and its first and second derivatives are
	\begin{equation*}
		\frac{\dif}{\dif t} \big( t\arsinh(t)\big) 
		= \arsinh(t) + \frac{t}{\sqrt{t^2+1}}
	\end{equation*}
	and
	\begin{equation*}
		\frac{\dif^2}{\dif t^2} \big( t\arsinh(t)\big) 
		= \frac{1}{\sqrt{t^2+1}} + \frac{\sqrt{t^2+1} 
		- \frac{t^2}{\sqrt{t^2+1}}}{t^2+1}
		= \frac{1}{\sqrt{t^2+1}}+\frac{1}{(t^2+1)^{\frac{3}{2}}}>0.
	\end{equation*}
\end{proof}

\subsubsection{Proof of Lemma \ref{lemma:strong_convexity}}
\label{section:proof_strong_convexity}

\begin{proof}[Proof of Lemma \ref{lemma:strong_convexity}]
	We trace the steps in \cite[Lemma 4]{ghai2019exponentiated}.

	We have 
	\begin{equation*}
		\frac{\partial^2}{\partial x_i^2} \Ha(x) = \frac{1}{\sqrt{x_i^2 + (2\alpha)^2}}.
	\end{equation*}

	Then, by the Cauchy-Schwarz inequality, we have
	\begin{equation*}
		\begin{split}
			&\langle \nabla^2 \Ha(x)n,n\rangle 
			= \sum_{i \in \supp (n)} \frac{n_i^2}{\sqrt{x_i^2 + (2\alpha)^2}}
			= \sum_{i \in \supp (n)} \frac{n_i^2}{\sqrt{x_i^2 + (2\alpha)^2}} 
			\cdot \frac{\sum_{i \in \supp (n)} \sqrt{x_i^2 + (2\alpha)^2}}{\sum_{i \in \supp (n)} \sqrt{x_i^2 + (2\alpha)^2}}\\
			&\ge \frac{1}{\sum_{i \in \supp (n)} \sqrt{x_i^2 + (2\alpha)^2}} 
			\left( \sum_{i=1}^d \frac{\abs{n_i}}{\sqrt[4]{x_i^2 + (2\alpha)^2}}\sqrt[4]{x_i^2 + (2\alpha)^2}\right)^2
			= \frac{\norm{n}_{\ell^1}^2}{\sum_{i \in \supp (n)} \sqrt{x_i^2 + (2\alpha)^2}}.
		\end{split}
	\end{equation*}
	The claim now follows from 
	$\sum_{i \in \supp (n)} \sqrt{x_i^2 + (2\alpha)^2} \le \norm{x}_{\ell^1} + 2\alpha \vert \supp (n) \vert $.
\end{proof}

\subsection{Basic properties of $\hh$, $q_D$, and $\QQ$} \label{sec:basic_properties_Dge3}

Before we start with the proofs,
we collect the following facts about the derivatives of the function $\hh$.
A direct computation shows
that the first and second derivatives of $\hh$ are given as follows.
\begin{lemma}\label[lemma]{lemma:basic_properties_q_h_g_1}
	Let $D\in \N$ with $D\ge 3$ and $\gamma:=\frac{D-2}{D}$.
	We have for all $z\in (-1,1)$ that
	\begin{equation} \label{eqn615b}
		\hh'(z) = \frac{1}{\gamma} \Big( (1-z)^{-\frac{1}{\gamma}-1} + (1+z)^{-\frac{1}{\gamma}-1}\Big)
	\end{equation}
	and 
	\begin{equation} \label{eqn615a}
		\hh''(z) = \frac{1}{\gamma}\Big(\frac{1}{\gamma}+1\Big) \Big( (1-z)^{-\frac{1}{\gamma}-2} - (1+z)^{-\frac{1}{\gamma}-2}\Big).
	\end{equation}
\end{lemma}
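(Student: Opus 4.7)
The statement is a routine differentiation exercise. Since $\gamma = \tfrac{D-2}{D}$, we have $\tfrac{D}{D-2} = \tfrac{1}{\gamma}$, so the defining formula \eqref{eqn_def_h_D} can be rewritten more conveniently as
\begin{equation*}
h_D(z) = (1-z)^{-1/\gamma} - (1+z)^{-1/\gamma}, \qquad z \in (-1,1).
\end{equation*}
Both summands are compositions of a power function with an affine function, so each is smooth on $(-1,1)$, and standard differentiation rules apply.

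My plan is to differentiate each of the two summands separately using the chain rule, keeping careful track of the sign coming from the inner derivative. For the first derivative, differentiating $(1-z)^{-1/\gamma}$ produces a factor $-\tfrac{1}{\gamma}$ from the power together with a factor $-1$ from the inner derivative of $1-z$, yielding $\tfrac{1}{\gamma}(1-z)^{-1/\gamma - 1}$; differentiating $-(1+z)^{-1/\gamma}$ produces $\tfrac{1}{\gamma}(1+z)^{-1/\gamma-1}$, since here the inner derivative is $+1$ and there is an overall minus sign in front of the term. Adding these gives formula \eqref{eqn615b}.

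For the second derivative, I would apply the same procedure to the expression obtained in \eqref{eqn615b}, pulling out the common factor $\tfrac{1}{\gamma}$. Differentiating $(1-z)^{-1/\gamma - 1}$ contributes a factor $(-\tfrac{1}{\gamma} - 1)\cdot(-1) = \tfrac{1}{\gamma}+1$ times $(1-z)^{-1/\gamma - 2}$, while differentiating $(1+z)^{-1/\gamma - 1}$ contributes $-(\tfrac{1}{\gamma} + 1)(1+z)^{-1/\gamma - 2}$. Collecting the common prefactor $\tfrac{1}{\gamma}(\tfrac{1}{\gamma}+1)$ gives precisely formula \eqref{eqn615a}.

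There is no genuine obstacle here: the only thing to watch out for is the bookkeeping of signs coming from the inner derivatives $(1-z)' = -1$ versus $(1+z)' = +1$, which is exactly what produces the sum in $h_D'$ but the difference in $h_D''$. Once these signs are tracked correctly, the lemma follows immediately from two applications of the chain rule.
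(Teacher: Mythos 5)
Your proof is correct and matches the paper's approach exactly: the paper simply asserts the formulas follow from "a direct computation," and your two applications of the chain rule, with the careful sign bookkeeping from the inner derivatives of $1-z$ versus $1+z$, are precisely that computation.
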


Using \cref{lemma:WindHansen},
we can establish a bound for the asymptotic behavior of the inverse function of $\hh$,
which will also be useful in our proofs.
\begin{lemma} \label[lemma]{lemma:bounds_on_h_D_2}
	Let $D\in \N$ with $D\ge 3$ and $\gamma:=\frac{D-2}{D}$.
	Then,
	for all $u > 0$, we have
	\begin{equation*} 
		\frac{\gamma}{1+(u+1)^{1+\gamma}}
		\le \big(\hh^{-1}\big)'(u) 
		\le \gamma \cdot \min\Big\{\frac{1}{2},\frac{1}{u^{1+\gamma}}\Big\}.
	\end{equation*}
\end{lemma}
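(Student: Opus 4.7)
\medskip

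\textbf{Proof plan for \cref{lemma:bounds_on_h_D_2}.} The plan is to convert everything into a statement about $\hh'$ via the inverse function theorem and then use the explicit formula for $\hh'$ from \cref{lemma:basic_properties_q_h_g_1} together with the two-sided asymptotic bounds on $\hh^{-1}$ from \cref{lemma:WindHansen}. Concretely, since $\hh \colon (-1,1) \to \R$ is smooth and strictly increasing (see \cref{lemma:basic_properties_q_h_g_2}), the inverse function theorem gives
\begin{equation*}
	\big(\hh^{-1}\big)'(u) = \frac{1}{\hh'(z)},\qquad z:=\hh^{-1}(u)\in (0,1),
\end{equation*}
for every $u>0$. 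By \cref{lemma:basic_properties_q_h_g_1} we have
\begin{equation*}
	\hh'(z) = \frac{1}{\gamma}\Big((1-z)^{-\frac{1}{\gamma}-1}+(1+z)^{-\frac{1}{\gamma}-1}\Big).
\end{equation*}
Thus upper bounds on $(\hh^{-1})'(u)$ reduce to lower bounds on $\hh'(z)$, and vice versa; both are obtained from two-sided control on $1-z$ furnished by \cref{lemma:WindHansen}.

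For the \emph{lower bound} on $(\hh^{-1})'(u)$, I would use the upper estimate $z=\hh^{-1}(u)\le 1-(u+1)^{-\gamma}$ from \cref{lemma:WindHansen}, which gives $1-z\ge (u+1)^{-\gamma}$ and therefore $(1-z)^{-1/\gamma-1}\le (u+1)^{1+\gamma}$. Since $(1+z)^{-1/\gamma-1}\le 1$ for $z\ge 0$, this yields $\hh'(z)\le \frac{1}{\gamma}\big(1+(u+1)^{1+\gamma}\big)$, and inverting gives the claimed lower bound $\gamma/(1+(u+1)^{1+\gamma})$.

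For the \emph{upper bound}, there are two pieces to verify. First, dropping the positive second term gives $\hh'(z)\ge \frac{1}{\gamma}(1-z)^{-1/\gamma-1}$. To get the bound $\gamma/u^{1+\gamma}$, I would use the other Wind--Hansen estimate $z\ge 1-u^{-\gamma}$, which for $u\ge 1$ yields $1-z\le u^{-\gamma}$ and thus $\hh'(z)\ge \frac{1}{\gamma}u^{1+\gamma}$; for $u\in(0,1)$ the bound $\gamma/u^{1+\gamma}$ exceeds $\gamma$, so it suffices to note that $\hh'(z)\ge \frac{1}{\gamma}$ always, which takes care of the small-$u$ range. Second, for the bound $\gamma/2$ I would invoke the convexity of $\hh$ on $[0,1)$ (see \cref{lemma:basic_properties_q_h_g_2}), which implies $\hh'$ is nondecreasing on $[0,1)$; combined with $\hh'(0)=\frac{2}{\gamma}$, this gives $\hh'(z)\ge \frac{2}{\gamma}$ for every $z\in[0,1)$, and inverting yields $(\hh^{-1})'(u)\le \gamma/2$ for all $u>0$. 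Taking the minimum of the two upper bounds concludes the argument.

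There is no real obstacle here: the proof is a direct computation once the inverse function theorem, the explicit formula for $\hh'$, and the Wind--Hansen two-sided estimates on $\hh^{-1}$ are combined. The only point requiring a little care is the case split $u\ge 1$ versus $u\in(0,1)$ in the derivation of the $\gamma/u^{1+\gamma}$ bound, since the Wind--Hansen lower bound $1-u^{-\gamma}\le \hh^{-1}(u)$ is only nontrivial for $u\ge 1$.
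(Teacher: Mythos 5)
Your proof is correct and follows essentially the same route as the paper: combine the inverse function theorem $(\hh^{-1})'(u) = 1/\hh'(\hh^{-1}(u))$, the explicit formula for $\hh'$ from \cref{lemma:basic_properties_q_h_g_1}, and the two-sided Wind--Hansen estimates of \cref{lemma:WindHansen}, then use monotonicity of $\hh'$ to transfer bounds on $\hh^{-1}(u)$ into bounds on $\hh'(\hh^{-1}(u))$. The one place you are actually slightly more careful than the paper is the $\gamma/u^{1+\gamma}$ upper bound for $u\in(0,1)$: the paper writes $(\hh^{-1})'(u)\le 1/\hh'(1-u^{-\gamma})\le \gamma/u^{1+\gamma}$ without comment, but for $u<1$ the argument $1-u^{-\gamma}$ is negative (and for $u\le 2^{-1/\gamma}$ it even lies outside the domain $(-1,1)$ of $\hh'$), so the intermediate inequality does not literally parse there; you instead observe that the final bound is trivially dominated by the uniform bound $(\hh^{-1})'(u)\le\gamma/2$ on that range, which repairs the small-$u$ case cleanly.
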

\begin{proof}
	It follows from \cref{eqn615b} that
	\begin{equation} \label{eqn6002}
		\frac{1}{\gamma}(1-z)^{-\frac{1}{\gamma}-1}
		\le \hh'(z) 
		\le \frac{1}{\gamma} \Big( (1-z)^{-\frac{1}{\gamma}-1} + 1\Big).
	\end{equation}
	We have
	\begin{equation*}
		\big(\hh^{-1}\big)'(u) 
		= \frac{1}{\hh'\big( \hh^{-1}(u)\big)}.
	\end{equation*}
	Using \cref{lemma:WindHansen} 
	and that the map $\hh'$ is increasing on $(0,\infty)$
	at $(a)$,
	and \cref{eqn6002} at $(b)$, we infer that
	\begin{equation*}
		\big(\hh^{-1}\big)'(u) 
		\overgeq{(a)}\frac{1}{\hh'\big( 1- (u+1)^{-\gamma}\big)}
		\overgeq{(b)} \frac{ \gamma}{1 + \big( u+1\big)^{-\gamma  \big( -\frac{1}{\gamma}-1\big)}}
		=\frac{\gamma}{1+(u+1)^{1+\gamma}}.
	\end{equation*}
	Analogously, we deduce that
	\begin{equation*}
		\big(\hh^{-1}\big)'(u) 
		\le \frac{1}{\hh'\big( 1-u^{-\gamma}\big)}
		\le \frac{\gamma}{u^{1+\gamma}}.
	\end{equation*}
	Since $\hh'$ is increasing, we also have $\hh'(u)\ge \hh'(0)= \frac{2}{\gamma}$ for $u\ge 0$.
\end{proof}

\subsubsection{Proof of \cref{lemma:basic_properties_q_h_g_2}}
\begin{proof}[Proof of \cref{lemma:basic_properties_q_h_g_2}]
	All the functions are smooth as compositions or inverses of smooth functions. The symmetry properties can be checked directly from the definitions.
	
	By \cref{lemma:basic_properties_q_h_g_1}, 
	we have $\hh'>0$ on $(-1,1)$ and $\hh''>0$ on $(0,1)$. 
	Thus, $\hh$ is convex.
	Furthermore, since $\hh$ is convex and increasing, we have for $u\in \R$ and $v>0$ that
	\begin{equation*}
		(\hh^{-1})'(u) = \frac{1}{\hh'\circ \hh^{-1}(u)} >0,
		\quad \text{and} \quad
		(\hh^{-1})''(v) = -\frac{\hh''\circ \hh^{-1}(v)}{(\hh'\circ \hh^{-1}(v))^3}<0.
	\end{equation*}
	
	We have
	\begin{equation*}
		\qq'(u) = \hh^{-1}(u) \qquad \text{and} \qquad
		\qq''(u) = \frac{1}{\hh'\circ \hh^{-1}(u)}
	\end{equation*}
	and $\hh'>0$.
	Hence, it follows that $\qq''>0$
	and thus $\qq$ is convex.
	Furthermore, $\qq' = \hh^{-1}>0$ on $(0,\infty)$
	which implies that $\qq$ is increasing.
	This completes the proof of the lemma.
\end{proof}

\subsubsection{Proof of \cref{lemma:almost_convex}}
\begin{proof}[Proof of \cref{lemma:almost_convex}]
	Recall the differentiation rules
	\begin{equation*}
		(fg)'' = f''g + 2 f'g' + f g'',\quad
		\text{and} \quad
		(g^{-1})'' = \Big( \frac{1}{g'\circ g^{-1}}\Big)' = -\frac{g''\circ g^{-1}}{(g'\circ g^{-1})^3}.
	\end{equation*}
	Therefore, with $f (t):=t $ and $g (t):=\hh^{-1}(t)$, we have
	\begin{equation*}
		\frac{\dif^2}{\dif t^2} \left( t \hh^{-1}(t) \right)
		= \frac{2}{\hh'(\hh^{-1}(t))} - \frac{ t\hh''(\hh^{-1}(t))}{[\hh^{'}(\hh^{-1}(t))]^3}
	\end{equation*}
	for all $t\in (0,\infty)$. Let $u\in (0,1)$ such that $\hh(u)=t$. We obtain
	\begin{equation} \label{eqn621}
		\frac{\dif^2}{\dif t^2} \left( t \hh^{-1}(t) \right)
		= \frac{2 (\hh'(u))^2 - \hh(u)\hh''(u)}{(\hh'(u))^3}.
	\end{equation}
	Let $\eta:= \frac{1}{\gamma}$. 
	Using \cref{lemma:basic_properties_q_h_g_1},
	see \cref{eqn615b} and \cref{eqn615a},
	we obtain
	\begin{equation} \label{eqn2100}
		2 (\hh'(u))^2 
		= 2\eta^2
		\Big( (1-u)^{-\eta-1} + (1+u)^{-\eta-1}\Big)^2
		\ge 2\eta^2
		(1-u)^{-2\eta-2}
	\end{equation}
	and 
	\begin{equation} \label{eqn2101}
		\begin{split}
			&\hh(u)\hh''(u) \\
			=& \eta(\eta+1)
			\Big( (1-u)^{-\eta} - (1+u)^{-\eta}\Big)
			\Big( (1-u)^{-\eta-2} - (1+u)^{-\eta-2}\Big)\\
			\le& \eta( \eta+1)
			(1-u)^{-\eta} 
			(1-u)^{-\eta-2}\\
			=& \eta(\eta+1)
			(1-u)^{-2\eta-2}.	
		\end{split}
	\end{equation}
	Since $2\eta^2>\eta(\eta+1)$
	due to $\eta = \frac{D}{D-2} >1 $, 
	the inequalities \eqref{eqn2100} and \eqref{eqn2101} imply 
	that $2 (\hh'(u))^2  >\hh(u)\hh''(u)$. 
	Furthermore, $\hh'>0$ by \cref{lemma:basic_properties_q_h_g_1}, 
	see \cref{eqn615b}. 
	Inserting all into \eqref{eqn621}, we deduce the claim.
\end{proof}

\subsubsection{Proof of \cref{lemma:basic_inequalities_deep_unique}}

\begin{proof}[Proof of \cref{lemma:basic_inequalities_deep_unique}]
Note that statement (i),
which is the inequality
\begin{equation}
		\hh'(z) \le \frac{2}{\gamma}(1-z)^{-\frac{1}{\gamma}-1}
\end{equation}
for all $z \in [0,1)$, follows directly from \cref{lemma:basic_properties_q_h_g_1},
see \cref{eqn615b}.
It remains to prove statement (ii).
	Assume that $u\ge v$. Using that $\hh^{-1}$ is increasing at $(a)$, the mean value theorem with some $\xi \in (v,u)$ at $(b)$, and \cref{lemma:bounds_on_h_D_2} at $(c)$, we obtain
	\begin{equation*}
		\begin{split}
			\abs{\hh^{-1}(u)-\hh^{-1}(v)} 
			&\overeq{(a)} \hh^{-1}(u)-\hh^{-1}(v)
			\overeq{(b)}\big( \hh^{-1}\big)'(\xi)   (u-v)
			\overleq{(c)} \gamma  \xi^{-1-\gamma}  (u-v)\\
			&\le \gamma  v^{-1-\gamma}  (u-v) 
			= \frac{\gamma}{\big( \min\{u,v\}\big)^{1+\gamma}}\abs{u-v}.
		\end{split}
	\end{equation*}
	The case $u\le v$ is treated analogously.
\end{proof}

\subsubsection{Proof of \cref{lemma:basic_inequalities_deep_non-unique}}
\label{appendix:basic_inequalities_deep_non-unique}
\begin{proof}[Proof of \cref{lemma:basic_inequalities_deep_non-unique}]
	\noindent 
	\textbf{Proof of (i)}
	We observe that
	\begin{equation*}
		\hh^{-1}(u) - \gD'(u)
		=
		\hh^{-1} (u) - 1 + u^{\frac{2}{D}-1}
		=
		\hh^{-1} (u) - 1 + u^{-\gamma}.
	\end{equation*}
	From \cref{lemma:WindHansen}
	we infer 
	\begin{equation*}
		0\le h_D^{-1}(u)-g_D'(u) \le u^{-\gamma}-(u+1)^{-\gamma}.
	\end{equation*}
	Using the mean value theorem, we deduce the claim.\\

	\noindent
   \textbf{Proof of (ii)}
	Let $u\ge 1$. 
	By the mean value theorem, 
	it holds for some $\xi \in (1,1+\frac{1}{u})\subset (1,2)$ that
	\begin{equation*}
		1+ (u+1)^{1+\gamma} 
		= 1+u^{1+\gamma}\Big( 1+ \frac{1}{u}\Big)^{1+\gamma} 
		= 1+u^{1+\gamma}\Big( 1 + (1+\gamma)  \frac{\xi^{\gamma}}{u}\Big) 
		= u^{1+\gamma} \big( 1+ \delta(u)\big),
	\end{equation*}
	where 
	\begin{equation*}
		\delta(u)
		:= 
		 \frac{1}{u^{1+\gamma}} +(1+\gamma)  \frac{\xi^{\gamma}}{u}  
		\overleq{\gamma \le 1} 
		 \frac{1}{u^{2}} +2 \frac{\xi}{u}  
		 \overleq{\xi \le 2}
		\frac{5}{u}.
	\end{equation*}
	Thus, it follows from \cref{lemma:bounds_on_h_D_2} that
	\begin{equation} \label{eqn6003}
		\frac{\gamma}
		{\left(1+\frac{5}{u}\right)u^{1+\gamma}}
		\le \big(\hh^{-1}\big)'(u) 
		\le \frac{\gamma}{u^{1+\gamma}}.
	\end{equation}
	This proves inequality \eqref{equ:DominikInternfinal1}.
	To show inequality \eqref{eqn:bounds_on_h_D_5},
	we infer from \cref{eqn6003} that 
	\begin{equation*}
		0 \le 
		\frac{\gamma}{u^{1+\gamma}} - \big(\hh^{-1}\big)'(u) 
		\le \frac{\gamma}{u^{1+\gamma}} \Big( 1 - \frac{1}{1+\frac{5}{u}}\Big) 
		\le \frac{5\gamma}{u^{2+\gamma}}.
	\end{equation*}

	\noindent
	\textbf{Proof of (iii)}
	By standard differentiation rules, we have
	\begin{equation}\label{eqn1400-2}
		(\hh^{-1})'' 
		= \Big( \frac{1}{\hh'\circ \hh^{-1}}\Big)' 
		= -\frac{\hh''\circ \hh^{-1}}{(\hh'\circ \hh^{-1})^3}
		= - \big(\hh''\circ \hh^{-1}\big) \cdot \left( \left( \hh^{-1} \right)' \right)^3.
	\end{equation}
	Using \cref{lemma:bounds_on_h_D_2}, we infer that
	\begin{equation}\label{eqn1401-2}
		\big(\hh^{-1}\big)^3 (u) \le 
		\frac{\gamma^3}{u^{3+3\gamma}}.
	\end{equation}
	By \cref{lemma:basic_properties_q_h_g_1}, we have for all $z\in (-1,1)$ that
	\begin{equation*}
		\hh''(z)\le \frac{1}{\gamma}\Big(\frac{1}{\gamma}+1\Big) 
		(1-z)^{-\frac{1}{\gamma}-2}.
	\end{equation*}
	Hence, using \cref{lemma:WindHansen} at $(a)$, we obtain
	\begin{align}
			\hh''\circ \hh^{-1}(u) 
			&\le \frac{1}{\gamma}\Big(\frac{1}{\gamma}+1\Big)  (1-\hh^{-1}(u))^{-\frac{1}{\gamma}-2}
			\overleq{(a)} \frac{1}{\gamma}
			\Big(\frac{1}{\gamma}+1\Big)  \left(1-\left(1-\left( u+1 \right)^{-\gamma} \right) \right)^{-\frac{1}{\gamma}-2} 
			\nonumber \\
			&= \frac{1}{\gamma}\Big(\frac{1}{\gamma}+1\Big) \cdot (u+1)^{1+2\gamma}
			= \frac{1}{\gamma}\Big(\frac{1}{\gamma}+1\Big) 
			\Big(1+\frac{1}{u}\Big)^{1+2\gamma}
			u^{1+2\gamma}.\label{eqn1402-2}
	\end{align}
	Inserting \eqref{eqn1401-2} and \eqref{eqn1402-2} into \eqref{eqn1400-2}, 
	it follows that
	\begin{align*}
		(\hh^{-1})'' (u)
		\ge 
		&
		-\left( \frac{1}{\gamma} +1 \right)
		\left( 1+\frac{1}{u} \right)^{1+2 \gamma}
		\gamma^2 u^{-2-\gamma}\\
		\overgeq{\gamma \le 1} 
		&-2
		\left( 1+\frac{1}{u} \right)^{1+2 \gamma}
		 u^{-2-\gamma}\\
		\overgeq{u\ge 1}
		&-16 u^{-2-\gamma}.
	\end{align*}
	Since $\hh^{-1}$ is concave, we also have $(\hh^{-1})''\le 0$.
	Thus, we have shown that for $u \ge 1$ it holds that
	\begin{equation*}
	0
	\le 
	(\hh^{-1})'' (u)
	\le
	16 u^{-2-\gamma}.
	\end{equation*}
	This completes the proof of statement (iii).
\end{proof}

\subsubsection{Proof of \cref{lemma:strong_convexity_deep}}\label{appendix:strong_convexity_deep}
\begin{proof}[Proof of \cref{lemma:strong_convexity_deep}]
	For all $i\in [d]$ define
	\begin{equation*}
		\zeta_i := \Big[\big(\hh^{-1}\big)'\Big(\frac{\abs{x_i}}{\alpha}\Big)\Big]^{-1}.
	\end{equation*}
	Using that $\big(\hh^{-1}\big)'$ is even at $(a)$ and the Cauchy-Schwarz inequality at $(b)$, we obtain
	\begin{equation*}\begin{split}
			\langle n, \nabla^2 \QQ(x)n\rangle
			&=\sum_{i\in \supp(n)}\frac{ n_i^2 }{\alpha}\big(\hh^{-1}\big)'\Big(\frac{x_i}{\alpha}\Big)
			\overeq{(a)}\sum_{i\in \supp(n)}  \frac{n_i^2}{\alpha}\big(\hh^{-1}\big)'\Big(\frac{\abs{x_i}}{\alpha}\Big)\\
			&= \frac{1}{\alpha} \sum_{i\in \supp(n)}   \frac{n_i^2}{\zeta_i}
			= \sum_{i\in \supp(n)} \frac{n_i^2}{ \alpha \zeta_i} \cdot \frac{ \sum_{i\in \supp(n)} \zeta_i}{\sum_{i\in \supp(n)} \zeta_i}\\
			&\overgeq{(b)} \frac{1}{ \alpha \sum_{i\in \supp(n)} \zeta_i} \cdot \Big(\sum_{i\in \supp(n)} \frac{\abs{n_i}}{\sqrt{\zeta_i}} \sqrt{\zeta_i}\Big)^2\\
			&=  \frac{ \norm{n}_{\ell^1}^2  }{\alpha \sum_{i\in \supp(n)} \zeta_i}.
	\end{split}\end{equation*}
	
	We first prove \eqref{eqn:strong_convexity_deep_1}. Using \cref{lemma:bounds_on_h_D_2} at $(a)$, and $(a+b)^{1+\gamma} \le 2^{\gamma}\big(a^{1+\gamma}+b^{1+\gamma}\big)$ together with $2^{\gamma}\le 2$ at $(b)$, we infer that
	\begin{equation*}
		\zeta_i = \Big[\big(\hh^{-1}\big)'\Big(\frac{\abs{x_i}}{\alpha}\Big)\Big]^{-1}
		\overleq{(a)} \frac{1}{\gamma} \Big[1 + \Big( \frac{\abs{x_i}}{\alpha}+1\Big)^{1+ \gamma}\Big]
		\overleq{(b)} \frac{1}{\gamma}\Big[3 + 2\Big(\frac{\abs{x_i}}{\alpha}\Big)^{1+\gamma} \Big].
	\end{equation*}
	Hence, we obtain that
	\begin{equation*}
		\alpha \sum_{i\in \supp(n)} \zeta_i 
		\le \frac{1}{\gamma \alpha^{\gamma} } 
		\big( 3\abs{\supp(n)} \alpha^{1+\gamma} + 2\norm{x}_{\ell^{1+\gamma}}^{1+\gamma}\big) .
	\end{equation*}
	Inserting this into the above inequality yields that
	\begin{equation*}
		\langle n, \nabla^2 \QQ(x)n\rangle
		\ge 
		\frac{\gamma \norm{n}_{\ell^1}^2 \alpha^{\gamma}}{3\abs{\supp(n)} \alpha^{1+\gamma} + 2\norm{x}_{\ell^{1+\gamma}}^{1+\gamma}}.
	\end{equation*}
	This proves inequality \eqref{eqn:strong_convexity_deep_1}.
	It remains to prove \eqref{eqn:strong_convexity_deep_2}.
	Recall that we assume that $\alpha<\abs{x_i}$ for all $i\in \SP$. 
	Then inequality \eqref{equ:DominikInternfinal1} implies that
	\begin{equation*}\begin{split}
			\zeta_i =
			\Big[\big(\hh^{-1}\big)'\Big(\frac{\abs{x_i}}{\alpha}\Big)\Big]^{-1}
			\le \frac{ \abs{x_i}^{1+\gamma}}{ \gamma \alpha^{1+\gamma}}
			\left( 1+  \frac{5 \alpha}{\min_{i\in \SP}\abs{x_i}} \right)
	\end{split}\end{equation*}
	for all $i\in \SP$. This implies that
	\begin{equation*}
		\langle n, \nabla^2 \QQ(x)n\rangle
		\ge
		 \frac{\norm{n}_{\ell^1}^2 \gamma \alpha^{\gamma}}
		 {\norm{x}_{\ell^{1+{\gamma}}}^{1+\gamma} \left( 1+\frac{5\alpha}{\min_{i\in \SP}\abs{x_i}}  \right) },
	\end{equation*}
	which completes the proof.
\end{proof}

\subsubsection{Proof of \cref{lemma:technical_1}}

\begin{lemma}\label[lemma]{lemma:technical_1}
	For all $0<\gamma<1$ it holds that
	\begin{equation*}
		\Big(1-\frac{\gamma}{4}\Big)^{-\frac{\gamma+1}{\gamma}} \le 2.
	\end{equation*}
\end{lemma}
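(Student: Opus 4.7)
The plan is to reduce the inequality to a one-variable estimate by taking logarithms. Setting $F(\gamma) := (1-\gamma/4)^{-(\gamma+1)/\gamma}$, it suffices to show
\begin{equation*}
    \log F(\gamma) = -\frac{\gamma+1}{\gamma} \log\left(1-\frac{\gamma}{4}\right) \le \log 2
\end{equation*}
for all $\gamma \in (0,1)$.

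First I would apply the elementary inequality $-\log(1-x) \le \frac{x}{1-x}$, valid for $x \in (0,1)$, with $x = \gamma/4$. This yields
\begin{equation*}
    -\log\left(1 - \frac{\gamma}{4}\right) \le \frac{\gamma/4}{1-\gamma/4} = \frac{\gamma}{4-\gamma}.
\end{equation*}
Multiplying by $(\gamma+1)/\gamma$ gives the clean bound
\begin{equation*}
    \log F(\gamma) \le \frac{\gamma+1}{4-\gamma}.
\end{equation*}

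Next, I would observe that the right-hand side $\gamma \mapsto \frac{\gamma+1}{4-\gamma}$ is manifestly increasing on $(0,1)$ (its derivative $5/(4-\gamma)^2$ is positive), so on this interval it is bounded above by its limiting value at $\gamma=1$, namely $2/3$. Hence $\log F(\gamma) \le 2/3$, i.e.\ $F(\gamma) \le e^{2/3}$.

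Finally, I would conclude by checking $e^{2/3} \le 2$, equivalently $2/3 \le \log 2$. Since $\log 2 = 0.693\ldots > 0.667 = 2/3$, this holds. There is no real obstacle here: the only choice point is picking an upper bound for $-\log(1-x)$ that is tight enough (the weaker $-\log(1-x) \le x$ would give only $(\gamma+1)/4 < 1/2$ and work equally well, but the $x/(1-x)$ version is already what is needed elsewhere in the paper). All steps are routine scalar estimates.
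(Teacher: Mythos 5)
Your main argument is correct and takes a genuinely different route from the paper's. The paper applies the mean value theorem to $f(t) = t^{\gamma/(\gamma+1)}$ on the interval $[1/2,1]$, obtaining $1 - (1/2)^{\gamma/(\gamma+1)} = f'(\xi)/2 \ge \gamma/4$ for some $\xi \in (1/2,1)$, and then rearranges this directly into the claimed inequality. You instead take logarithms, use the bound $-\log(1-x) \le x/(1-x)$, note that the resulting expression $(\gamma+1)/(4-\gamma)$ is increasing and hence at most $2/3$, and close via $e^{2/3} \le 2$ (equivalently $2/3 < \log 2$). Both work; the paper's proof is arguably slicker because the final numerical comparison is baked into the MVT estimate (the inequality $\gamma/(2(\gamma+1)) \ge \gamma/4$, i.e.\ $\gamma+1 \le 2$, is trivial), whereas your version requires a small numeric check $2/3 < \log 2$ at the end. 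Neither is clearly better.

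One small but real error in your side remark: you claim the ``weaker'' inequality $-\log(1-x) \le x$ would also suffice. This inequality is \emph{false} — in fact $-\log(1-x) = x + x^2/2 + x^3/3 + \cdots \ge x$ for $x \in (0,1)$, so the direction is reversed. You have likely confused it with $\log(1+x) \le x$. Fortunately this does not affect your main argument, which correctly uses $-\log(1-x) \le x/(1-x)$, but the parenthetical should be deleted or corrected.
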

\begin{proof}
	For $t>0$ let $f(t):= t^{\frac{\gamma}{1+\gamma}}$. 
	Then we have 
	$f'(t) = t^{-\frac{1}{\gamma+1}} \gamma /(\gamma+1)$ 
	and thus there exists $\xi\in (\frac{1}{2},1)$
	such that
	\begin{equation*}
		1-\Big(\frac{1}{2}\Big)^{\frac{\gamma}{\gamma+1}}  
		= f \left(1\right)
		-f\left(\frac{1}{2}\right) 
		= f'(\xi) \cdot \frac{1}{2}
		=
		\frac{\gamma}{2(\gamma+1) \xi^{\frac{1}{\gamma+1}}}
		\ge \frac{\gamma}{ 2 \left( \gamma+1 \right)} 
		\ge \frac{\gamma}{4}.
	\end{equation*}
	Rearranging terms, we obtain
	\begin{equation*}
		\Big(1-\frac{\gamma}{4}\Big)^{\frac{\gamma+1}{\gamma}} \ge \frac{1}{2},
	\end{equation*}
	which implies the claim.
\end{proof}

\end{document}